\pdfoutput=1
\documentclass[10pt]{article}
\usepackage{enumerate}
\usepackage[OT1]{fontenc}
\usepackage{amsmath,amssymb}
\usepackage{natbib}
\usepackage[usenames,dvipsnames]{xcolor}
\usepackage{geometry}
\usepackage{dsfont}
\usepackage{pgfplots}
\usepackage{smile}
\usepackage{multirow}
\usepackage{rotating}
\usepackage{enumerate}
\usepackage{esvect}
\usepackage{tikz}
\usetikzlibrary{patterns}
\usetikzlibrary{arrows}
\usepackage[colorlinks,
            linkcolor=red,
            anchorcolor=blue,
            citecolor=blue
            ]{hyperref} 
\usepackage{algorithm}
\usepackage{algorithmic}
\usepackage[most,skins]{tcolorbox}
\definecolor{rliableolive}{HTML}{BBCC33}
\definecolor{rliableblue}{HTML}{77AADD}
\definecolor{rliablered}{HTML}{EE8866}
\definecolor{LightCyan}{rgb}{0.88,1,1}
\definecolor{darkblue}{HTML}{2878D9}
\definecolor{navyblue}{HTML}{0000FF}

\usepackage{enumitem}
\usepackage{acronym}
\acrodef{svd}[SVD]{Singular-Value Decomposition}
\acrodef{rgi}[RGI]{Relative Generalization Invariance}
\acrodef{argi}[ARGI]{Asymptotic RGI}
\acrodef{llm}[LLM]{Large Language Model}
\acrodef{ntk}[NTK]{Neural Tangent Kernel}
\acrodef{mf}[MF]{Mean-Field}
\acrodef{ccc}[CCC]{Concordance Correlation Coefficient}
\acrodef{ood}[OOD]{Out-Of-Distribution}
\acrodef{fa}[FA]{Full Attention}
\acrodef{swa}[SWA]{Sliding-Window Attention}
\acrodef{ffn}[FFN]{Feed-Forward Networks}

\newcommand{\gd}{\mathsf{GD}}
\newcommand{\muon}{\mathsf{Muon}}
\newcommand{\adam}{\mathsf{Adam}}

\usepackage{wrapfig}
\definecolor{sorange}{RGB}{252,91,90}
\definecolor{sblue}{RGB}{9,48,138}
\definecolor{syellow}{RGB}{253,179,51}
\definecolor{steal}{RGB}{19,119,116}

\newcommand{\horange}[1]{{\color{sorange}#1}}
\newcommand{\hblue}[1]{{\color{sblue}#1}}
\newcommand{\hyellow}[1]{{\color{syellow}#1}}
\newcommand{\hteal}[1]{{\color{teal}#1}}

\newcommand{\init}{{\text{init}}}
\newcommand{\spec}{\texttt{spec}}
\newcommand{\sgn}{\texttt{sgn}}
\newcommand{\val}{\mathrm{val}}
\newcommand{\var}{\mathrm{Var}}
\newcommand{\cov}{\mathrm{Cov}}
\newcommand{\differr}{r_{\mathrm{diff}}}
\newcommand{\cccerr}{r_{\mathrm{CCC}}}
\newcommand{\cossim}{\mathrm{cos\text{-}sim}}
\newcommand{\relerr}{r_{\ell_2}}
\newcommand{\frakT}{\mathfrak{T}}

\theoremstyle{plain}

\usepackage{mathrsfs}
\usepackage{fullpage}

\usepackage{hyperref}
\usepackage[protrusion=false,expansion=true]{microtype}

\title{\huge Relative Generalization Invariance of LLM Pretraining}
\author{Fengzhuo Zhang\textsuperscript{1,$\ast$,$\dagger$}\quad Shuche Wang\textsuperscript{2,$\ast$}\quad  Shenggui Li\textsuperscript{3,$\ast$}\quad Tianyu Ruan\textsuperscript{1}\quad Jianliang He\textsuperscript{1}\\
Ivor Tsang\textsuperscript{4}\quad Tianyu Pang\textsuperscript{5}\quad Chao Du\textsuperscript{5}\quad Tianwei Zhang\textsuperscript{3}\quad Zhuoran Yang\textsuperscript{1}\\
\textsuperscript{1}Yale University\quad\textsuperscript{2}National University of Singapore\quad \textsuperscript{3}Nanyang Technological University \\
\textsuperscript{4}A$^{*}$STAR \quad \textsuperscript{5}Sea AI Lab
}
\date{}
\begin{document}
\maketitle
\renewcommand\thefootnote{}\footnotetext{$\ast$ Equal contribution.}\footnotetext{$\dagger$ Project Lead.}\footnotetext{Correspondence to: \textless{}fengzhuo.zhang@yale.edu\textgreater{},\textless{}zhuoran.yang@yale.edu\textgreater{}}

\begin{abstract}
Large Language Model (LLM) pretraining performance is jointly shaped by three components of the training triplet: the optimizer, model architecture, and training data stream. However, how these components influence performance in distinct ways remains unclear. We take a first step toward isolating their effects by studying relative generalization. We introduce Relative Generalization Invariance (RGI), the invariance of the validation-loss difference between any two tokens across models. We show that RGI approximately holds across a wide range of optimizers and moderate architectural variations, suggesting that these choices induce an approximately uniform shift in token-wise losses. In contrast, changing the training data stream can substantially alter relative generalization. We further show that RGI cannot be explained by the neural tangent kernel or mean-field regimes alone and prove that it can emerge in an overparameterized quadratic model. Overall, our work identifies RGI as a new phenomenon in LLM pretraining that helps distinguish the effects of optimizers and architectures from those of training data.

\end{abstract}

\section{Introduction}
\begin{figure}[H]
    \centering
    \subfigure[Training losses of optimizers and architectures]{ \includegraphics[width=0.47\textwidth]{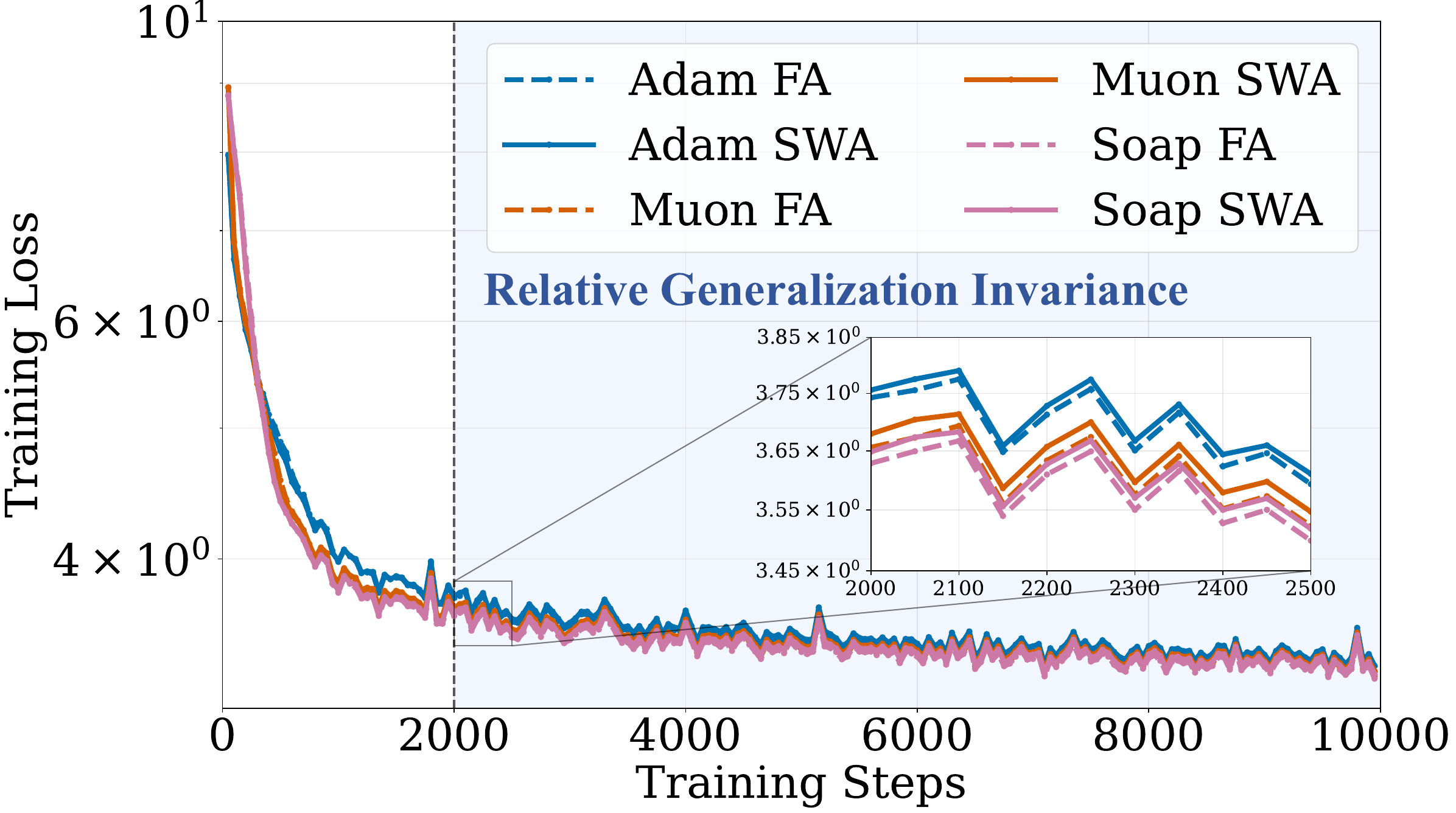}\label{fig:loss_cur}}
    \subfigure[Relative Generalization Invariance]{ \includegraphics[width=0.39\textwidth]{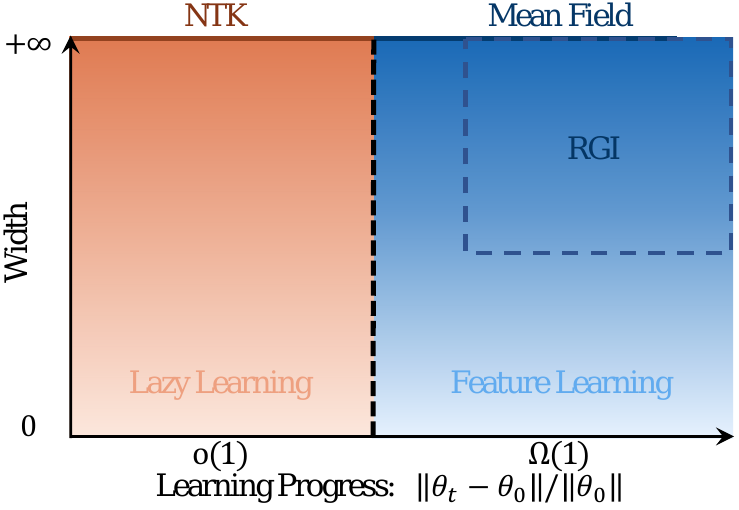}\label{fig:rgi}}
    % % \vspace{-0.5em}
    \caption{\ac{rgi} emerges across optimizers and architectures and is not explained by the \ac{ntk} or \ac{mf} regimes alone. Panel (a) shows that training-loss curves across optimizers and architectures exhibit similar shapes after $2{,}000$ steps. Panel (b) shows that \ac{rgi} persists under substantial feature learning and at reduced model widths, providing evidence beyond \ac{ntk}- and \ac{mf}-based explanations. }
    \label{fig:demo}
    % % \vspace{-1em}
\end{figure}

% \begin{wrapfigure}{r}{0.5\textwidth}
%     \centering
%     % % \vspace{-1.5em}
%     \includegraphics[width=0.49\textwidth]{figures/rgi_main.pdf}
%     \caption{\ac{rgi} cannot be explained by \ac{ntk} and \ac{mf} alone.}
%     \label{fig:rgi}
%     % % \vspace{-2em}
% \end{wrapfigure}

Pretraining equips \acp{llm} with broad knowledge and reasoning capabilities and provides the foundation for subsequent instruction tuning and reinforcement learning. Its performance is jointly shaped by the three components of the \emph{training triplet}: the optimizer, model architecture, and training data stream. Prior work has improved pretraining performance and efficiency by advancing each component, including new optimizers such as Soap~\citep{vyas2024soap} and Muon~\citep{jordan6muon}, architectural designs such as \ac{swa}~\citep{xiao2024efficient} and SiLU~\citep{elfwing2018sigmoid}, and data pipelines such as dynamic data selection~\citep{wang2026opus}.

Despite this progress, much less is known about how the optimizer, model architecture, and training data stream affect performance in distinct ways. We take a step toward isolating their effects by studying a fundamental structure of generalization, which we call \emph{relative generalization}: the relative loss structure across validation tokens. We show that optimizers and architectures affect token-wise losses in an approximately uniform way, preserving relative generalization across a wide range of optimizers and moderate architectural variations, a phenomenon we term \acf{rgi}. In contrast, changing the training data stream can substantially alter relative generalization. Our primary contributions are as follows:

First, we define \ac{rgi} between models as the invariance of the validation-loss difference between any two tokens across models. We show that \ac{rgi} is equivalent to the two models having a token-independent constant gap in validation loss, i.e., a uniform shift across tokens. We then extend \ac{rgi} to training triplets through the models they produce.

Second, we show that \ac{rgi} holds across a wide range of optimizers and moderate architectural variations, but depends strongly on the training data stream. Specifically, we consider coordinate-wise optimizers, such as Adam, matrix-based optimizers, such as Muon, and variations in attention and \ac{ffn} designs. We show that \ac{rgi} persists under joint variations in optimizer and architecture and is robust to optimizer hyperparameters and moderate shifts in the validation distribution. However, \ac{rgi} can break when models are trained on data streams from different sources. In other words, optimizer and architecture choices induce approximately uniform shifts in token-wise losses, whereas changes in the training data stream can alter the relative loss structure across tokens.

Third, we show that \ac{rgi} cannot be explained by the \acf{ntk} or \acf{mf} regimes alone. When \ac{rgi} emerges, models deviate substantially from initialization in both parameters and hidden-state representations, and different optimizers lead to low similarity in both quantities. These findings rule out an \ac{ntk}-style lazy-learning explanation. We further reduce the model width and show that \ac{rgi} persists even when the model departs from the \ac{mf} regime.

Finally, we study a stylized quadratic model and theoretically characterize the emergence of \ac{rgi}. Beyond the \ac{ntk} and \ac{mf} regimes, we identify a setting where \ac{rgi} holds across GD, Adam, and Muon, different initializations, and learning rates, despite low parameter similarity.

Together, these results take a first step toward distinguishing the effects of optimizers, architectures, and data through the definition, empirical demonstration, and theoretical characterization of \ac{rgi}.

\section{Related Work}\label{app:related_work}
\textbf{Optimizers of \acp{llm}} play a central role in pretraining and can substantially affect the resulting model capabilities, which are further shaped by post-training~\citep{yue2025does,liu2025not}. Over the past decade, a large body of work has developed and refined optimizers to improve the performance, stability, and efficiency of large-scale training. Starting from (Stochastic Gradient Descent) SGD~\citep{robbins1951stochastic}, adaptive coordinate-wise optimizers were introduced to better adapt updates to the optimization landscape, including AdaGrad~\citep{duchi2011adaptive}, RMSProp~\citep{tieleman2017rmsprop}, AdaDelta~\citep{zeiler2012adadelta}, and Adam~\citep{kingma2014adam}. Owing to its strong empirical performance in large-scale training, Adam became the default optimizer for many \ac{llm} pretraining pipelines. Subsequent works further improved Adam through techniques such as Nesterov momentum~\citep{dozat2016incorporating}, modified momentum schemes~\citep{ma2018quasi}, bias correction~\citep{reddi2019convergence}, adaptive step sizes~\citep{zhuang2020adabelief}, and decoupled weight decay~\citep{loshchilov2017decoupled}. To better exploit the matrix structure of \ac{llm} parameters, matrix-based optimizers were later proposed, including Muon~\citep{jordan6muon}, Shampoo~\citep{gupta2018shampoo}, Soap~\citep{vyas2024soap}, Dion~\citep{ahn2025dion}, and Scion~\citep{pethick2025training}. Following the introduction of Muon, several recent works further improved its performance through alternative normalization strategies and adaptive designs~\citep{li2025normuon,cheng2026trasmuon,shan2026muonrec,pang2026htmuon,deng2026rmnp,zhang2026mousse,southworth2026beyond,zhang2026muon+,liu2026muon,zhang2026adam}. Relatedly, another line of work seeks to optimize parameters by approximating Gauss--Newton updates~\citep{liu2023sophia,abreu2025potential}. Orthogonal to these directions, some works focus primarily on reducing the memory and communication cost of training, including Adafactor~\citep{shazeer2018adafactor}, Adam-mini~\citep{zhang2024adam}, GaLore~\citep{zhao2024galore}, Apollo~\citep{zhu2025apollo}, and MuonBP~\citep{khaled2025muonbp}. Comprehensive surveys of \ac{llm} optimizers are provided by \citet{he2021large,wan2023efficient,abdulkadirov2023survey,zhou2024training,luo2025survey,tian2025survey,xiao2025survey}. In contrast to these works, which propose new optimizers, we study the shared characteristics of existing optimizers. Specifically, we show that many commonly used optimizers exhibit the same relative generalization behavior.

\paragraph{Influential Factors in \ac{llm} Pretraining} \hspace{-0.8em} include model architecture, datasets, and optimizers. Following the seminal works on scaling laws~\citep{kaplan2020scaling,rosenfeld2019constructive}, a growing body of work has studied how these three factors separately and jointly affect pretraining performance. From the perspective of model design, \citet{hoffmann2022training,tao2024scaling} study the optimal allocation of compute and the role of vocabulary size as model size scales. \citet{bian2025scaling} further studies the trade-off between throughput and performance across different architectural configurations. From the perspective of optimization, \citet{zhao2024deconstructing,semenov2025benchmarking,wen2025fantastic} examine how different optimizers affect training efficiency and model performance across different data and model-size regimes. In addition, \citet{volkova2026towards} proposes a robust method for fitting scaling laws under different optimizers. A large body of work also studies how datasets influence pretraining performance. \citet{hernandez2021scaling,ye2024data,mizrahi2025language,liu2025not,shukor2025scaling,subramanyam2025scaling,mayilvahanan2025llms} investigate how pretraining data mixtures affect target-task and downstream performance through the lens of scaling laws. In contrast, \citet{kang2025demystifying} and \citet{zhao2024deciphering} study properties of data mixtures using human-controlled synthetic data and machine unlearning, respectively. Beyond these mechanism-oriented studies, another line of work investigates how to construct near-optimal data mixtures efficiently for large-scale pretraining, including DoReMi~\citep{xie2023doremi}, RegMix~\citep{liu2024regmix}, and Chameleon~\citep{xie2025chameleon}. See \citet{li2025mis} for a survey of scaling laws across different pretraining factors and \citet{luo2025survey} for a survey of data-centric \ac{llm} pretraining. Different from these works, we study the distinct roles of optimizers and data in shaping relative generalization, which holds almost surely with respect to the data stream.

\paragraph{Pretraining Dynamics} \hspace{-0.8em} of \acp{llm} have received considerable attention over the past decade, and these studies have shed light on the behavior of different optimizers throughout training. Systematic theoretical studies of pretraining dynamics typically focus on over-parameterized regimes, in which the number of trainable parameters is much larger than the number of training samples~\citep{du2018power,allen2019learning}. Among these settings, two regimes have been studied most extensively: the \ac{ntk} regime and the \ac{mf} regime. The \ac{ntk} regime is also known as the lazy-training regime~\citep{chizat2019lazy}, in which the kernel induced by the network remains effectively fixed throughout training~\citep{jacot2018neural,lee2019wide}. Under suitable scalings of model width, initialization, and learning rate, a line of work establishes convergence rates to global optima in the \ac{ntk} regime~\citep{arora2019exact,novak2019neural,geiger2020disentangling,liu2020linearity}. In addition, \citet{arora2019fine} explicitly derives the kernel on the training data and shows how convergence depends on its properties. See \citet{golikov2022neural} for a survey of the \ac{ntk} regime. To relax the restrictive assumption that the kernel remains fixed during training, the \ac{mf} regime enables theoretical analysis of an evolving kernel via mean-field approximation~\citep{mei2018mean,sirignano2020mean,takakura2024mean}. Rather than tracking each neuron individually, the mean-field approximation represents neurons collectively through a distribution. This framework was developed rigorously by \citet{mei2019mean} and \citet{sirignano2020mean} for two-layer networks and later extended to multi-layer networks~\citep{sirignano2022mean}. A line of work further extends mean-field analysis from fully connected networks to transformers~\citep{kim2024transformers,rigollet2025mean,poc2024dynamical}. Motivated by mean-field analysis, $\mu$P was proposed to preserve nontrivial parameter updates in the infinite-width limit~\citep{yang2021tensor,yang2021tuning}. Its analysis supports hyperparameter transfer across models of different sizes~\citep{lingle2024empirical,blake2024u,hayou2025proof,zheng2026spectral}. Different from these works, the relative-generalization invariance behavior of optimizers studied by our work is not explained by either the \ac{ntk} or the \ac{mf} perspective.
% \vspace{-1em}
\section{Preliminaries}\label{sec:prelim}

\textbf{Pretraining} of \acp{llm} optimizes model parameters over a stream of data sampled from a corpus. Let the initial parameters $\theta_0\in\Theta$ be drawn from an initialization distribution $P_{\init}\in\Delta(\Theta)$. Starting from $\theta_0$, the model is trained on a data stream $D_{1:\infty}=(D_t)_{t=1}^{\infty}$, where each batch $D_t\subseteq\calD$ is sampled from the corpus $\calD$ according to a possibly time-dependent distribution $P_t(\cdot\mid D_{1:t-1})$, independent of $\theta_0$. Each data point $d\in D_t$ consists of a prefix and a target token, e.g., $d=($``LLM pretraining is'', ``important''$)$. Given an optimizer $\sfA\in\calA$, we define the \emph{training triplet} as $\frakT=(\sfA,\theta_0,D_{1:\infty})$, where $\theta_0$ specifies the initialization and its associated model architecture. The parameters at step $t$ can then be written as
$
\theta_t^{\frakT}=F(\frakT,t),
$
where $F$ is deterministic given the training triplet.\footnote{Our work focuses on deterministic optimizers. Randomized optimizers can be incorporated by treating the algorithmic randomness as an additional input to $F$~\citep{mandt2017stochastic,jin2017escape}.} The map $F$ is induced by the optimizer's update rule. For a batch $D\subseteq\calD$, let
$
L(\theta,D)=|D|^{-1}\sum_{d\in D}L(\theta,d)
$
denote the average cross-entropy loss, and define the batch gradient as
$
\xi_t^\frakT=\nabla_{\theta}L(\theta_{t-1}^\frakT,D_t).
$
The optimizer $\sfA$ recursively updates its internal state $S_t^\frakT\in\calS$ and the model parameters according to
\begin{align*}
S_t^\frakT=G(S_{t-1}^\frakT,\xi_t^\frakT,\sfA),\quad
\theta_t^\frakT=H(S_t^\frakT,\theta_{t-1}^\frakT,\sfA),
\end{align*}
where $G$ and $H$ denote the state-update and parameter-update maps, respectively.

\textbf{Optimizers} instantiate the update maps $G$ and $H$, and hence the training map $F$, differently. Coordinate-wise optimizers, such as Adam~\citep{kingma2014adam}, Lion~\citep{chen2023symbolic}, and Adam-mini~\citep{zhang2024adam}, use entry-wise gradient statistics. For the training triplet $\frakT_{\adam}=(\adam,\theta_0,D_{1:\infty})$, Adam maintains first- and second-moment states $S_t^{\adam}=S_t^{\frakT_{\adam}}=(m_t,v_t)\in\calS_{\adam}$ and updates
\begin{align*}
    m_t = \gamma_1 m_{t-1} \!+\! (1-\gamma_1)\xi_t,
    v_t = \gamma_2 v_{t-1} \!+\! (1-\gamma_2)\xi_t \odot \xi_t,
    \theta_t^{\adam} = \theta_{t-1}^{\adam}\! -\!\eta\, m_t \odot (\sqrt{v_t}+\varepsilon)^{-1},
\end{align*}
where $\odot$ denotes the Hadamard product and $\gamma_1,\gamma_2,\eta,\varepsilon$ are hyperparameters; bias correction is omitted for simplicity. Here and below, we abbreviate $S_t^{\frakT_{\adam}}$ and $\theta_t^{\frakT_{\adam}}$ as $S_t^{\adam}$ and $\theta_t^{\adam}$. In contrast, matrix-based optimizers, such as Muon~\citep{jordan6muon}, Soap~\citep{vyas2024soap}, Scion~\citep{pethick2025training}, and Shampoo~\citep{gupta2018shampoo}, exploit the matrix structure of parameter blocks. For example, an idealized Muon update for each matrix-shaped block is
\begin{align*}
    m_t = \gamma m_{t-1} + (1-\gamma)\xi_t,\quad
    \theta_t^{\muon} = \theta_{t-1}^{\muon} - \eta U_tV_t^\top,
\end{align*}
where $S_t^{\muon}=m_t$, $U_t\Sigma_tV_t^\top$ is the \ac{svd} of $m_t$, and $\gamma,\eta$ are hyperparameters. In practice, Muon approximates $U_tV_t^\top$ using Newton--Schulz iterations. See Section~\ref{app:related_work} for further discussion of these optimizers.

\section{Relative Generalization Invariance}\label{sec:rgi}
In this section, we define a new relationship between the validation losses of \acp{llm} pretrained with different optimizers, model architectures, and data streams. Specifically, consider two training triplets $\frakT=(\sfA,\theta_0,D_{1:\infty})$ and $\frakT'=(\sfA',\theta_0',D_{1:\infty}')$, which produce parameters $\theta_t^{\frakT}$ and $\theta_t^{\frakT'}$ at step $t$. We allow $\theta_0$ and $\theta_0'$ to correspond to different model architectures and, in particular, models of different sizes. In our experiments, $(\theta_0,D_{1:\infty})$ and $(\theta_0',D_{1:\infty}')$ are either identical or independently sampled. We compare the resulting models on subsets $D_{\val}\subseteq\calD_{\val}$, with
$L(\theta_t^{\frakT},D_{\val})=|D_{\val}|^{-1}\sum_{d\in D_{\val}}L(\theta_t^{\frakT},d),$
and analogously for $L(\theta_t^{\frakT'},D_{\val})$. For generality, we define \ac{rgi} under an arbitrary joint distribution over the two initializations and training data streams.

\begin{definition}[Relative Generalization Invariance]\label{def:rgi}
    \acf{rgi} holds between two parameters $\theta,\theta^{\prime}\in\Theta$  over $\calD_{\val}$ if
    \begin{align}
        L(\theta,d)-L(\theta,d^{\prime}) = L(\theta^{\prime},d)-L(\theta^{\prime},d^{\prime})\text{ for any }d,d^{\prime}\in\calD_\val.\label{eq:rgi}
    \end{align}
    For two training triplets $\frakT=(\sfA, \theta_0, D_{1:\infty})$ and $\frakT^{\prime}=(\sfA', \theta_0^{\prime}, D_{1:\infty}^{\prime})$ and their parameters at step $t$, $
    \theta_t^{\frakT}=F(\frakT,t)$ and $\theta_t^{\frakT'}=F(\frakT^{\prime},t)$,
     we say that \ac{rgi} holds across $\frakT$ and $\frakT^{\prime}$ over $\calD_\val$ at time step $t$ if
    \begin{align}
        L\big(\theta_t^{\frakT},d\big)-L\big(\theta_t^{\frakT},d^{\prime}\big) = L\big(\theta_t^{\frakT^{\prime}},d\big)-L\big(\theta_t^{\frakT^{\prime}},d^{\prime}\big)\label{eq:rgi_o}
    \end{align}
    for any $d,d^{\prime}\in\calD_\val$, almost surely with respect to the joint distribution of two initializations and training data streams. We say that \acf{argi} holds between training triplets  $\frakT$ and $\frakT^{\prime}$ if there exists $t^*>0$ such that \ac{rgi} holds between them for any $t\geq t^*$.
\end{definition}
The notion of \ac{rgi} between parameters requires that the validation-loss difference between any two tokens $d$ and $d'$ is the same under $\theta$ and $\theta'$. Thus, $\theta$ and $\theta'$ preserve the same relative token-wise loss structure; in particular, if token $d'$ incurs a higher loss than token $d$ under $\theta$, then the same holds under $\theta'$. At the level of training triplets, \ac{rgi} between $\frakT$ and $\frakT'$ at step $t$ requires this property to hold between the resulting parameters almost surely with respect to the joint randomness in their initializations and training data streams. Equivalently, their token-wise validation losses differ only by a token-independent constant. This characterization is formalized in the following result.

\begin{proposition}[Equivalent Statements of \ac{rgi}]\label{prop:equivalence}
    The following statements are equivalent.
    \begin{itemize}[leftmargin=2em]
        \item (\ac{rgi})  \ac{rgi} holds between training triplets $\frakT$ and $\frakT^{\prime}$ at time step $t$.
        \item (Constant Generalization Gap) The training triplets $\frakT$ and $\frakT^{\prime}$ have a constant generalization gap at time $t$, i.e., $
            L(\theta_t^{\frakT},d)-L(\theta_t^{\frakT^{\prime}},d) =C(\frakT,\frakT^{\prime},t) \text{ for any }d\in\calD_{\val}  \text{ almost surely.}$ 
    \end{itemize}
\end{proposition}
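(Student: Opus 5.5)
The plan is to prove the two implications separately, working pathwise: fix a realization $\omega$ of the joint randomness (the two initializations and the two data streams). Everything then reduces to an elementary identity between real-valued functions on $\calD_\val$. The only care needed is with the ``almost surely'' quantifier. Both statements must hold simultaneously for all $d,d'\in\calD_\val$ on a single probability-one event, rather than on a separate event for each pair.

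For (RGI) $\Rightarrow$ (Constant Generalization Gap), let $E$ be the probability-one event on which \eqref{eq:rgi_o} holds for all $d,d'\in\calD_\val$; this is exactly how Definition~\ref{def:rgi} is phrased. On $E$, rearrange \eqref{eq:rgi_o} as
\begin{align*}
L(\theta_t^{\frakT},d)-L(\theta_t^{\frakT'},d)=L(\theta_t^{\frakT},d')-L(\theta_t^{\frakT'},d'),
\end{align*}
valid for every pair $d,d'$. Hence the map $d\mapsto L(\theta_t^{\frakT},d)-L(\theta_t^{\frakT'},d)$ is constant on $\calD_\val$. Fix any reference token $d_0\in\calD_\val$, which is nonempty; if it were empty, both statements would hold vacuously. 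Define
\begin{align*}
C(\frakT,\frakT',t):=L(\theta_t^{\frakT},d_0)-L(\theta_t^{\frakT'},d_0).
\end{align*}
This $C$ depends only on the realized parameters, hence on the triplets and $t$, and not on $d$. It may be random through $\omega$, which is how I read the notation $C(\frakT,\frakT',t)$.

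For the converse, on the probability-one event where $L(\theta_t^{\frakT},d)-L(\theta_t^{\frakT'},d)=C$ holds for all $d$, write this equation for $d$ and for $d'$ and subtract. The constant cancels, giving
\begin{align*}
L(\theta_t^{\frakT},d)-L(\theta_t^{\frakT'},d)-L(\theta_t^{\frakT},d')+L(\theta_t^{\frakT'},d')=0,
\end{align*}
which is \eqref{eq:rgi_o} after rearranging.

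The main (mild) obstacle is the measure-theoretic bookkeeping: the quantifier ``for any $d,d'$'' must sit inside ``almost surely''. If instead one only had a null set $N_{d,d'}$ for each pair, I would take the union of these null sets. That union is still null when $\calD_\val$ is countable, which holds because tokens and prefixes are finite strings over a finite vocabulary. This guarantees a common full-measure event on which both arguments above run simultaneously.
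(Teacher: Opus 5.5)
Your proof is correct and follows the paper's route. Like the paper, you fix a reference token ($d^*$ there, $d_0$ here) and define $C(\frakT,\frakT',t)$ as the loss gap at that token, a function of the realized initializations and data streams, and you get the converse by subtracting the constant-gap identities at $d$ and $d'$. Your extra step of putting the quantifier over all $d,d'\in\calD_\val$ inside a single probability-one event, using countability of $\calD_\val$, is a detail the paper leaves implicit.
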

This result shows that \ac{rgi} is equivalent to a token-independent constant gap in validation loss, motivating two complementary metrics for evaluating \ac{argi}. First, motivated by the constant-gap characterization, we evenly partition $\calD_{\val}$ into $K$ non-overlapping subsets $(D_{\val,k})_{k=1}^{K}$ and compute the \ac{ccc} between the centered values of $L(\theta_t^{\frakT},D_{\val,k})$ and $L(\theta_t^{\frakT'},D_{\val,k})$ across $k$~\citep{lawrence1989concordance}, denoted by $\cccerr(\frakT,\frakT')\in[-1,1]$.  Larger $\cccerr$ indicates closer agreement with \ac{rgi}, with $\cccerr=1$ implying that the two sets of validation losses differ only by a constant shift.  Second, we define the loss difference $\Delta(\frakT,D_{\val},D_{\val}^{\prime})=L(\theta_t^{\frakT},D_{\val})-L(\theta_t^{\frakT},D_{\val}^{\prime})$.
When $|D_{\val}|=|D_{\val}^{\prime}|=1$, \ac{rgi} is equivalent to
$\Delta(\frakT,D_{\val},D_{\val}^{\prime})=\Delta(\frakT',D_{\val},D_{\val}^{\prime})$.
We define $\differr(\frakT,\frakT')$ as the normalized discrepancy between the corresponding loss differences under $\frakT$ and $\frakT'$. Smaller $\differr$ indicates closer alignment with \ac{rgi}; when each subset contains a single token, $\differr=0$ implies exact \ac{rgi}. These metrics are normalized to facilitate the evaluation of \ac{rgi}; see Appendix~\ref{app:alter} for further discussion. Their explicit definitions are provided in Appendix~\ref{app:exp_detail}.

% \begin{figure}[t]
%     \centering
%     \subfigure[Training losses of optimizers.]{ \includegraphics[width=0.45\textwidth]{figures/loss_vs_steps.pdf}\label{fig:train_loss}}
%     % \hspace{2em}
%     \subfigure[Average values of $|\alpha_t|$ and $|\beta_t|$.]{ \includegraphics[width=0.49\textwidth]{figures/ckpt_demo_alpha_beta_bar.pdf}\label{fig:bar}}
%     \caption{(a) Training losses of different optimizers with identical initial parameters and the same training data stream exhibit similar shapes after sufficient training steps. (b) Average values of $|\alpha_t|$ and $|\beta_t|$ for $2000\leq t<10000$, showing that $|\beta_t|$ is more than $40$ times larger than $|\alpha_t|$.} 
% \end{figure}

\section{Empirical Studies of Relative Generalization Invariance}\label{sec:emp_studies}
In this section, we investigate when \ac{rgi} holds between two training triplets $\frakT$ and $\frakT^\prime$. Specifically, we examine four sources of variation in $L(\theta_t^{\frakT},\hteal{D_{\val}})$, where $\frakT=(\hblue{\sfA},\hyellow{\theta_0},\horange{D_{1:t}})$: the optimizer $\hblue{\sfA}$, the model architecture and initialization associated with $\hyellow{\theta_0}$, the training data stream $\horange{D_{1:t}}$, and the validation data $\hteal{D_{\val}}$.

\begin{figure}[t]
    \centering
    \subfigure[Validation losses of Adam and Muon]{ \includegraphics[width=0.25\textwidth]{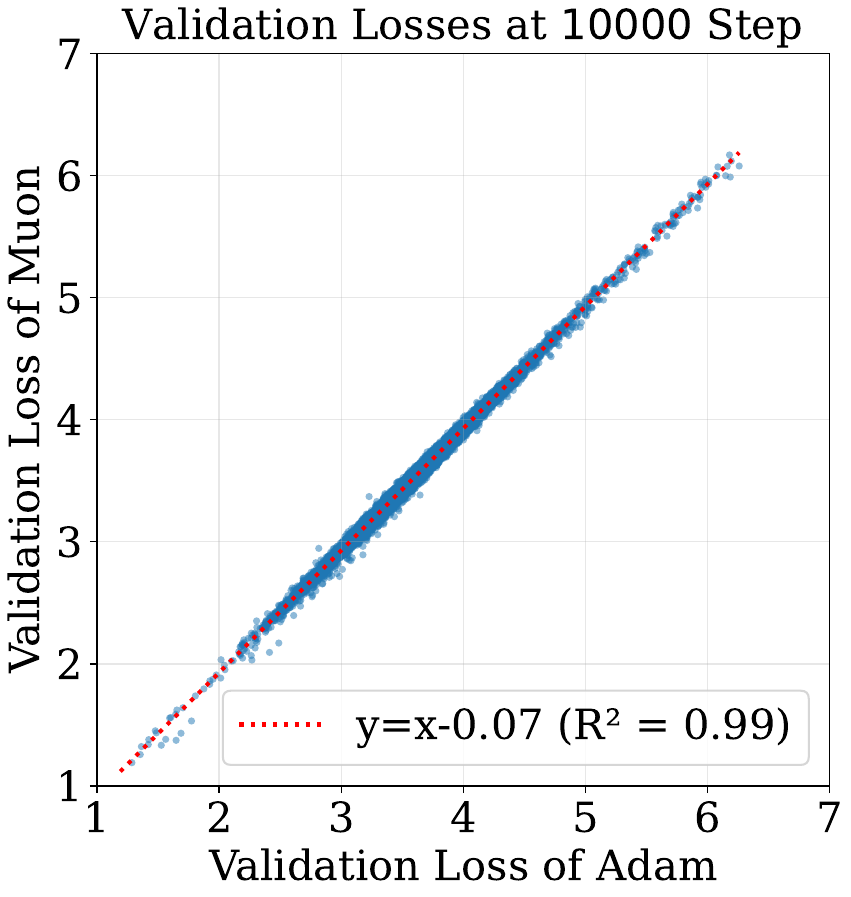}\label{fig:val_muon}}
    \subfigure[Values of $\cccerr$ across training]{ \includegraphics[width=0.35\textwidth]{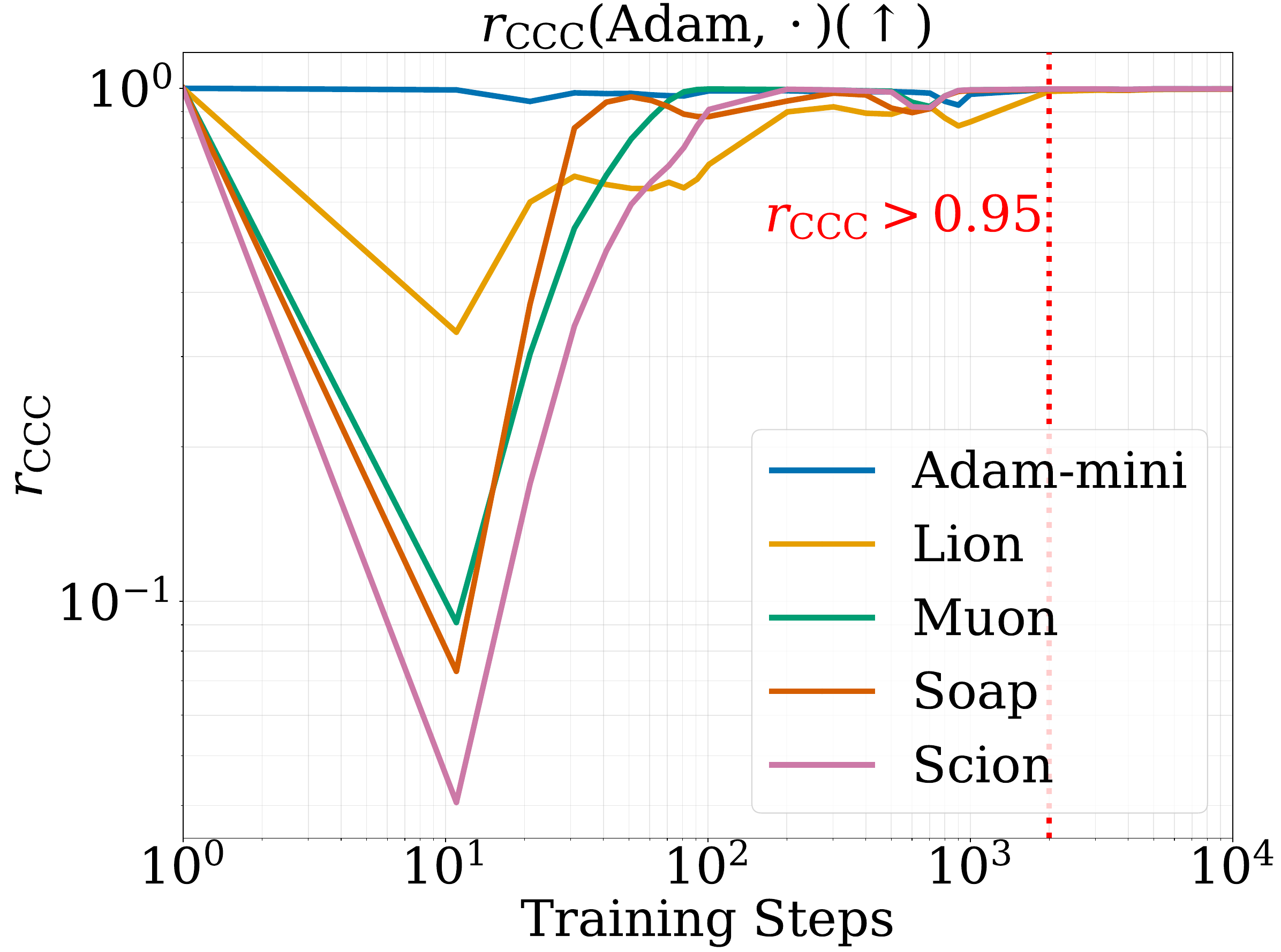}\label{fig:r_ccc_adam}}
    \subfigure[Values of $\differr$ across training]{ \includegraphics[width=0.35\textwidth]{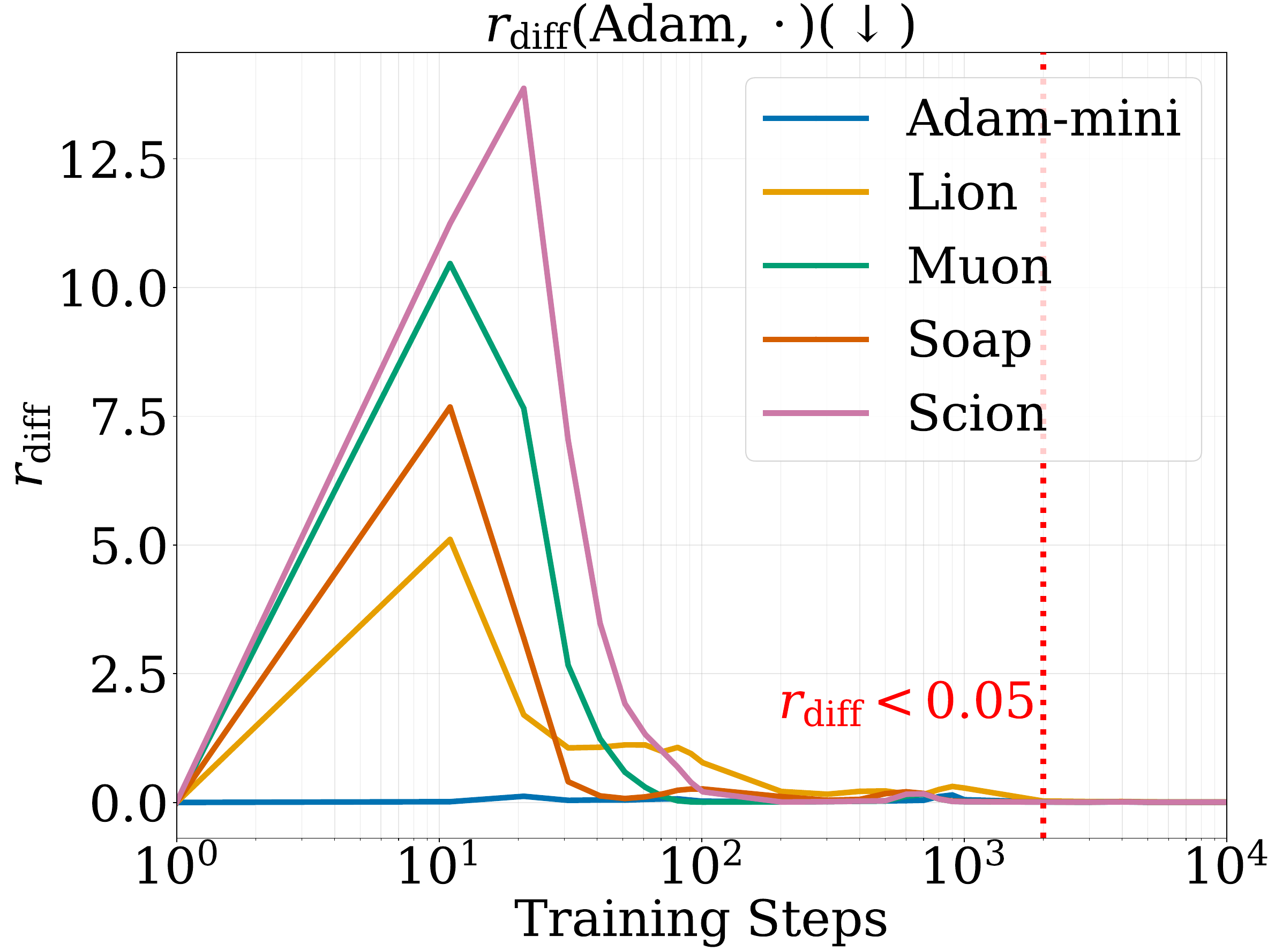}\label{fig:r_diff_adam}}
    \caption{Empirical evidence of \ac{rgi} among optimizers. Panel (a) plots the validation losses of the $124$M GPT models trained with Adam and Muon at the $10{,}000$-step checkpoint, evaluated on $20{,}480$ validation batches. Panel (b) reports $\cccerr(\text{Adam},\cdot)$ $(\uparrow)$ for the other five optimizers over training, all of which rise above $0.95$ after $2{,}000$ steps. Panel (c) reports $\differr(\text{Adam},\cdot)$ $(\downarrow)$ for the other five optimizers over training, all of which drop below $0.05$ after $2{,}000$ steps. These results show that \ac{rgi} holds across all considered optimizers after sufficient training when they are trained from the same initialization and on the same training data stream.} 
    % \vspace{-1.0em}
    \label{fig:base}
\end{figure}

\textbf{Experimental setup.} We first describe the baseline experimental setting, from which subsequent experiments vary the optimizer, model architecture, and training data. We consider three coordinate-wise optimizers, Adam~\citep{kingma2014adam}, Lion~\citep{chen2023symbolic}, and Adam-mini~\citep{zhang2024adam}, and three matrix-based optimizers, Soap~\citep{vyas2024soap}, Muon~\citep{jordan6muon}, and Scion~\citep{pethick2025training}. We use them to train $124$M, $0.7$B, and $4.5$B Nano-GPT models on FineWeb~\citep{penedo2024fineweb}, with RoPE~\citep{su2024roformer} in attention and GELU~\citep{hendrycks2016gaussian} in the \ac{ffn}. The validation set $\calD_{\val}$ is drawn from FineWeb but disjoint from the training set, so these experiments evaluate in-distribution generalization; the \ac{ood} setting is studied in Section~\ref{sec:val}. Within each model setting, all optimizers share the same initialization and training data stream. Following \citet{hu2024minicpm,wen2024understanding}, we use the Warmup-Stable-Decay learning-rate scheduler and tune the learning rate separately for each optimizer. We report the $124$M results in the main text and defer the $0.7$B and $4.5$B results, which lead to the same conclusions, to Appendix~\ref{app:add_result}. Across $10$ random seeds, we report the worst-case maximum $\differr$ and minimum $\cccerr$, providing evidence that \ac{rgi} is robust to training randomness and consistent with the almost-sure requirement in Definition~\ref{def:rgi}. Additional experimental details are provided in Appendix~\ref{app:exp_detail}.

\subsection{\ac{rgi} Holds Across \hblue{Optimizers}}\label{sec:opt}
In this section, we vary the optimizer and its hyperparameters while fixing the model initialization and training data stream, and show that \ac{argi} approximately holds across optimizers. For simplicity, we abbreviate $\differr(\frakT_{\adam},\cdot)$ and $\cccerr(\frakT_{\adam},\cdot)$ as $\differr(\adam,\cdot)$ and $\cccerr(\adam,\cdot)$, respectively.

Figure~\ref{fig:base} provides empirical evidence for approximate \ac{rgi} across optimizers with identical initial parameters and the same training data stream. Figure~\ref{fig:val_muon} shows that, after $10{,}000$ training steps, the validation losses of models trained with Muon and Adam across validation batches are well fitted by $y=x-0.07$, indicating a nearly constant generalization gap. Results for the other optimizers are deferred to Appendix~\ref{app:add_result} and yield the same conclusion. Taking Adam as the baseline, Figures~\ref{fig:r_diff_adam} and~\ref{fig:r_ccc_adam} further show that approximate \ac{rgi} emerges across optimizers after $2{,}000$ training steps, with $\differr<0.05$ and $\cccerr>0.95$, as marked by the red dotted lines. Results using other optimizers as the baseline are reported in Appendix~\ref{app:add_result} and are consistent with this finding. Overall, these results suggest that \ac{argi} emerges across optimizers on the in-distribution validation set after the early stage of training when each optimizer uses its best learning rate, while the remaining hyperparameters are set to their defaults and the initialization and training data stream are shared.

We next examine whether \ac{rgi} continues to hold under variations in optimizer hyperparameters, including learning rates, momentum coefficients, learning-rate schedulers, and weight decay. To avoid confounding changes in both models being compared, we keep Adam fixed under the setting in Figure~\ref{fig:base} and vary only the hyperparameters of the other optimizers. We first vary the learning rate of each optimizer $\sfA$ over $0.25\eta^{\sfA}$, $0.5\eta^{\sfA}$, $0.75\eta^{\sfA}$, and $1.25\eta^{\sfA}$, where $\eta^{\sfA}$ denotes its best learning rate. We exclude larger multipliers, such as $1.5\eta^{\sfA}$, because some optimizers diverge under these settings. Figures~\ref{fig:025eta} and~\ref{fig:125eta} report the $\cccerr$ results for $0.25\eta^{\sfA}$ and $1.25\eta^{\sfA}$, while the remaining results are deferred to Appendix~\ref{app:add_result}. The different times at which $\cccerr$ crosses the threshold $0.95$ show that the learning rate affects when approximate \ac{rgi} emerges. Nevertheless, $\cccerr$ eventually grows above $0.95$ for all tested learning rates. We next vary the momentum coefficient $\gamma$ over $0.85$, $0.9$, and $0.95$. Figures~\ref{fig:85mom} and~\ref{fig:95mom} show that \ac{argi} persists across these momentum values, with the $\gamma=0.9$ and corresponding $\differr$ results deferred to Appendix~\ref{app:add_result}. We further ablate the learning-rate scheduler and weight decay in Appendix~\ref{app:add_result}, obtaining the same conclusion. Overall, \ac{argi} is robust across the tested optimizer hyperparameters, although these hyperparameters can affect when it emerges.

\begin{figure}[t]
    \centering
    \subfigure[$\cccerr$ for $0.25\eta^{\sfA}$.]{ \includegraphics[width=0.23\textwidth]{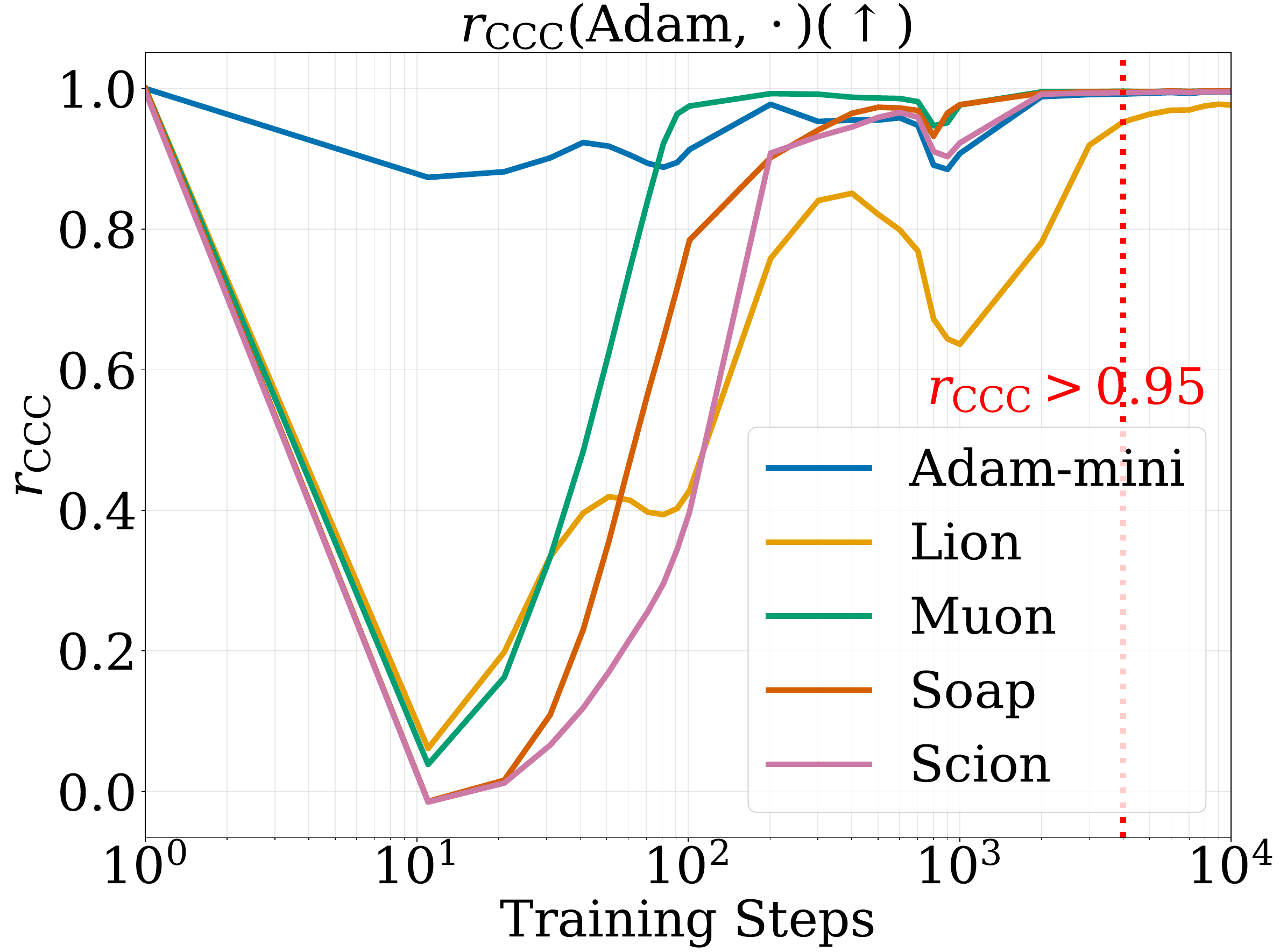}\label{fig:025eta}}
    \subfigure[$\cccerr$  for $1.25\eta^{\sfA}$.]{ \includegraphics[width=0.23\textwidth]{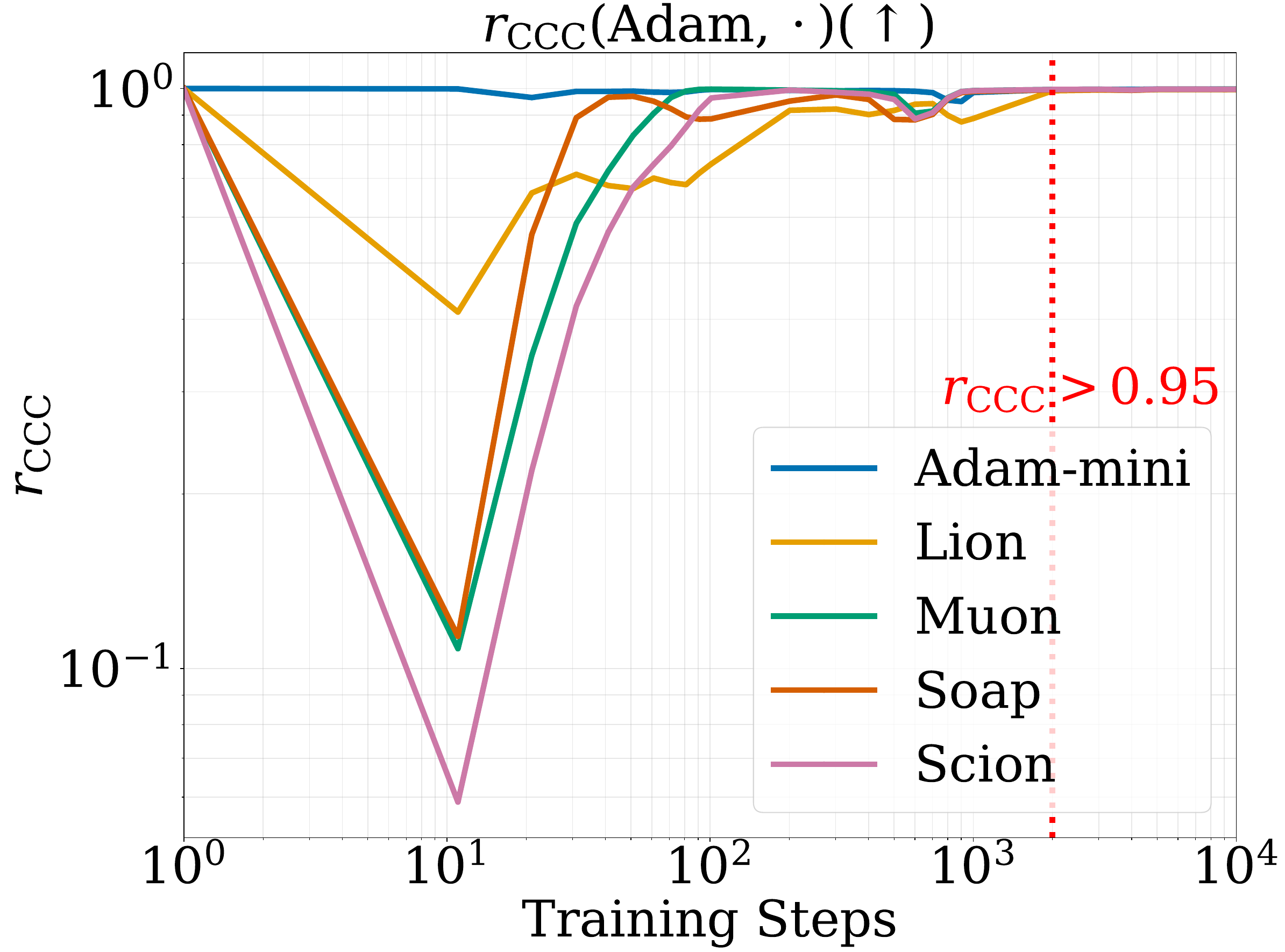}\label{fig:125eta}}
    \subfigure[$\cccerr$  for $\gamma=0.85$.]{ \includegraphics[width=0.23\textwidth]{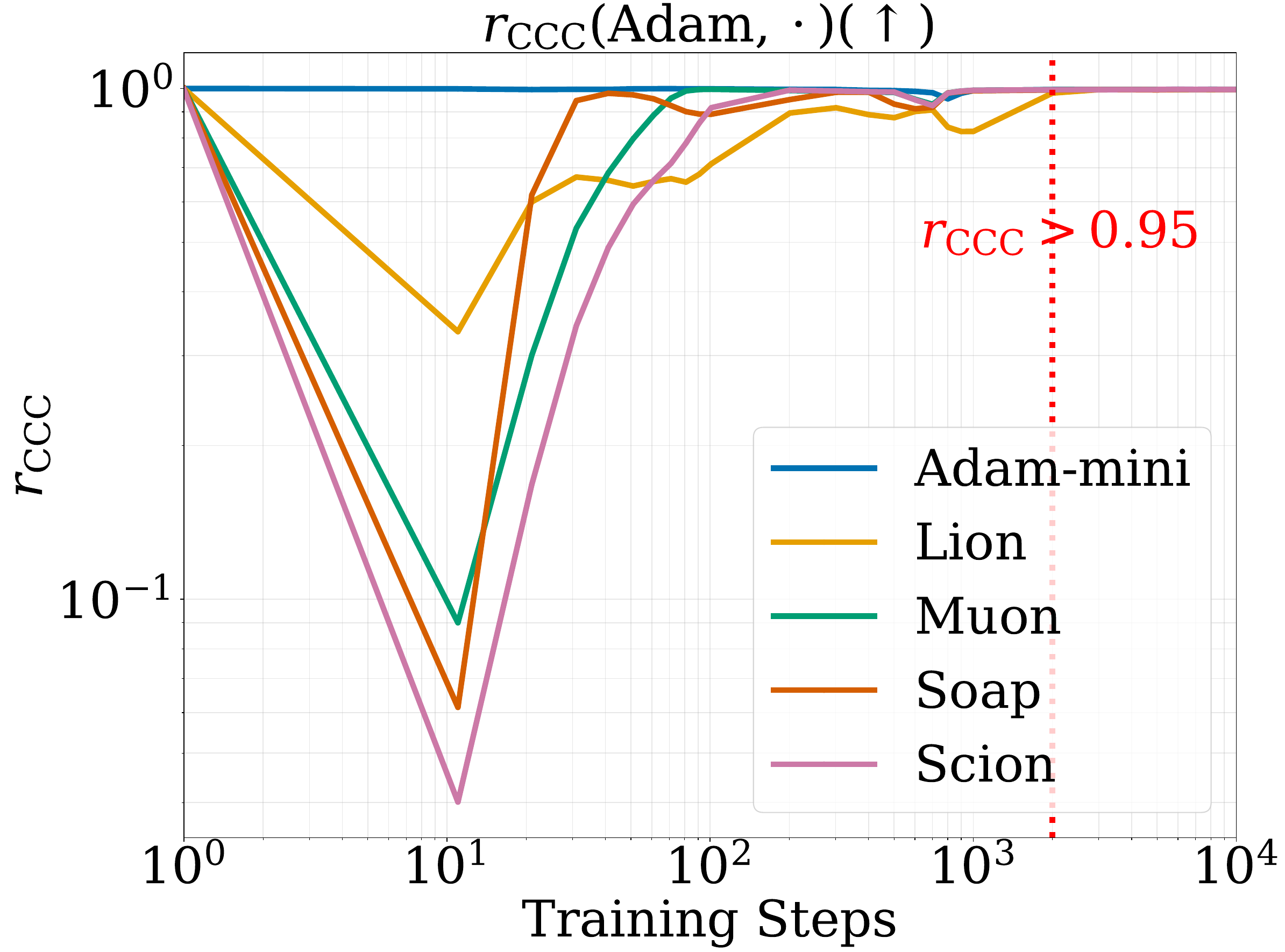}\label{fig:85mom}}
    \subfigure[$\cccerr$ for $\gamma=0.95$.]{ \includegraphics[width=0.23\textwidth]{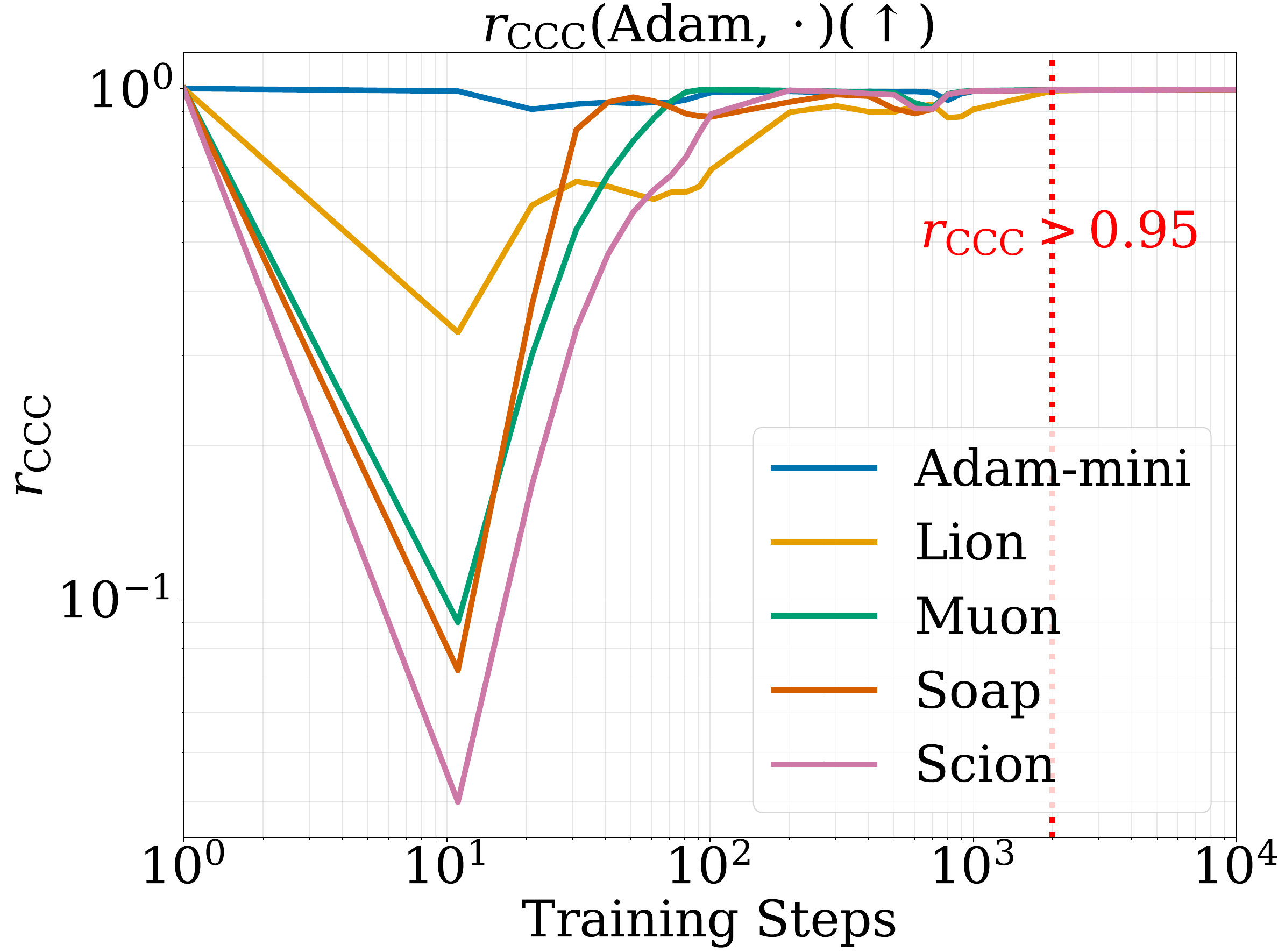}\label{fig:95mom}}
    % % \vspace{-0.5em}
    \caption{\ac{rgi} across optimizers under different hyperparameter settings. Panels (a) and (b) plot $\cccerr$ under different learning-rate multipliers, while panels (c) and (d) plot $\cccerr$ under different momentum values. \ac{rgi} persists across all tested learning-rate and momentum settings.
} 
    \label{fig:lr}
    % \vspace{-1.5em}
\end{figure}

\subsection{\ac{rgi} Holds Across Moderately Different \hyellow{Architectures}} \label{sec:arch}

In this section, we study whether \ac{argi} holds across model architectures. Consistent with Section~\ref{sec:opt}, we train models with different architectures using different optimizers, thereby examining \ac{argi} under joint variations in optimizer and architecture. Specifically, we consider two settings: (i) models with the same architectural components but different initializations and model sizes, and (ii) models with different architectural components.

We now examine whether \ac{rgi} persists across different initializations and model sizes while keeping the architectural components fixed. First, we train \acp{llm} from independent initializations $\theta_0,\theta_0^{\prime}\in\Theta$, drawn from the same initialization distribution, using $\sfA=\mathrm{Adam}$ and $\sfA^{\prime}$ chosen from Adam-mini, Lion, Muon, Soap, and Scion. Figure~\ref{fig:ccc_init} shows that, unlike the shared-initialization setting in Figure~\ref{fig:base}, $\cccerr$ initially deviates substantially from \ac{rgi}; nevertheless, approximate \ac{rgi} emerges after about $2{,}000$ training steps. Second, we vary the model size from the baseline hidden-state dimension $768$ to $384$ and $64$. Figures~\ref{fig:ccc_cross_384} and \ref{fig:ccc_cross_64} show that \ac{rgi} continues to hold between models with dimensions $768$ and $384$, but fails when the dimension is reduced to $64$. The corresponding $\differr$ results in Appendix~\ref{app:add_result} lead to the same conclusion.

\begin{figure}[t]
    \centering
    % \vspace{-1.5em}
    \subfigure[$\cccerr$ under different initial parameters.]{
        \includegraphics[width=0.31\textwidth]{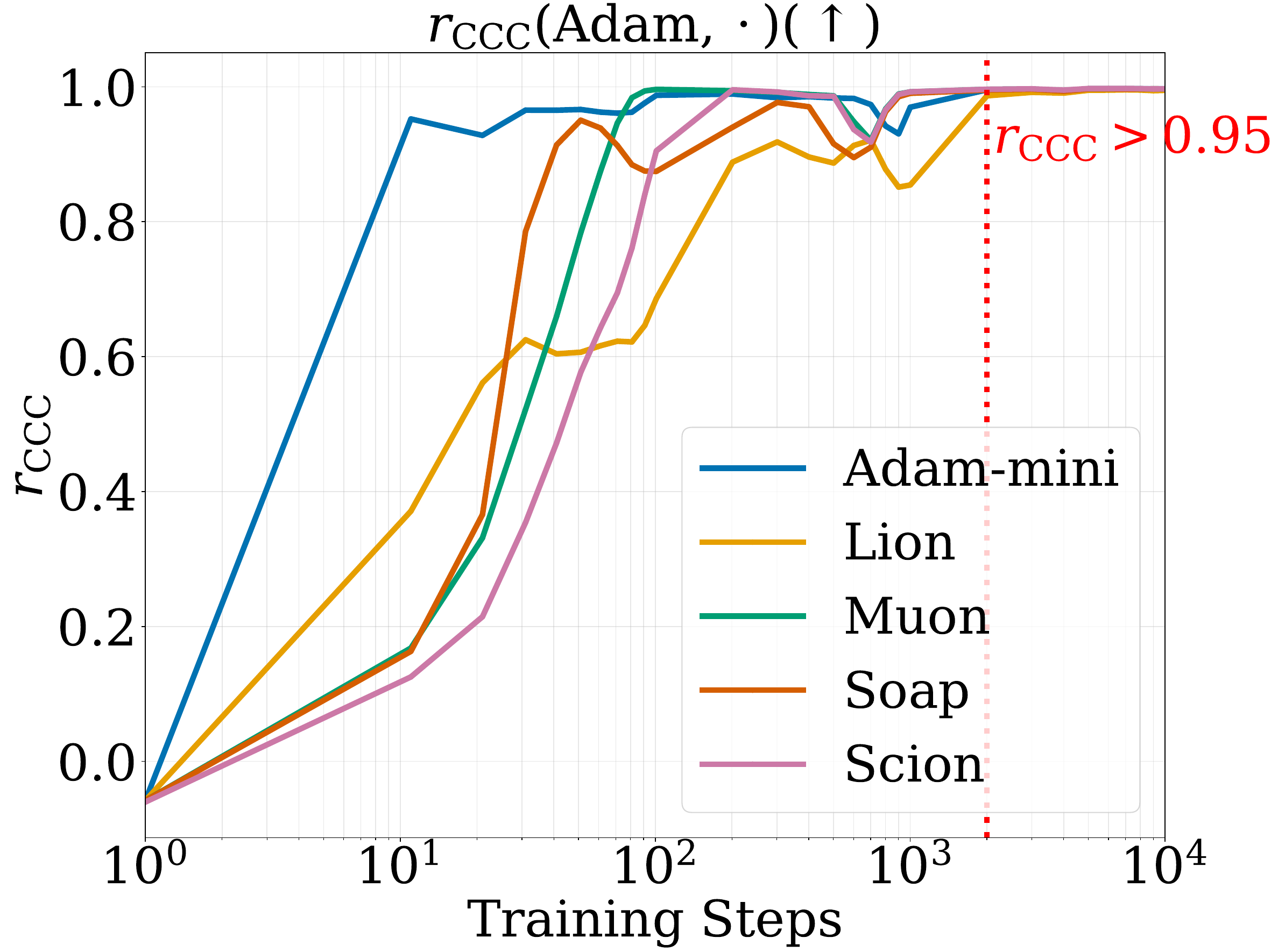}
        \label{fig:ccc_init}
    }
    \subfigure[$\cccerr$ for hidden state dimensions $768$ and $384$.]{
        \includegraphics[width=0.31\textwidth]{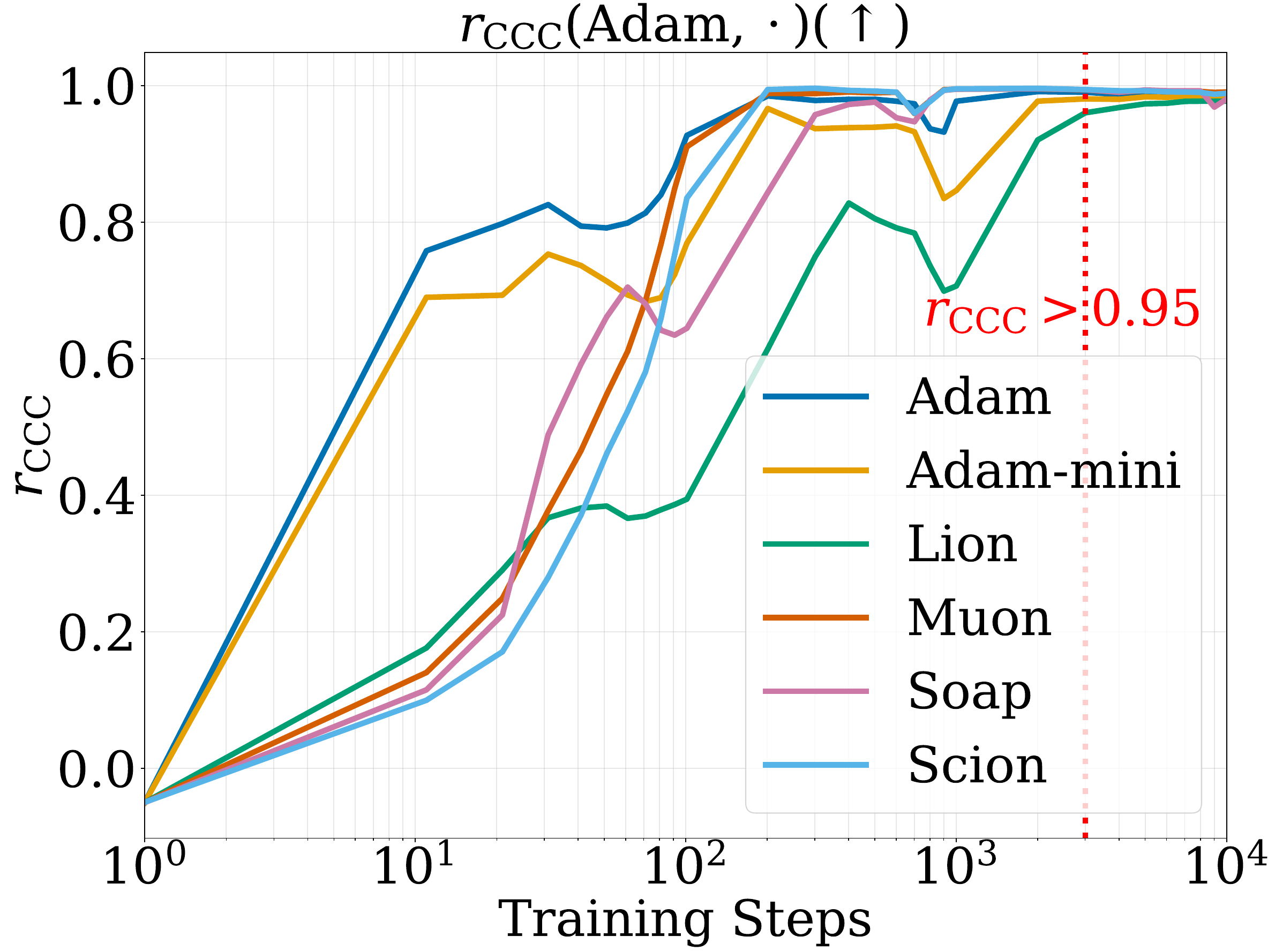}
        \label{fig:ccc_cross_384}
    }
    \subfigure[$\cccerr$ for hidden state dimensions $768$ and $64$.]{
        \includegraphics[width=0.31\textwidth]{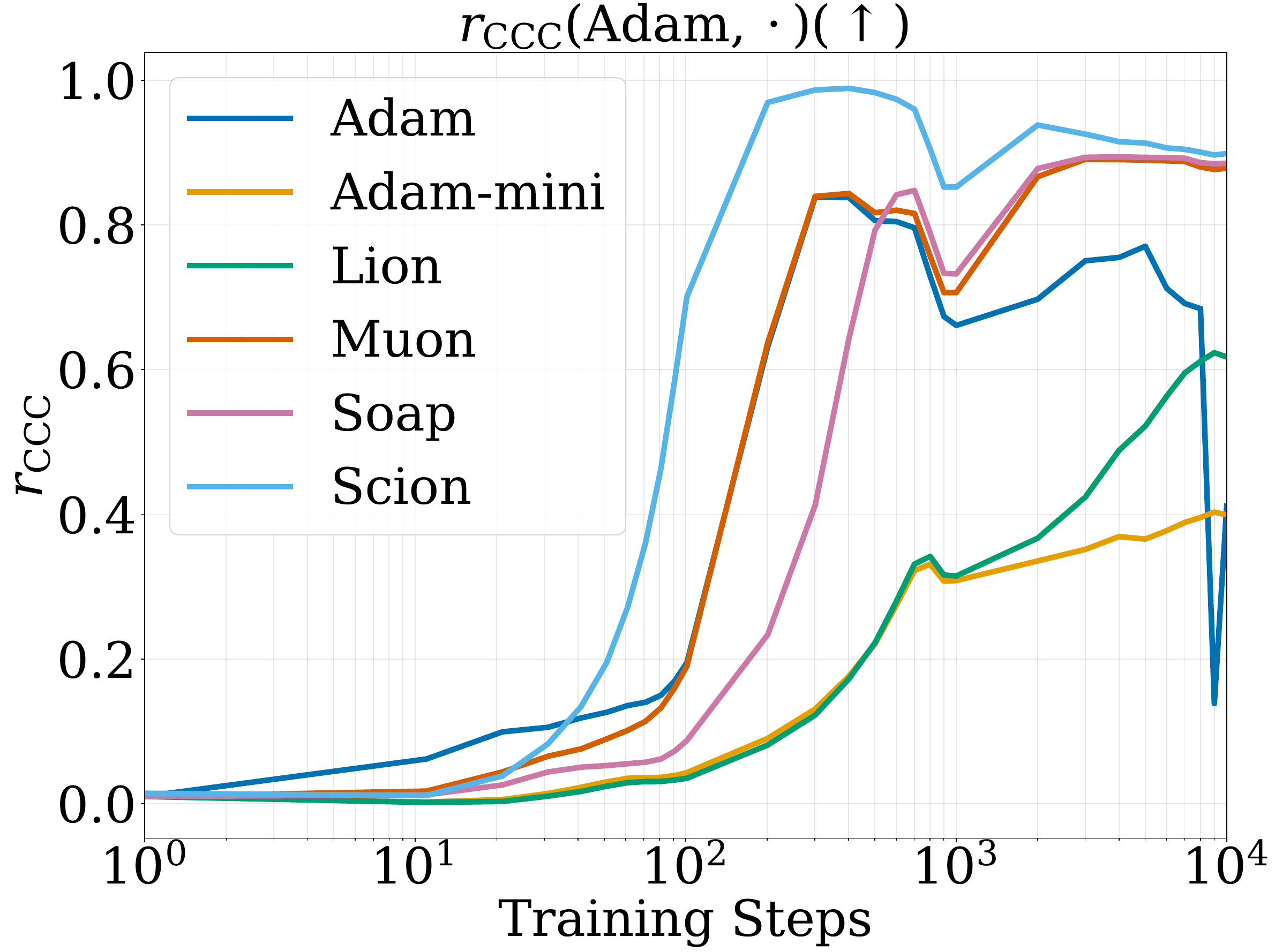}
        \label{fig:ccc_cross_64}
    }
    % \vspace{-0.5em}
    \caption{\ac{rgi} across optimizers, initializations, and model sizes. Panel (a) plots $\cccerr$ across optimizers with different initializations. Panels (b) and (c) plot $\cccerr$ across models with hidden-state dimensions $768$, $384$, and $64$. \ac{rgi} continues to hold across different initializations and moderate changes in model size.}
    \label{fig:same_arch}
    % \vspace{-2.0em}
\end{figure}
We next examine whether \ac{rgi} holds between models with different architectural components. First, we ablate the attention design. While our previous experiments use \ac{fa} with RoPE~\citep{su2024roformer}, we additionally consider \ac{fa} with NoPE~\citep{kazemnejad2023impact}, ALiBi~\citep{press2021train}, and \ac{swa} with RoPE. We train each variant with different optimizers and compare it with the baseline RoPE \ac{fa} model trained with $\adam$. Figures~\ref{fig:ccc_nope}--\ref{fig:ccc_swa} report the corresponding $\cccerr$ values and show that \ac{rgi} continues to hold across different attention designs and optimizers. Second, we ablate the \ac{ffn} design. While the baseline uses GELU~\citep{hendrycks2016gaussian}, we additionally consider Squared-ReLU (SReLU)~\citep{klusowski2016approximation} and SiLU~\citep{elfwing2018sigmoid}. We train both variants with all optimizers and compare them with the baseline GELU model trained with $\adam$. Figure~\ref{fig:ccc_srelu} reports the SReLU results, while the SiLU results and additional comparisons between gated and non-gated \acp{ffn} are deferred to Appendix~\ref{app:add_result}; all exhibit \ac{argi}. Together, these results show that \ac{rgi} persists across different attention and \ac{ffn} designs as well as optimizers.

\begin{figure}[t]
    \centering
    \subfigure[Values of $\cccerr$ between \ac{fa} models with RoPE and NoPE.]{ \includegraphics[width=0.23\textwidth]{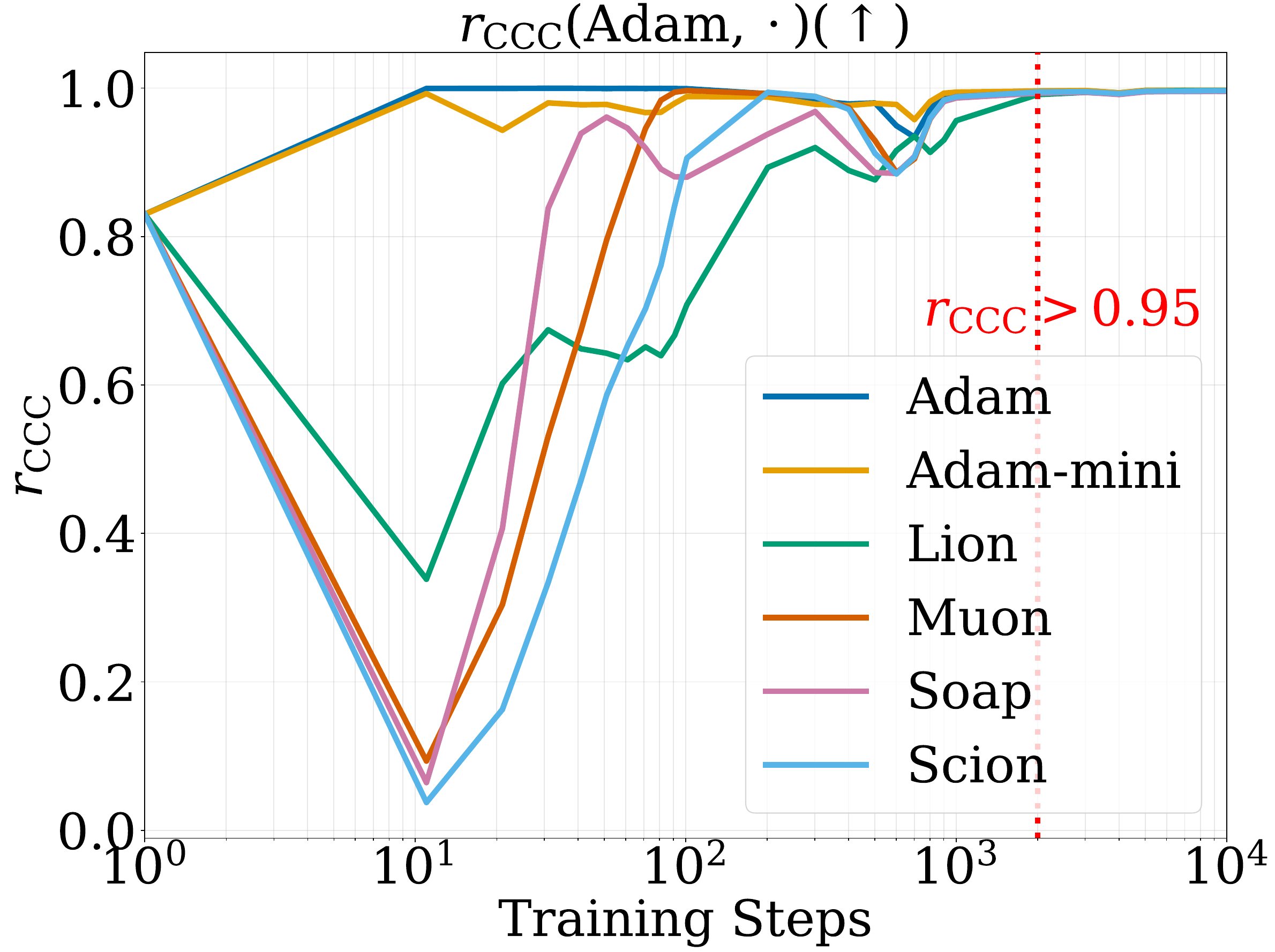}
    \label{fig:ccc_nope}
    }
    \subfigure[Values of $\cccerr$ between \ac{fa} models with RoPE and ALiBi.]{ \includegraphics[width=0.23\textwidth]{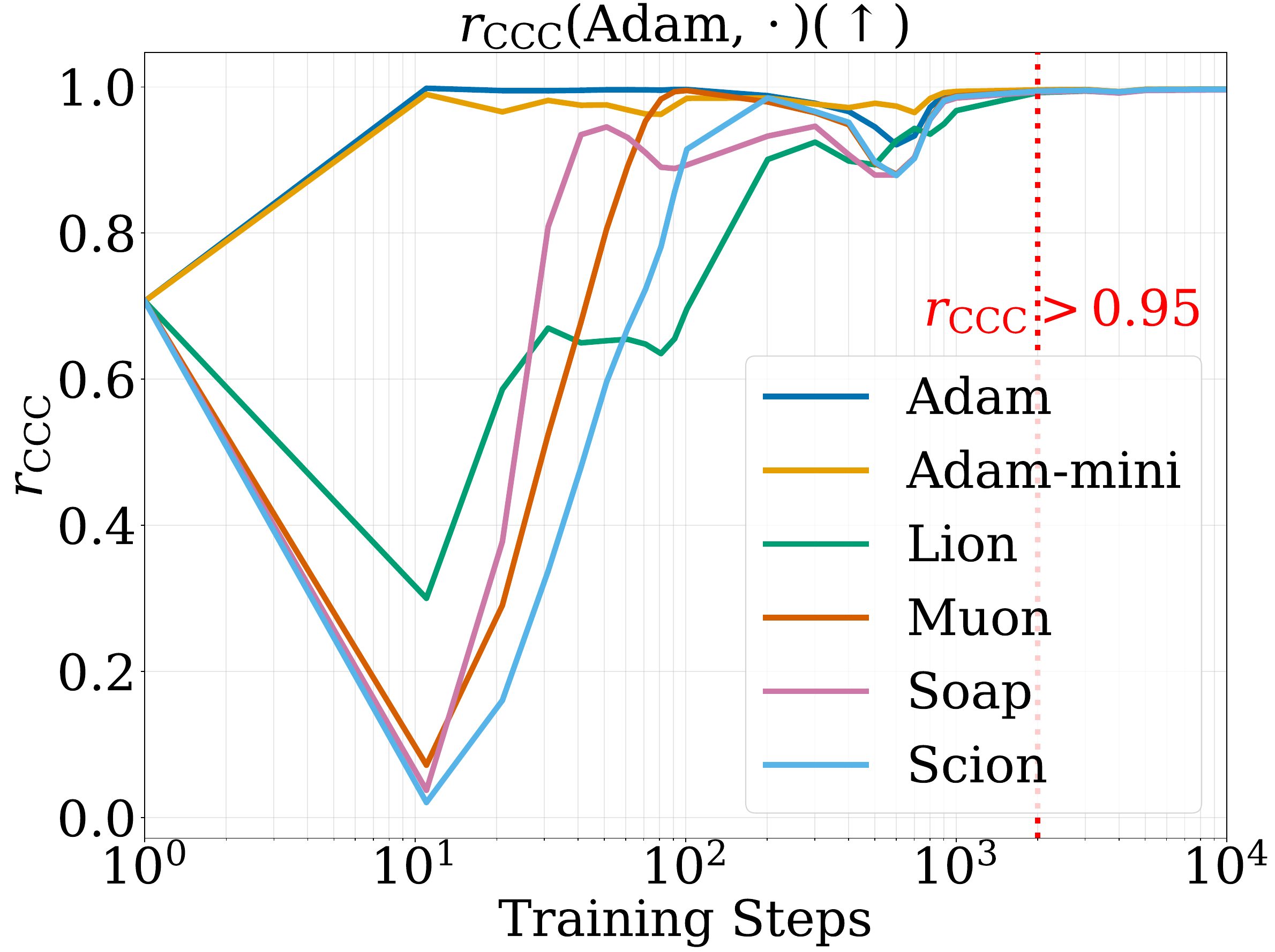}
    \label{fig:ccc_alibi}
    }
    \subfigure[Values of $\cccerr$ between \ac{fa} and \ac{swa} models.]{ \includegraphics[width=0.23\textwidth]{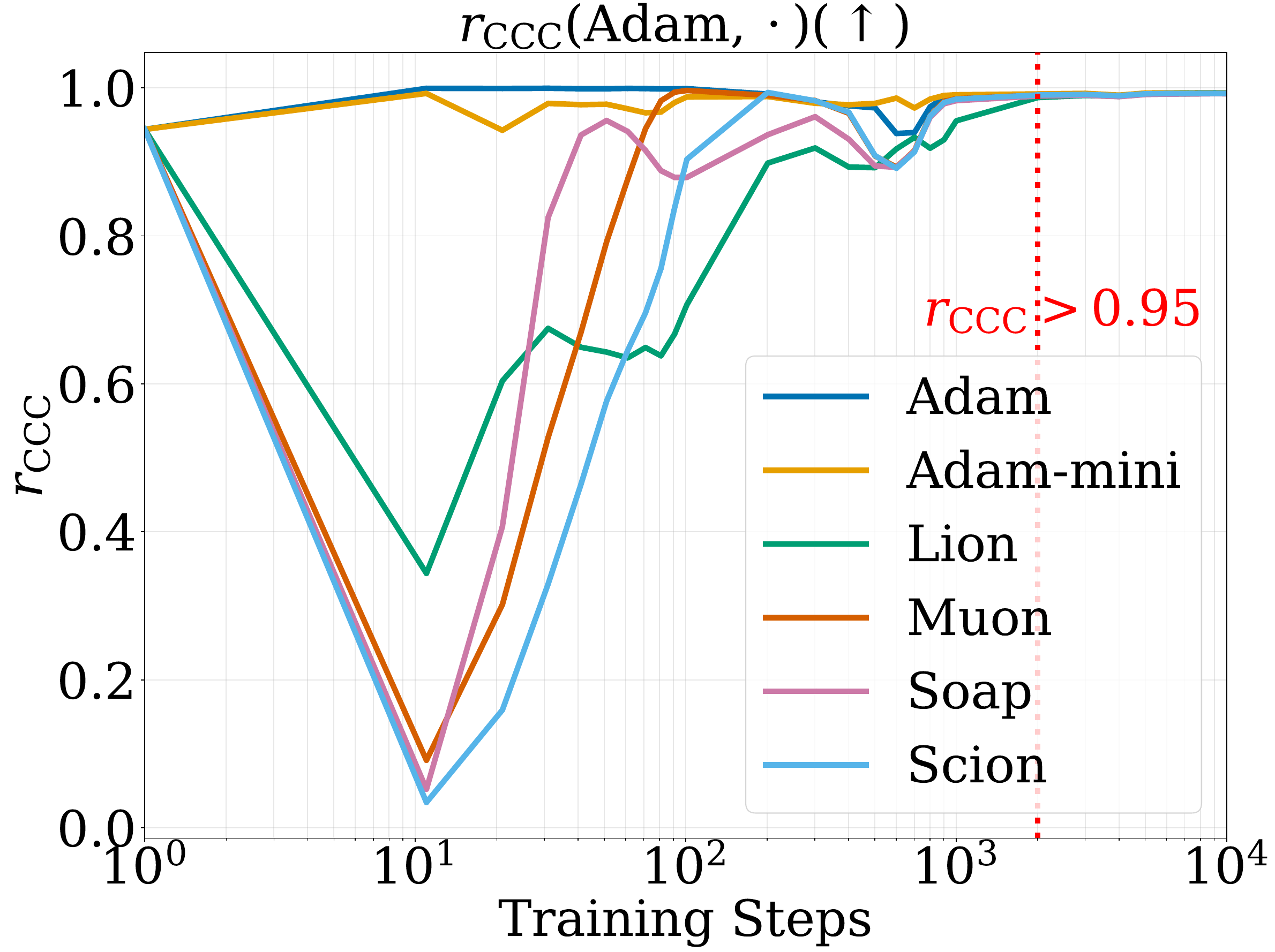}
    \label{fig:ccc_swa}
    }
    \subfigure[Values of $\cccerr$ between models with GELU and SReLU.]{ \includegraphics[width=0.23\textwidth]{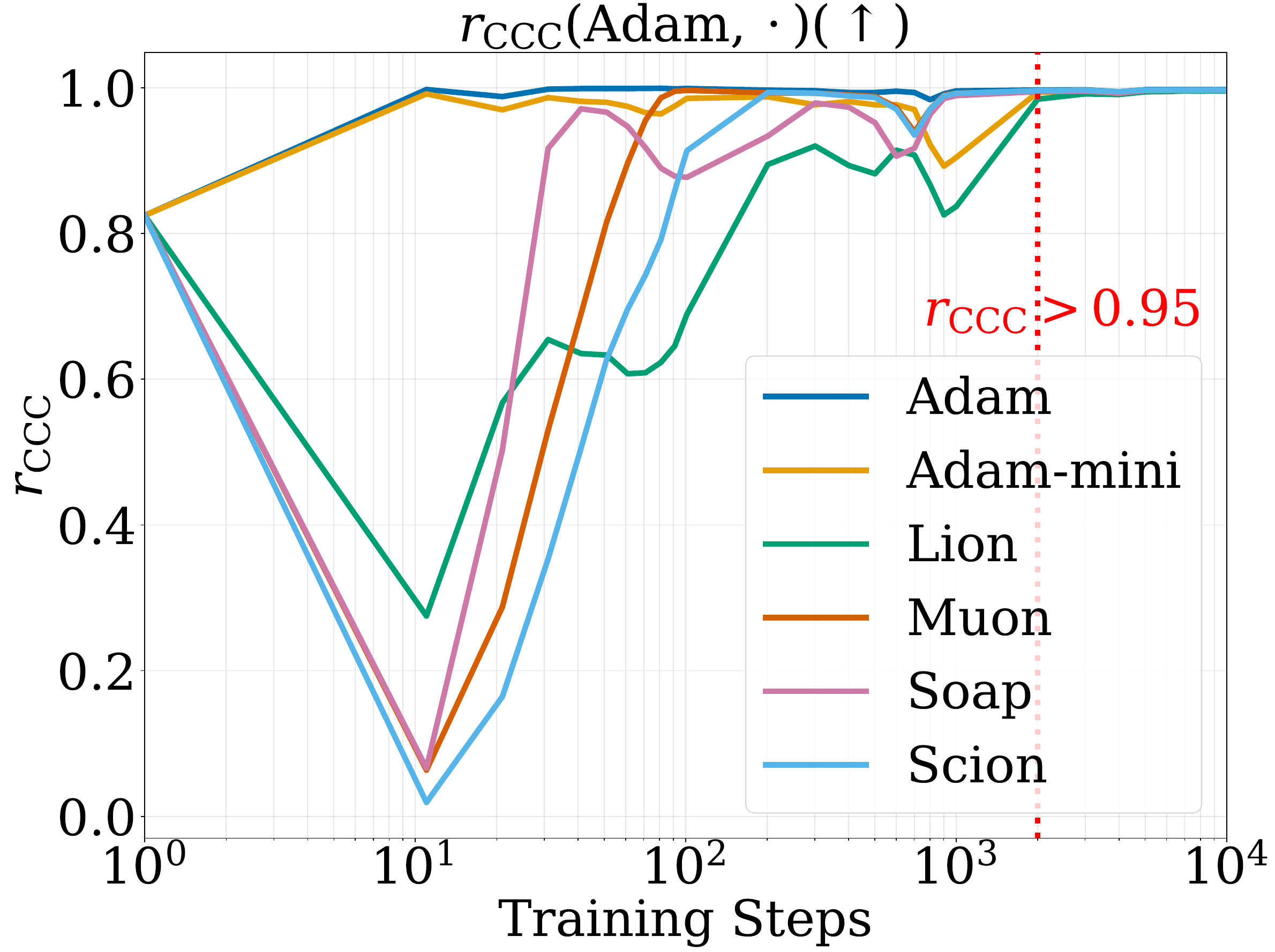}
    \label{fig:ccc_srelu}
    }
    % \vspace{-1.0em}
    \caption{\ac{rgi} across optimizers and models with different attention and \ac{ffn} modules. Panels (a)--(c) plot $\cccerr$ between models using RoPE \ac{fa} trained with $\adam$ and models using NoPE \ac{fa}, ALiBi \ac{fa}, and RoPE \ac{swa}, each trained with all optimizers, respectively. Panel (d) plots $\cccerr$ between models using GELU and SReLU activations. \ac{rgi} continues to hold across different attention and \ac{ffn} designs.} 
    % \vspace{-1.0em}
    \label{fig:diff_arch}
\end{figure}

In conclusion, \ac{argi} persists under joint variations in optimizers and moderate architectural changes, though these choices can affect its emergence time.

\paragraph{Shared Training-Loss Shapes across Optimizers and Architectures Arise from \ac{rgi}.} Training \acp{llm} with different optimizers and architectures on the same data stream often yields training-loss curves with similar shapes. As shown in Figure~\ref{fig:loss_cur}, after $2{,}000$ steps, all optimizers exhibit nearly the same training-loss profile. We show that this phenomenon can be explained by approximate \ac{rgi}. The training loss 
\begin{wrapfigure}{r}{0.5\textwidth}
    \centering
    % \vspace{-1.5em}
   \includegraphics[width=0.5\textwidth]{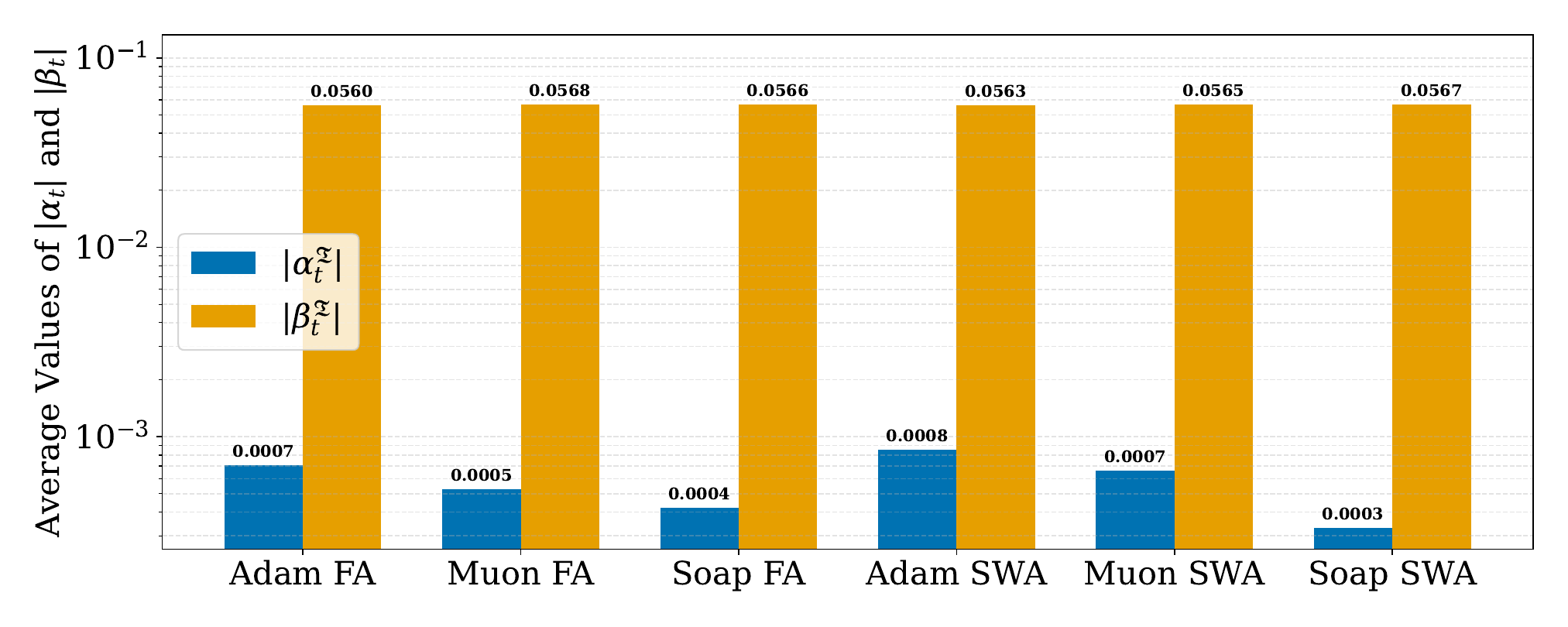}
    \caption{Values of $|\alpha_t^{\frakT}|$ and $|\beta_t^\frakT|$. It shows that $|\beta_t^\frakT|$ is much larger than $|\alpha_t^{\frakT}|$}
    \label{fig:alpha_beta}
    % % \vspace{-2em}
\end{wrapfigure}
at step $t$ is $L(\theta_{t-1}^{\frakT},D_t)$, namely the loss of the parameters trained on $D_{1:t-1}$ and evaluated on the current batch $D_t$. Its change between adjacent steps can be decomposed as
\begin{align*}
L\big(\theta_t^{\frakT},D_{t+1}\big)
-L\big(\theta_{t-1}^{\frakT},D_t\big)
=
\alpha_t^{\frakT}+\beta_t^{\frakT},
\end{align*}
where
$\alpha_t^{\frakT}=L(\theta_t^{\frakT},D_{t+1})-L(\theta_{t-1}^{\frakT},D_{t+1})$
captures the loss change induced by one optimizer update, while
$\beta_t^{\frakT}=L(\theta_{t-1}^{\frakT},D_{t+1})-L(\theta_{t-1}^{\frakT},D_t)$
captures the loss difference of the same parameters across adjacent batches. Figure~\ref{fig:alpha_beta} shows that $|\beta_t|$ is approximately $50$ times larger than $|\alpha_t|$, so the local shape of the training-loss curve is dominated by $\beta_t$. Since $\theta_{t-1}^{\frakT}$ has not been trained on either $D_t$ or $D_{t+1}$, these batches act as held-out data for $\theta_{t-1}^{\frakT}$. Approximate \ac{rgi} therefore implies that $\beta_t^{\frakT}$ is approximately invariant across optimizers and architectures. Consequently, the adjacent-step changes in training loss are nearly the same across optimizers and architectures, explaining their shared training-loss shapes.

\subsection{\ac{rgi} is Strongly Influenced by \horange{Pretraining Data Streams}}\label{sec:train_data}
\begin{figure}[H]
    \centering
    % \vspace{-1.0em}
    \subfigure[Values of $\cccerr$ with $D_l^{\prime}$ from C4.]{ \includegraphics[width=0.31\textwidth]{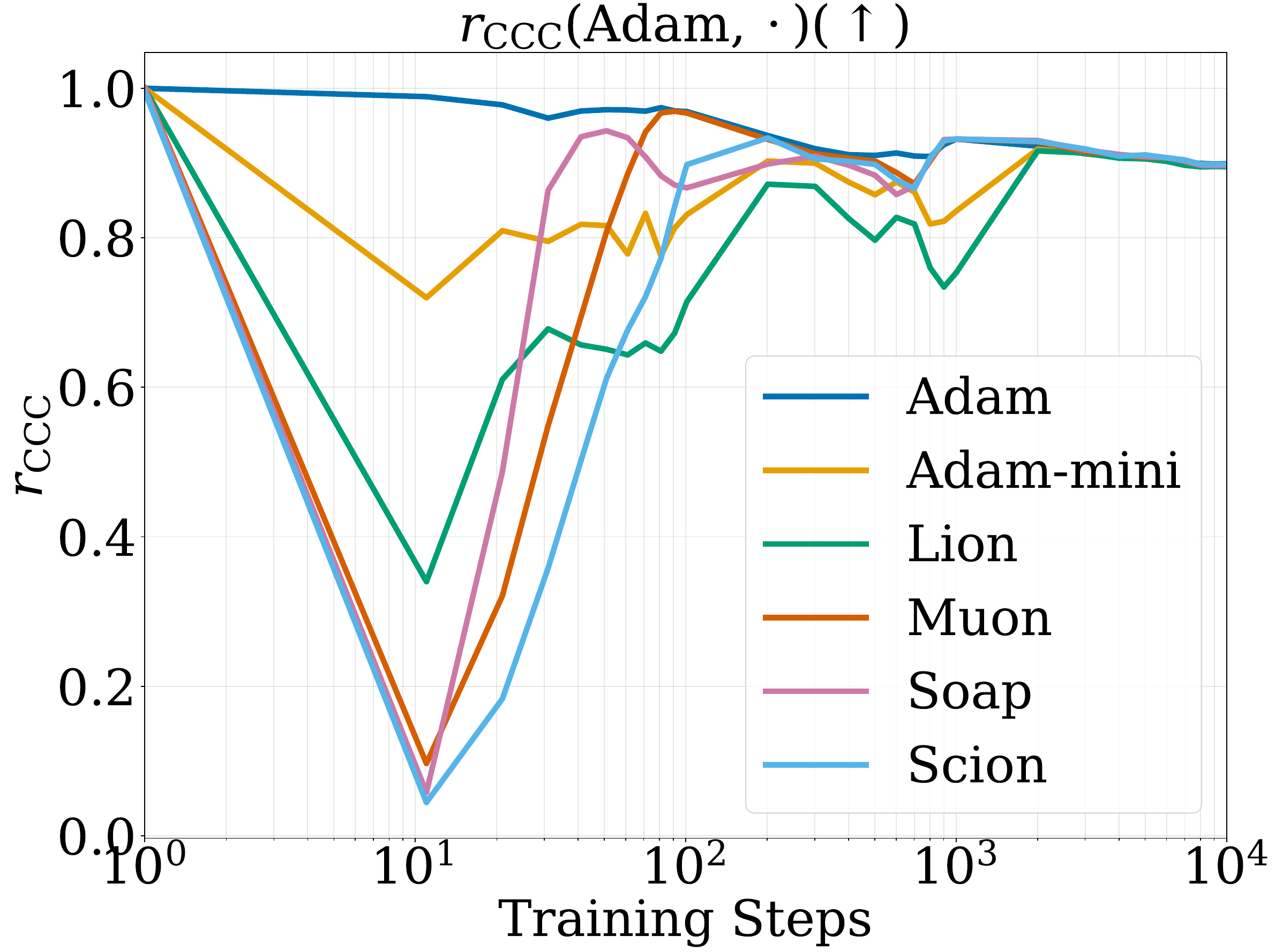}\label{fig:ccc_c4}}
    \subfigure[Values of $\cccerr$  with $D_l^{\prime}$ from Arxiv.]{ \includegraphics[width=0.31\textwidth]{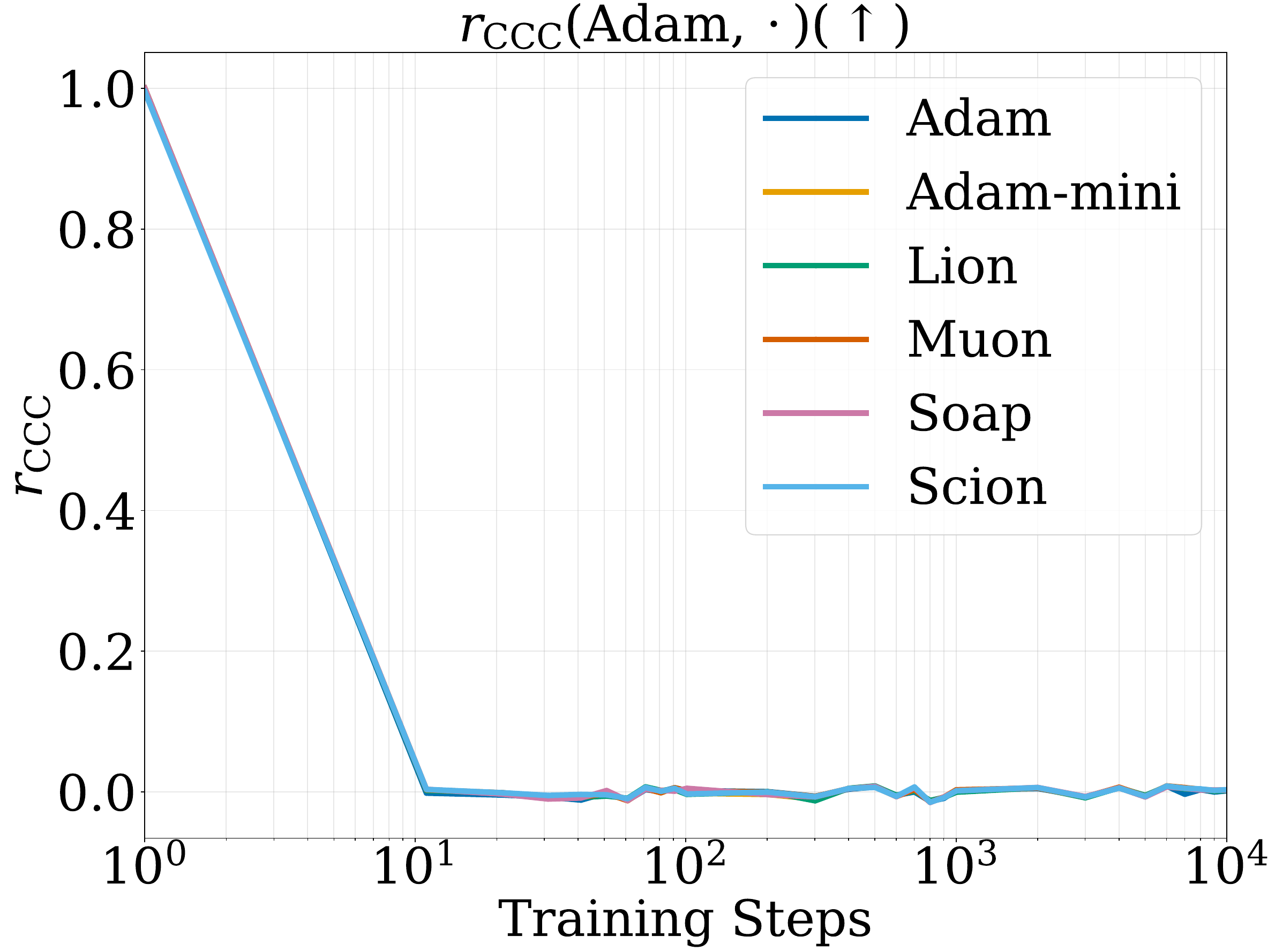}\label{fig:ccc_arxiv}}
    \subfigure[Values of $\cccerr$  with $D_l^{\prime}$ from CodeParrot.]{ \includegraphics[width=0.31\textwidth]{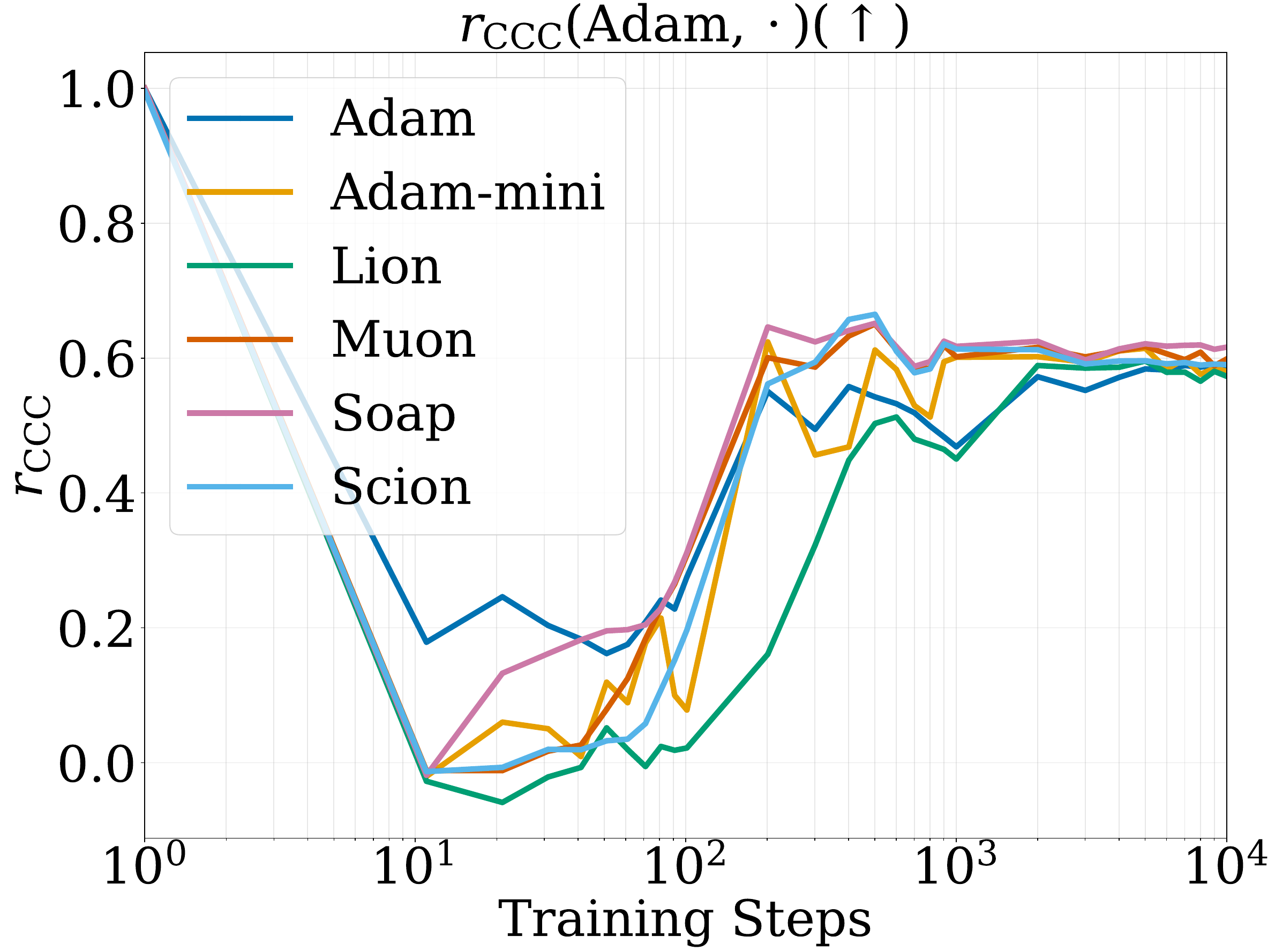}\label{fig:ccc_code}}
    % \vspace{-1.0em}
    \caption{\ac{rgi} across optimizers trained on different data streams. The panels plot $\cccerr$ when Adam is trained on FineWeb and the compared optimizer is trained on C4 (Panel (a)), ArXiv (Panel (b)), or CodeParrot (Panel (c)). The results show that \ac{rgi} is strongly affected by the training data stream: changing the data stream can substantially weaken, or even eliminate, \ac{rgi} across optimizers.} 
    % \vspace{-1.0em}
    \label{fig:dataset}
\end{figure}

We next explore whether \ac{argi} emerges when the training data stream varies, jointly or independently of the optimizer, and show that it can fail even when the optimizer and architecture are held fixed. Specifically, we examine whether \ac{rgi} emerges between
$\theta_t^{\frakT_{\adam}}=F(\frakT_{\adam},t)$, where
$\frakT_{\adam}=(\adam,\theta_0,D_{1})$,
and
$\theta_t^{\frakT^{\prime}}=F(\frakT^{\prime},t)$, where
$\frakT^{\prime}=(\sfA^{\prime},\theta_0,\horange{D_{1}^{\prime}})$.
Here, $D_{1}$ is drawn from FineWeb, while $\horange{D_{1}^{\prime}}$ is drawn from one of three alternative datasets: C4~\citep{raffel2020exploring}, the ArXiv subset of RedPajama~\citep{weber2024redpajama}, and CodeParrot~\citep{codeparrot2022}. FineWeb and C4 are both large-scale web-text datasets covering diverse topics and are derived, at least in part, from Common Crawl~\citep{baack2024critical}, suggesting relatively high distributional similarity. By contrast, ArXiv consists of academic papers from arXiv, while CodeParrot consists of source code collected from GitHub, making their distributions more distinct from FineWeb.

Figure~\ref{fig:dataset} shows that the training data stream strongly affects the emergence of \ac{rgi}. For C4, Figure~\ref{fig:ccc_c4} shows that $\cccerr$ first decreases, then rises to a relatively large value, and finally declines again; the intermediate increase may reflect the greater distributional similarity between FineWeb and C4. In contrast, for ArXiv and CodeParrot, Figures~\ref{fig:ccc_arxiv} and~\ref{fig:ccc_code} show that $\cccerr$ remains below $0.95$ throughout training even when the same optimizer $\adam$ is used, indicating that \ac{rgi} does not emerge across these training data streams. The corresponding $\differr$ results in Appendix~\ref{app:add_result} show the same trend. Overall, changes in the training data distribution can substantially weaken or even prevent the emergence of \ac{rgi}.

\subsection{ \ac{rgi} Persists under Moderate \hteal{Validation} Shifts}\label{sec:val}
\begin{figure}[H]
    \centering
    \subfigure[Values of $\cccerr$ with $D_{\val}$ from C4.]{ \includegraphics[width=0.23\textwidth]{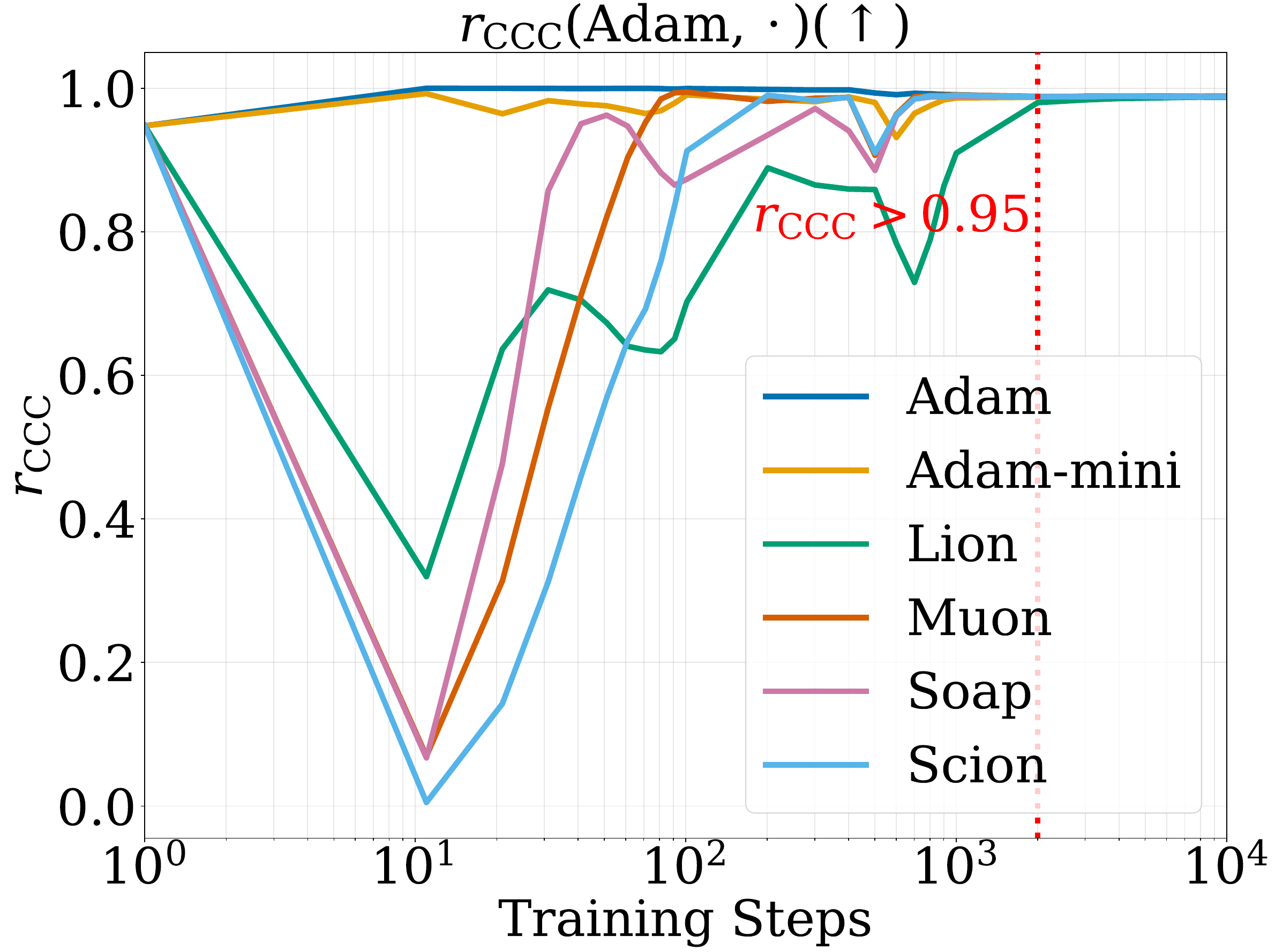}}
    \subfigure[Values of $\cccerr$ with $D_{\val}$ from ArXiv.]{ \includegraphics[width=0.23\textwidth]{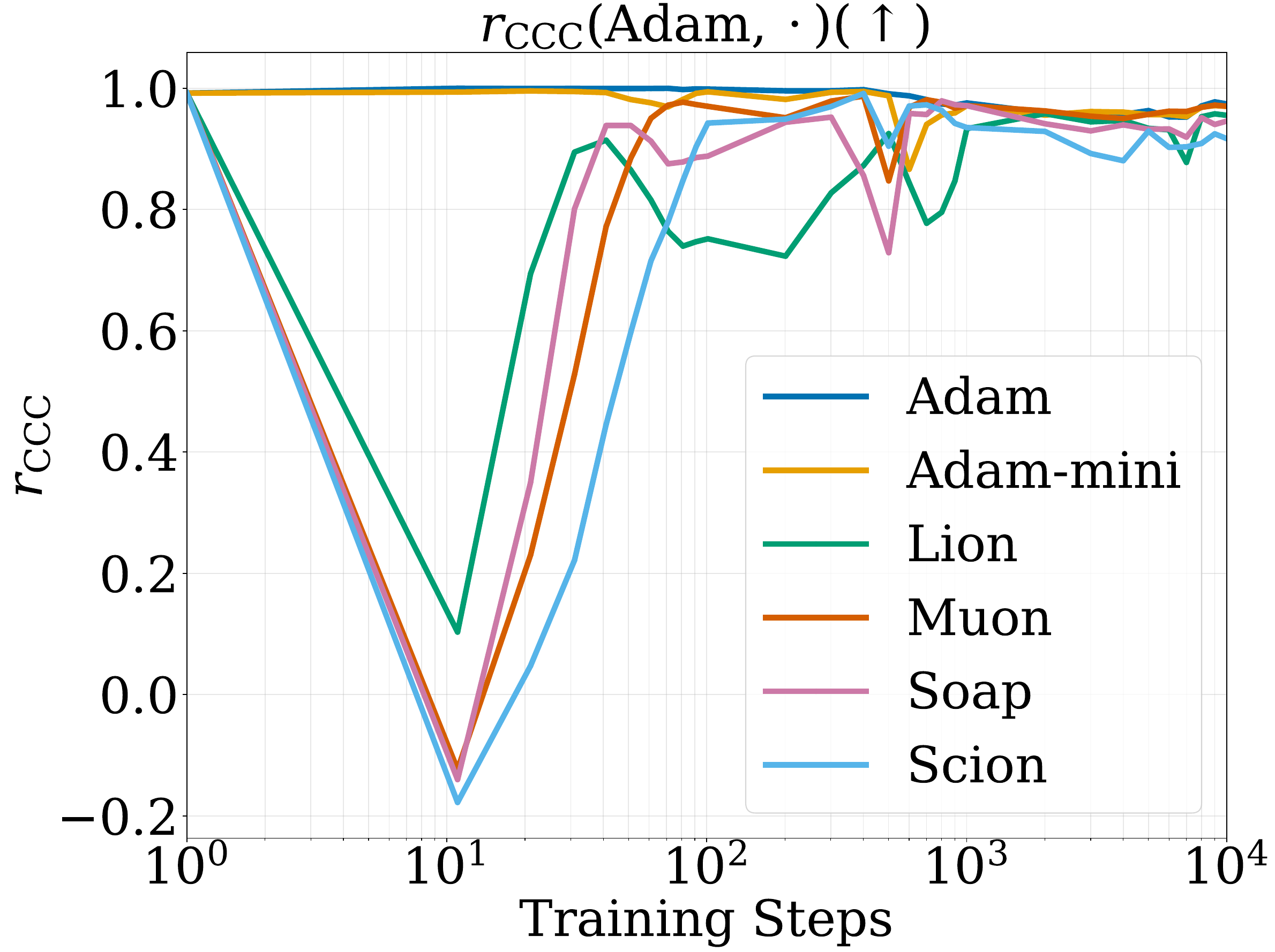}}
    \subfigure[Values of $\cccerr$ with $D_{\val}$ from CodeParrot.]{ \includegraphics[width=0.23\textwidth]{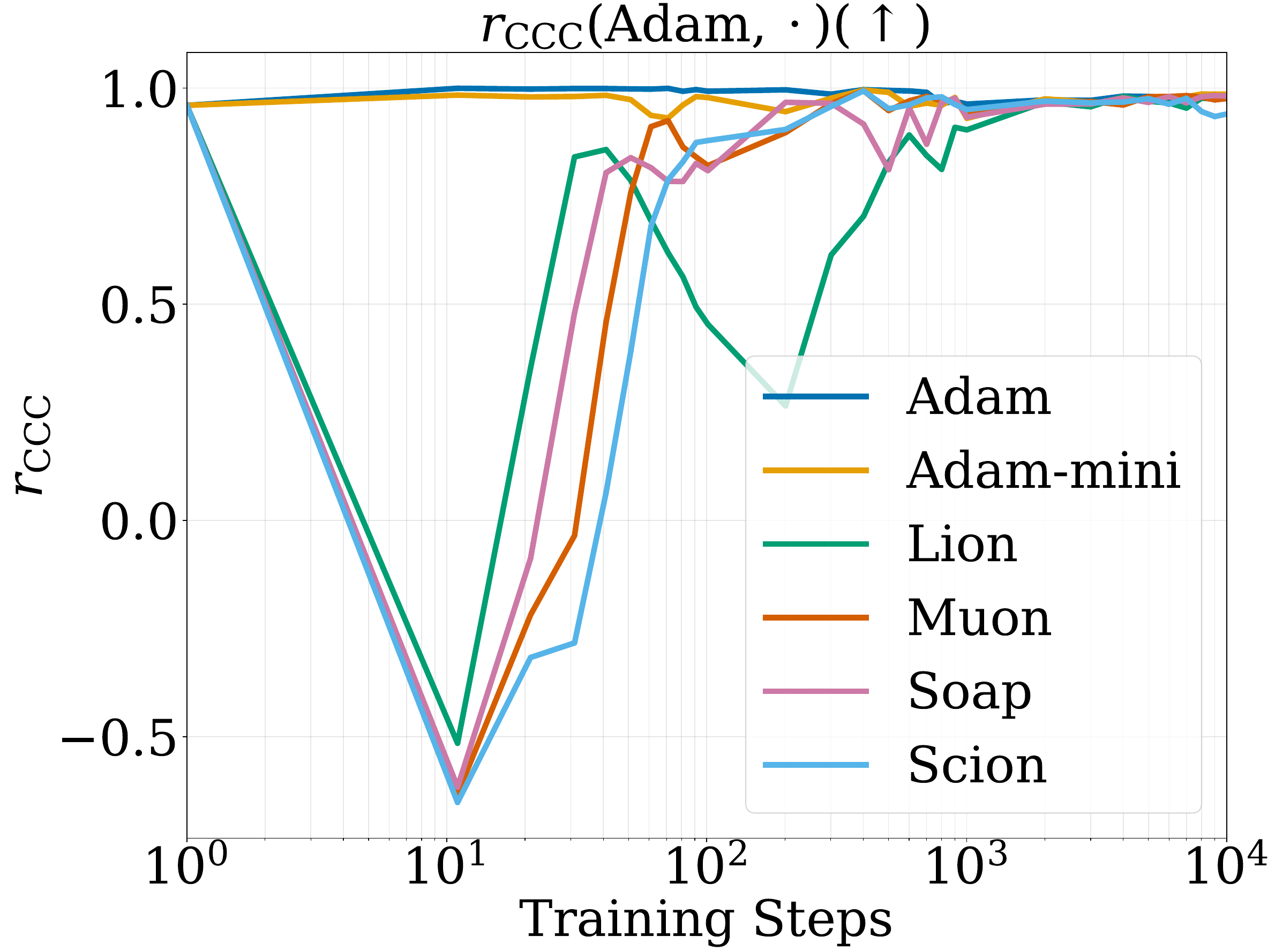}}
    \subfigure[Values of $\cccerr$ with $D_{\val}$ from Random.]{ \includegraphics[width=0.23\textwidth]{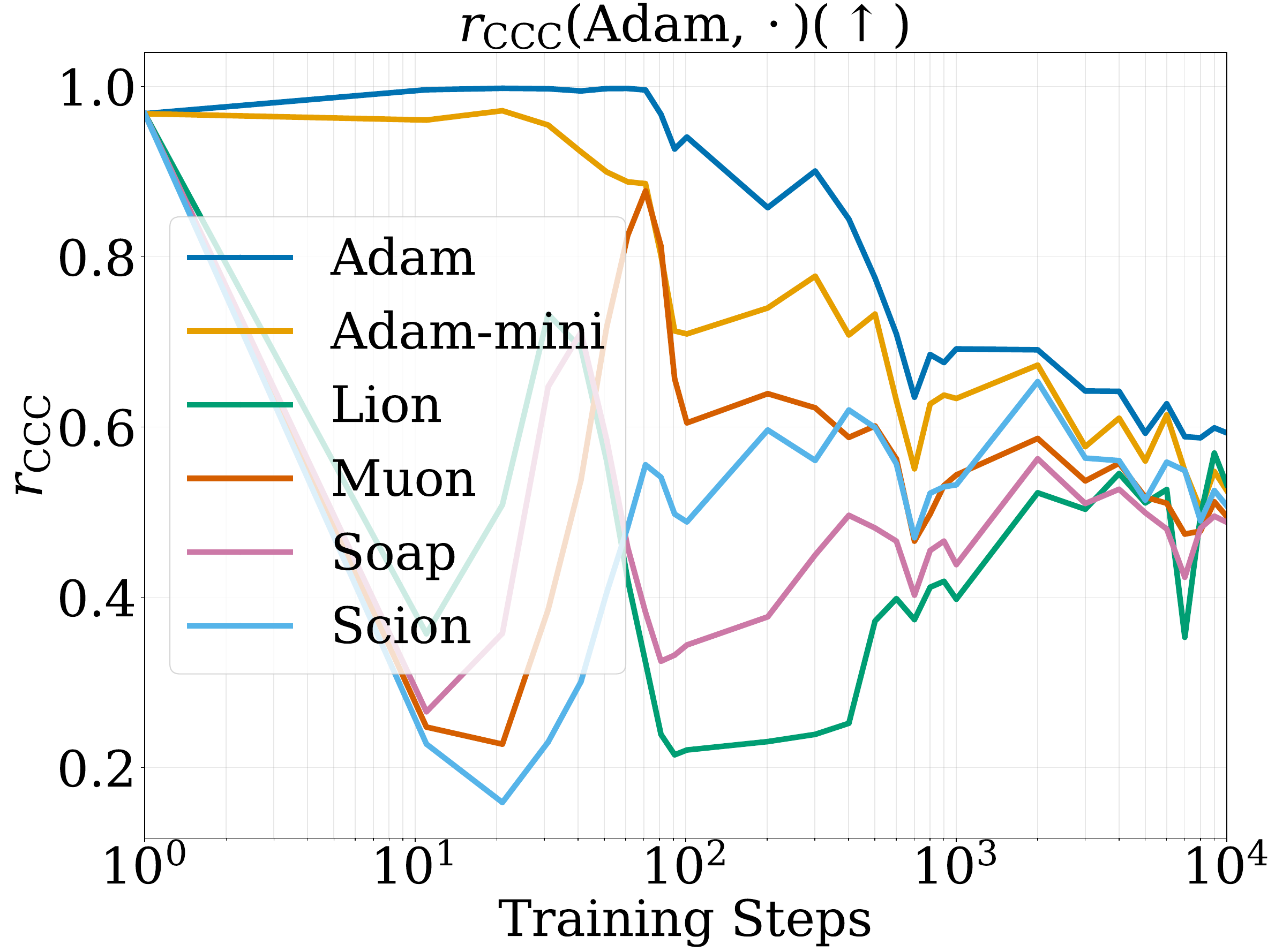}}
    \caption{ \ac{rgi} across optimizers and architectures evaluated on different validation datasets. The panels plot $\cccerr$ evaluated on C4, ArXiv, CodeParrot, and Random. These results show that \ac{rgi} can transfer to related validation distributions, but becomes weaker on more distributionally distinct validation datasets.} 
    % \vspace{-1.0em}
    \label{fig:val}
\end{figure}

% \begin{figure}[t]
%     \centering
%     \subfigure[Values of $\differr$ with $D_{\val}$ from C4.]{ \includegraphics[width=0.23\textwidth]{figures/r_diff_c4_Adam_ood.pdf}}
%     \subfigure[Values of $\differr$ with $D_{\val}$ from ArXiv.]{ \includegraphics[width=0.23\textwidth]{figures/r_diff_arxiv_Adam_ood.pdf}}
%     \subfigure[Values of $\differr$ with $D_{\val}$ from CodeParrot.]{ \includegraphics[width=0.23\textwidth]{figures/r_diff_codeparrot_Adam_ood.pdf}}
%     \subfigure[Values of $\differr$ with $D_{\val}$ from Random.]{ \includegraphics[width=0.23\textwidth]{figures/r_diff_random_Adam_ood.pdf}}
%     \caption{ \ac{rgi} across optimizers evaluated on different validation datasets. The panels plot $\differr$ evaluated on C4, ArXiv, CodeParrot, and Random. These results show that \ac{rgi} can transfer to related validation distributions, but becomes weaker on more distributionally distinct validation datasets.} 
%     % \vspace{-1.0em}
%     \label{fig:val}
% \end{figure}
In this section, we ablate the effect of the validation data $\hteal{D_{\val}}$ on \ac{rgi} across optimizers and architectures. For clarity, we focus on \ac{fa} and \ac{swa} as representative architectural variants; additional results are provided in Appendix~\ref{app:add_result}. All models are trained on FineWeb, while $\differr$ and $\cccerr$ are evaluated on validation data from other sources. As in Section~\ref{sec:train_data}, we consider C4, ArXiv, and CodeParrot, and additionally construct a Random dataset by sampling each token i.i.d. uniformly from the vocabulary, yielding an extreme \ac{ood} setting with no meaningful semantic structure. Figure~\ref{fig:val} reports the corresponding $\cccerr$ values, comparing \ac{swa} models trained with various optimizers against the \ac{fa} baseline trained with Adam. $\cccerr$ remains large on C4, while on ArXiv and CodeParrot it is smaller but generally increases as training progresses. In contrast, the Random dataset shows no clear emergence of \ac{rgi}. These results suggest that validation-distribution shifts weaken \ac{argi}: moderate shifts preserve the trend, whereas sufficiently extreme shifts can prevent its emergence. The corresponding $\differr$ results in Appendix~\ref{app:add_result} support the same conclusion.

\section{Relationships to the NTK and MF Regimes}\label{sec:relation}
Section~\ref{sec:emp_studies} shows that approximate \ac{rgi} holds across a wide range of optimizers and architectures. We now examine whether this phenomenon can be explained by two well-known theoretical regimes of training dynamics: the \ac{ntk} and \ac{mf} regimes. We focus on \ac{rgi} across optimizers, since different architectures have parameterizations and intermediate representations that are not directly comparable.

\subsection{RGI Emerges Beyond the \ac{ntk} Regime}\label{sec:ntk}

\begin{figure}[t]
    \centering
    % % \vspace{-0.9em}
    \subfigure[$\cossim$ between parameters within optimizers.]{ \includegraphics[width=0.23\textwidth]{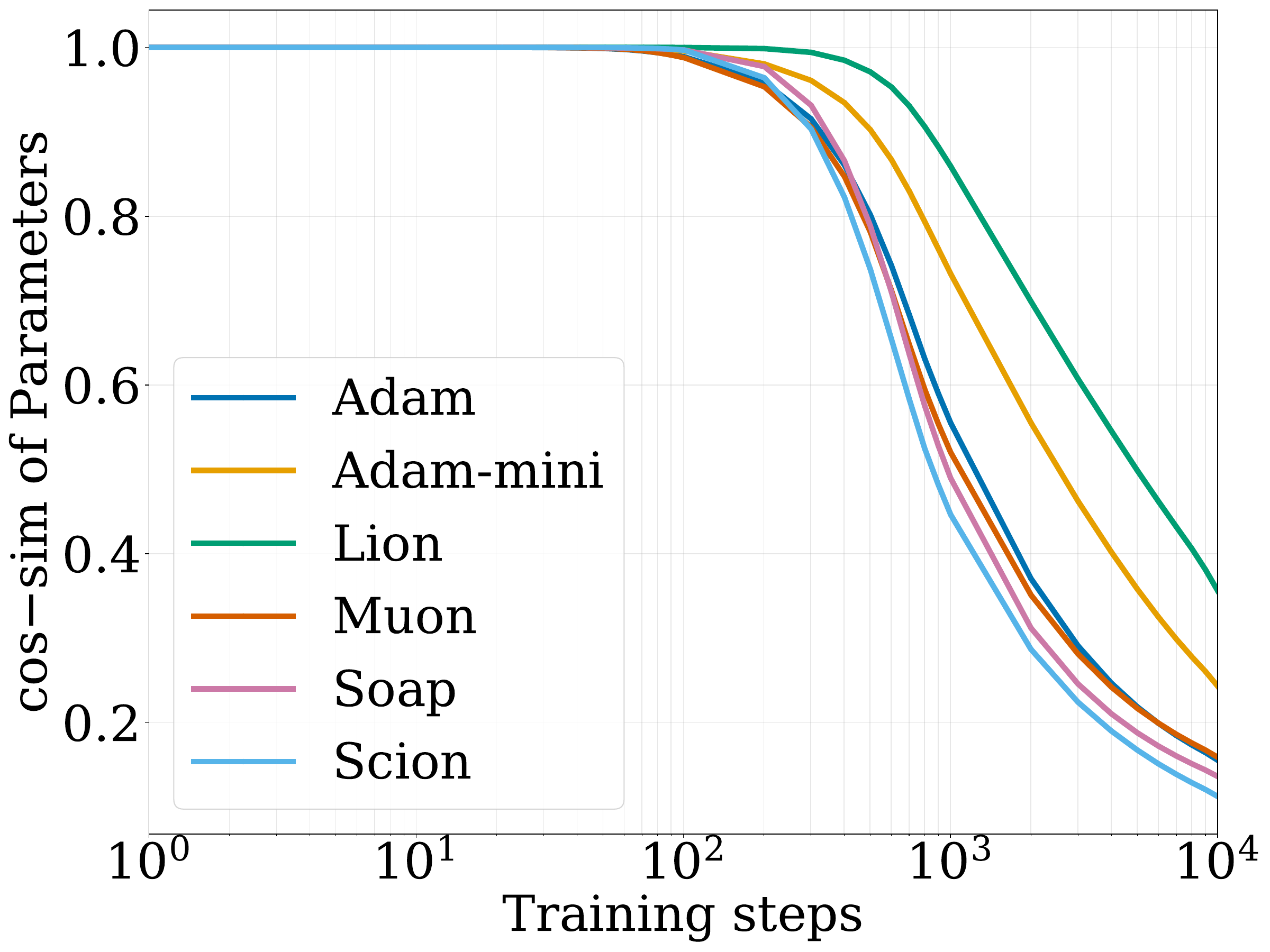}\label{fig:cs_self_p}}
    \subfigure[$\cossim$ between hidden states within optimizers.]{ \includegraphics[width=0.23\textwidth]{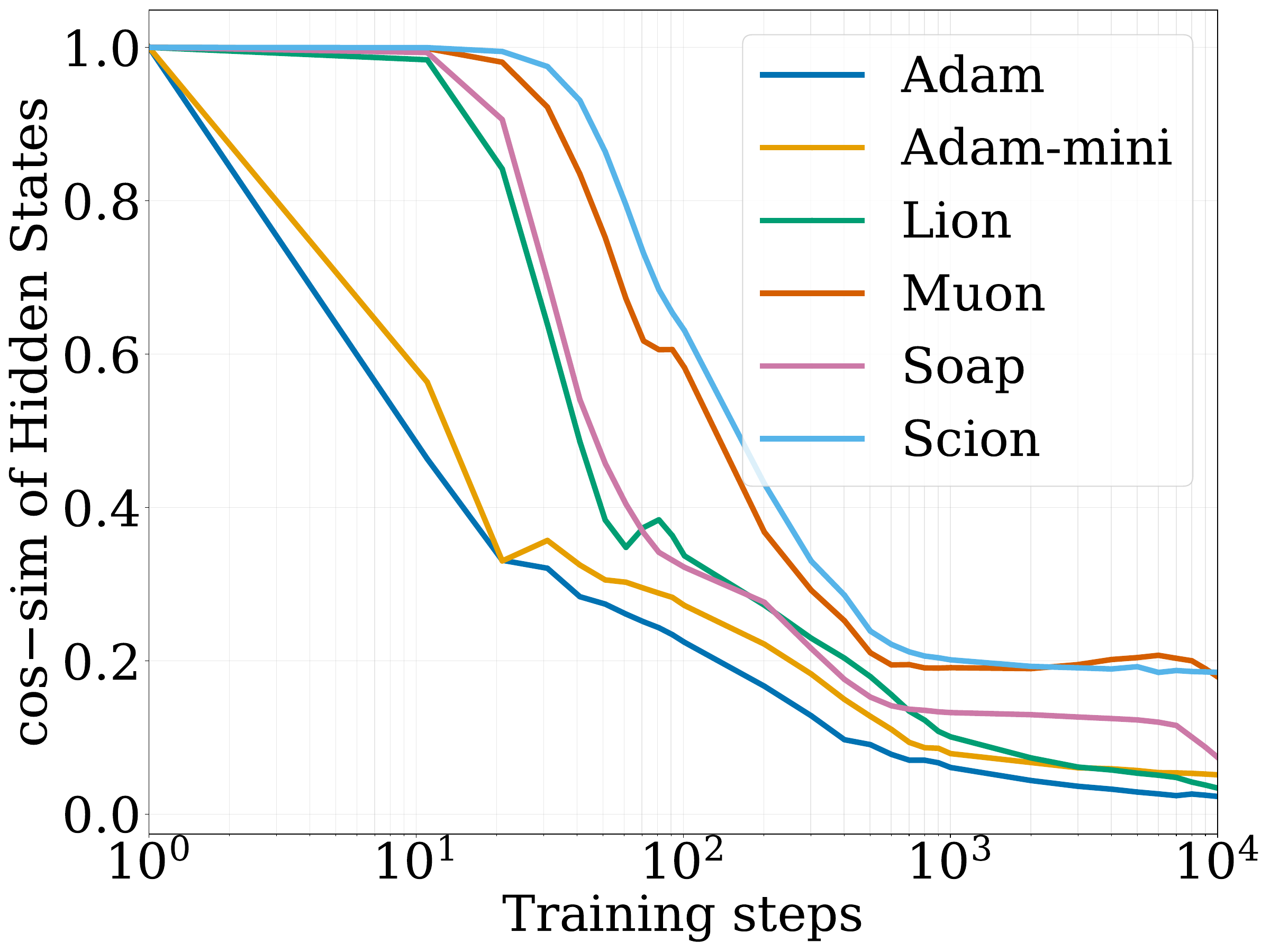}\label{fig:cs_self_h}}
    \subfigure[$\cossim$ between parameters cross optimizers.]{ \includegraphics[width=0.23\textwidth]{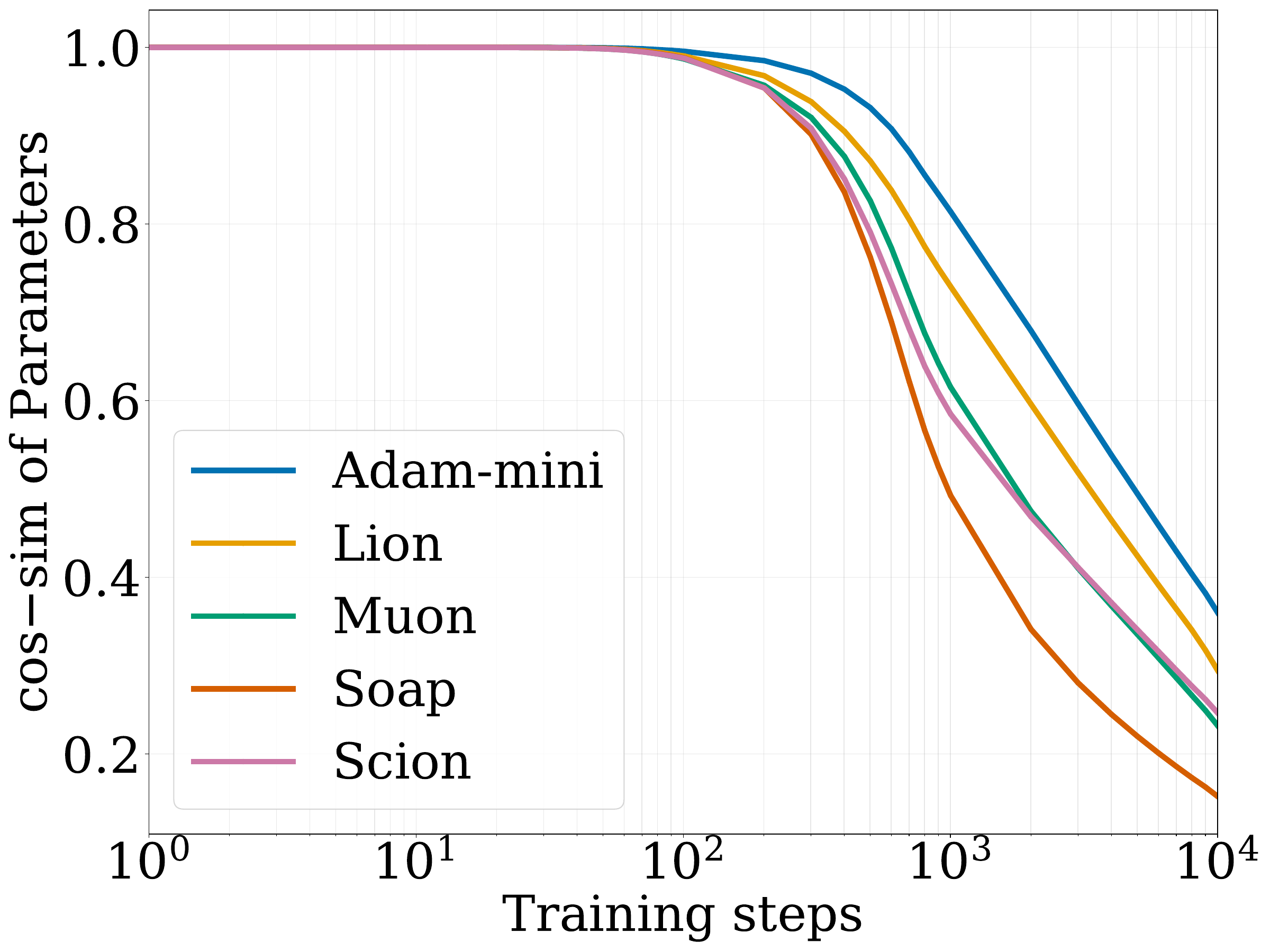}\label{fig:cs_cross_p}}
    \subfigure[$\cossim$ between hidden states cross optimizers.]{ \includegraphics[width=0.23\textwidth]{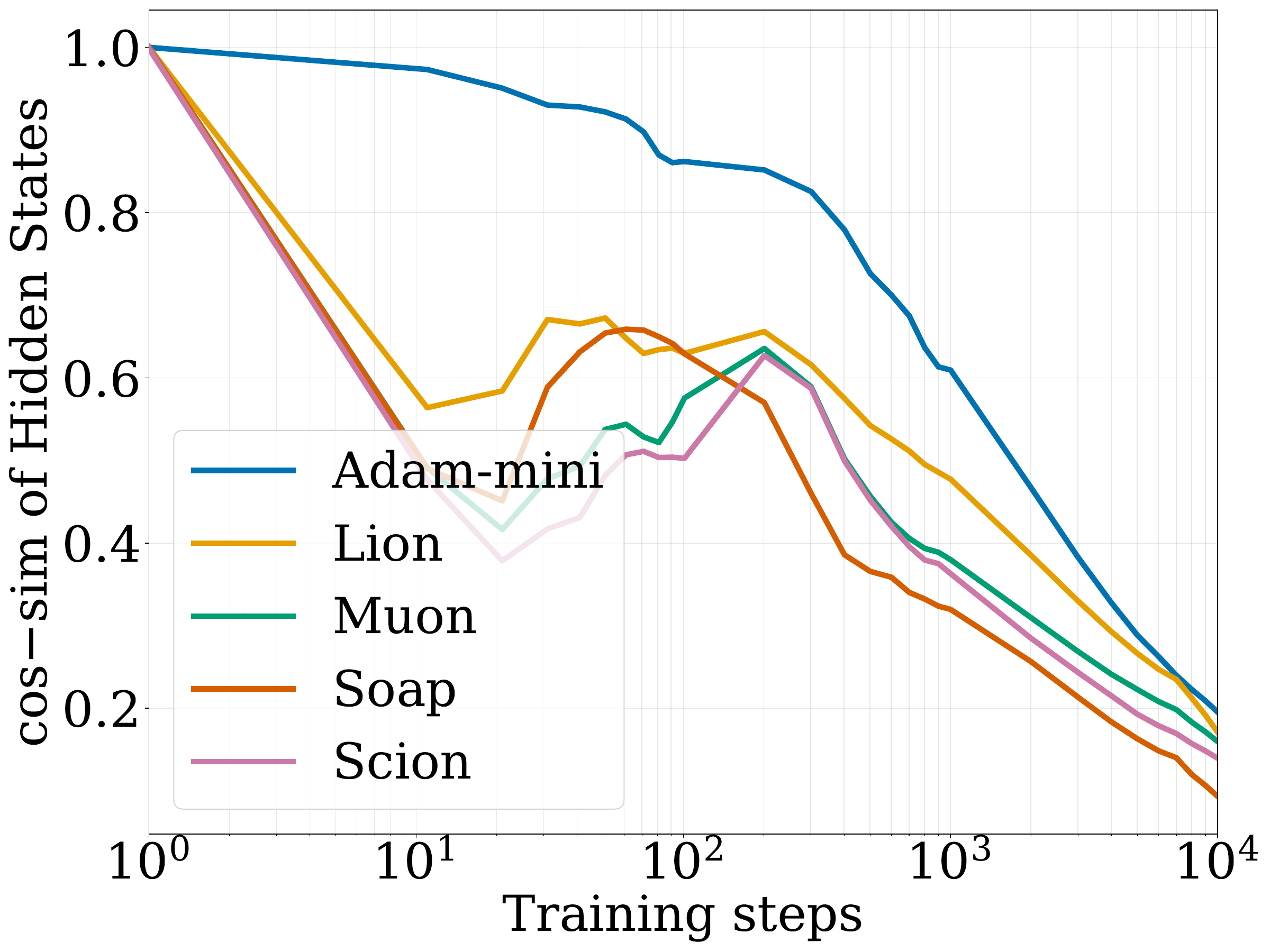}\label{fig:cs_cross_h}}
    % \vspace{-1.0em}
    \caption{Within- and cross-optimizer $\cossim(\uparrow)$ for parameters and hidden states. Panels (a)--(b) compare $\theta_0^{\frakT}$ with $\theta_t^{\frakT}$, while Panels (c)--(d) compare $\theta_t^{\frakT}$ with $\theta_t^{\frakT^{\prime}}$. Hidden-state similarities are averaged over layers $l\in[L]$. Within-optimizer comparisons show that all optimizers learn nontrivial features throughout training, whereas cross-optimizer comparisons show that different optimizers learn distinct features.} 
    \label{fig:ntk}
    % \vspace{-1.0em}
\end{figure}
In this section, we examine whether approximate \ac{rgi} across optimizers sharing the same initialization and training data stream can be explained by the \ac{ntk} regime. In this regime, sufficiently wide models under suitable Gaussian initialization are well approximated by their linearization around initialization, with the induced kernel remaining nearly constant~\citep{jacot2018neural,golikov2022neural}. This is also known as the lazy-learning regime~\citep{geiger2020disentangling}, whose characteristic signature is that parameters and representations remain close to initialization:
$\|\theta_t-\theta_0\|_2 \ll \|\theta_0\|_2$ and
$\|h(l,\theta_t,d)-h(l,\theta_0,d)\|_2 \ll \|h(l,\theta_0,d)\|_2$
for relevant $t$, $l\in[L]$, and $d\in\calD_{\val}$~\citep{du2018gradient}. We test these signatures using cosine similarity and relative $\ell_2$ difference,
$\cossim(x,x')=x^\top x'/(\|x\|_2\|x'\|_2)$ and
$\relerr(x,x')=2|x-x'|_2/(\|x\|_2+\|x'\|_2)$,
for both parameters and hidden states, comparing them across training steps within each optimizer and across optimizers at the same step. We report cosine-similarity results in the main text and defer the relative-$\ell_2$ results, which show consistent trends, to Appendix~\ref{app:add_result}.

Figures~\ref{fig:cs_self_p}--\ref{fig:cs_self_h} report within-optimizer cosine similarities between initialization and step $t$ for both parameters, $\cossim(\theta_0^{\frakT},\theta_t^{\frakT})$, and hidden states, $\cossim(h(l,\theta_0^{\frakT},d),h(l,\theta_t^{\frakT},d))$, averaged over layers $l\in[L]$, validation examples $d\in\calD_{\val}$, and random seeds. After $5{,}000$ training steps, both parameters and hidden states deviate substantially from initialization, suggesting that the observed \ac{rgi} cannot be explained by the \ac{ntk} lazy-learning regime alone. We further compare models trained by different optimizers at the same step $t$. Figures~\ref{fig:cs_cross_p}--\ref{fig:cs_cross_h} report the cosine similarities of parameters and hidden states between Adam and the other optimizers, both of which decrease as training proceeds. Thus, approximate \ac{rgi} cannot be attributed either to \ac{ntk}-style fixed representations or to different optimizers converging to shared parameters or hidden representations.

\subsection{RGI Emerges Beyond the \ac{mf} Regime}\label{sec:mf}

\begin{figure}[H]
    \centering
    \subfigure[Values of $\cccerr$ at width $576$.]{ \includegraphics[width=0.23\textwidth]{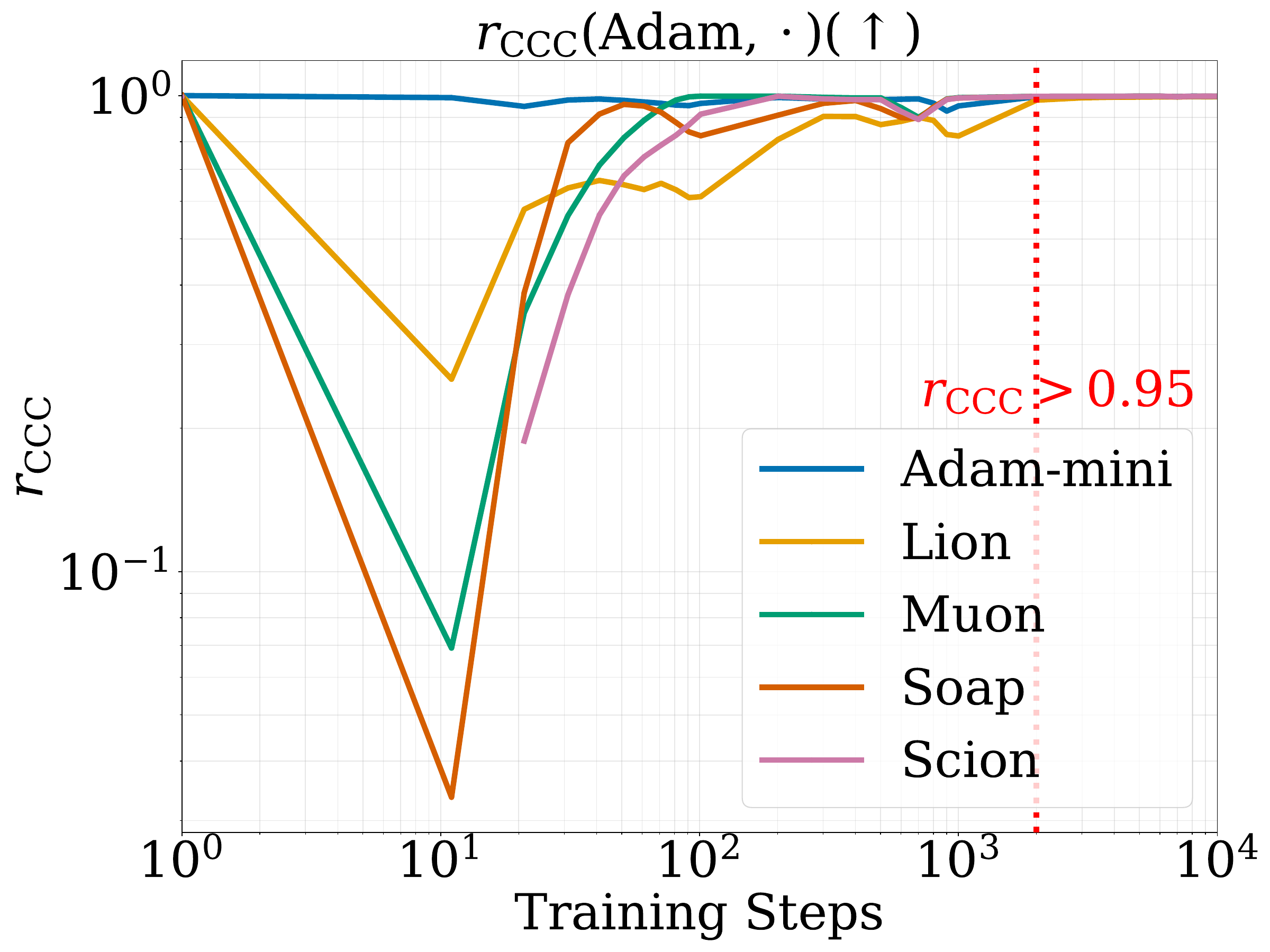}\label{fig:ccc_576}}
    \subfigure[Values of $\cccerr$ at width $384$.]{ \includegraphics[width=0.23\textwidth]{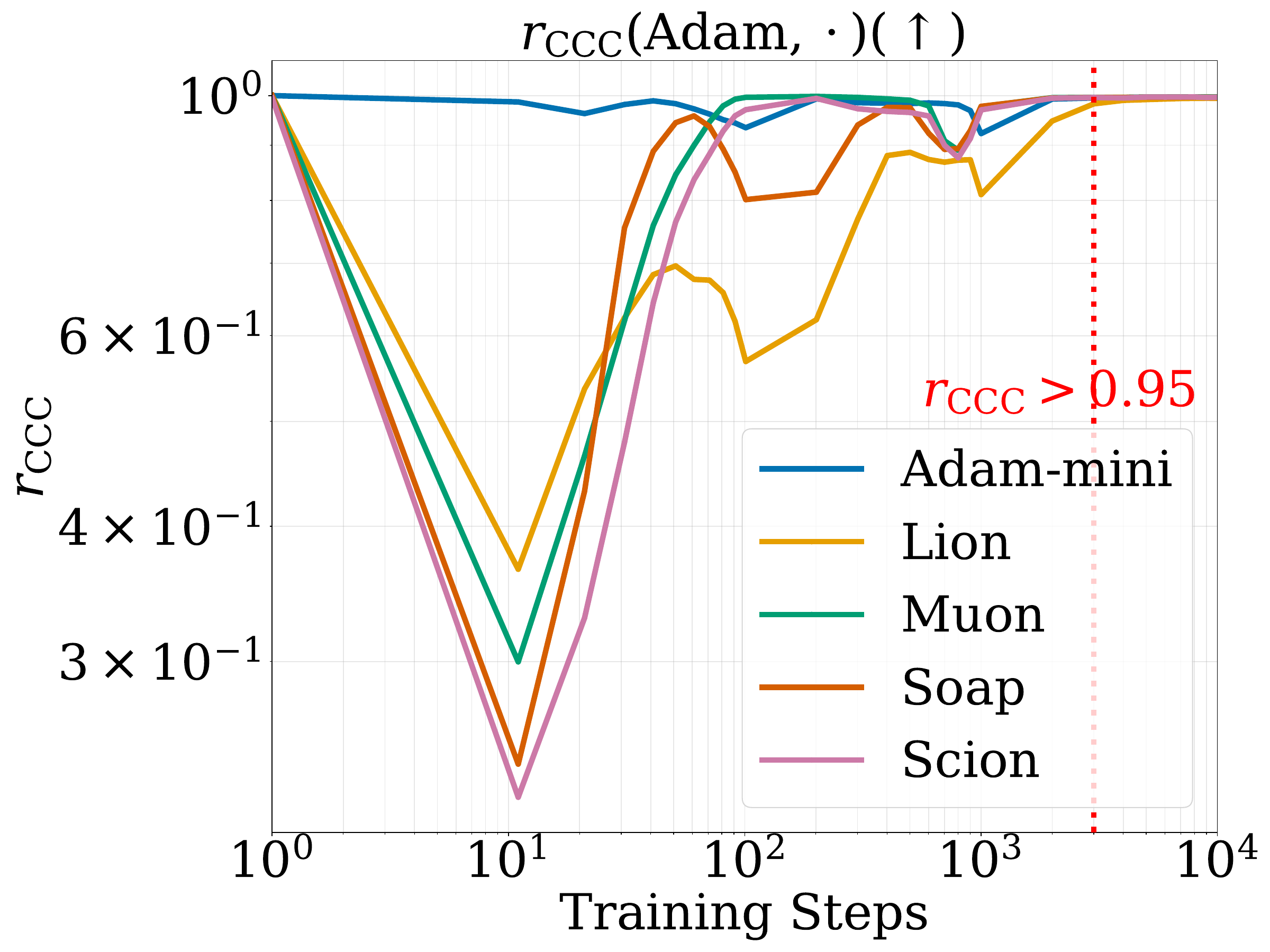}\label{fig:ccc_384}}
    \subfigure[Values of $\cccerr$ at width $192$.]{ \includegraphics[width=0.23\textwidth]{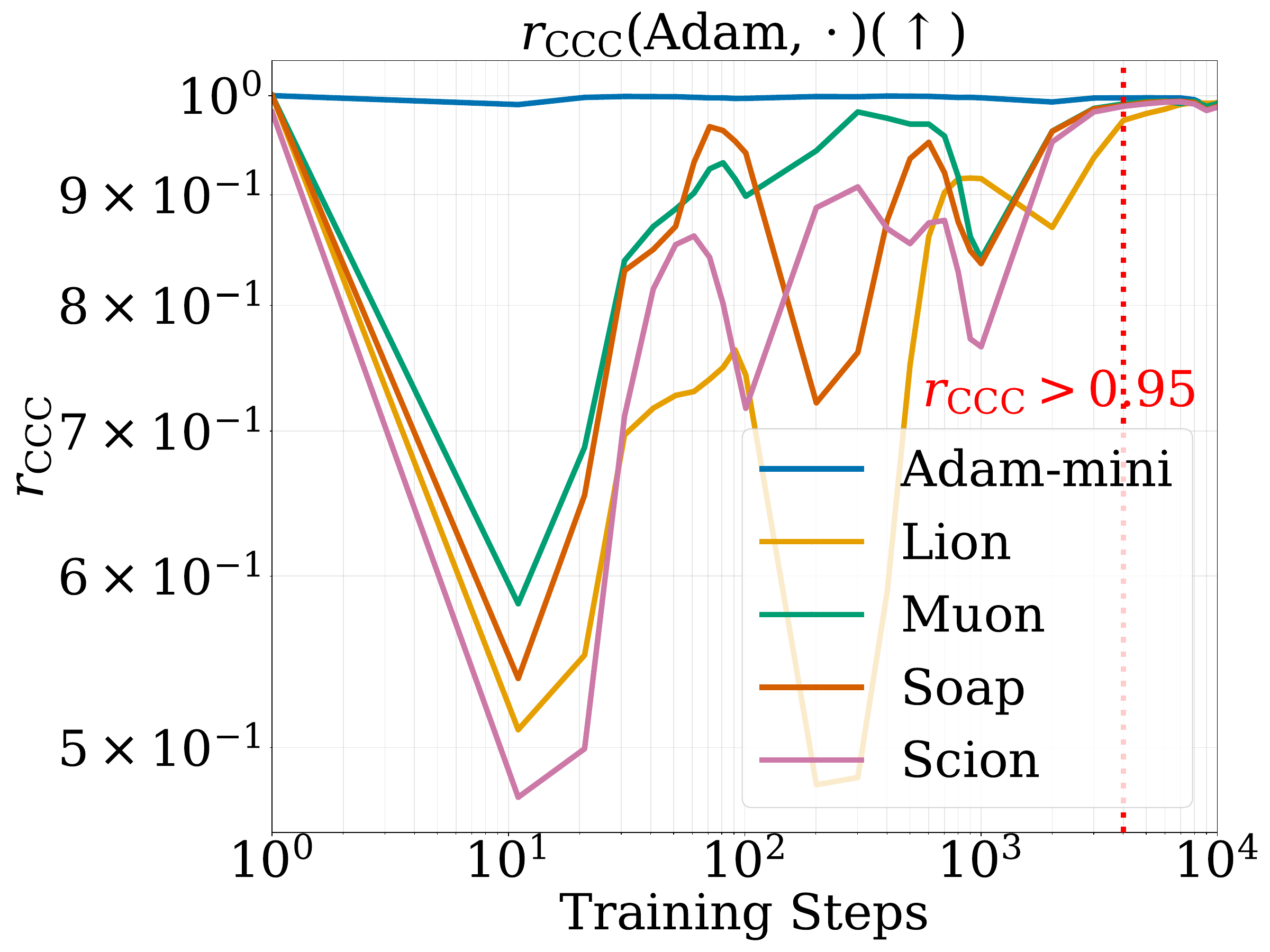}\label{fig:ccc_192}}
    \subfigure[Validation losses of different learning rates of Adam with various widths.]{ \includegraphics[width=0.23\textwidth]{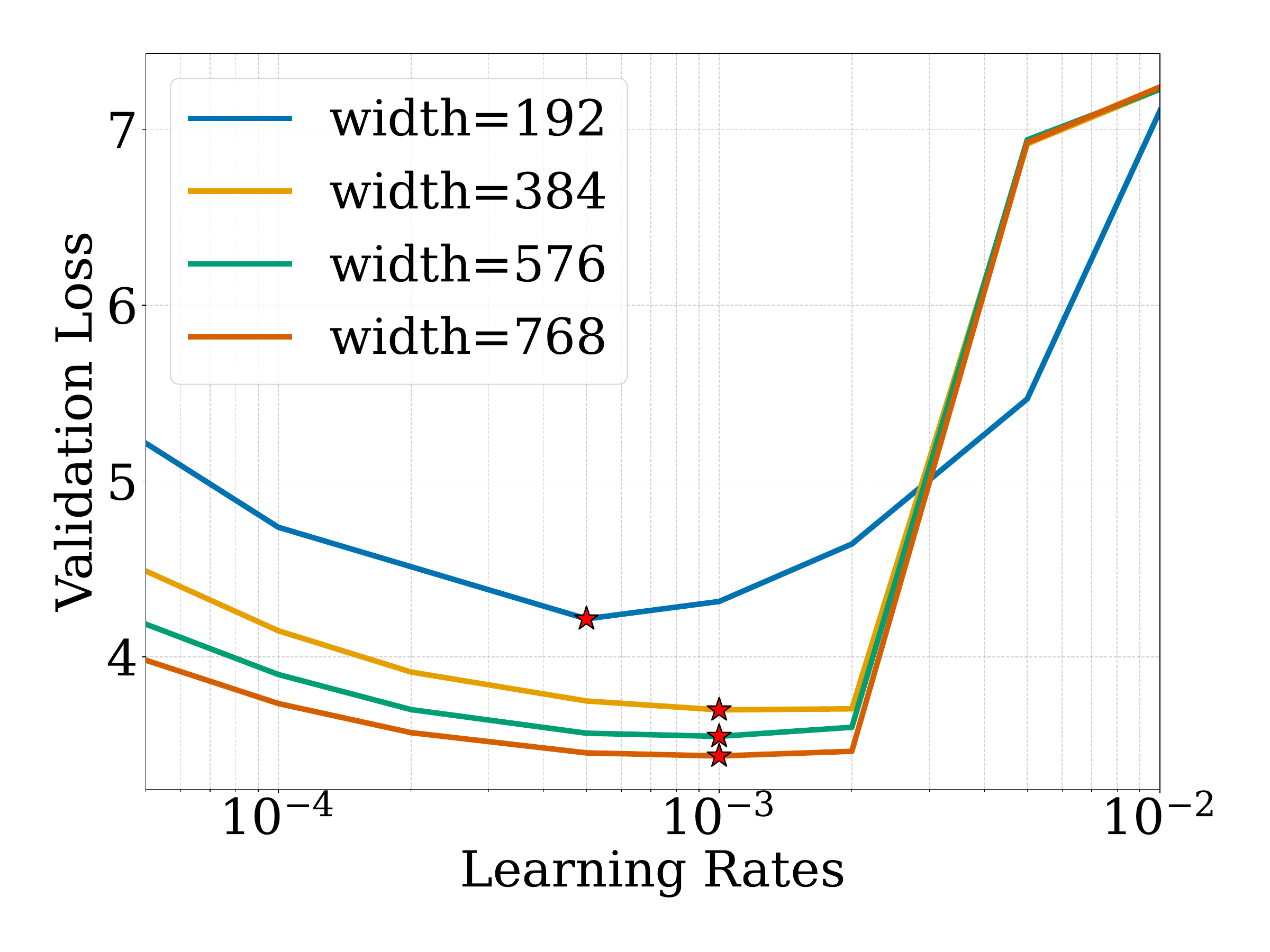}\label{fig:grid_search}}
    \caption{\ac{rgi} across models of different widths. Panels (a)--(c) plot $\cccerr$ for models with different widths, showing that \ac{rgi} holds across all tested widths. Panel (d) plots the validation loss of Adam at $10{,}000$ steps across learning rates and widths, showing that the width-$192$ model exhibits behavior inconsistent with the \ac{mf} regime.}
    \label{fig:mean_field}
    % \vspace{-1.0em}
\end{figure}
In this section, we examine whether approximate \ac{rgi} across optimizers with identical initializations and training data streams can be explained by the \ac{mf} regime. In this regime, the parameter distribution evolves nontrivially in the infinite-width limit, allowing both parameters and features to change substantially during training~\citep{mei2018mean,mei2019mean,sirignano2020mean}. Starting from the $124$M GPT model with hidden dimension $768$ used in Section~\ref{sec:emp_studies}, we progressively reduce the width to $576$, $384$, and $192$, where $192$ is one quarter of the baseline width, and test whether approximate \ac{rgi} persists. We also grid search the Adam learning rate at each width, using the width dependence of the optimal learning rate as an empirical diagnostic of consistency with the \ac{mf} regime.

Figures~\ref{fig:ccc_576}--\ref{fig:ccc_192} evaluate \ac{rgi} across optimizers at different fixed widths, unlike Figures~\ref{fig:ccc_cross_384} and \ref{fig:ccc_cross_64}, which compare models across widths. Approximate \ac{rgi} persists even at width $192$, one quarter of the standard GPT-2 width; the corresponding $\differr$ results in Appendix~\ref{app:add_result} lead to the same conclusion. To assess whether this narrow model remains consistent with the \ac{mf} regime, Figure~\ref{fig:grid_search} plots Adam's validation loss after $10{,}000$ steps across widths and learning rates, with red stars marking the optimal learning rates. The width-$192$ model exhibits a qualitatively different validation-loss--learning-rate profile from the width-$768$, $576$, and $384$ models, suggesting that it is no longer well described by the same \ac{mf}-style approximation. Thus, approximate \ac{rgi} can persist even beyond the regime where the \ac{mf} approximation appears adequate.

\section{Case Study of Over-Parameterized Quadratic Models}\label{sec:theory}

Section~\ref{sec:relation} shows that \ac{rgi} cannot be explained by \ac{ntk} or \ac{mf} regimes alone. We fill this theoretical gap in this section by studying the behavior of GD, Adam, and Muon on a quadratic problem. Given a linear target response $y(x)=Hx$ with $H\in\bbR^{q\times  r}, x\in\bbR^{r}$ and $q\geq 3r$, we adopt a two-layer overparameterized model $f_{A,B}(x)=ABx$ to learn it, where $A\in\bbR^{q\times s}, B\in\bbR^{s\times r}$ with $s\geq 3r$. With the dataset $\{(y_i,x_i)\}_{i=1}^{n}$, we train the parameters $A,B$ via the loss  $L(A,B)=1/2n\cdot\sum_{i=1}^n \|y_i - ABx_i\|_2^2.$ We require the target matrix $H$ to have the following structure.

\begin{assumption}\label{ass:H}
The target matrix $H\in\mathbb{R}^{q\times r}$ satisfies $H^\top H=\mu^2 I_r, \operatorname{sign}(H)=\kappa H$
for some $\mu>0$ and $\kappa>0$, where the sign map is applied entrywise with
$\operatorname{sign}(0)=0$.
\end{assumption}
This assumption imposes two requirements. First, the columns of $H$ are orthogonal and have the same norm. Second, all nonzero entries of $H$ must have the same magnitude. This assumption is satisfied by a broad class of matrices; for example, the columns of $H$ may form either a one-hot basis or a Hadamard basis. 

\begin{assumption}\label{ass:x}
    The covariates $\{x_i\}_{i=1}^{n}$ are isotropic, i.e., $\sum_{i=1}^n x_i x_i^\top/n=\sigma_x^2 I_r$ for some $\sigma_x>0$.
\end{assumption}
This assumption requires the training covariates to cover the feature space approximately isotropically. We emphasize that Assumptions~\ref{ass:H} and~\ref{ass:x} define a stylized setting under which \ac{rgi} can hold across optimizers. In Appendix~\ref{app:verify}, we provide empirical evidence that these assumptions are relevant to practical \ac{llm} training; relaxing them is left for future work.

We then optimize the parameters $A$ and $B$ of our model with three optimizers, $\gd$, $\adam$, and $\muon$. Given any parameter $W\in\{A,B\}$, these optimizers update the parameter $W_t$ at step $t$ as follows.
\begin{itemize}[leftmargin=1em]
\item  $\gd$ updates the parameter according to the gradient, i.e., $W_{t+1}^{\gd} = W_{t}^{\gd}-\eta_t\nabla_W L(W_{t}^{\gd})$. Here, the notation $L(W_{t}^{\gd})$ suppresses its dependency on other parameters for conciseness.
\item With $\gamma_1=\gamma_2=0$, Adam updates the parameter with the coordinate-wise normalized gradient:
$W_{t+1}^{\adam}=W_{t}^{\adam}-\gamma_t\sgn(\nabla_W L(W_{t}^{\adam}))$,
where $\sgn(\cdot)$ denotes the entrywise sign operator.
\item With momentum set to $0$, Muon updates the parameters using the spectrally normalized gradient:
$W_{t+1}^{\muon}=W_{t}^{\muon}-\tau_t\spec(\nabla_W L(W_{t}^{\muon}))$,
where $\spec(G)=UV^\top$ for $G=U\Sigma V^\top$, which normalizes all nonzero singular values $\Sigma$ of $G$ to $1$.
\end{itemize}

\iffalse
\begin{assumption}
\label{ass:auxiliary_frames}
Let $G,J\in\mathbb{R}^{q\times r}$ be two auxiliary matrices with orthonormal columns chosen in the orthogonal complement of the target column
space $\operatorname{col}(H)$ such that $G^\top G=J^\top J=I_r, H^\top G=H^\top J=G^\top J=0$.
\end{assumption}

To decouple the end-to-end predictor from parameter-space similarity, we use an
over-parameterized hidden layer of width $d\ge 3r$. We decompose the hidden
space as $\bbR^{d}=\bbR^r\oplus\bbR^r\oplus\bbR^{d-2r}$.
For a hidden vector $z\in\bbR^{d}$, write $z=(z_1,z_2,z_3)$, $z_1,z_2\in\bbR^r$ and $z_3\in\bbR^{d-2r}$.
For the existence construction, we choose initializations, step sizes,
and admissible update selections so that the input representation remains
supported on the first hidden block. Concretely, $B$ maps
inputs only into the first block, so the hidden representation has the form
$(B_1x,0,0)$. If $A=[A_1,A_2,A_3]$ is decomposed into matching column blocks, then
$ABx=A_1B_1x$. Thus the end-to-end predictor depends only on the active pair $(A_1,B_1)$. The remaining blocks $A_{2}$ and $A_{3}$ are
predictor-invisible. The auxiliary matrices $G$ and $J$ in Assumption~\ref{ass:auxiliary_frames} are inserted into these invisible blocks. They
do not change $AB$, but they allow different optimizer trajectories to
accumulate large parameter components in mutually orthogonal directions. This
is the mechanism that makes the parameter cosine small while preserving the
same end-to-end predictor.
\fi

\begin{theorem}[A stylized realization of exact RGI with low parameter cosine]
\label{thm:quadratic_case_study}
For each optimizer $\sfA\in\{\gd,\adam,\muon\}$, set the initial parameters as $A_0^{\sfA}=0,
B_0^{\sfA}=[b_0^{\sfA} I_r,0]^\top$ with  any $b_0^{\sfA}>0$ and set the step size to constants as
\begin{equation*}
    \eta_t=\eta>0,\,
    \gamma_t=\gamma\!<\!\min\{b_0^{\adam},(\kappa b_0^{\adam}+\sqrt{\kappa})^{-1}\},\,
    \tau_t=\tau\!<\!\min\{b_0^{\muon},( b_0^{\muon}/\mu+\mu^{-1/2})^{-1}\}.
\end{equation*}
Under Assumptions~\ref{ass:H} and \ref{ass:x},  there exist tie-breaking rules on zero-gradient, for any two optimizers $\sfA,\sfA'\in\{\gd,\adam,\muon\}$, any two inputs $x,x'$ with
$\|x\|_2=\|x'\|_2=\rho$, and any $t\geq 0$, we have
\begin{align*}
    L_t^\sfA(x)- L_t^\sfA(x')
    =
     L_t^{\sfA'}(x)- L_t^{\sfA'}(x').
\end{align*}

Moreover, let $\theta_t^\sfA=(\operatorname{vec}(A_t^\sfA),\operatorname{vec}(B_t^\sfA))$. At the same time, with cosine similarity defined to be zero whenever either vector is zero, the following holds as $t$ increases.
\begin{align*}
    |\cossim(\theta_t^{\gd}\!\!,\theta_t^{\adam})|\!=\!O(1/t),
    |\cossim(\theta_t^{\gd}\!\!,\theta_t^{\muon})|\!=\!O(1/t),
    |\cossim(\theta_t^{\adam}\!\!,\theta_t^{\muon})|\!=\!O(1/t^2).
\end{align*}
\end{theorem}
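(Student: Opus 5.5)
The plan is to prove an invariant-subspace (ansatz) statement for all three optimizers simultaneously. From it, both claims follow by direct computation.

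Split the hidden space into three blocks $\bbR^s=\bbR^r\oplus\bbR^r\oplus\bbR^{s-2r}$; this is possible because $s\ge 3r$. Fix matrices $G,J\in\bbR^{q\times r}$ with orthonormal columns satisfying $H^\top G=H^\top J=G^\top J=0$; these exist because $q\ge 3r$. I will show by induction on $t$ that each optimizer's iterates have the form
\begin{align*}
A_t^\sfA=\big[\alpha_t^\sfA H,\; P_t^\sfA,\; Q_t^\sfA\big],\qquad B_t^\sfA=\big[\beta_t^\sfA I_r,\;0,\;0\big]^\top ,
\end{align*}
where $\alpha_t,\beta_t$ are scalars, $\operatorname{col}(P_t),\operatorname{col}(Q_t)\perp\operatorname{col}(H)$, $P_t,Q_t$ are built from $G$ and $J$ respectively, and GD has $P_t=Q_t=0$.

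Under Assumption~\ref{ass:x}, the gradients are $\nabla_A L=-\sigma_x^2(H-A B)B^\top$ and $\nabla_B L=-\sigma_x^2A^\top(H-AB)$. Under the ansatz $AB=\alpha\beta H$, so these become $\nabla_A L=-\sigma_x^2(1-\alpha\beta)\beta\,[H,0,0]$ and $\nabla_B L=-\sigma_x^2(1-\alpha\beta)\,[\alpha\mu^2 I_r;\,P^\top H;\,Q^\top H]=-\sigma_x^2(1-\alpha\beta)\mu^2\alpha\,[I_r,0,0]^\top$. Here I use Assumption~\ref{ass:H} in the form $H^\top H=\mu^2I_r$, together with $P^\top H=Q^\top H=0$. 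Both gradients stay in the ansatz. Assumption~\ref{ass:H} also gives $\sgn(cH)=\mathrm{sign}(c)\kappa H$. Since $H/\mu$ has orthonormal columns, $\spec([cH,0,0])=\mathrm{sign}(c)[H/\mu,0,0]$ and $\spec(c[I_r,0,0]^\top)=\mathrm{sign}(c)[I_r,0,0]^\top$. So the visible parts of Adam and Muon move only along $H$ and $I_r$, and the scalars follow explicit recursions. For GD the step is $\alpha\mathrel{+}=\eta\sigma_x^2(1-\alpha\beta)\beta$, and analogously for $\beta$. Adam uses the steps $\pm\gamma\kappa$ and $\pm\gamma$, and Muon uses $\pm\tau/\mu$ and $\pm\tau$.

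The delicate part is the zero entries of the gradient. These are the entries of $\nabla_A L$ outside the $H$-support and blocks $2,3$, and all of blocks $2,3$ of $\nabla_B L$; they become everywhere zero if $\alpha\beta=1$. This is where the tie-breaking freedom enters. For Adam, I take $\sgn(0)$ on block $2$ of $A$ to equal the entries of a fixed matrix proportional to $G$. I will need $G$ with constant-magnitude entries, a Hadamard-type choice, chosen so that this is a legitimate entrywise sign pattern. For Muon, I take the completion of $\spec$ on the null directions to be $J$ placed in block $3$. All other zero entries are set to $0$. Then $P_t^{\adam}=-t\gamma\,c_G G$ and $Q_t^{\muon}=-t\tau J$ grow linearly, and neither touches $AB$ or the $B$-gradient, since $G,J\perp H$ and the corresponding rows of $B$ are zero. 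This closes the induction.

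RGI is then immediate. For every optimizer $A_tB_t=c_t H$ with $c_t=\alpha_t\beta_t$, so $L_t^\sfA(x)=\tfrac12(1-c_t^\sfA)^2\|Hx\|_2^2=\tfrac12(1-c_t^\sfA)^2\mu^2\rho^2$. This does not depend on $x$ once $\|x\|_2=\rho$. Hence $L_t^\sfA(x)-L_t^\sfA(x')=0$ for every $\sfA$, which gives exact RGI (cf.\ Proposition~\ref{prop:equivalence}).

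For the cosine bounds, I need the visible scalars $\alpha_t,\beta_t$ to stay bounded uniformly in $t$ for each optimizer. This is the step I expect to be the main obstacle, and it is where the step-size conditions enter. Adam and Muon have fixed-size sign-type steps, so the visible pair $(\alpha_t,\beta_t)$ increases until $\alpha\beta$ crosses $1$ and then oscillates around the hyperbola $\alpha\beta=1$. I would show the oscillation stays confined to a bounded band with $\beta_t>0$. The conditions $\gamma<b_0^{\adam}$ and $\tau<b_0^{\muon}$ should keep $\beta$ from crossing zero. The conditions $\gamma<(\kappa b_0+\sqrt\kappa)^{-1}$ and $\tau<(b_0/\mu+\mu^{-1/2})^{-1}$ should bound the overshoot of $\alpha\beta$ past $1$. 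I would verify this by a one-step case analysis of the sign of $1-\alpha\beta$. For GD, the scalar dynamics is a gradient flow on the balanced invariant $\alpha^2\mu^2-\beta^2$ (conserved up to $O(\eta^2)$), which I expect to stay bounded; if needed I would handle it directly.

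Granting boundedness, the rest is direct computation. $\|\theta_t^{\gd}\|=O(1)$, while $\|\theta_t^{\adam}\|,\|\theta_t^{\muon}\|=\Theta(t)$ because of the invisible blocks. The invisible blocks are orthogonal to everything in the other parameter vectors, so all inner products reduce to the visible parts, which are $O(1)$. This gives $|\cossim(\theta_t^{\gd},\theta_t^{\adam})|,|\cossim(\theta_t^{\gd},\theta_t^{\muon})|=O(1)/(O(1)\cdot\Theta(t))=O(1/t)$ and $|\cossim(\theta_t^{\adam},\theta_t^{\muon})|=O(1)/\Theta(t^2)=O(1/t^2)$, with the zero-vector convention covering degenerate steps.
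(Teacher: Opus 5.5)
Your block ansatz, the tie-breaking that places a multiple of $G$ in block~2 (Adam) and $J$ in block~3 (Muon), and the RGI computation $L_t^{\sfA}(x)=\tfrac{\mu^2\rho^2}{2}(1-c_t^{\sfA})^2$ all match the paper's proof. The cosine step, however, has a genuine gap.

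\textbf{The GD pairs.} Your estimate $O(1)/(O(1)\cdot\Theta(t))$ for the two pairs involving GD needs two things. GD's visible scalars must stay bounded, to control the numerator. And $\|\theta_t^{\gd}\|$ must stay bounded away from zero, since you divide by it. You only say you expect boundedness, and in general it is false, because the theorem allows every $\eta>0$. One GD step multiplies $\mu^2a_t^2-b_t^2$ by $1-\eta^2\sigma_x^4\mu^2(c_t-1)^2$, so your balanced quantity is not even approximately conserved when $\eta$ is large. For example, with $\sigma_x=\mu=b_0^{\gd}=1$ and $\eta=10$ you get $(a_1,b_1)=(10,1)$, then $(a_2,b_2)=(-80,-899)$, then $a_3\approx 6.5\times 10^{8}$, and the iterates diverge.

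The missing observation is that GD boundedness is never needed. The vector $\theta_t^{\gd}$ lives entirely in the visible coordinates. Write $v_t^{\sfA}$ for the visible part of $\theta_t^{\sfA}$. Cauchy--Schwarz gives $|\langle\theta_t^{\gd},\theta_t^{\sfA}\rangle|\le\|\theta_t^{\gd}\|\,\|v_t^{\sfA}\|$, hence $|\cossim(\theta_t^{\gd},\theta_t^{\sfA})|\le\|v_t^{\sfA}\|/\|\theta_t^{\sfA}\|$. The GD norm cancels, so no upper or lower bound on it is required, and the case $\theta_t^{\gd}=0$ is covered by the zero convention. This is how the paper gets $O(1/t)$. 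Only the Adam and Muon visible scalars need to be bounded.

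\textbf{The Adam/Muon bound.} Your sketch (oscillation around $\alpha\beta=1$, checked by a one-step case analysis) omits the fact that makes the argument work. While $a_t,b_t>0$, both sign updates equal $\operatorname{sign}(c_t-1)$, so $a_t-\kappa b_t$ is conserved. Hence for $t\ge 1$ the iterates lie on the fixed line $a-\kappa b=\Delta_{\adam}=\kappa(\gamma-b_0^{\adam})$; for Muon the line is $a-b/\mu=(\tau-b_0^{\muon})/\mu$. On this line, $c_t>1$ forces $b_t>r_{\adam}=(\kappa(b_0^{\adam}-\gamma)+\sqrt{\kappa})^{-1}$ and $a_t>\kappa r_{\adam}$. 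The two step-size conditions are used \emph{jointly} to give $\gamma<r_{\adam}$, and that is what keeps the iterates positive. Boundedness then needs no further condition: $b_t\ge\max\{2|\Delta_{\adam}|/\kappa,\sqrt{2/\kappa}\}$ forces $c_t\ge 1$, hence $b_{t+1}\le b_t$. So your split of roles ($\gamma<b_0^{\adam}$ against sign crossing, the other condition against overshoot) is not how the conditions act.

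\textbf{Adam tie-breaking.} You restrict the zero-gradient choice to $\{-1,0,1\}$. That forces you to find a constant-magnitude (Hadamard-type) $G$ with orthonormal columns orthogonal to $\operatorname{col}(H)$, and you do not show such a $G$ exists. The paper avoids this by letting zero-gradient entries take any value in $[-1,1]$. Then any $G$ with orthonormal columns is admissible, since its entries automatically lie in $[-1,1]$.
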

This theorem shows that \ac{rgi} holds across $\gd$, $\muon$, $\adam$, and different initializations under Assumptions~\ref{ass:H} and~\ref{ass:x}, with the covariate $x$ playing the same role as the token $d$ in Definition~\ref{def:rgi}. These results support the empirical findings in Sections~\ref{sec:opt} and~\ref{sec:arch}. The theorem also permits a broad range of learning rates, imposing only upper-bound constraints on those of $\muon$ and $\adam$, consistent with the robustness of \ac{rgi} to optimizer hyperparameters observed in Section~\ref{sec:opt}. Moreover, we show that the cosine similarity between parameters decreases during training, matching the low parameter similarity reported in Section~\ref{sec:ntk}. Overall, our analysis identifies a class of quadratic problems in which \ac{rgi} is robust across optimizers, initializations, and learning rates.

\section{Conclusion}\label{sec:conclusion}
We study how optimizers, architectures, and training data streams affect model performance through the lens of relative generalization. We show that \ac{rgi} persists under joint variations in optimizers and architectures, which largely preserve relative generalization and induce approximately uniform shifts in token-wise losses, whereas changes in the training data stream can substantially alter it. Neither the \ac{ntk} nor the \ac{mf} regime alone explains \ac{rgi}; instead, we identify a class of quadratic problems in which \ac{rgi} emerges robustly across initializations and learning rates. Our analysis focuses on autoregressive \acp{llm}, leaving other model classes, such as diffusion models, for future work.

\bibliographystyle{ims}
\bibliography{reference}

\newpage 

\appendix
\section{Experimental Details}\label{app:exp_detail}

\subsection{Concordance Correlation Coefficient}
We present the detailed definition of \ac{ccc} among centered losses $L(\theta_t^{\frakT},D_{\val,k})$ and $L(\theta_t^{\frakT^{\prime}},D_{\val,k})$ as follows. The centered losses are defined as
\begin{align*}
    \barL\big(\theta_t^{\frakT},D_{\val,k}\big)&= L\big(\theta_t^{\frakT},D_{\val,k}\big)-K^{-1}\sum_{k^{\prime}=1}^{K}L\big(\theta_t^{\frakT},D_{\val,k^{\prime}}\big).
    % \barL\big(\theta_t^{\frakT^{\prime}},D_{\val,k}\big)&= L\big(\theta_t^{\frakT^{\prime}},D_{\val,k}\big)-K^{-1}\sum_{k^{\prime}=1}^{K}L\big(\theta_t^{\frakT^{\prime}},D_{\val,k^{\prime}}\big).
\end{align*}
Then we have that
\begin{align*}
    \cccerr(\frakT,\frakT^{\prime}) = \bbE\Bigg[\frac{2\,\cov_K\Big(\barL\big(\theta_t^{\frakT},D_{\val,k}\big),\barL\big(\theta_t^{\frakT^{\prime}},D_{\val,k}\big)\Big)}{\var_K\Big(\barL\big(\theta_t^{\frakT},D_{\val,k}\big)\Big)+\var_K\Big(\barL\big(\theta_t^{\frakT^{\prime}},D_{\val,k}\big)\Big)}\Bigg],
\end{align*}
where the expectation is taken with respect to the joint distribution of $\theta_0$ and $(D_l)_{l=1}^{t}$, and $\cov_K$ and $\var_K$ are covariance and variance with respect to $k$. In other words, for any functions $f(D_{\val,k})$ and $g(D_{\val,k})$, we have
\begin{align*}
    \var_K\big(f(D_{\val,k})\big)
    &=\frac{1}{K}\sum_{k=1}^{K}
    \Big(
    f(D_{\val,k})
    -\frac{1}{K}\sum_{k^{\prime}=1}^{K}f(D_{\val,k^{\prime}})
    \Big)^2 \\
    \cov_K\big(f(D_{\val,k}),g(D_{\val,k})\big)&= \frac{1}{K}\sum_{k=1}^{K}f(D_{\val,k})g(D_{\val,k})-\frac{1}{K^2}\sum_{k^{\prime}=1}^{K}f(D_{\val,k^{\prime}})\!\!\cdot\!\! \sum_{k^{\prime}=1}^{K}g(D_{\val,k^{\prime}}).
\end{align*}

\subsection{Definition of $\differr$}
\begin{align*}
    s(\frakT,\frakT^{\prime})&= 
    \sum_{k,k^{\prime}\in[K]}
    \Big(
    \Delta(\frakT,D_{\val,k},D_{\val,k^{\prime}})
    -
    \Delta(\frakT^{\prime},D_{\val,k},D_{\val,k^{\prime}})
    \Big)^2\\
    \differr(\frakT,\frakT^{\prime})
    &=
    \frac{1}{2}\cdot\bbE\Bigg[
    \frac{
    s(\frakT,\frakT^{\prime})
    }{
    \sum_{k,k^{\prime}\in[K]}
    \Delta(\frakT,D_{\val,k},D_{\val,k^{\prime}})^2
    }+\frac{
    s(\frakT,\frakT^{\prime})
    }{
    \sum_{k,k^{\prime}\in[K]}
    \Delta(\frakT^{\prime},D_{\val,k},D_{\val,k^{\prime}})^2
    }
    \Bigg],
\end{align*}
where the expectation is taken with respect to the joint distribution of $\theta_0$ and $D_{1:t}$.

\subsection{Experimental Setups}

In all experiments, we train \acp{llm} using the GPT-2 tokenizer with sequence length $1024$ and batch size $524{,}288$. For each optimizer, we disable weight decay and search the learning rate over $\{1,2,5\}\times\{10^{-1},10^{-2},10^{-3},10^{-4}\}$. For Adam, Adam-mini, and Lion, we set $\beta_1=0.9$ and $\beta_2=0.95$; for SOAP, we set $\beta_1=\beta_2=0.95$; for Muon and Scion, we use momentum values $0.95$ and $0.9$, respectively. We consider two GPT-style model sizes. The $124$M model has $12$ layers, $12$ attention heads, and hidden dimension $768$, while the $0.7$B model has $36$ layers, $20$ attention heads, and hidden dimension $1280$. For evaluations, we partition 
$\calD_{\val}$ into validation batches $(D_{\val,k})_{k=1}^K$ and evaluate two 
batch granularities: $|D_{\val,k}|=1{,}024$ and $|D_{\val,k}|=1$. In the main 
text, we report results with $|D_{\val,k}|=1{,}024$ and $K=20{,}480$, 
corresponding to approximately $21$M validation tokens. The single-token results, 
i.e., $|D_{\val,k}|=1$, are computed on $5$M tokens due to computational cost and 
deferred to Appendix~\ref{app:add_result}; they yield conclusions consistent 
with the $|D_{\val,k}|=1{,}024$ setting. 

\paragraph{Compute Resources.} All experiments are conducted on NVIDIA A40 GPUs. Each 124M model run uses 4 A40 GPUs and takes approximately 10 hours, while each 0.7B run uses 4 A40 GPUs and takes approximately 50 hours. Each 4.5B run uses 8 A40 GPUs and takes approximately three days.

\section{Additional Experimental Results}\label{app:add_result}
\subsection{Additional Results of Validation Losses Between Optimizers}

\begin{figure}[H]
    \centering
    \subfigure[Validation Losses of Adam and Adam-mini]{ \includegraphics[width=0.23\textwidth]{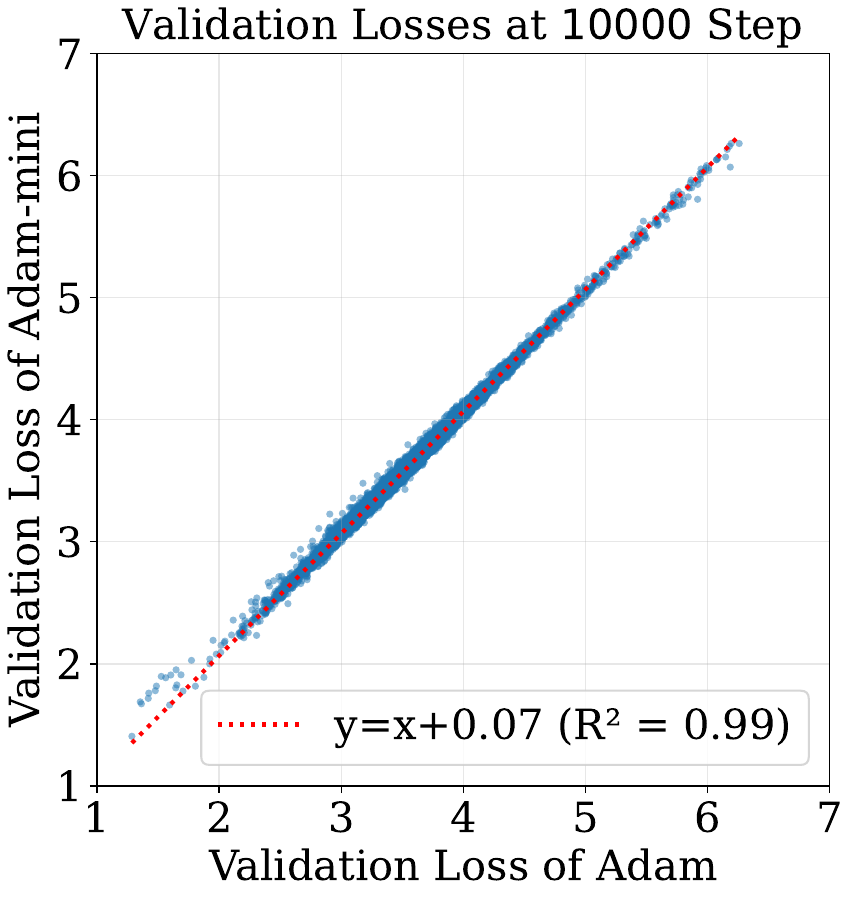}}
    \subfigure[Validation Losses of Adam and Lion]{ \includegraphics[width=0.23\textwidth]{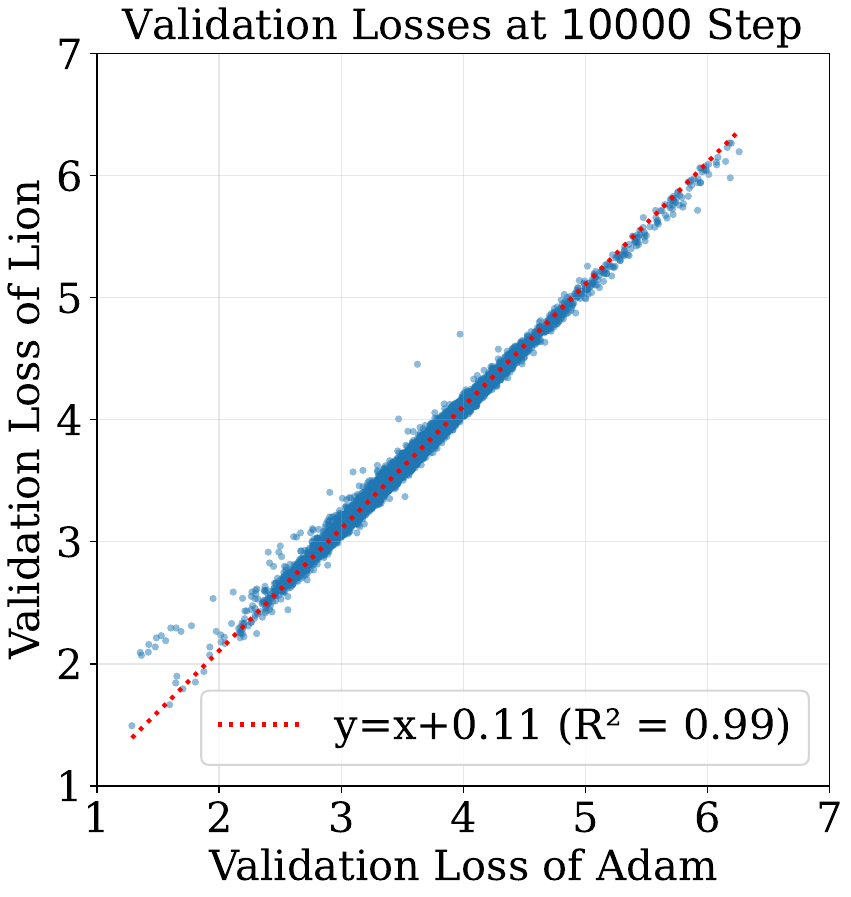}}
    \subfigure[Validation Losses of Adam and Soap]{ \includegraphics[width=0.23\textwidth]{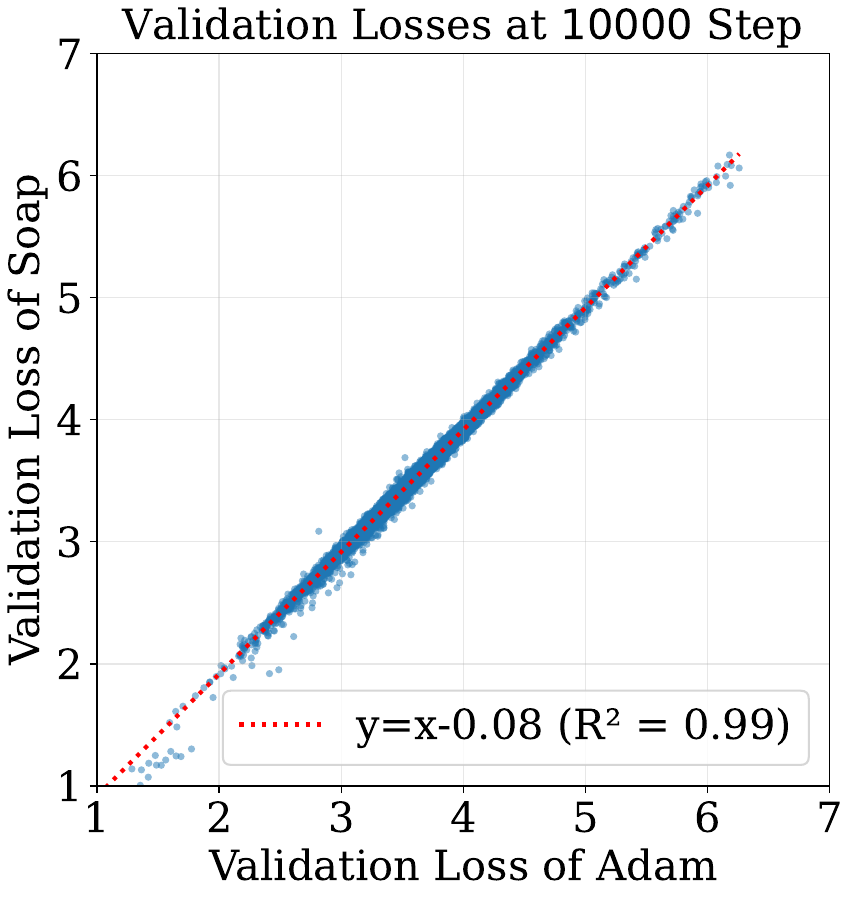}}
    \subfigure[Validation Losses of Adam and Scion]{ \includegraphics[width=0.23\textwidth]{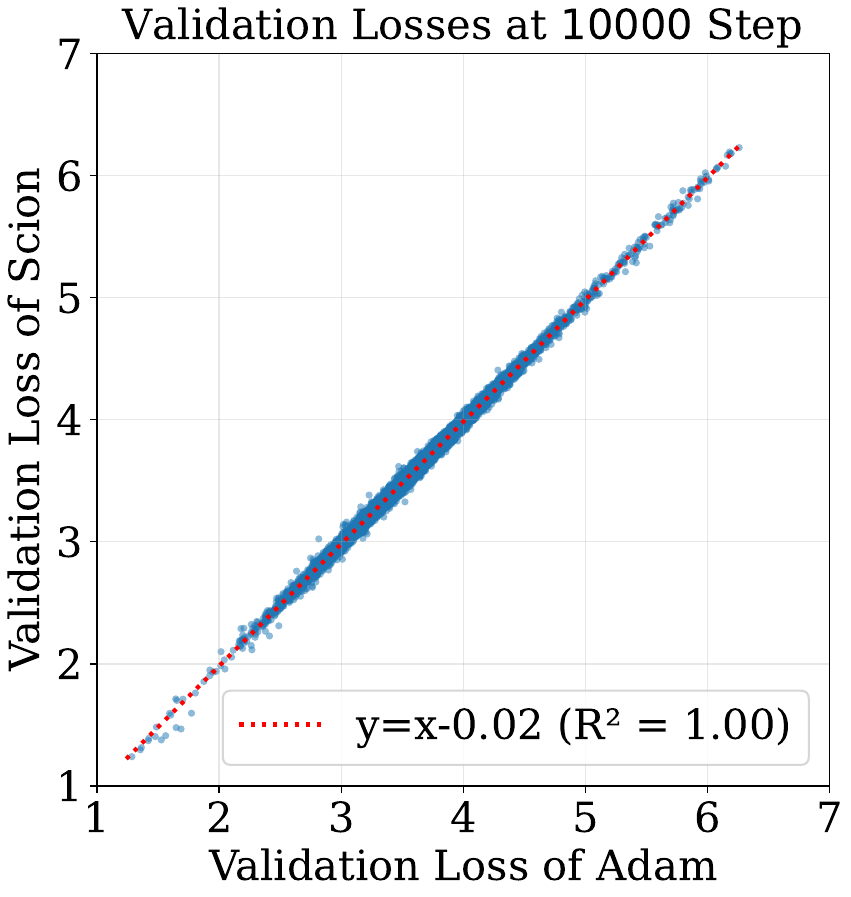}}
   
    \caption{Validation-loss linearity across Adam and other optimizers. Panels (a)--(d) plot the validation-loss relationship between Adam and Adam-mini, Lion, Soap, and Scion, respectively, at the $10{,}000$-step checkpoint. The near-perfect fits, with $R^2\geq 0.99$, support approximate \ac{rgi} across optimizers.} 
\end{figure}
These figures show that the validation losses of models trained by different optimizers are highly linearly correlated at step $10{,}000$. This supports the emergence of approximate \ac{rgi} among optimizers.

\subsection{Additional Results of $\differr$ and $\cccerr$ with Other Optimizers as Baselines }
In the following, we report $\differr$ and $\cccerr$ using optimizers other than Adam as the baseline, while keeping the initial parameters and training data stream identical across all optimizers. The results show that changing the baseline optimizer does not qualitatively affect approximate \ac{rgi}. This is intuitive: if exact \ac{rgi} holds between $\theta$ and $\theta^{\prime}$, and also between $\theta^{\prime}$ and $\theta^{\prime\prime}$, then it also holds between $\theta$ and $\theta^{\prime\prime}$. In the approximate case, the same transitivity holds up to accumulated error.
\begin{figure}[H]
    \centering
    \subfigure[Values of $\differr$ with Adam-mini as the baseline.]{ \includegraphics[width=0.27\textwidth]{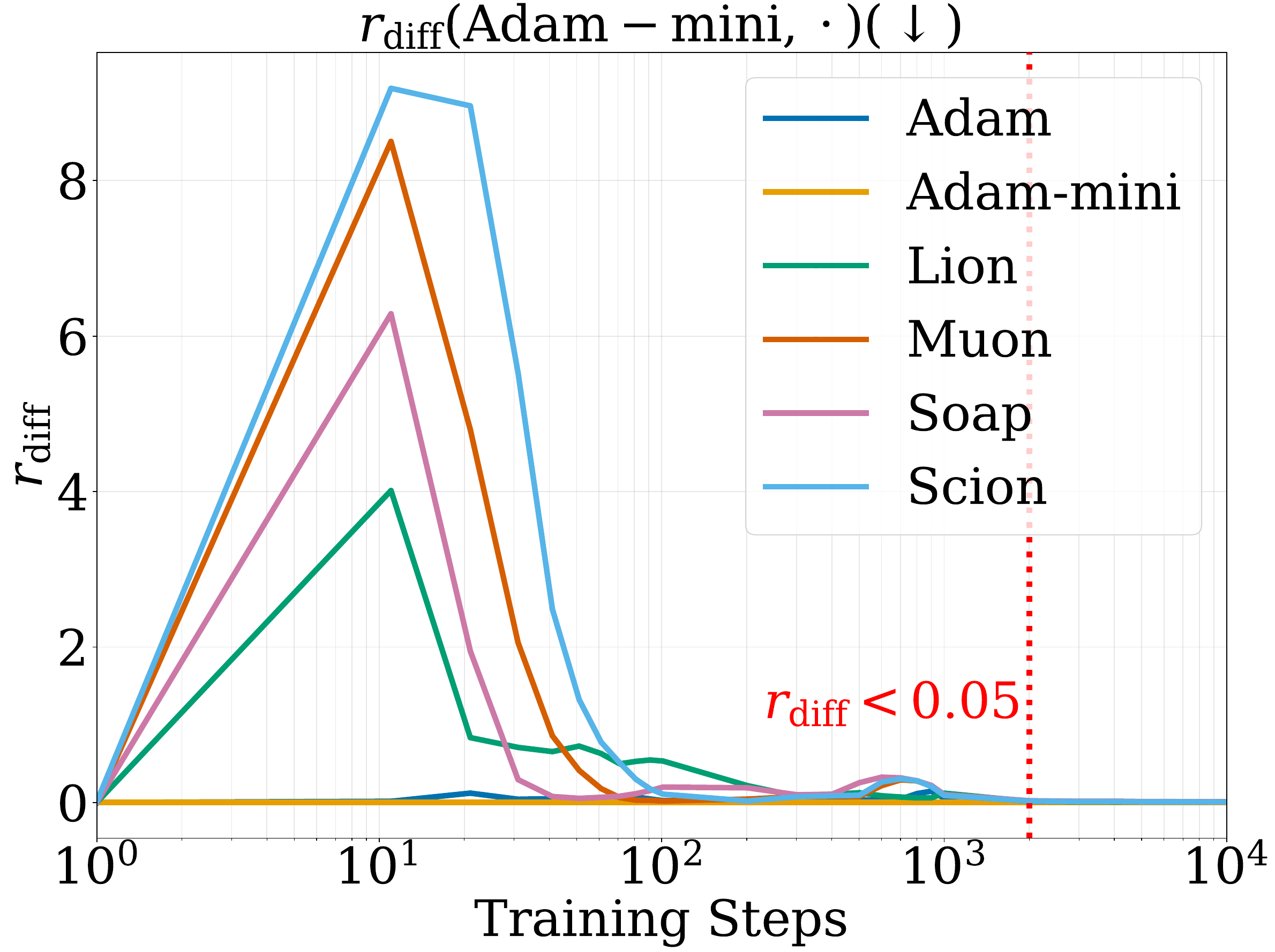}}
    \subfigure[Values of $\differr$ with Lion as the baseline.]{ \includegraphics[width=0.27\textwidth]{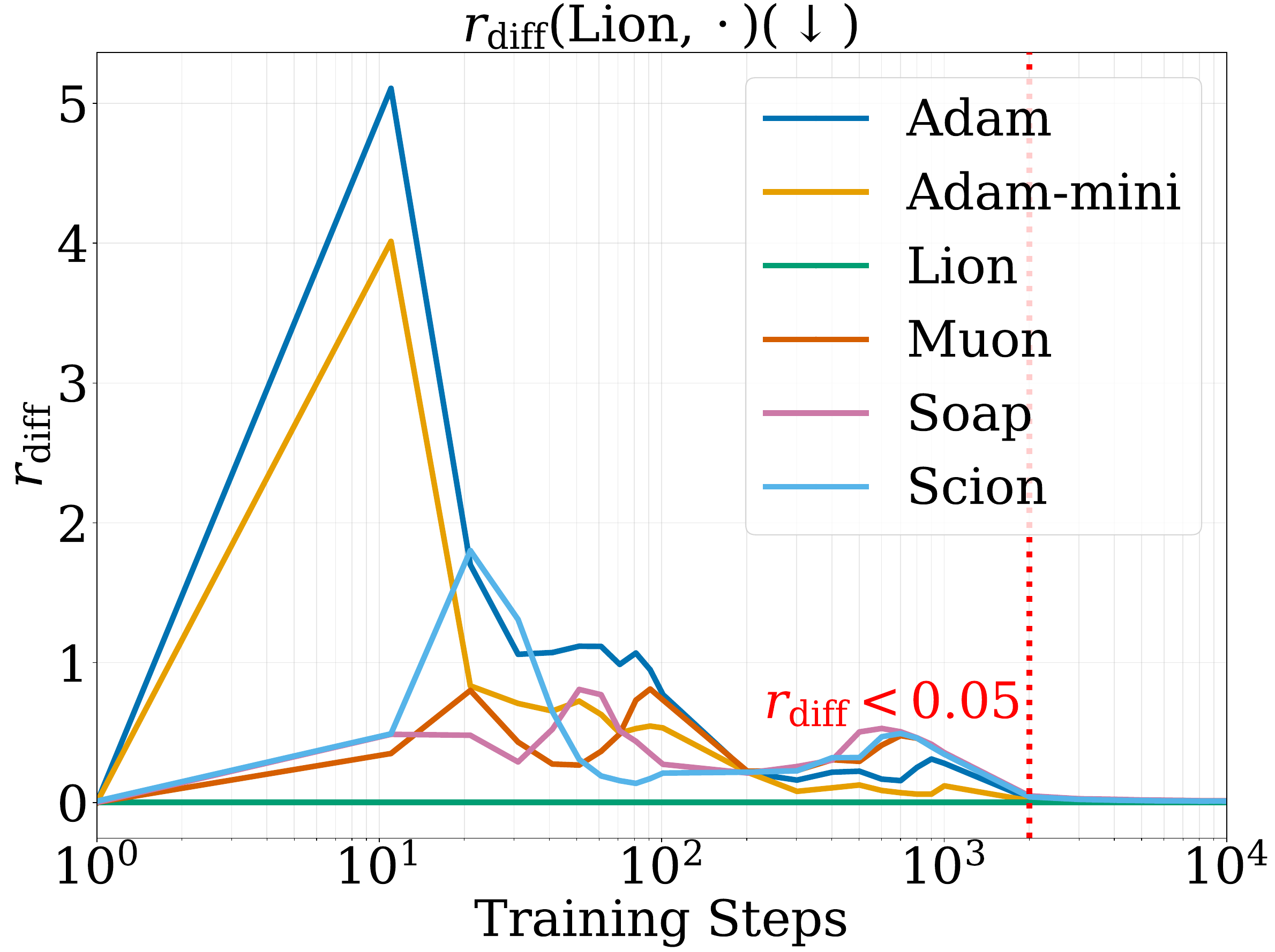}}
    \subfigure[Values of $\differr$ with Muon as the baseline.]{ \includegraphics[width=0.27\textwidth]{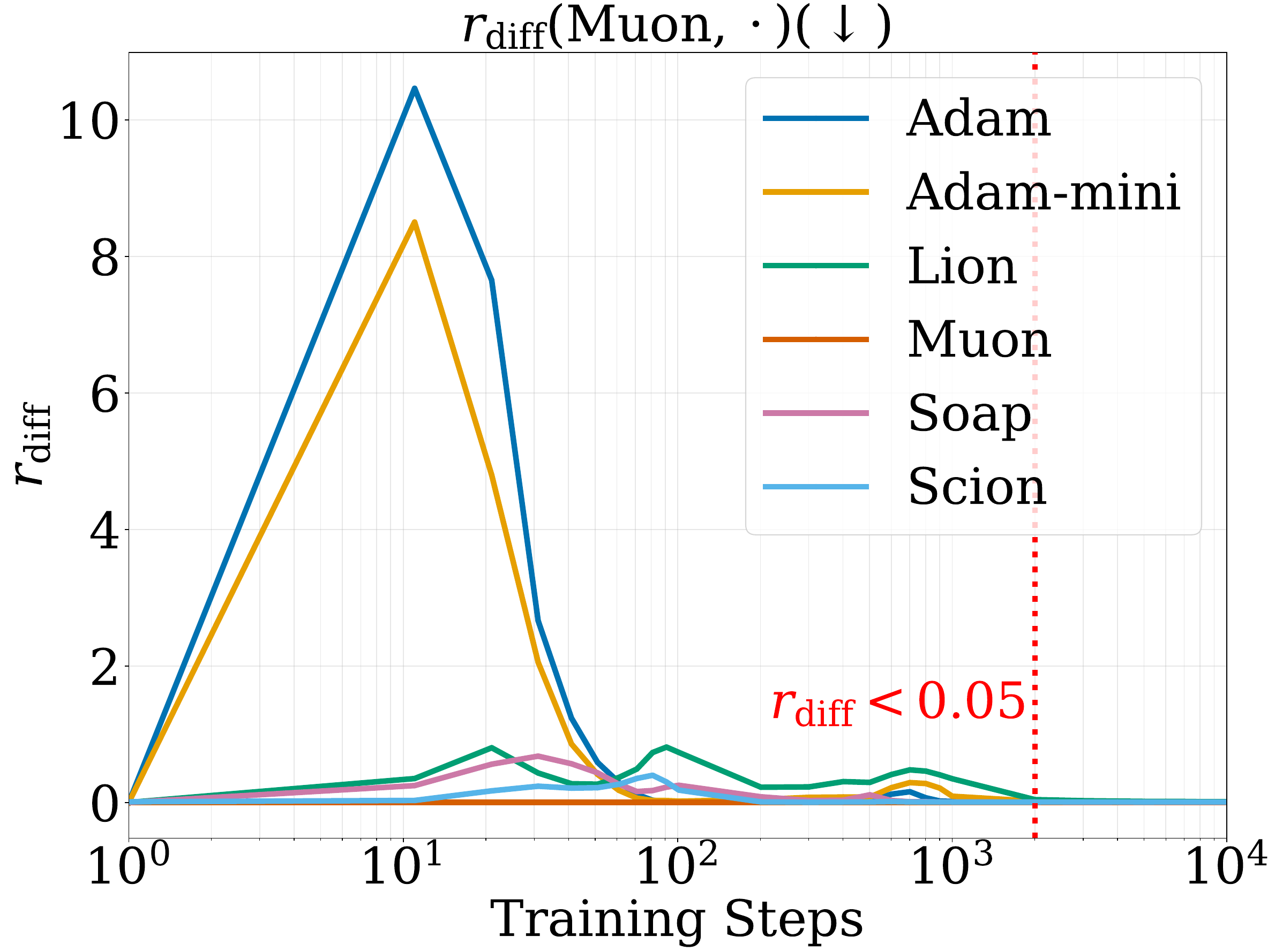}}

    \subfigure[Values of $\differr$ with Soap as the baseline.]{ \includegraphics[width=0.27\textwidth]{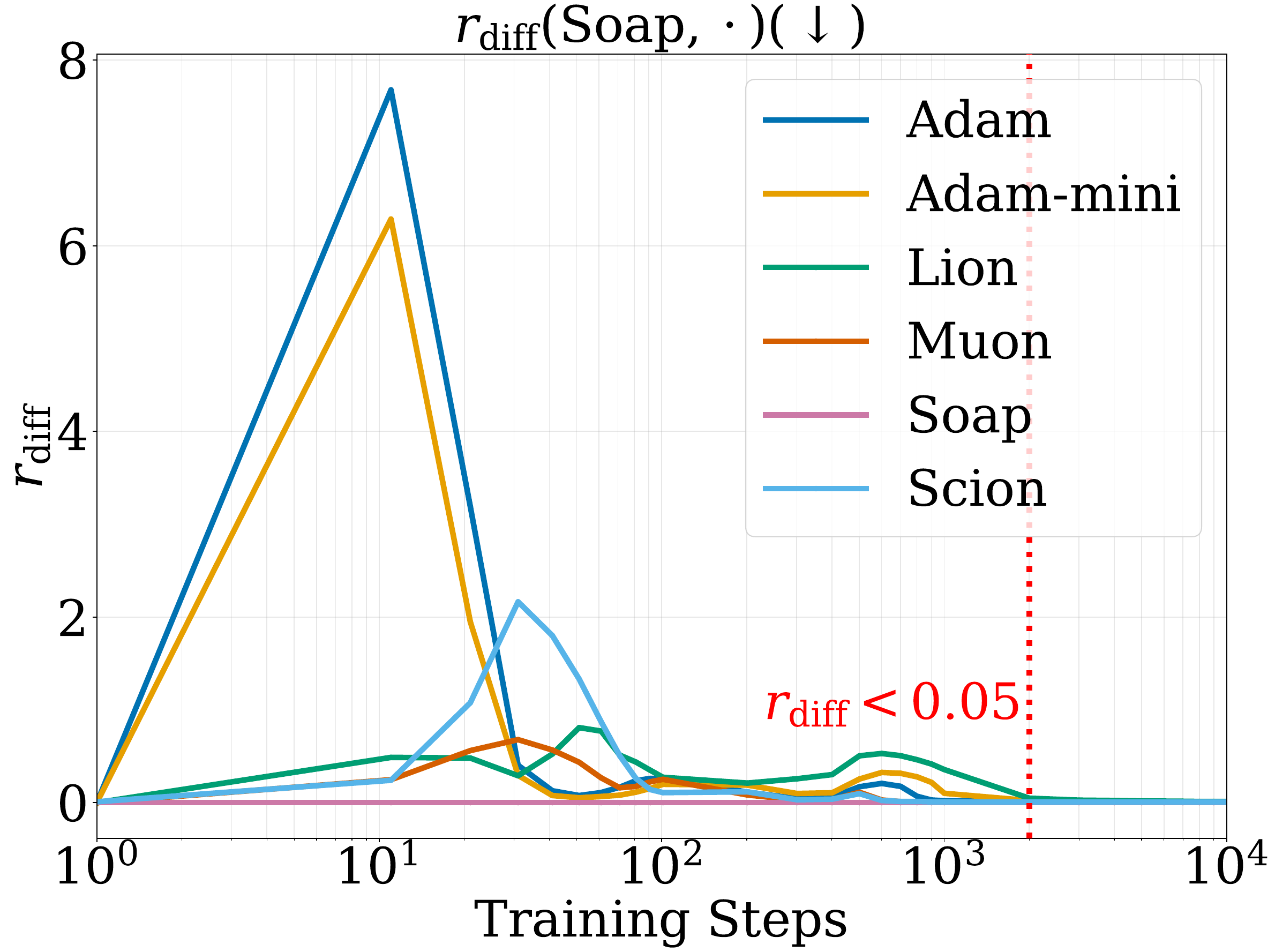}}
    \subfigure[Values of $\differr$ with Scion as the baseline.]{ \includegraphics[width=0.27\textwidth]{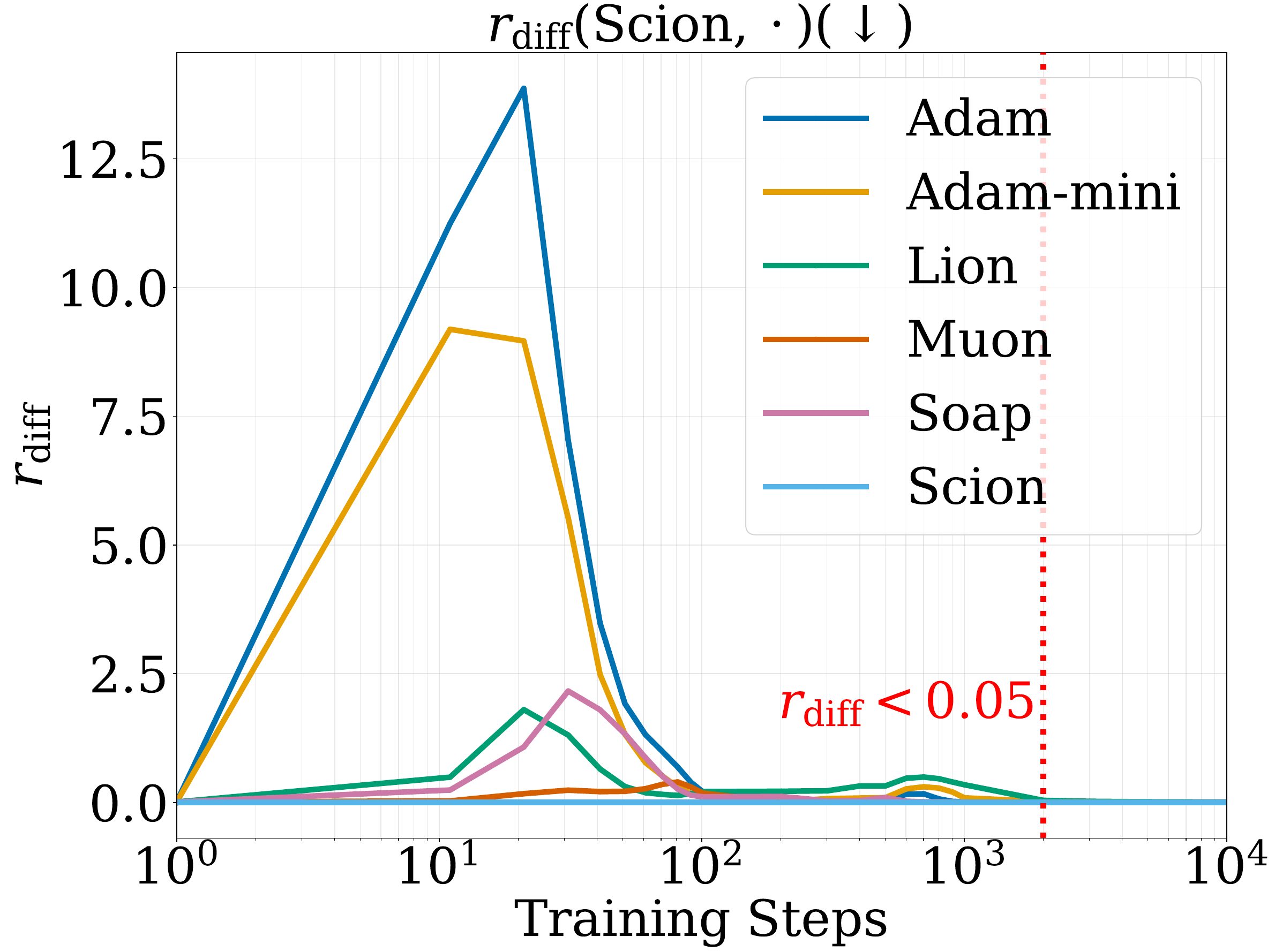}}
    
    \caption{Baseline-optimizer ablation for $\differr$. Panels (a)--(e) plot $\differr$ when the baseline optimizer is Adam-mini, Lion, Muon, Soap, and Scion, respectively. Approximate \ac{rgi} is insensitive to which optimizer is used as the evaluation baseline.} 
\end{figure}

\begin{figure}[H]
    \centering
    \subfigure[Values of $\cccerr$ with Adam-mini as the baseline.]{ \includegraphics[width=0.27\textwidth]{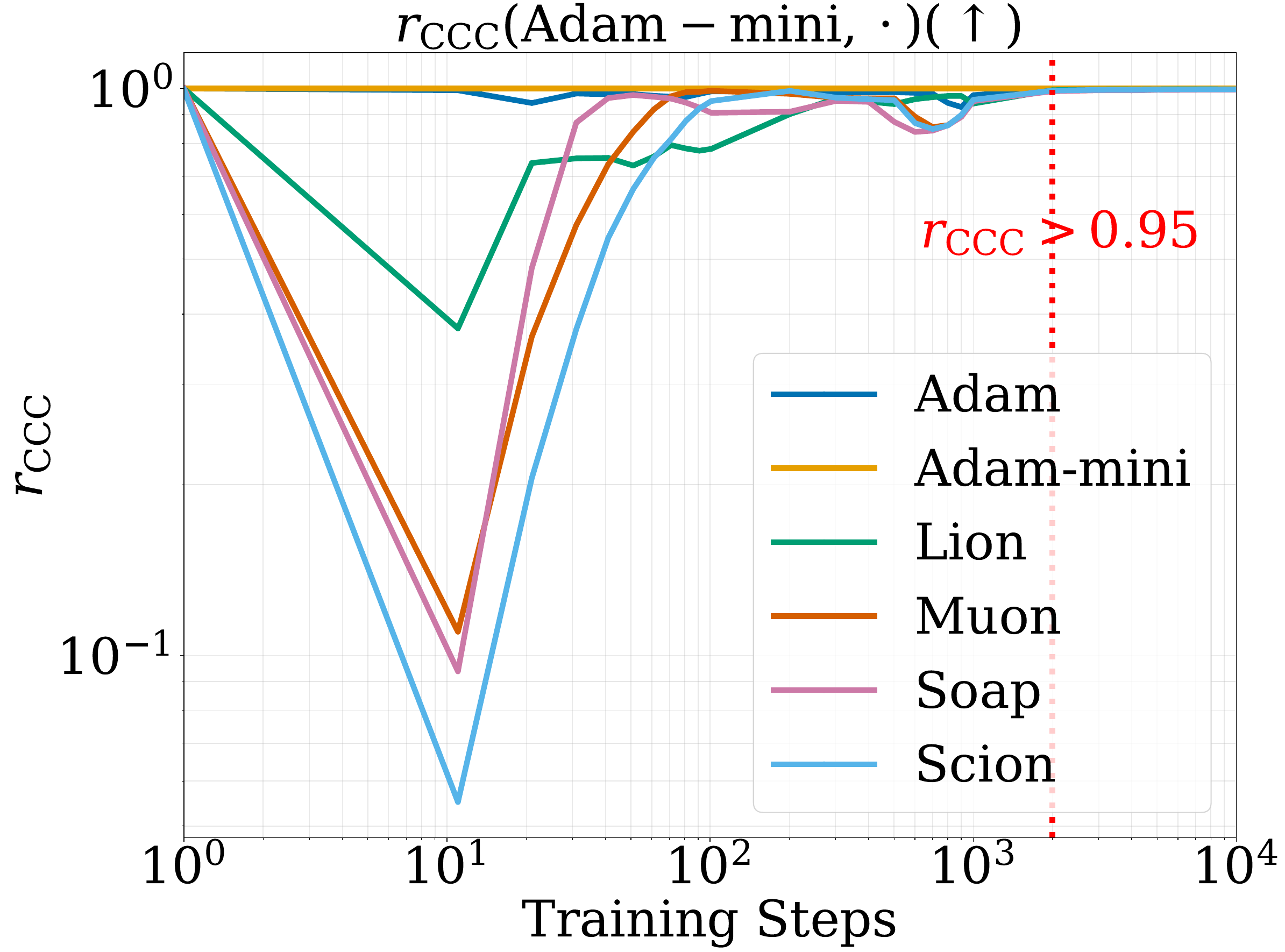}}
    \subfigure[Values of $\cccerr$ with Lion as the baseline.]{ \includegraphics[width=0.27\textwidth]{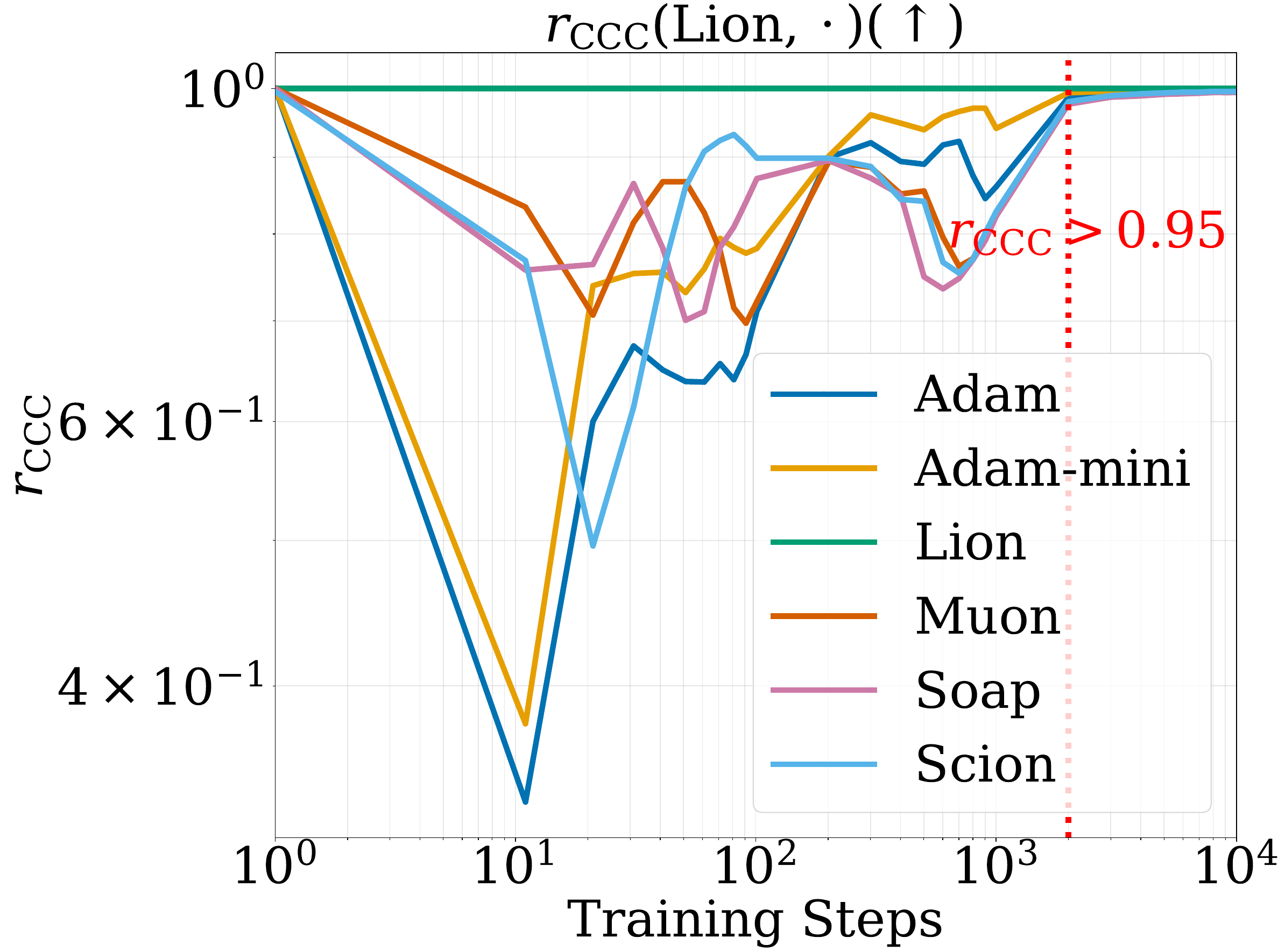}}
    \subfigure[Values of $\cccerr$ with Muon as the baseline.]{ \includegraphics[width=0.27\textwidth]{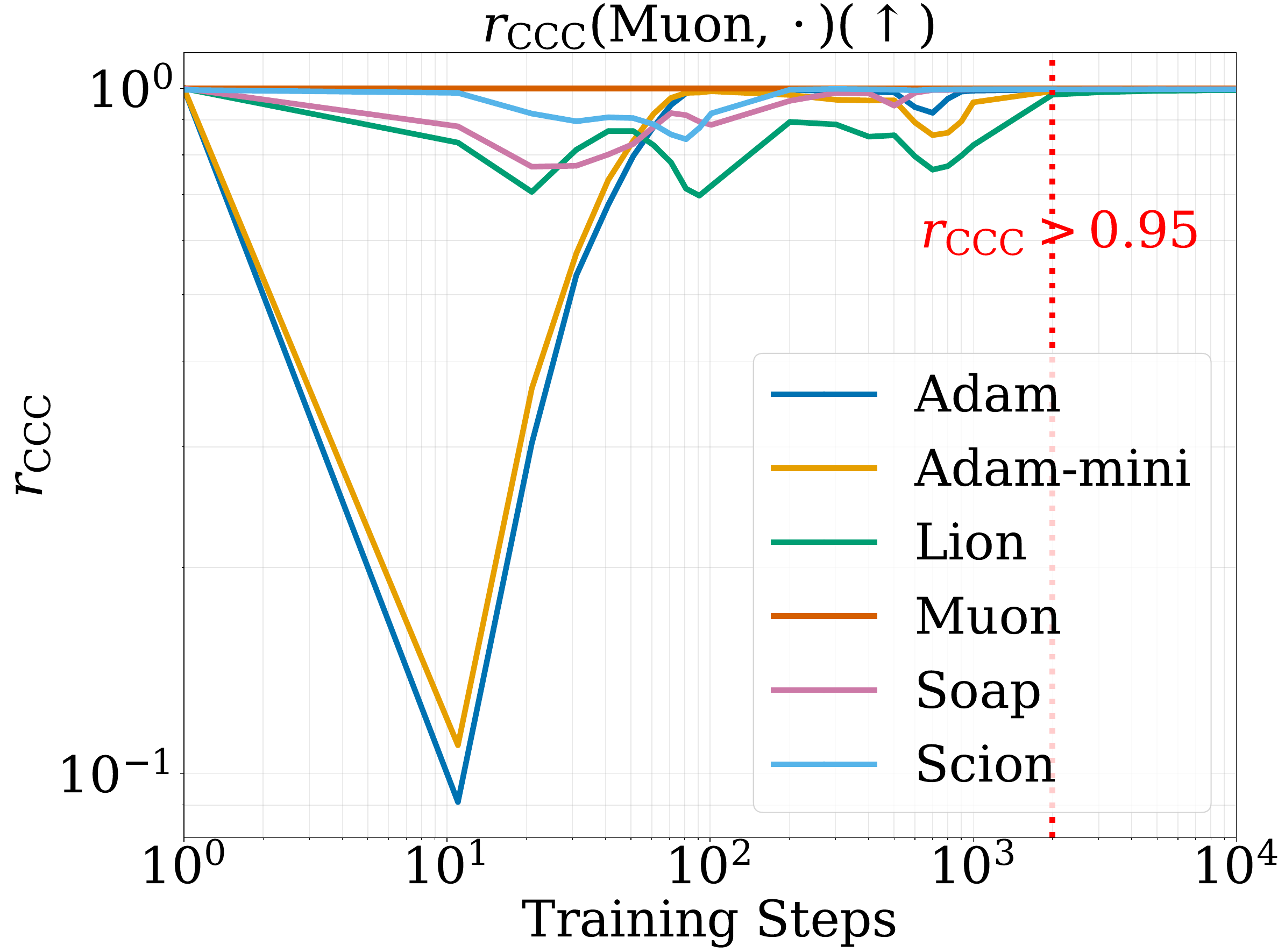}}

    \subfigure[Values of $\cccerr$ with Soap as the baseline.]{ \includegraphics[width=0.27\textwidth]{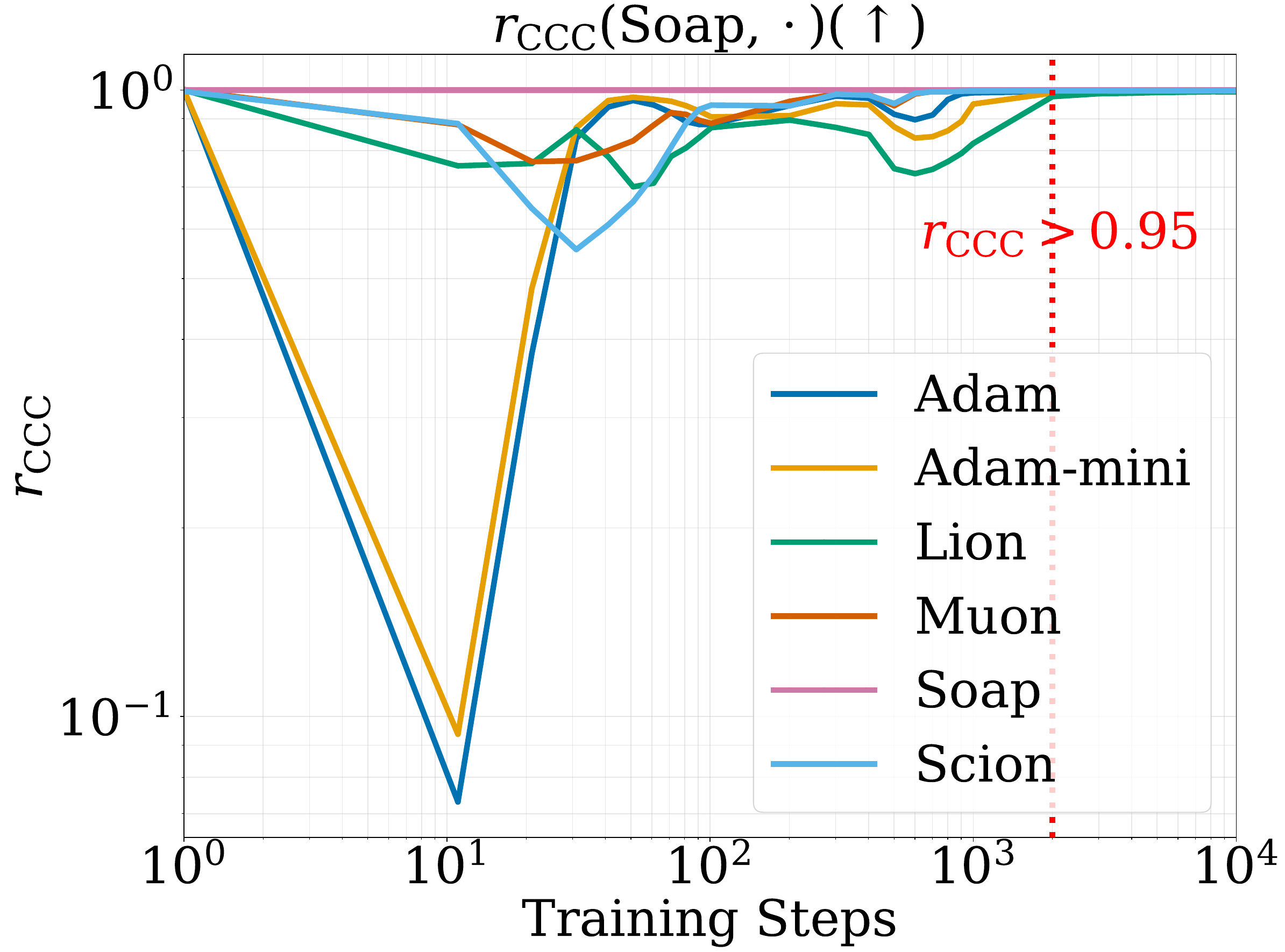}}
    \subfigure[Values of $\cccerr$ with Scion as the baseline.]{ \includegraphics[width=0.27\textwidth]{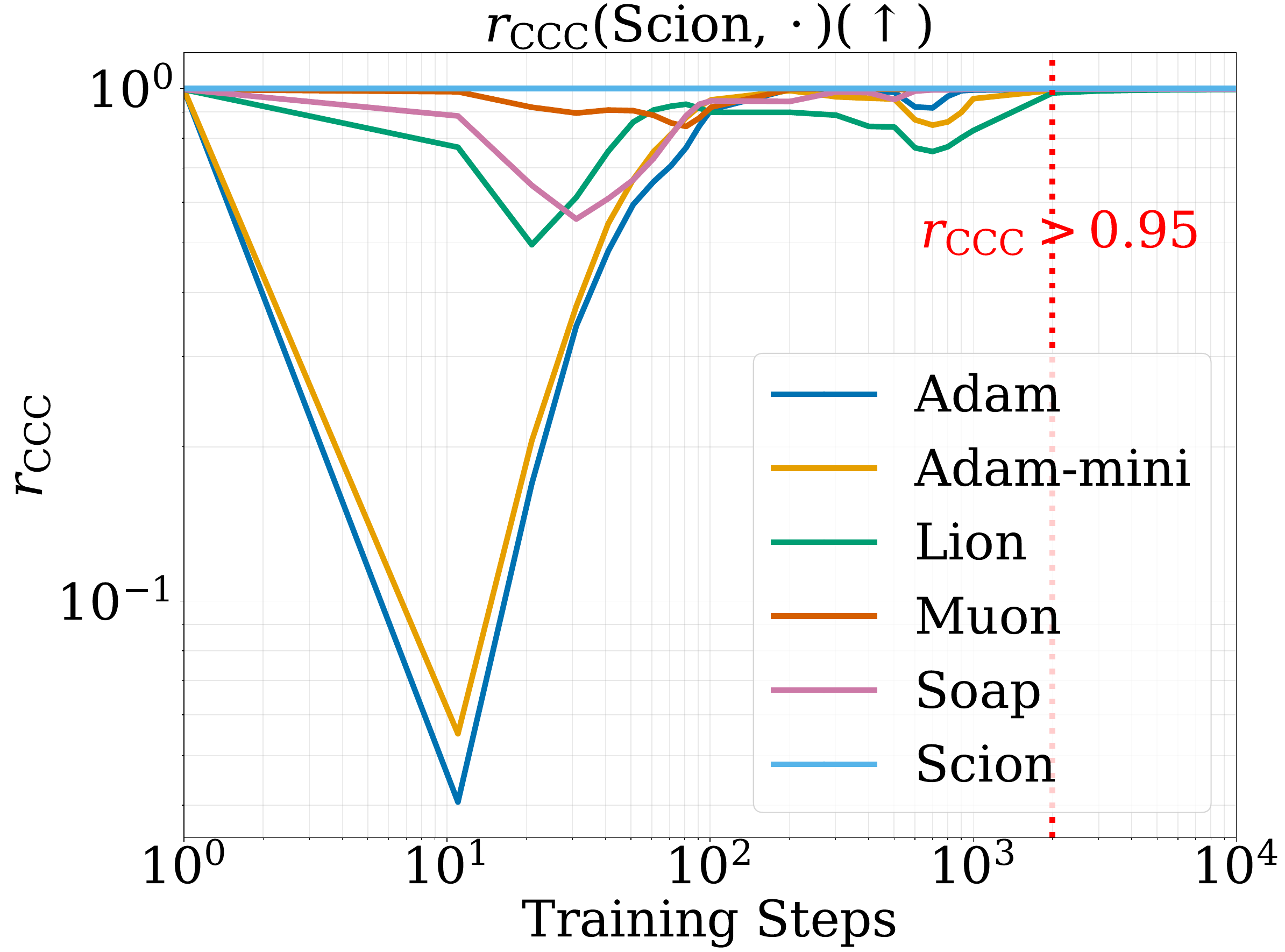}}
    
    \caption{Baseline-optimizer ablation for $\cccerr$. Panels (a)--(e) plot $\cccerr$ when the baseline optimizer is Adam-mini, Lion, Muon, Soap, and Scion, respectively. Approximate \ac{rgi} is insensitive to which optimizer is used as the evaluation baseline.} 
\end{figure}
\subsection{Per-Token Results of Relative Generalization Invariance}\label{app:per_token}
In the following, we present the results of $\differr$ and $\cccerr$ calculated with $|D_{\val,k}|=1$ when the initial parameters and the training data stream are the same for all the optimizers. The results show that \ac{rgi} among optimizers holds when we verify it in a token-wise manner.
\begin{figure}[H]
    \centering
    
    \subfigure[Values of $\differr$ calculated per token.]{ \includegraphics[width=0.35\textwidth]{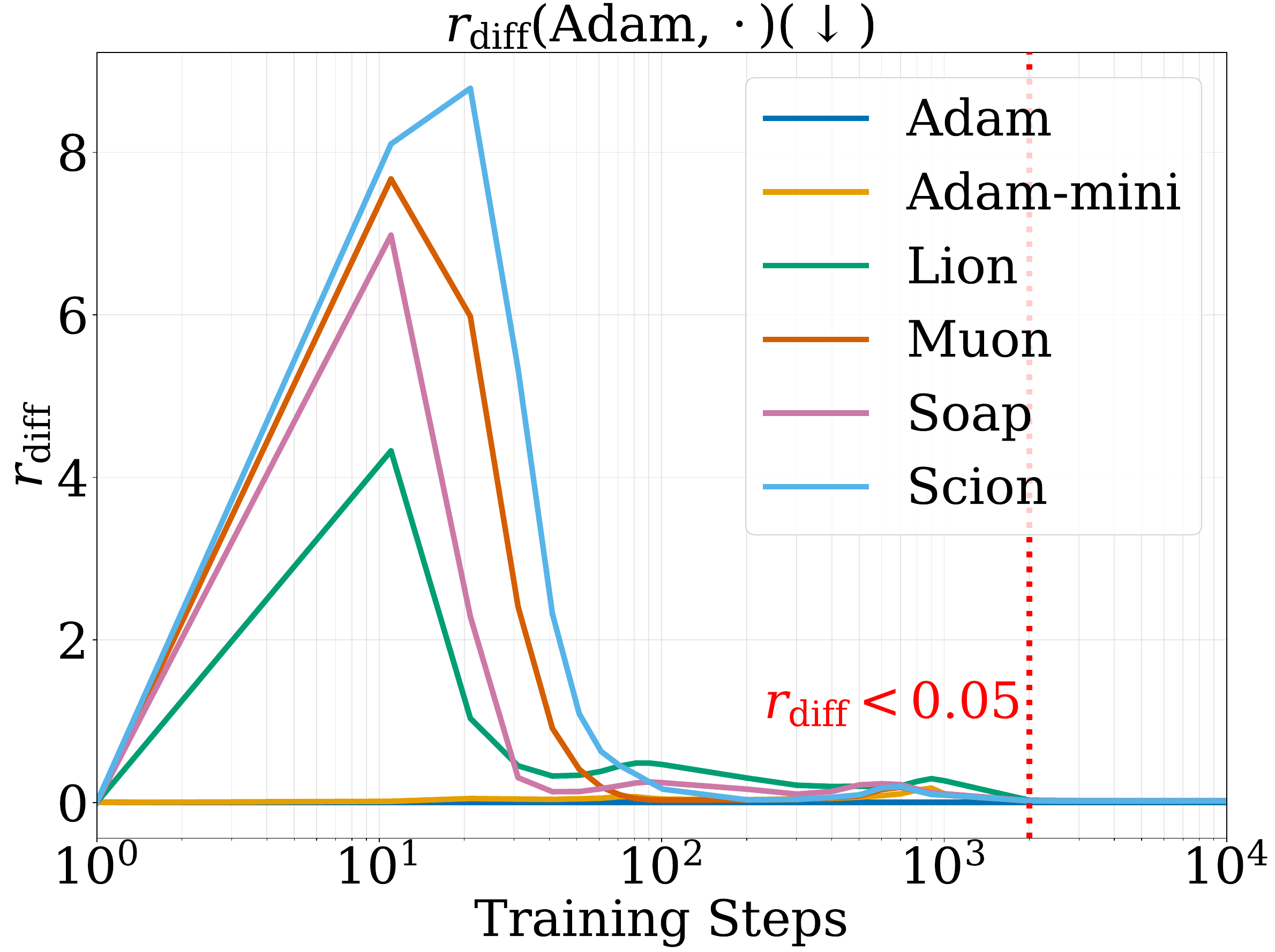}}
    \subfigure[Values of $\cccerr$ calculated per token.]{ \includegraphics[width=0.35\textwidth]{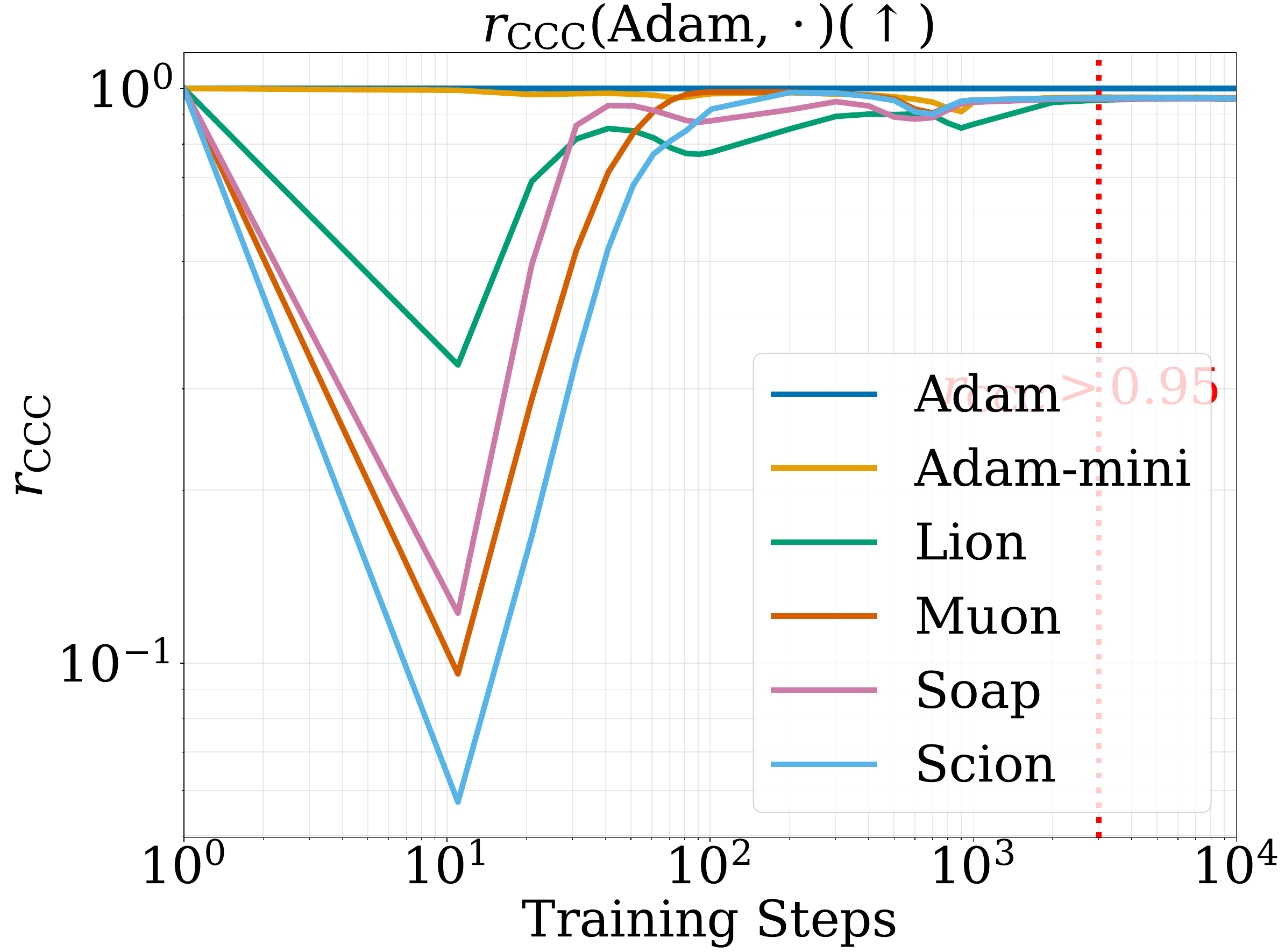}}
    
    \caption{Per-token evaluation of \ac{rgi}. Panels (a) and (b) plot $\differr$ and $\cccerr$, respectively, when each validation batch contains a single token, i.e., $|D_{\val,k}|=1$. The small $\differr$ and large $\cccerr$ values show that \ac{rgi} also holds at token-level granularity.} 
\end{figure}
\subsection{Results for $0.7$B and $4.5$B Models}\label{app:large}
Due to the high computational cost of training $0.7$B and $4.5$B models, we focus on Adam and Muon at this scale. We provide results for $0.7$B and $4.5$B models when Adam and Muon share the same initial parameters and training data stream. These results mirror those in Section~\ref{sec:opt}, showing that \ac{rgi} among optimizers also holds at larger scales.
\begin{figure}[H]
    \centering

    \subfigure[Values of $\differr$ for $0.7$B models.]{ \includegraphics[width=0.35\textwidth]{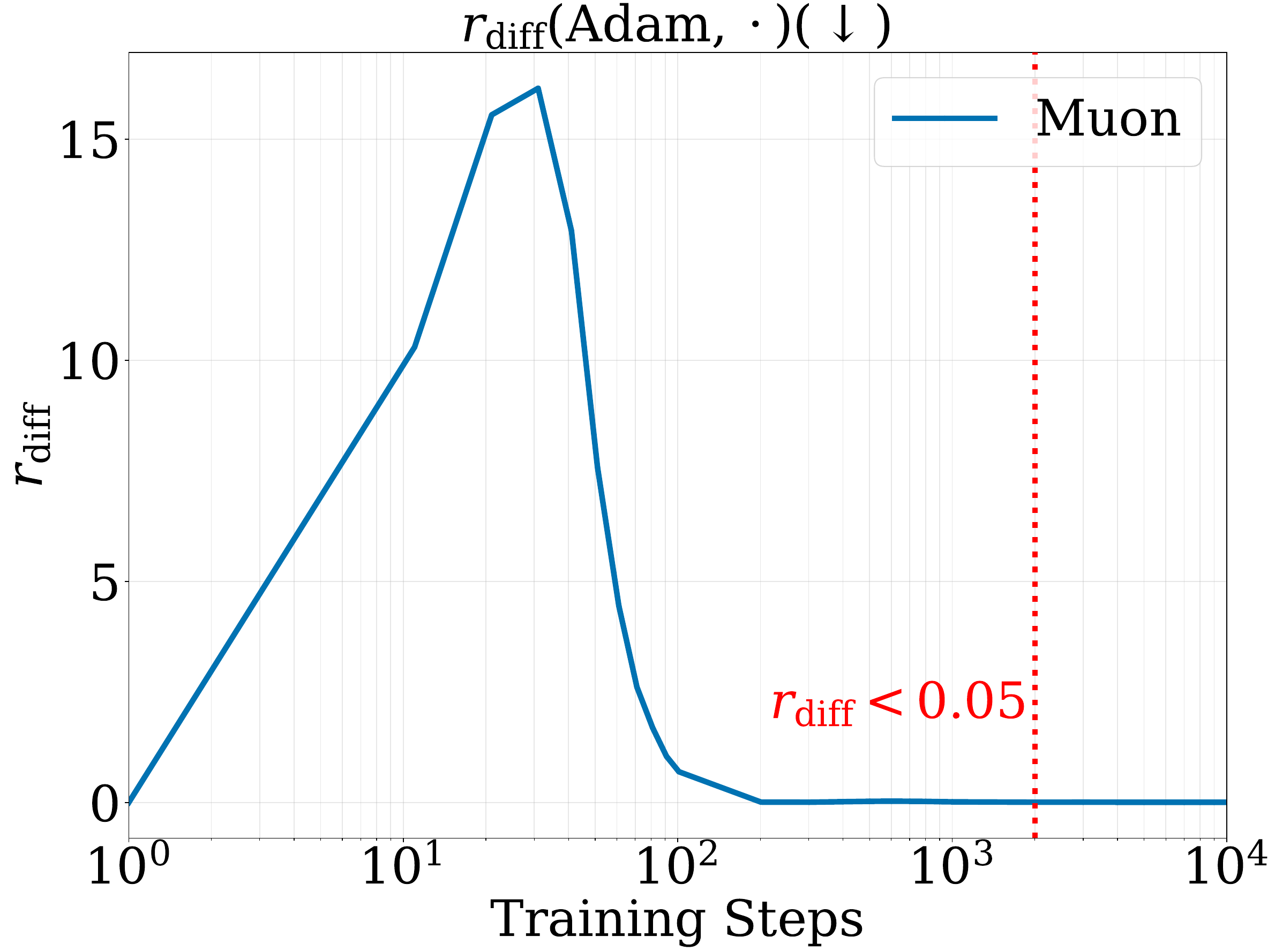}}
    \subfigure[Values of $\cccerr$ for $0.7$B models.]{ \includegraphics[width=0.35\textwidth]{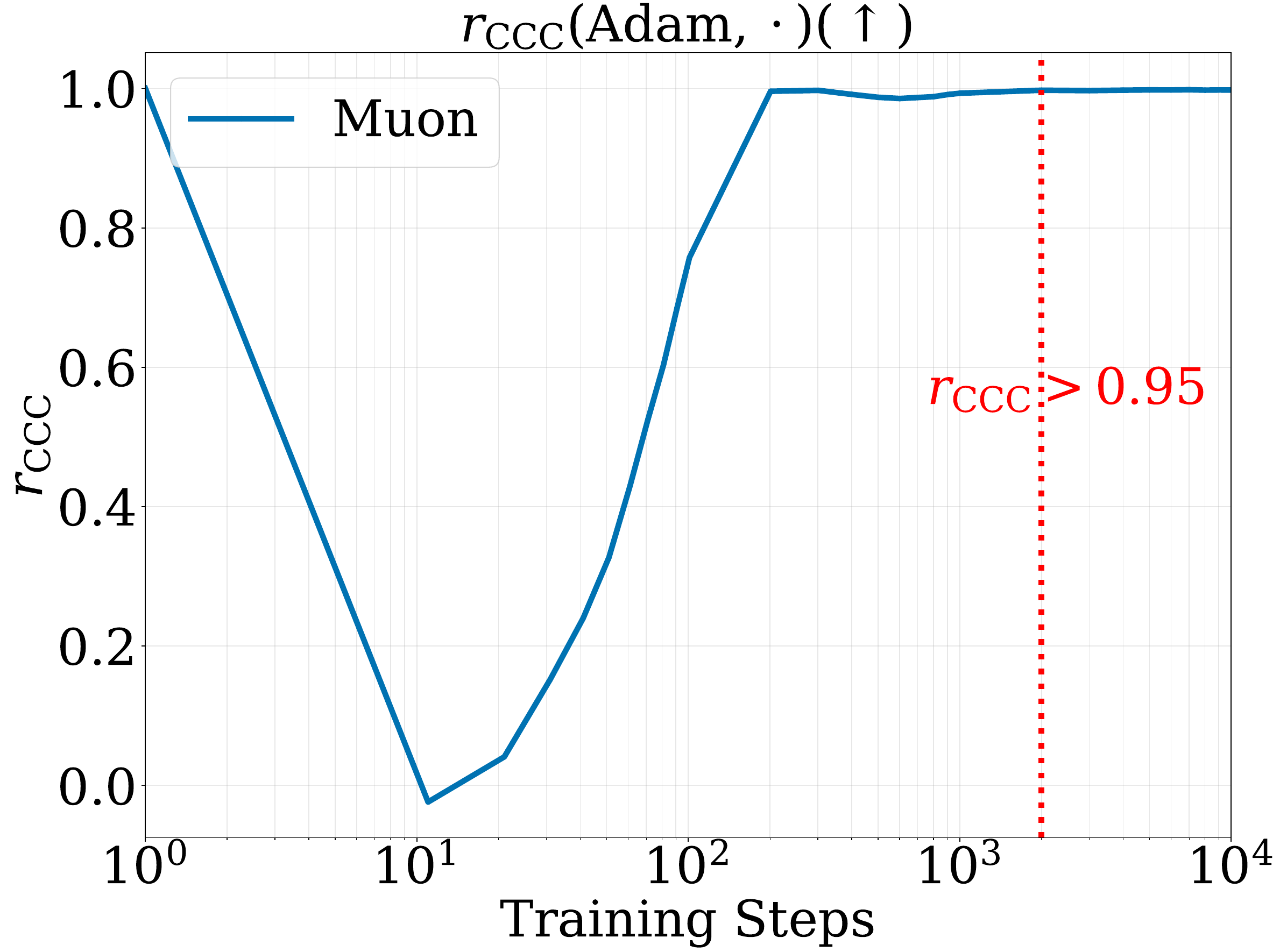}}
    
    \caption{Baseline \ac{rgi} for $0.7$B models. Panels (a) and (b) plot $\differr$ and $\cccerr$, respectively, between Adam and Muon when they share the same initial parameters and training data stream. The results show that approximate \ac{rgi} persists at the $0.7$B scale.} 
    \label{fig:0.7base}
\end{figure}

\begin{figure}[H]
    \centering

    \subfigure[Values of $\differr$ for $4.5$B models.]{ \includegraphics[width=0.35\textwidth]{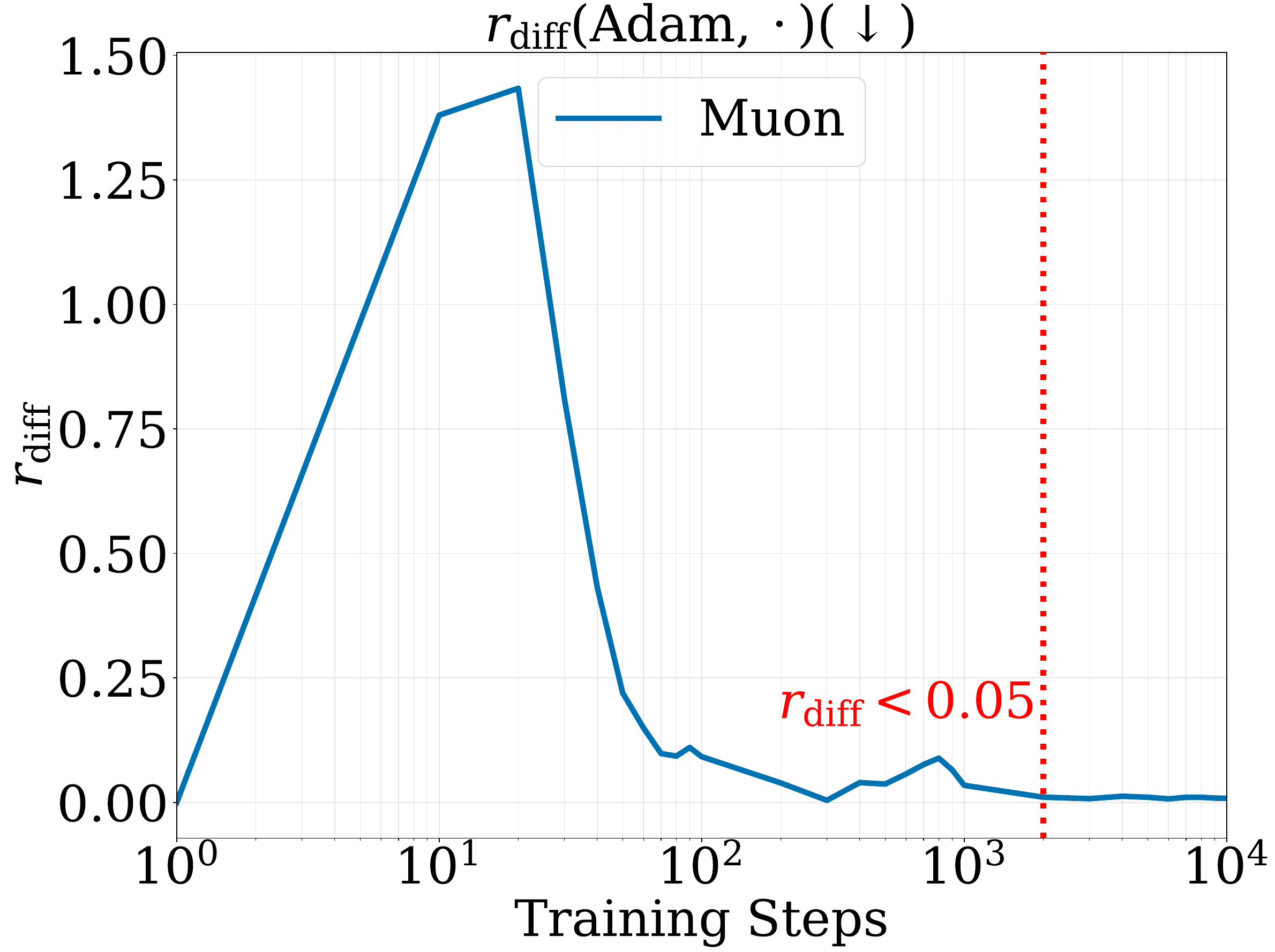}}
    \subfigure[Values of $\cccerr$ for $4.5$B models.]{ \includegraphics[width=0.35\textwidth]{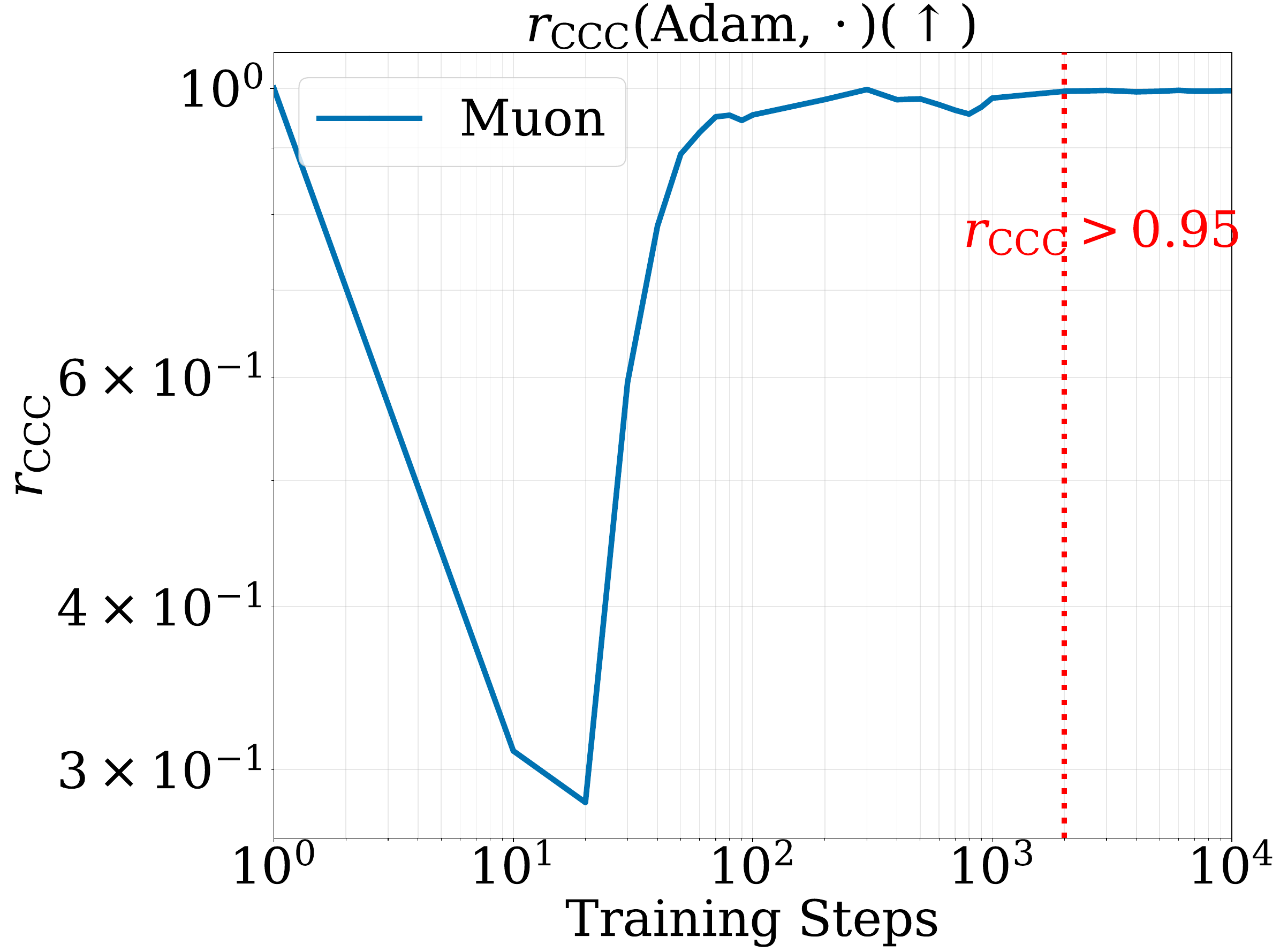}}
    
    \caption{Baseline \ac{rgi} for $4.5$B models. Panels (a) and (b) plot $\differr$ and $\cccerr$, respectively, between Adam and Muon when they share the same initial parameters and training data stream. The results show that approximate \ac{rgi} persists at the $4.5$B scale.} 
    \label{fig:4.5base}
\end{figure}
In the following experiments, we ablate how the hyperparameters influence \ac{rgi} of $0.7$B models. All results are compared against Adam under the baseline in Figure~\ref{fig:0.7base}. Therefore, we also measure \ac{rgi} between Adam runs with different settings. 
\begin{figure}[H]
    \centering

    \subfigure[Values of $\differr$ for $0.7$B models with different initial parameters.]{ \includegraphics[width=0.35\textwidth]{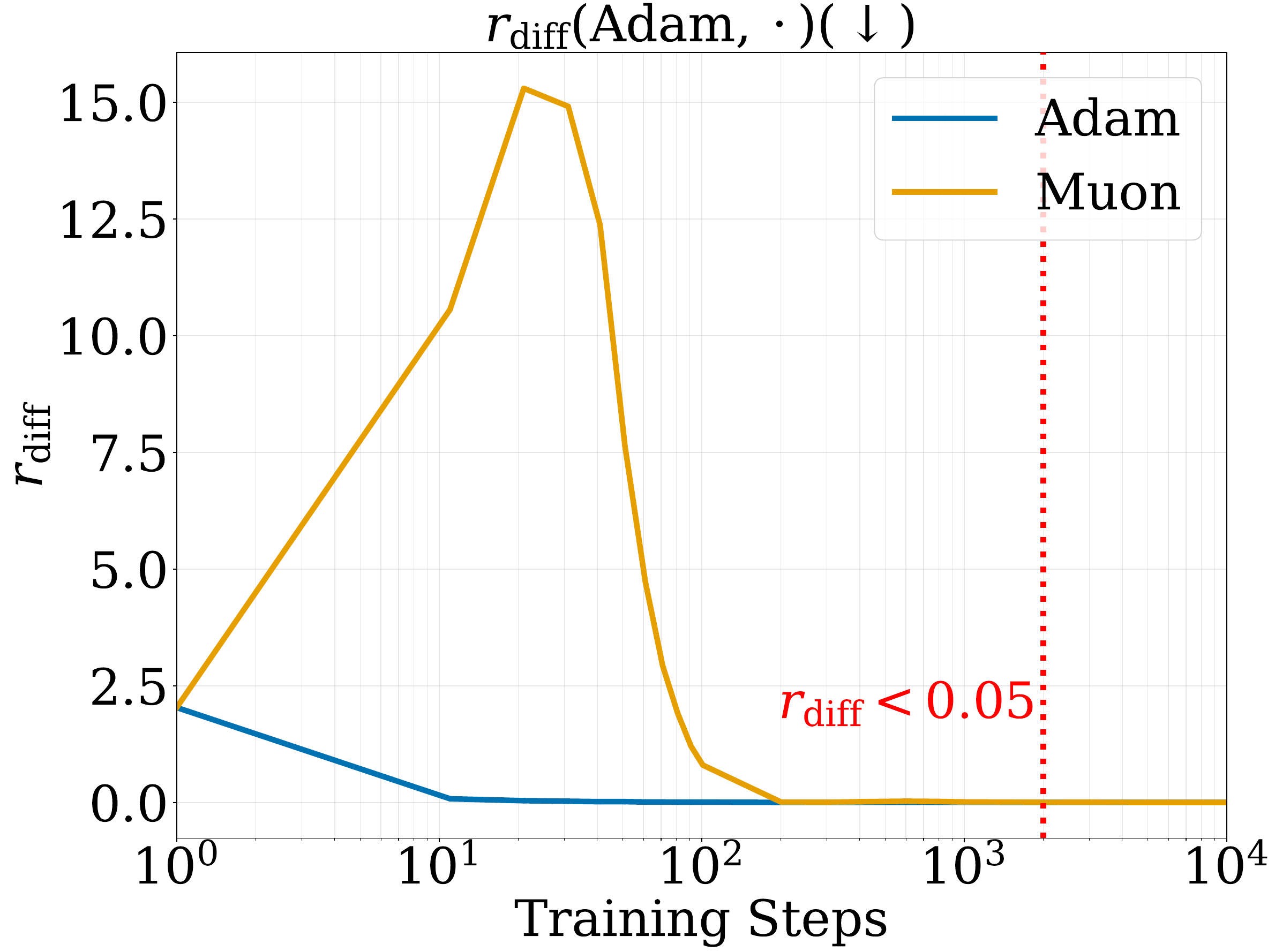}}
    \subfigure[Values of $\cccerr$ for $0.7$B with different initial parameters.]{ \includegraphics[width=0.35\textwidth]{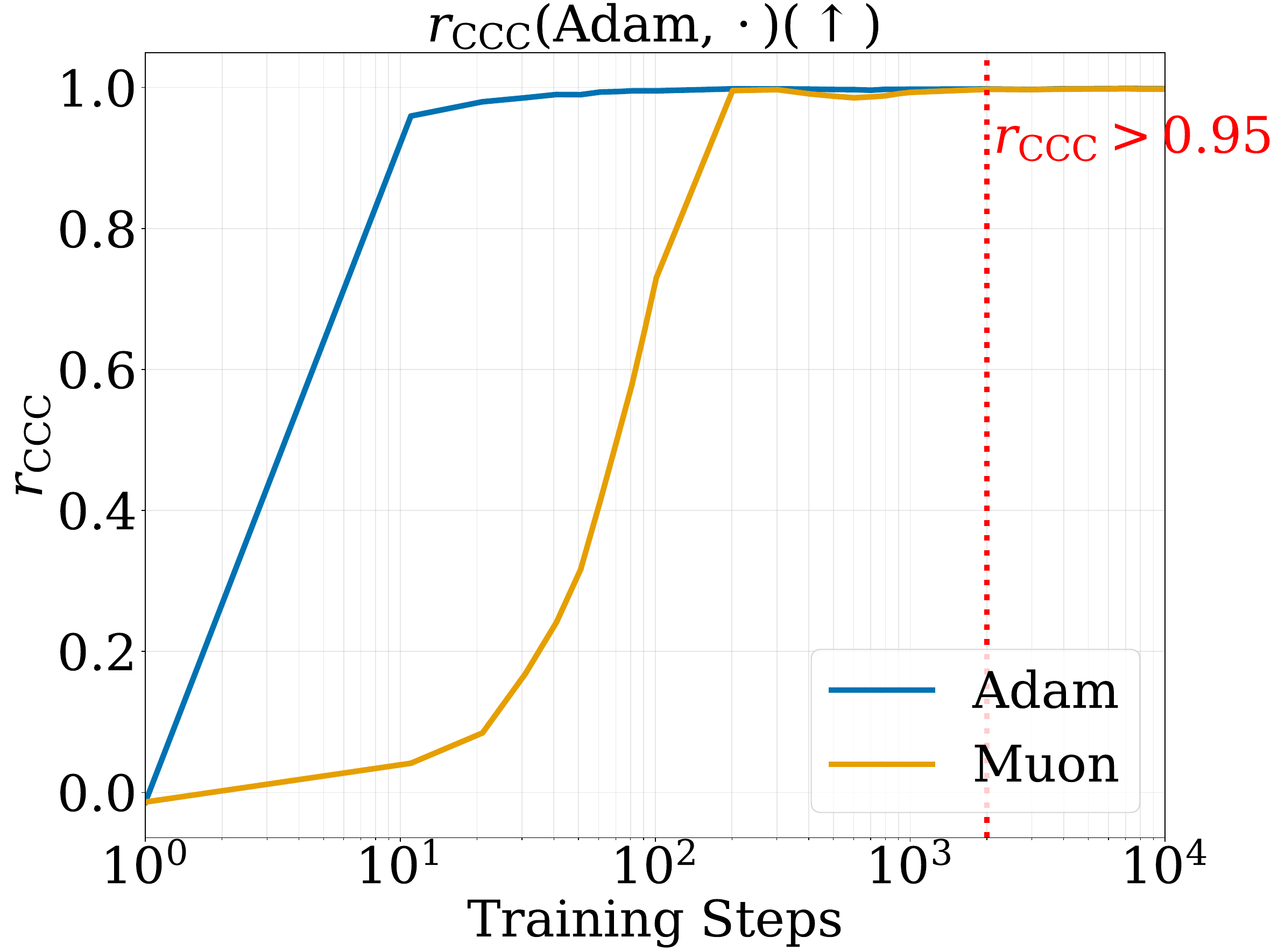}}
    
    \caption{Initialization ablation for $0.7$B models. Panels (a) and (b) plot $\differr$ and $\cccerr$, respectively, for Adam and Muon runs started from different initial parameters. Approximate \ac{rgi} still emerges at the $0.7$B scale, showing that initialization differences do not prevent the phenomenon.} 
\end{figure}

\begin{figure}[H]
    \centering

    \subfigure[Values of $\differr$ for $0.7$B models with $0.25\eta^\sfA$.]{ \includegraphics[width=0.23\textwidth]{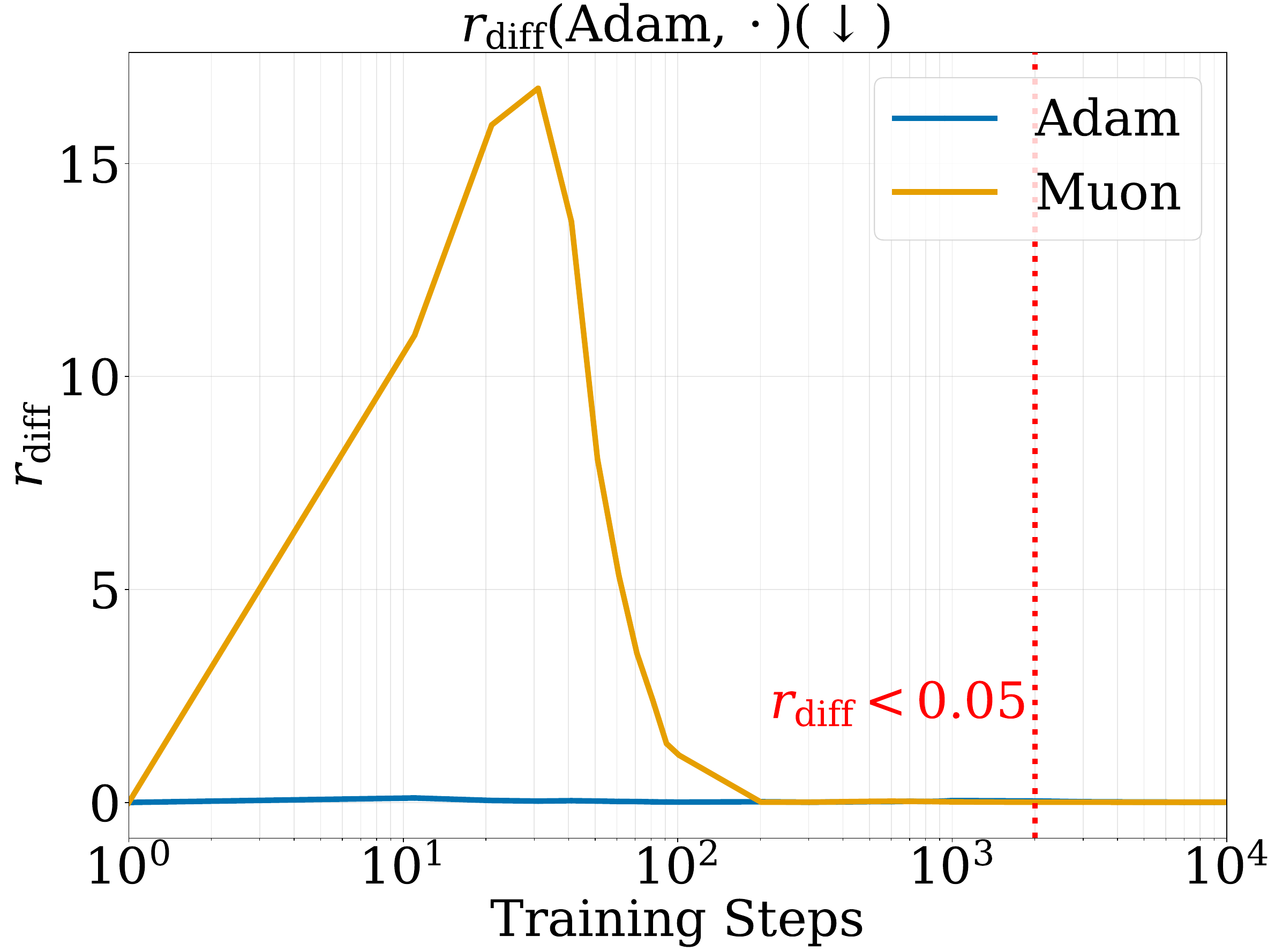}}
    \subfigure[Values of $\differr$ for $0.7$B models with $1.25\eta^\sfA$.]{ \includegraphics[width=0.23\textwidth]{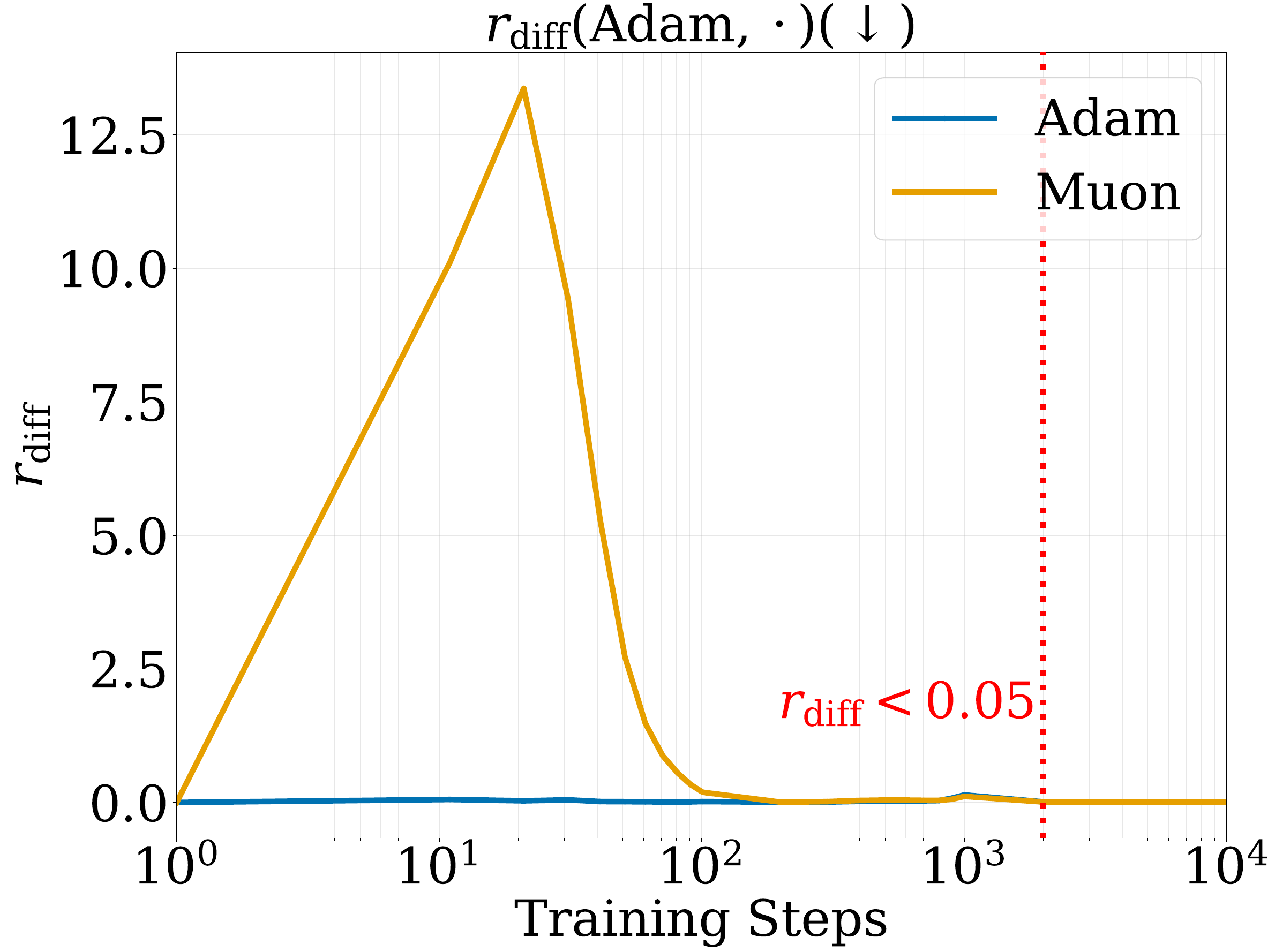}}
    \subfigure[Values of $\cccerr$ for $0.7$B with $0.25\eta^\sfA$.]{ \includegraphics[width=0.23\textwidth]{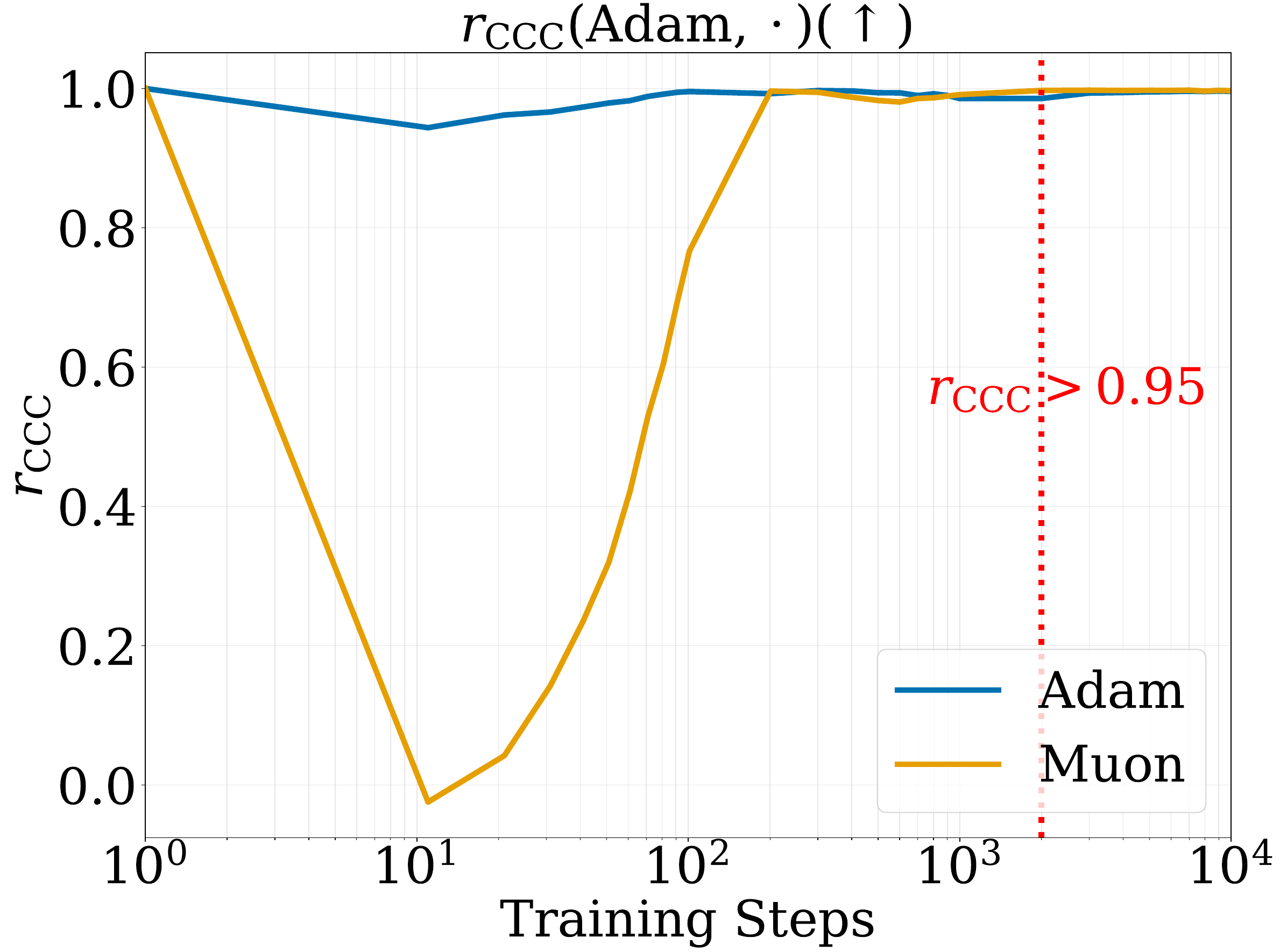}}
    \subfigure[Values of $\cccerr$ for $0.7$B with $1.25\eta^\sfA$.]{ \includegraphics[width=0.23\textwidth]{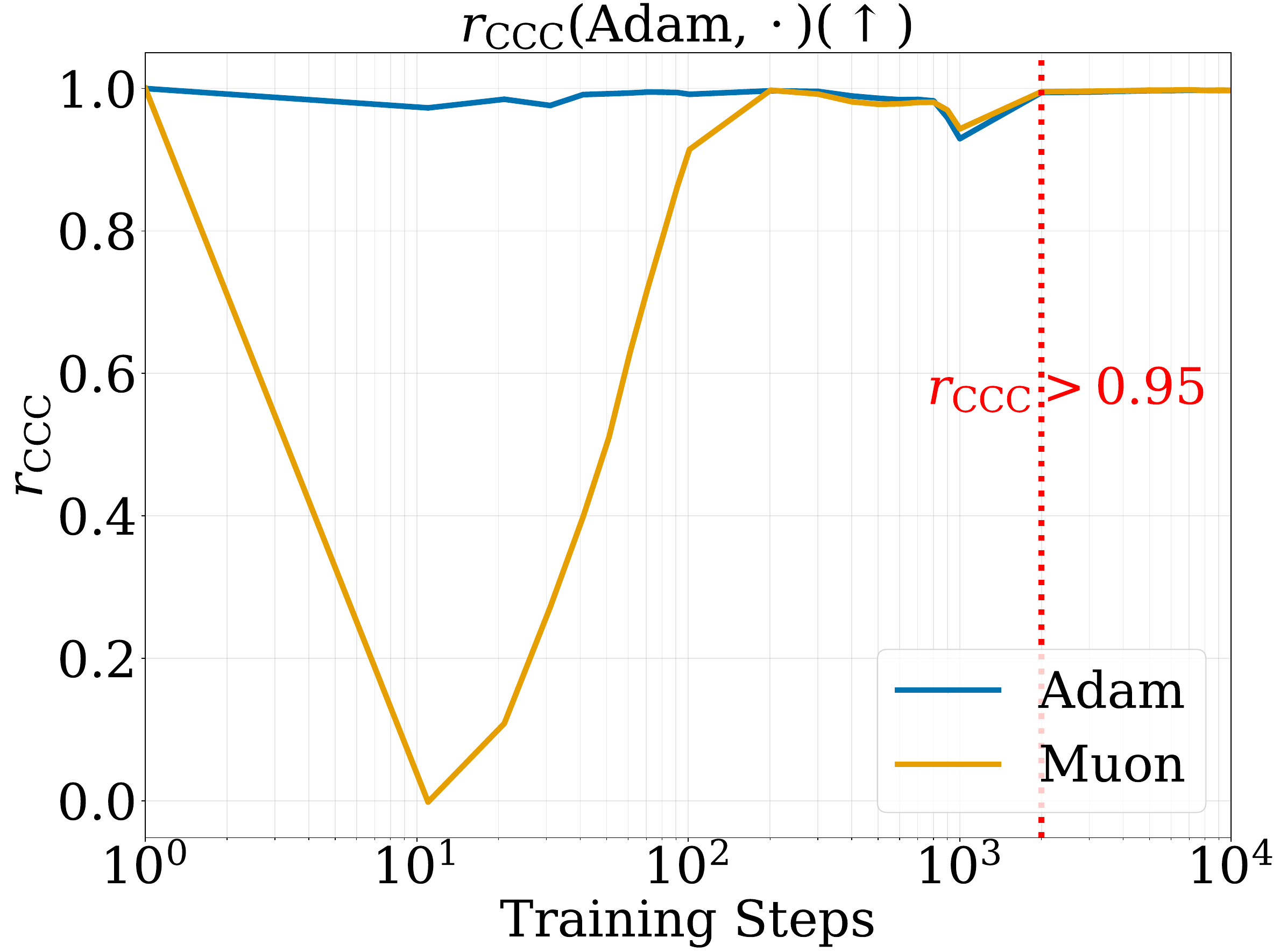}}
    
    \caption{Learning-rate ablation for $0.7$B models. Panels (a) and (b) plot $\differr$ for $0.25\eta^{\sfA}$ and $1.25\eta^{\sfA}$, while Panels (c) and (d) plot $\cccerr$ for the same two learning-rate multipliers. Approximate \ac{rgi} remains robust to these learning-rate changes at the $0.7$B scale.} 
\end{figure}

\begin{figure}[H]
    \centering

    \subfigure[Values of $\differr$ for $0.7$B models with $\gamma=0.85$.]{ \includegraphics[width=0.27\textwidth]{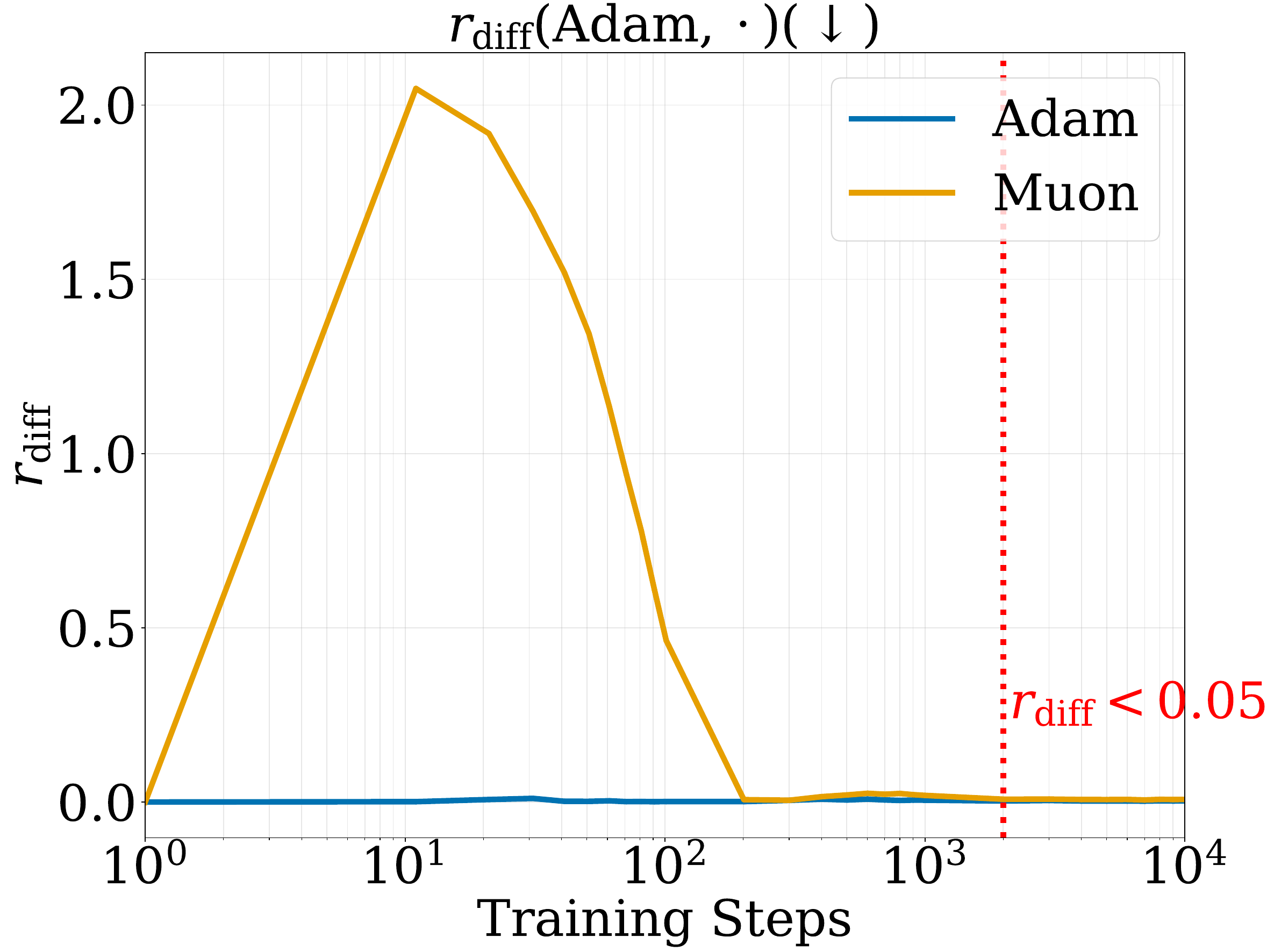}}
    \subfigure[Values of $\differr$ for $0.7$B models with $\gamma=0.9$.]{ \includegraphics[width=0.27\textwidth]{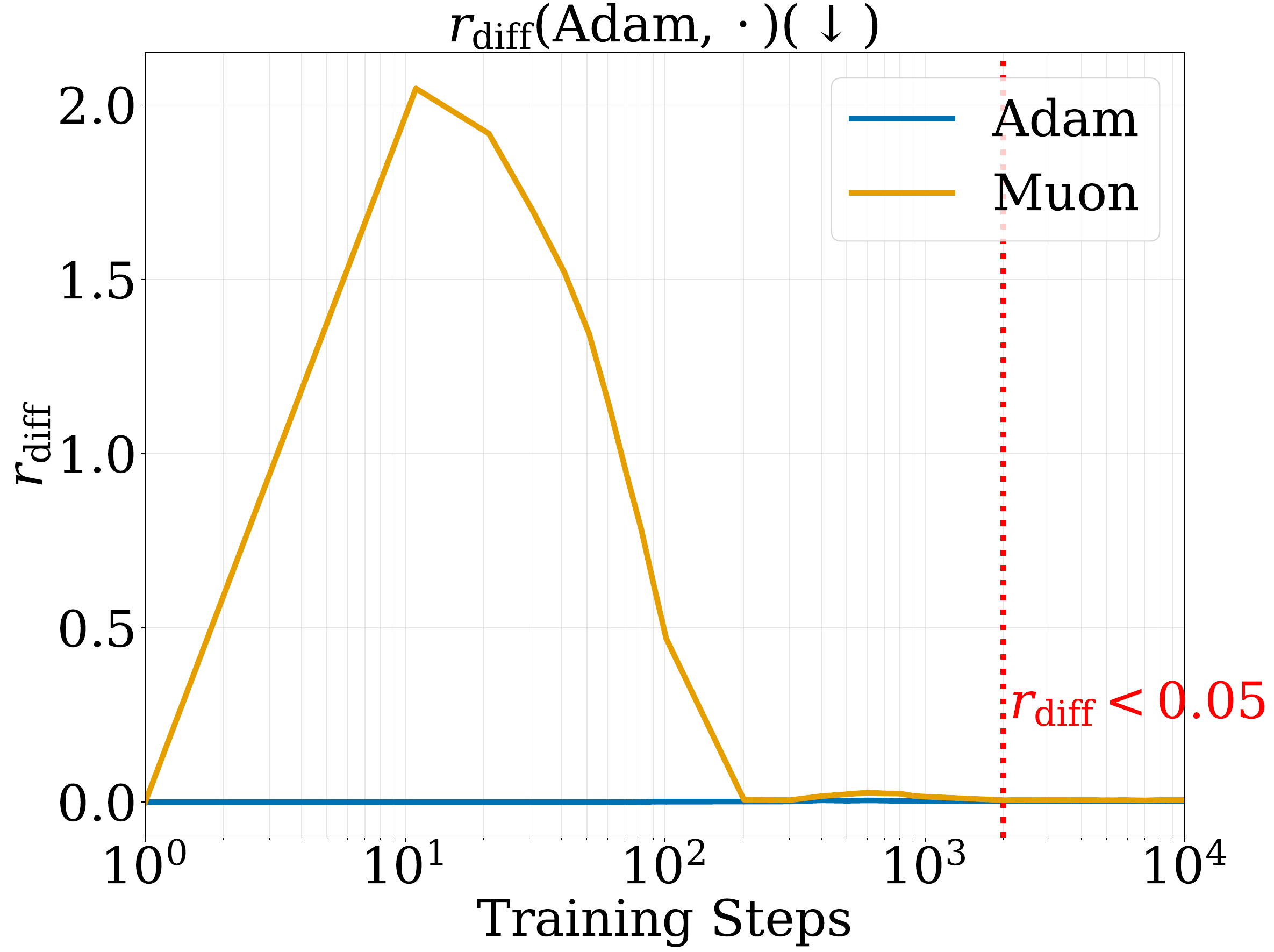}}
    \subfigure[Values of $\differr$ for $0.7$B models with $\gamma=0.95$.]{ \includegraphics[width=0.27\textwidth]{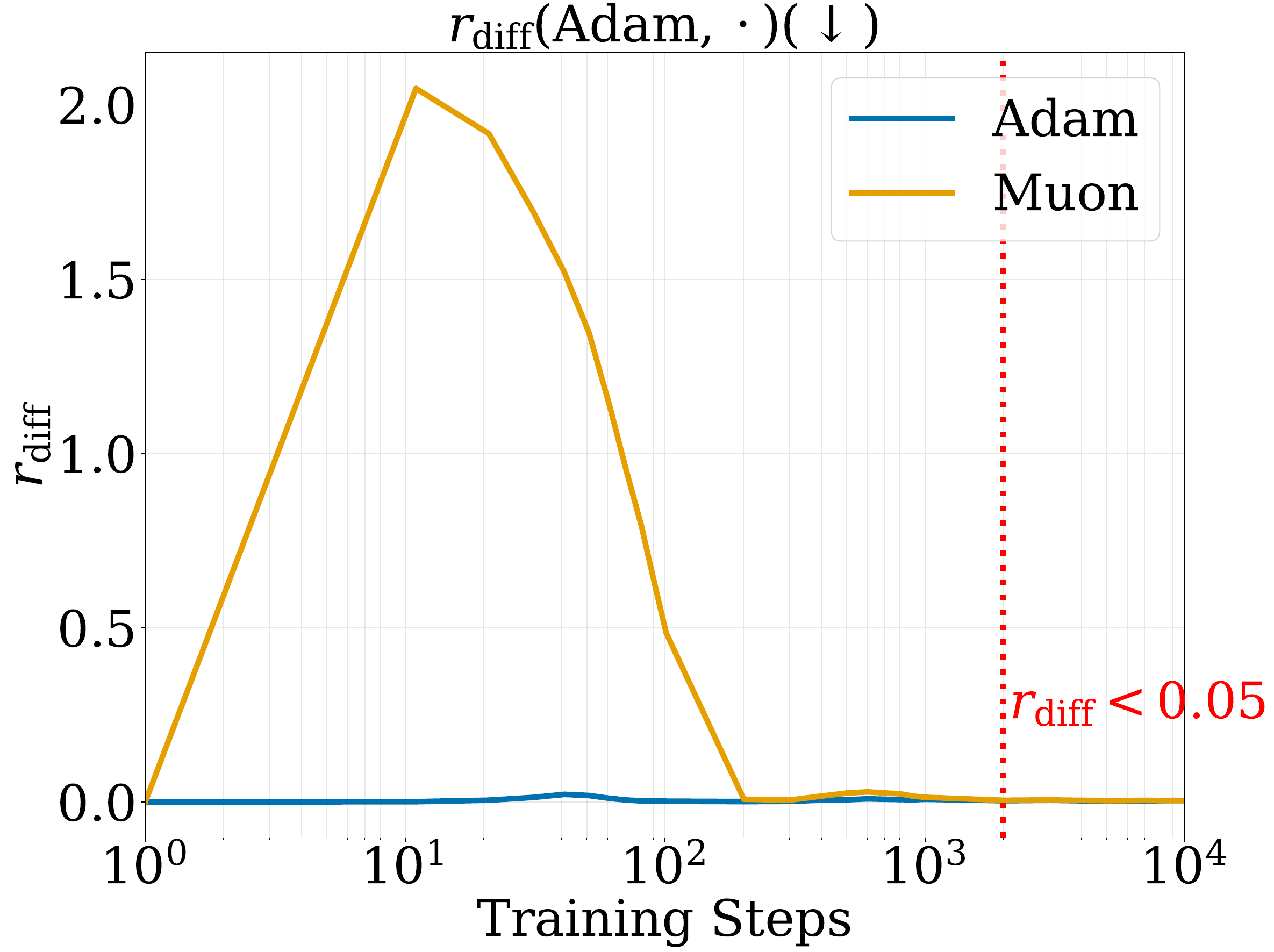}}

    \subfigure[Values of $\cccerr$ for $0.7$B with $\gamma=0.85$.]{ \includegraphics[width=0.27\textwidth]{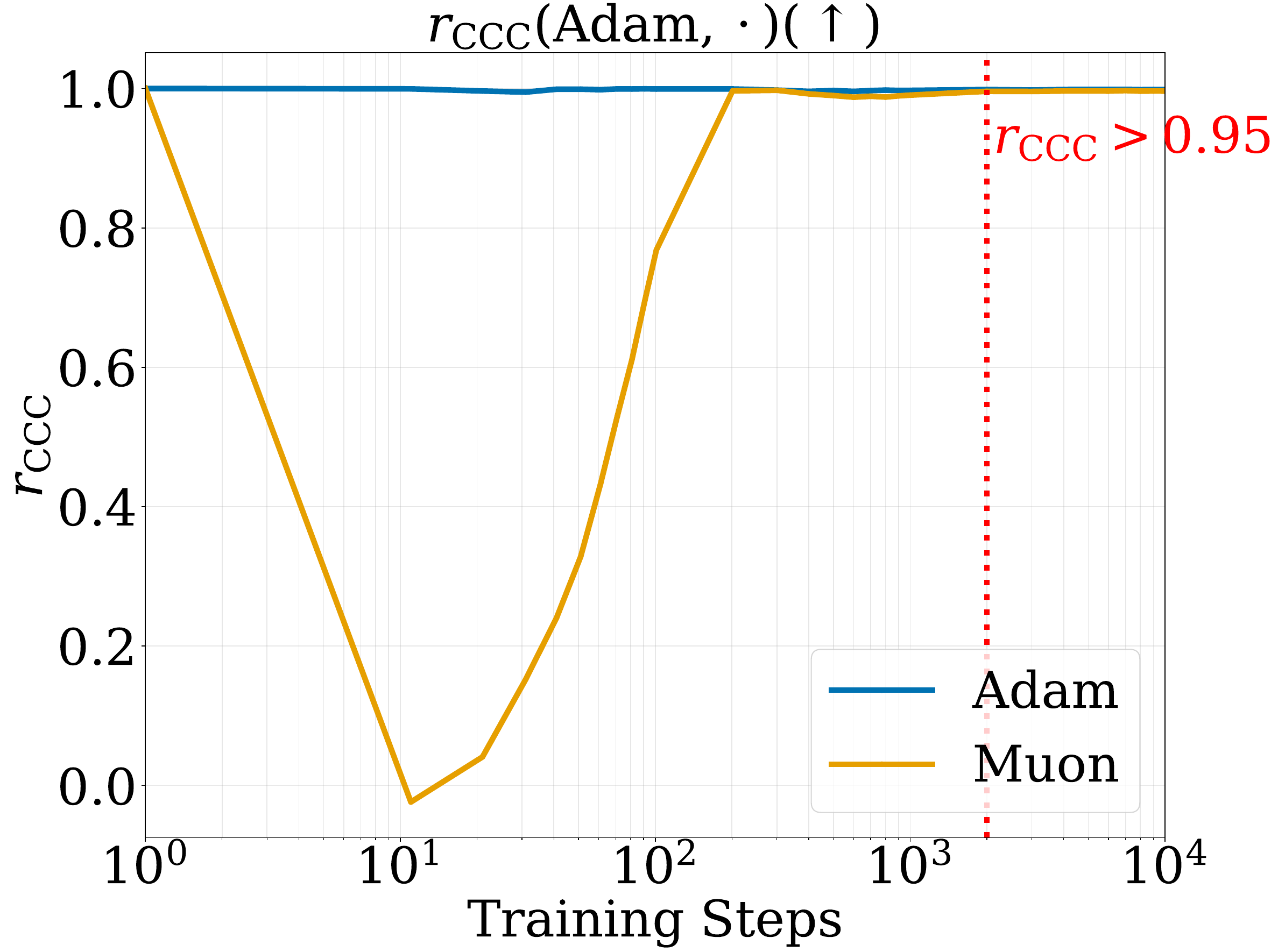}}
    \subfigure[Values of $\cccerr$ for $0.7$B with $\gamma=0.9$.]{ \includegraphics[width=0.27\textwidth]{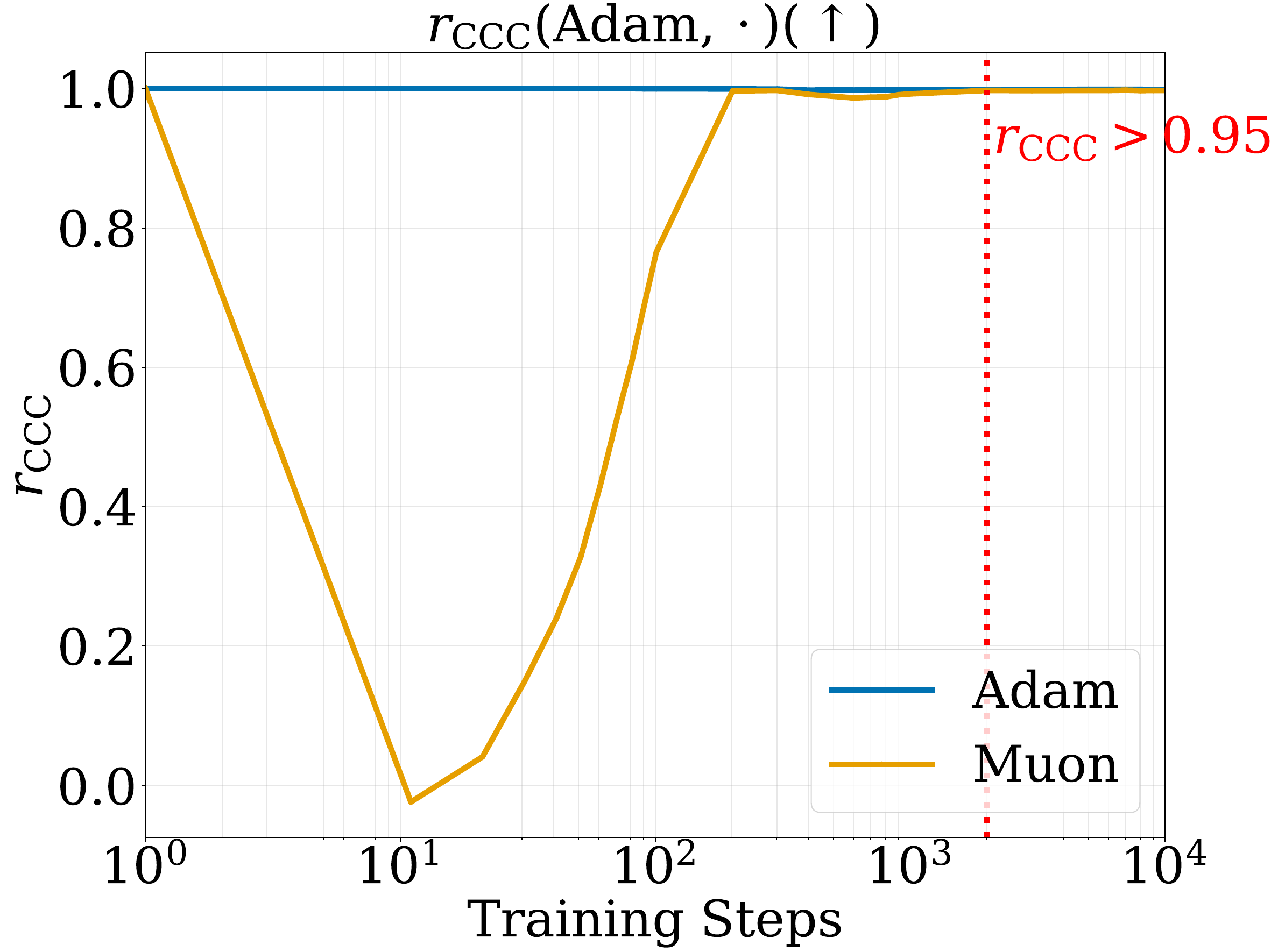}}
    \subfigure[Values of $\cccerr$ for $0.7$B with $\gamma=0.95$.]{ \includegraphics[width=0.27\textwidth]{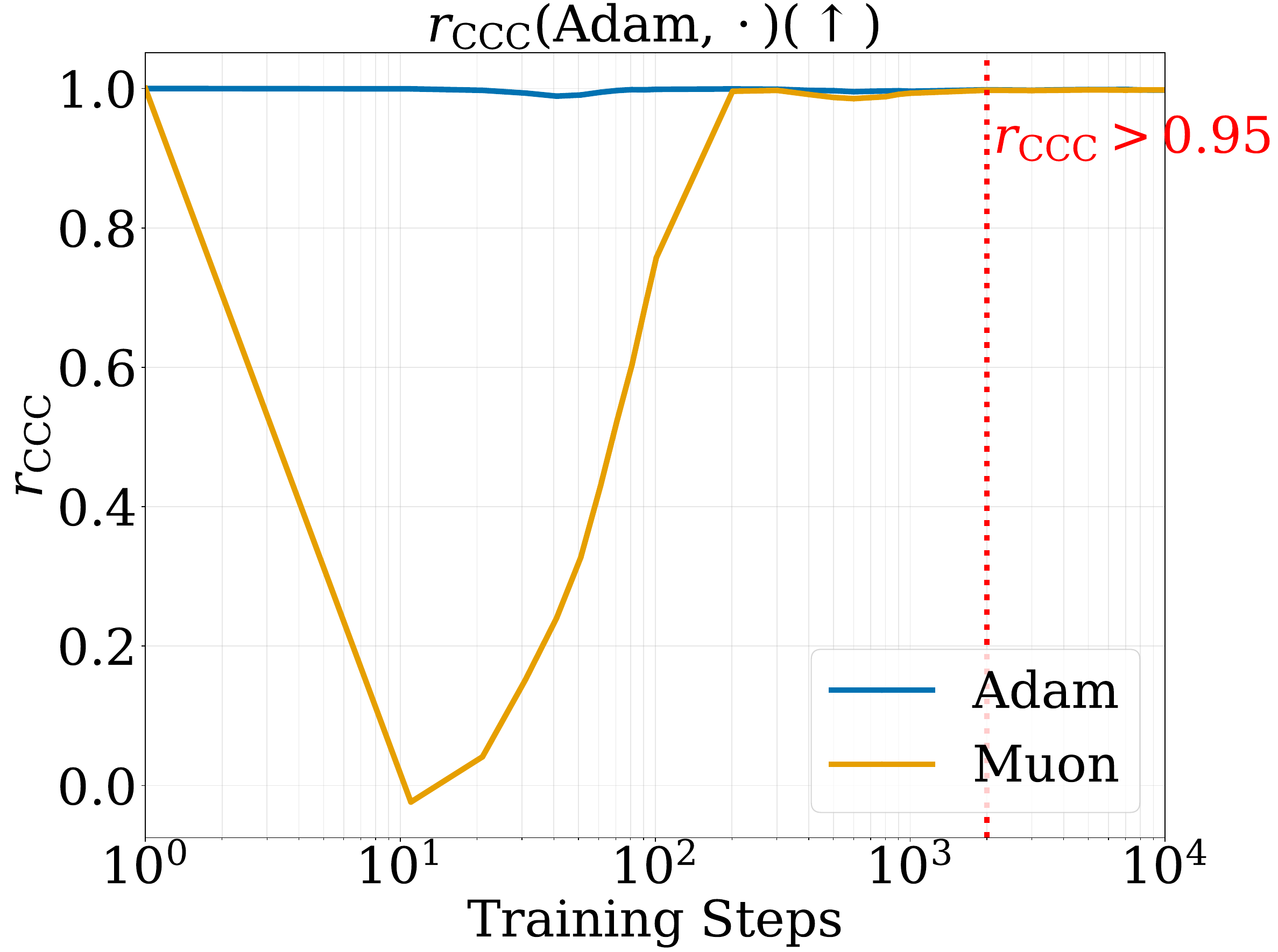}}
    
    \caption{Momentum ablation for $0.7$B models. Panels (a)--(c) plot $\differr$ for $\gamma=0.85$, $0.9$, and $0.95$, while Panels (d)--(f) plot $\cccerr$ for the same momentum values. Approximate \ac{rgi} remains robust to the tested momentum settings at the $0.7$B scale.} 
\end{figure}

\begin{figure}[H]
    \centering

    \subfigure[Values of $\differr$ for $0.7$B models with $100$ warmup steps.]{ \includegraphics[width=0.23\textwidth]{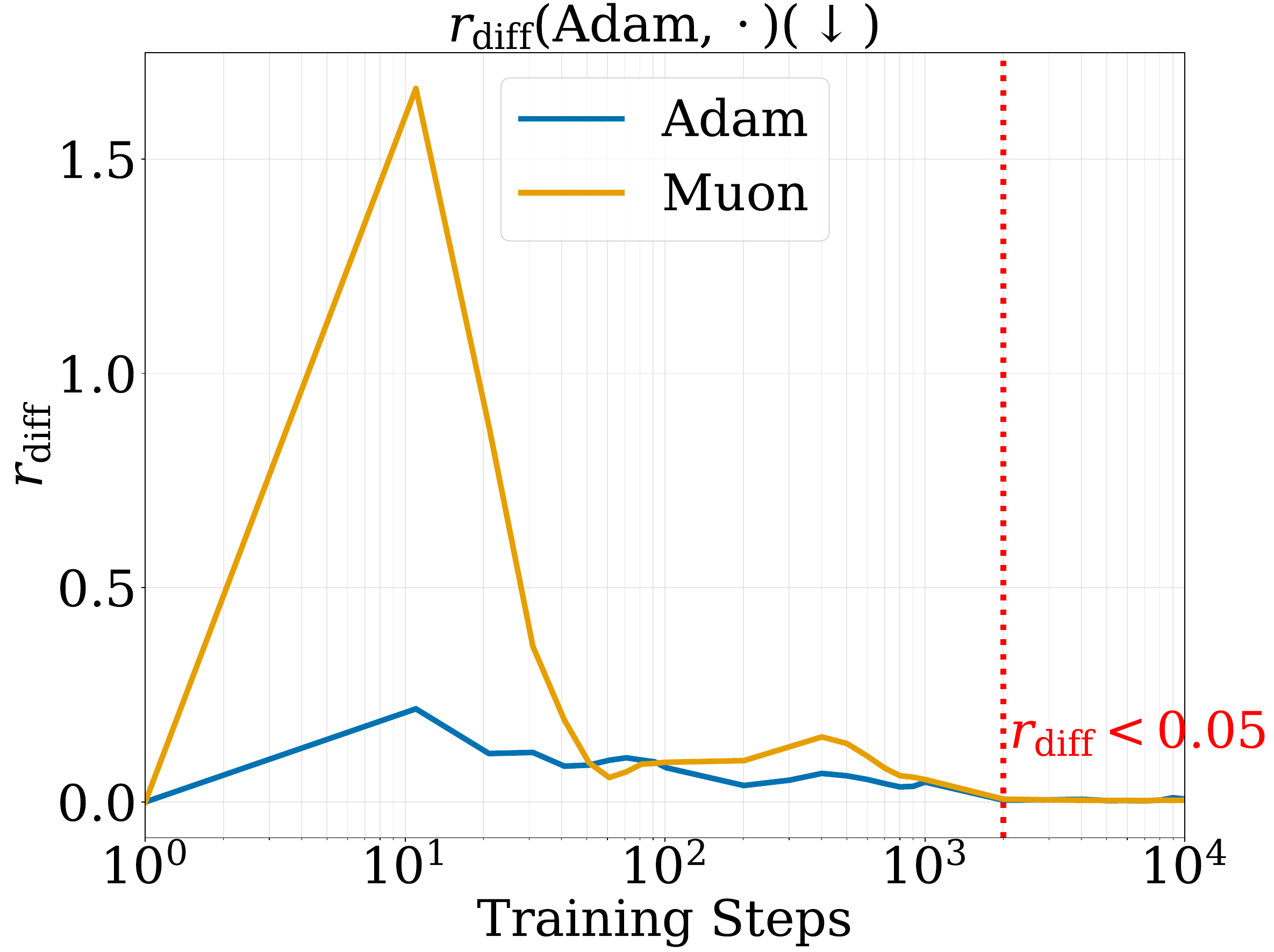}}
    \subfigure[Values of $\cccerr$ for $0.7$B models with $100$ warmup steps.]{ \includegraphics[width=0.23\textwidth]{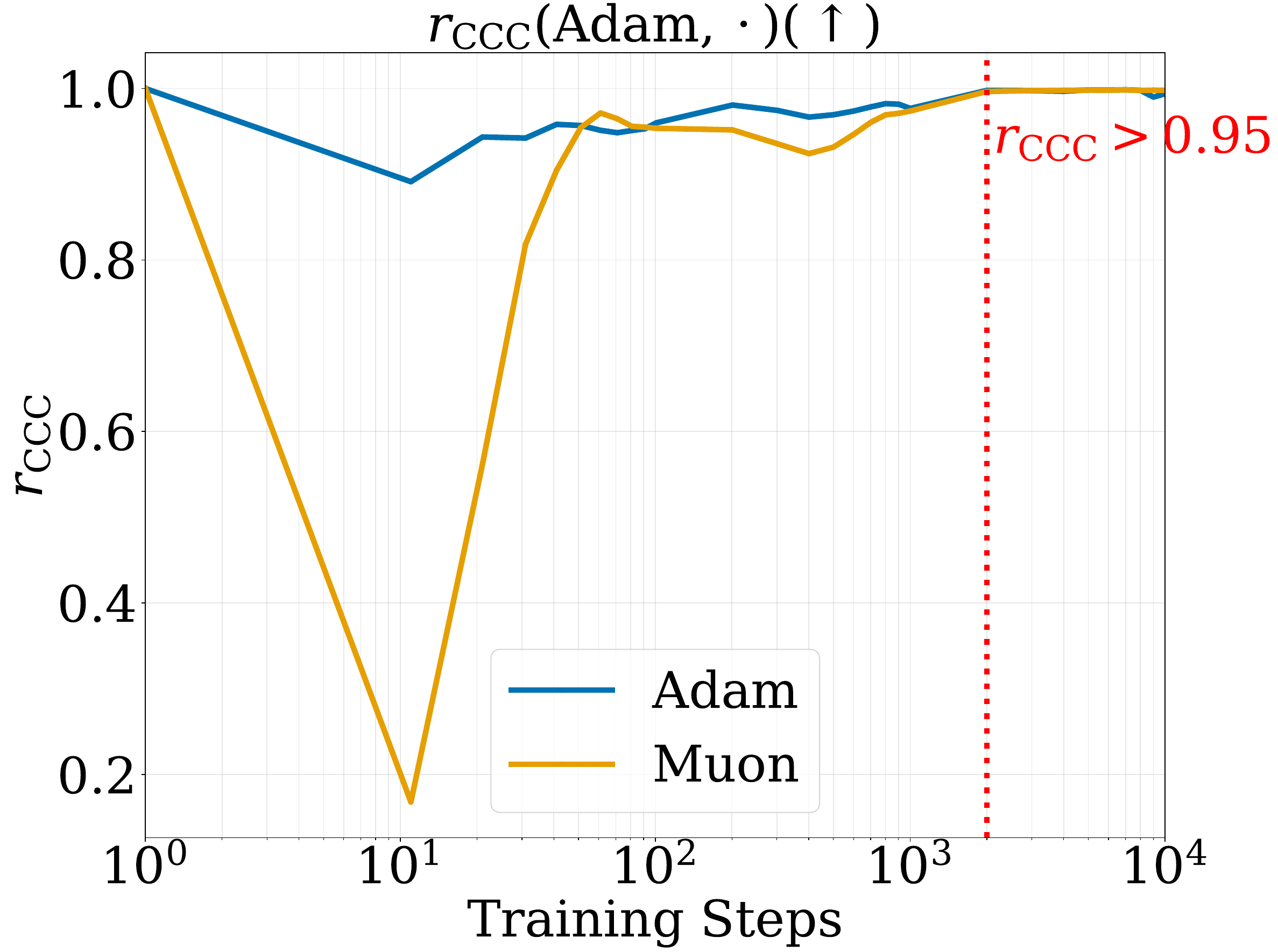}}
    \subfigure[Values of $\differr$ for $0.7$B models with $1400$ warmup steps.]{ \includegraphics[width=0.23\textwidth]{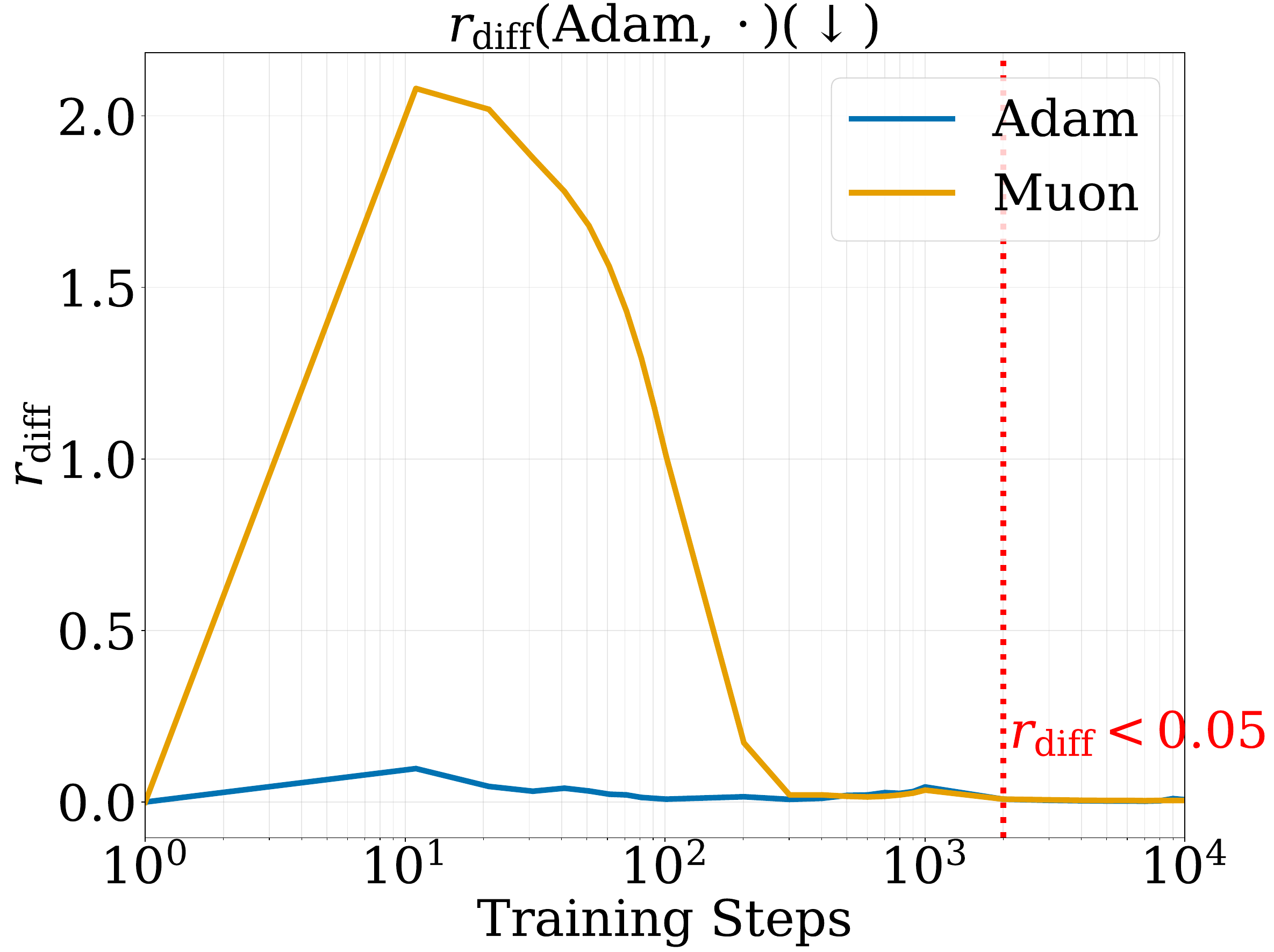}}
    \subfigure[Values of $\cccerr$ for $0.7$B models with $1400$ warmup steps.]{ \includegraphics[width=0.23\textwidth]{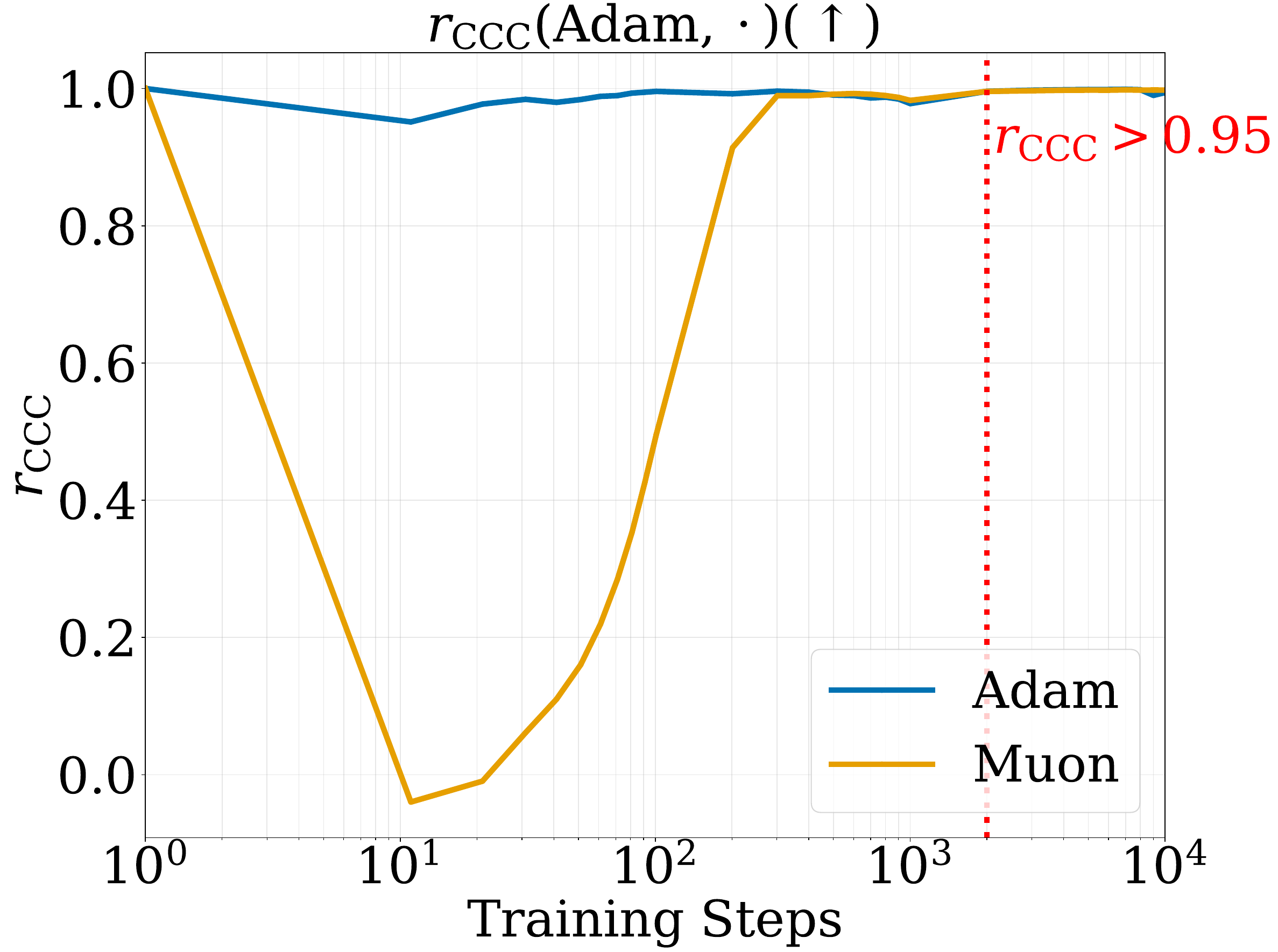}}
    
    \caption{Warmup-step ablation for $0.7$B models. Panels (a) and (c) plot $\differr$ for $100$ and $1{,}400$ warmup steps, while Panels (b) and (d) plot $\cccerr$ for the same warmup settings. Approximate \ac{rgi} remains robust to the tested scheduler warmup lengths at the $0.7$B scale.} 
\end{figure}

\begin{figure}[H]
    \centering
    \subfigure[Values of $\differr$ for $0.7$B models when Adam is trained on FineWeb and Muon is trained on CodeParrot.]{ \includegraphics[width=0.35\textwidth]{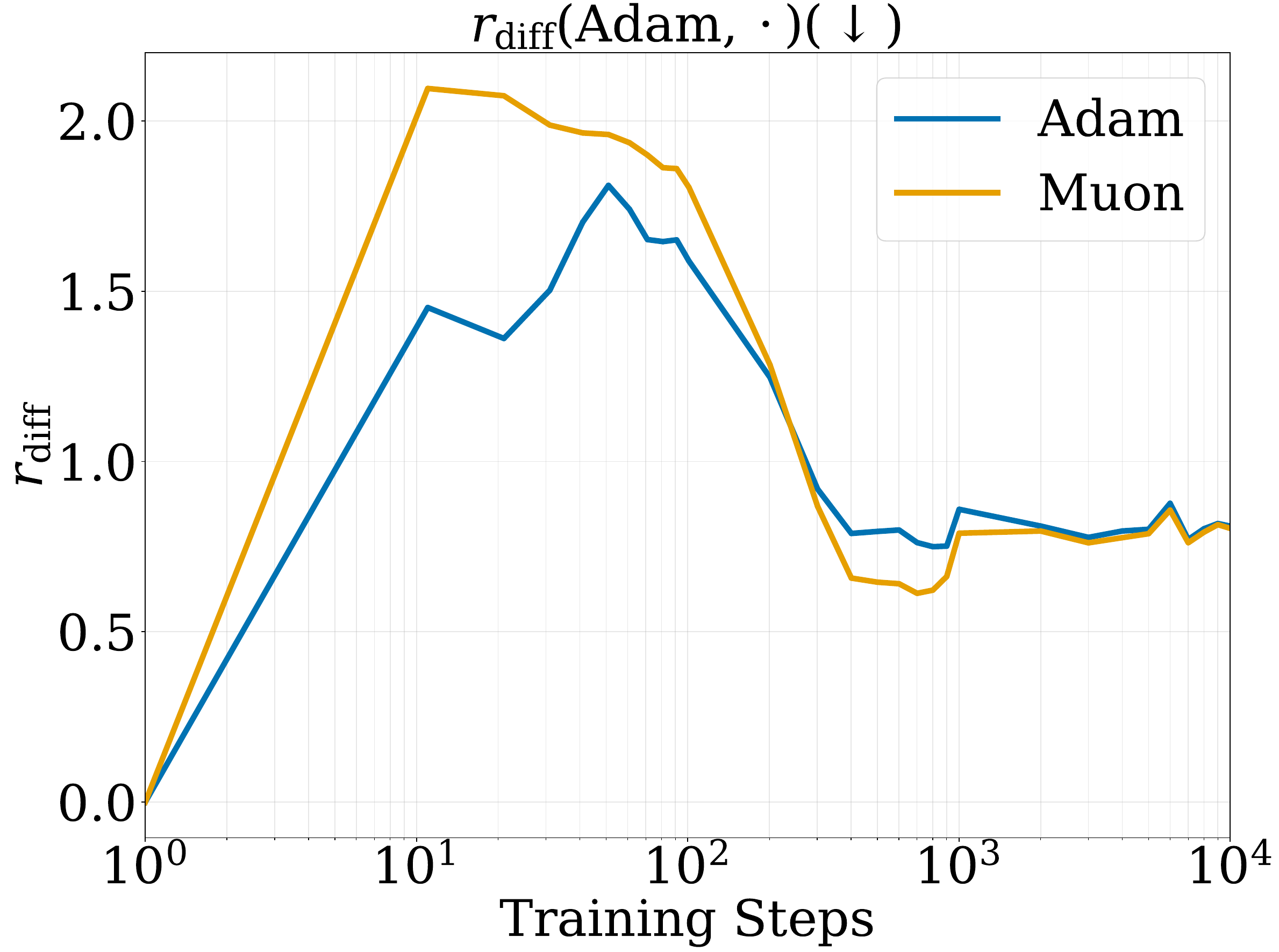}}
    \hspace{2em}
    \subfigure[Values of $\cccerr$ for $0.7$B models when Adam is trained on FineWeb and Muon is trained on CodeParrot.]{ \includegraphics[width=0.35\textwidth]{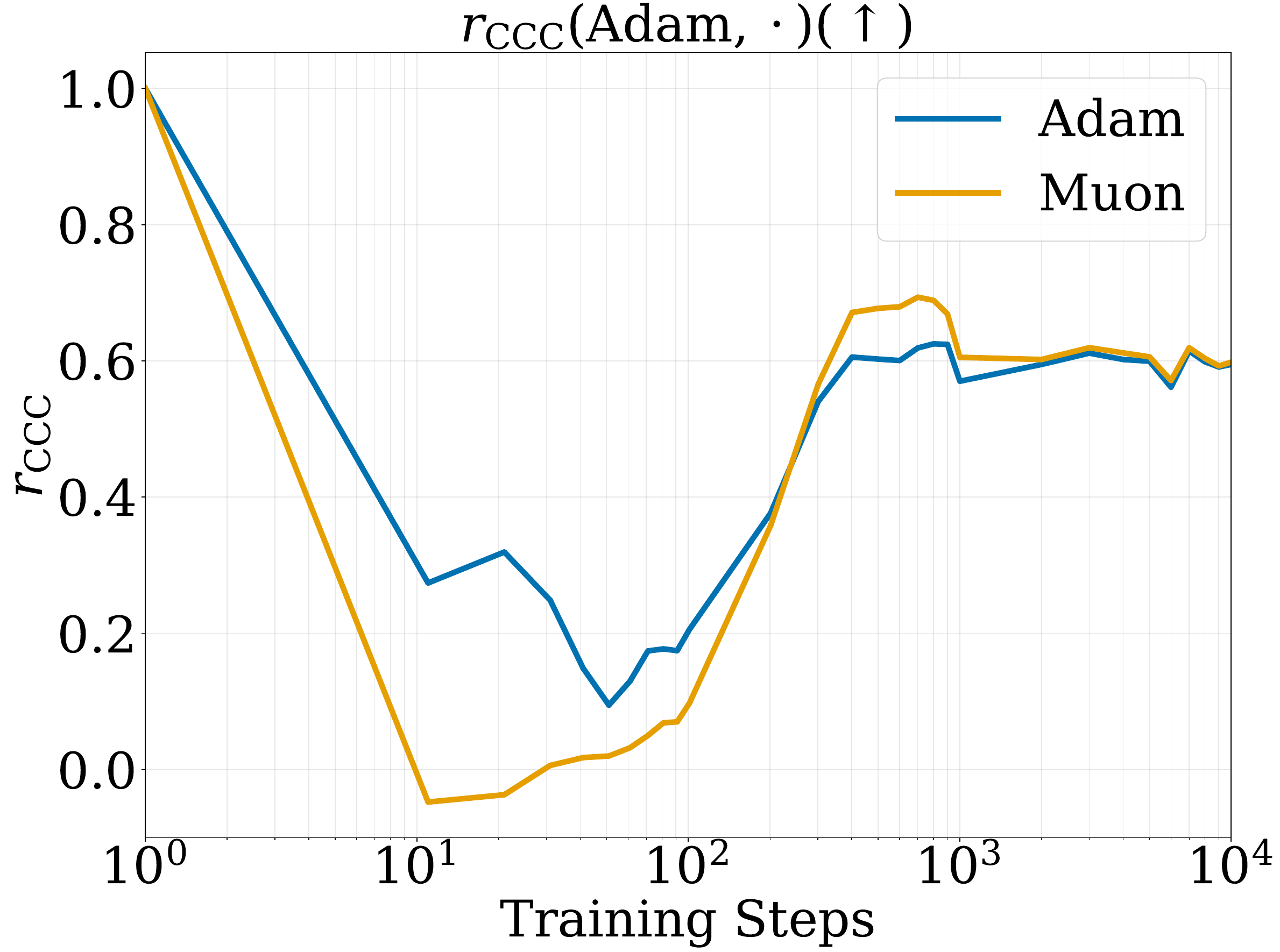}}
    
    \caption{Training data-stream ablation for $0.7$B models. Panels (a) and (b) plot $\differr$ and $\cccerr$, respectively, when Adam is trained on FineWeb and Muon is trained on CodeParrot. The weak agreement shows that changing the training data stream strongly disrupts \ac{rgi} at the $0.7$B scale.} 
\end{figure}

\begin{figure}[H]
    \centering
    \subfigure[Values of $\differr$ for $0.7$B models with $D_{\val}$ from C4.]{ \includegraphics[width=0.27\textwidth]{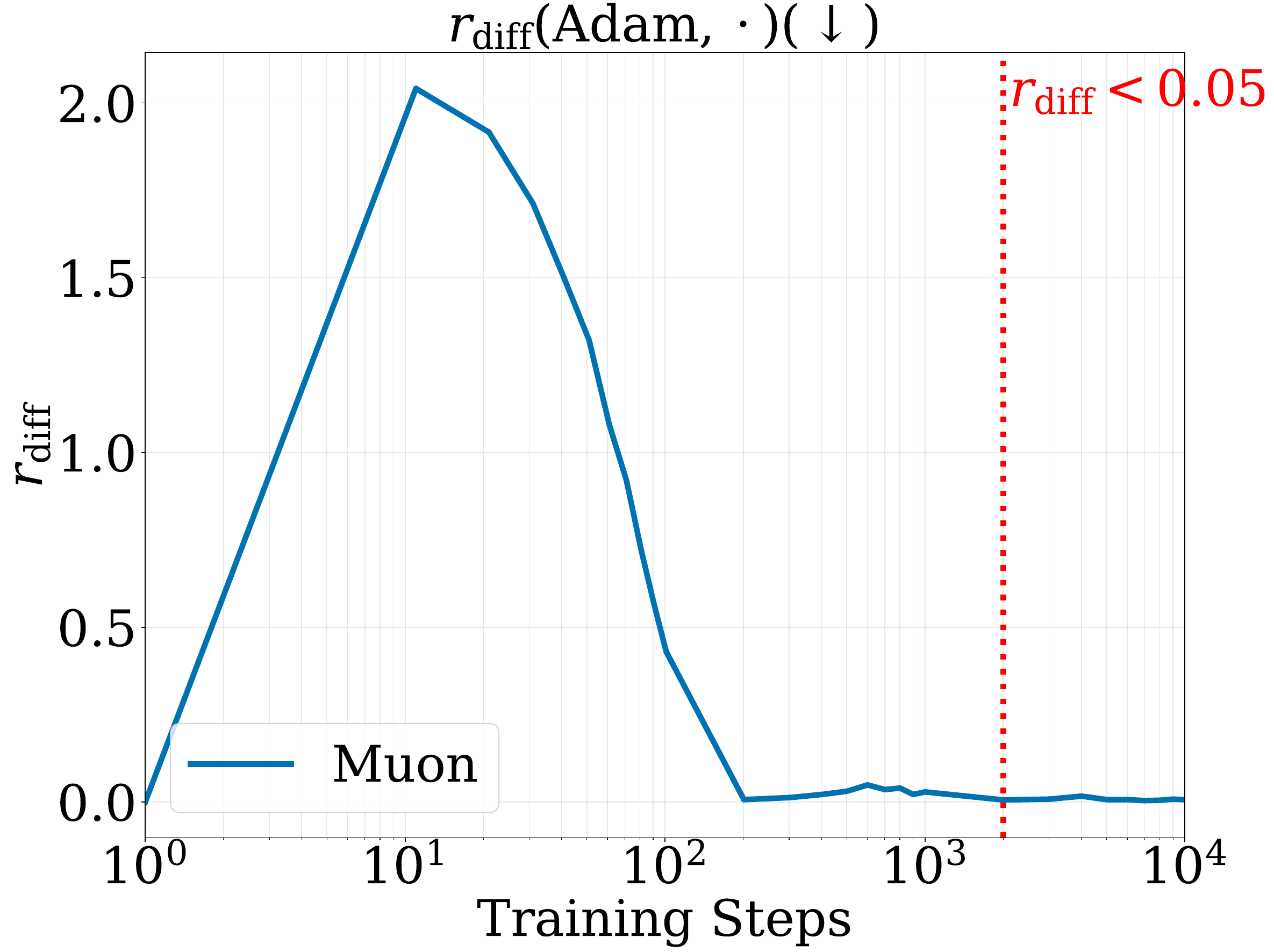}}
    \subfigure[Values of $\differr$ for $0.7$B models with $D_{\val}$ from ArXiv.]{ \includegraphics[width=0.27\textwidth]{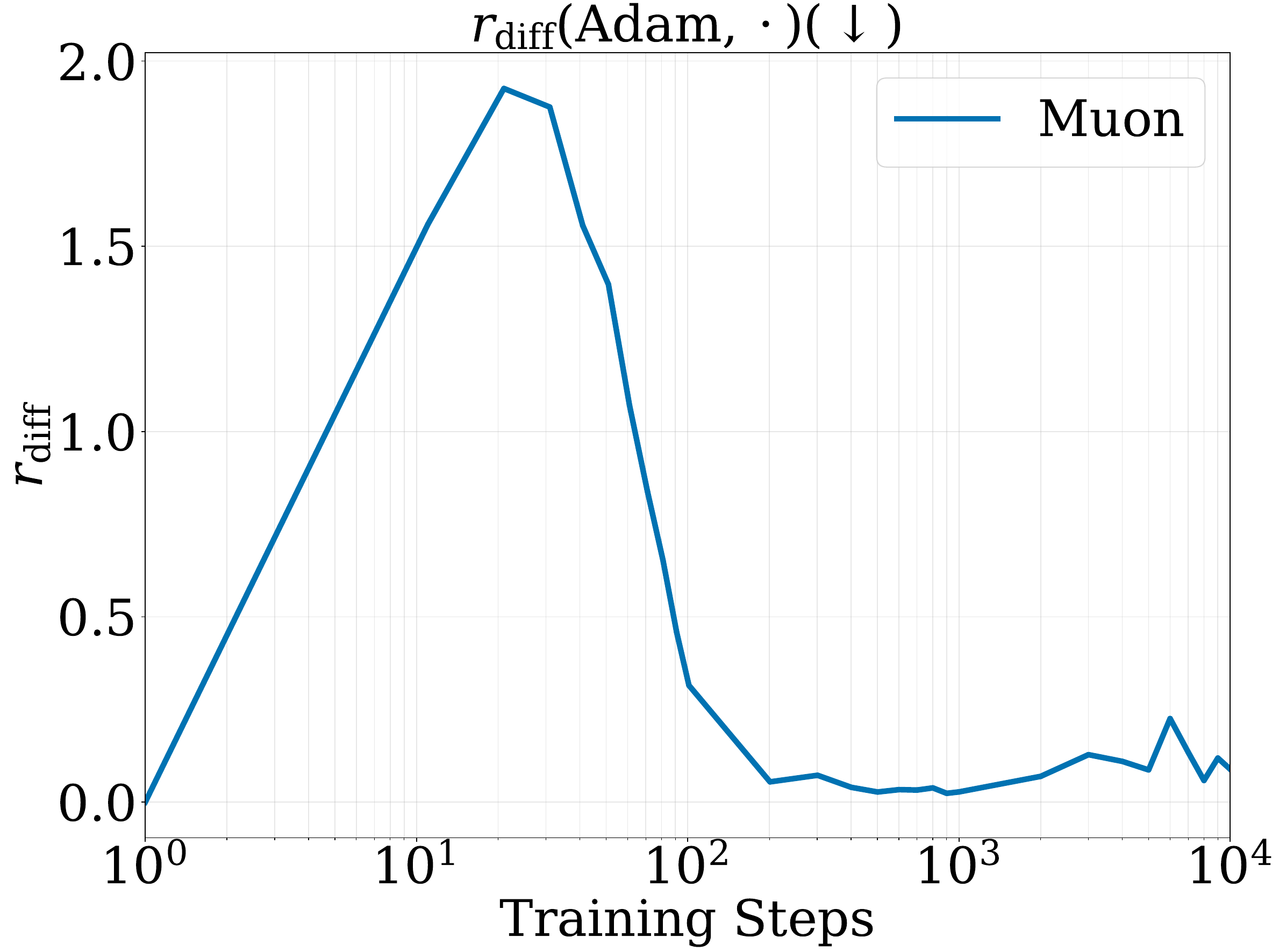}}
    \subfigure[Values of $\differr$ for $0.7$B models with $D_{\val}$ from CodeParrot.]{ \includegraphics[width=0.27\textwidth]{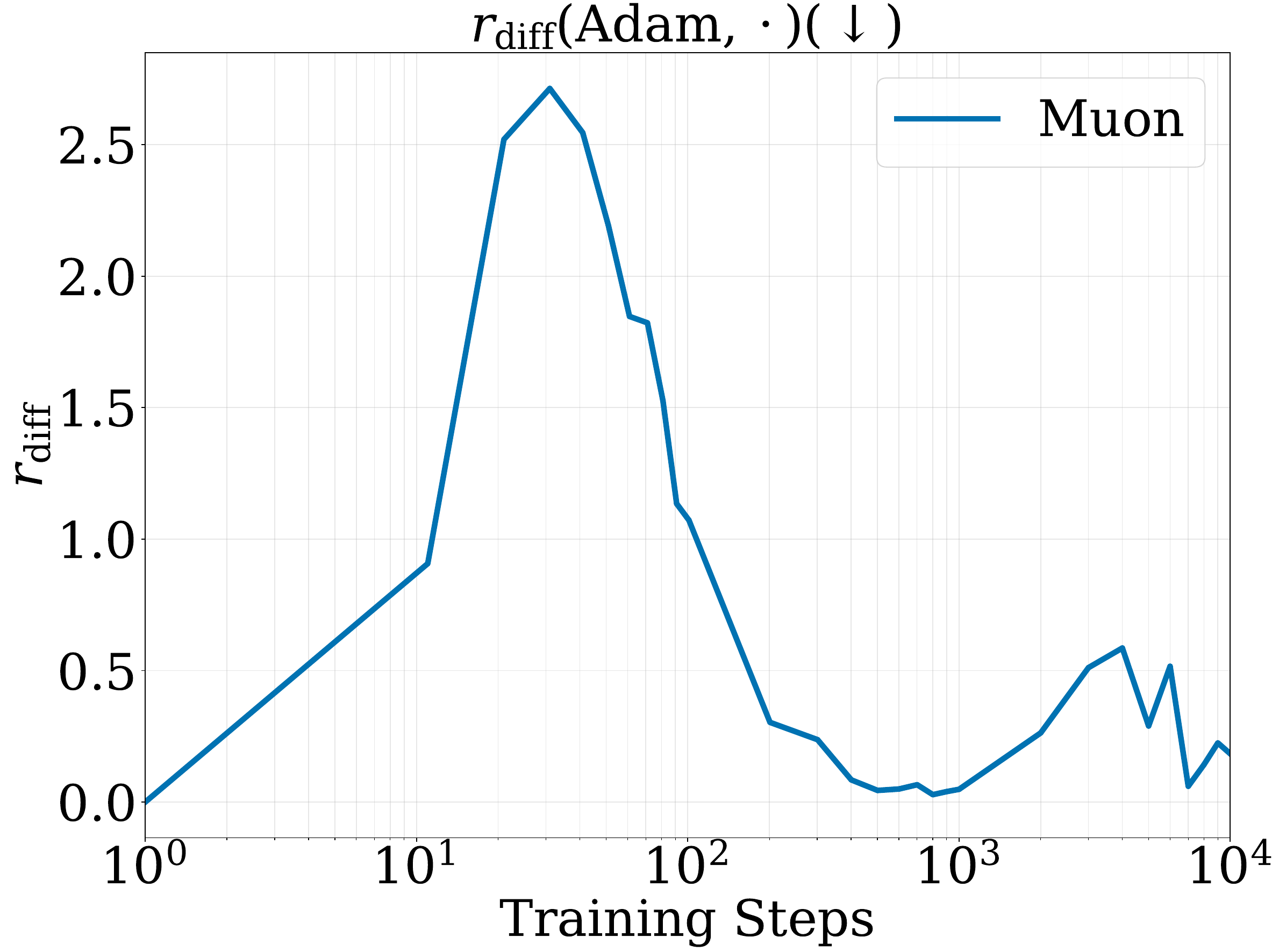}}

    \subfigure[Values of $\cccerr$ for $0.7$B models with $D_{\val}$ from C4.]{ \includegraphics[width=0.27\textwidth]{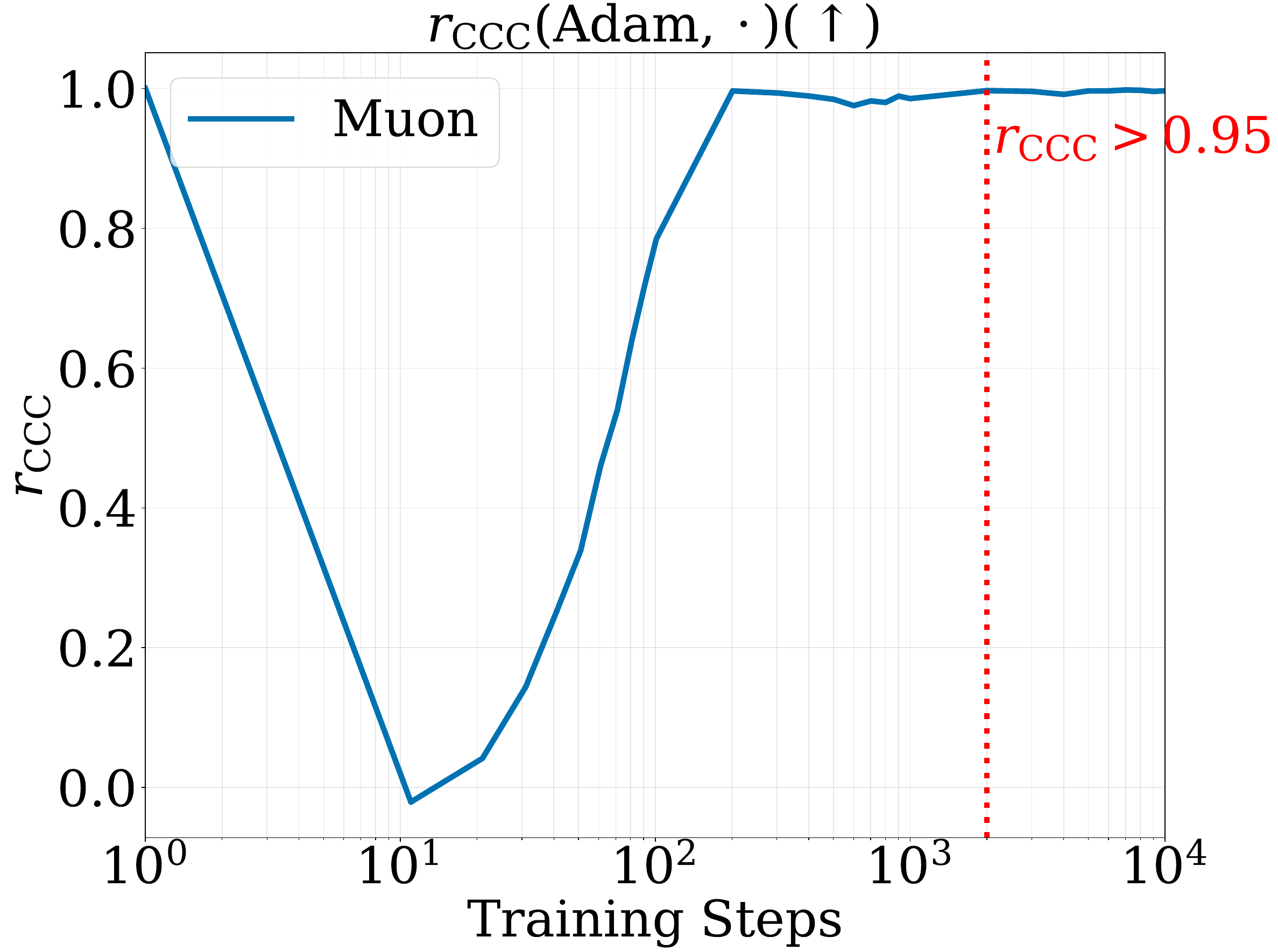}}
    \subfigure[Values of $\cccerr$ for $0.7$B models with $D_{\val}$ from ArXiv.]{ \includegraphics[width=0.27\textwidth]{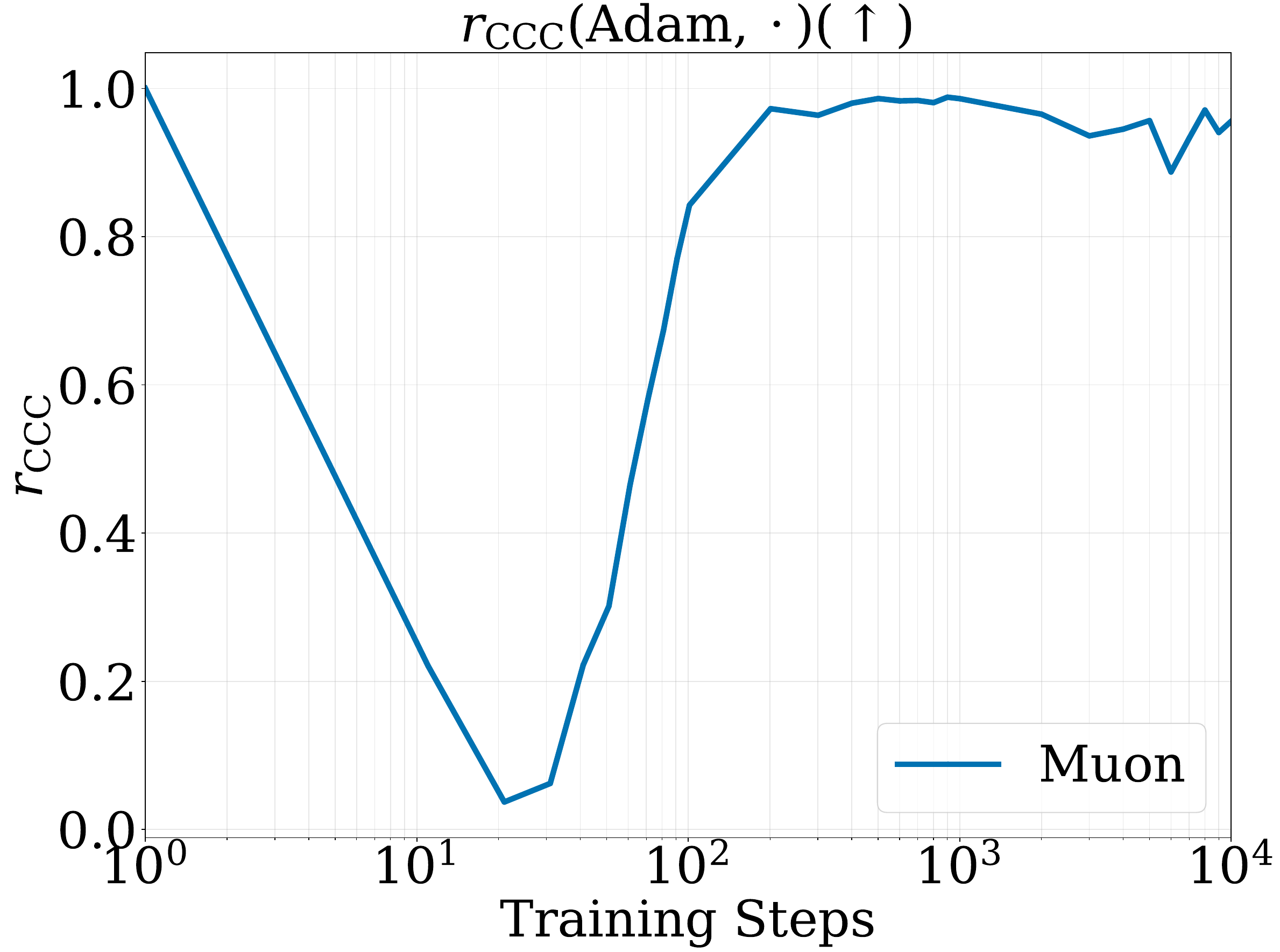}}
    \subfigure[Values of $\cccerr$ for $0.7$B models with $D_{\val}$ from CodeParrot.]{ \includegraphics[width=0.27\textwidth]{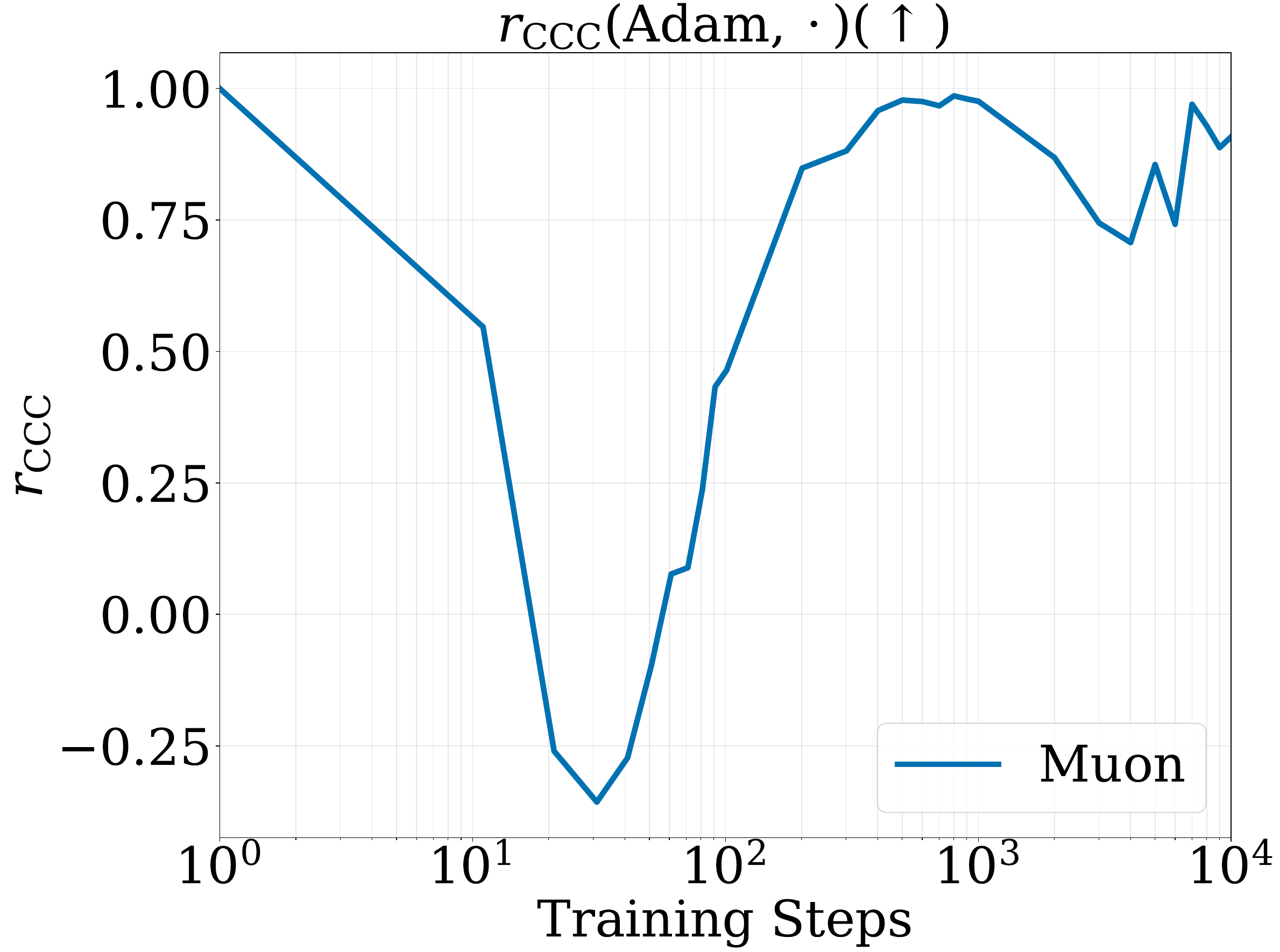}}
    \caption{Validation-distribution ablation for $0.7$B models. Panels (a)--(c) plot $\differr$ evaluated on C4, ArXiv, and CodeParrot, while Panels (d)--(f) plot $\cccerr$ on the same validation datasets. \ac{rgi} remains clearest on C4 and weakens on the more distributionally distinct ArXiv and CodeParrot validation sets.} 
\end{figure}
\subsection{Additional Results for Section~\ref{sec:emp_studies}}

\subsubsection{Additional Results for Section~\ref{sec:opt}}

\begin{figure}[H]
    \centering
    \subfigure[$\differr$ for $0.25\eta^{\sfA}$.]{ \includegraphics[width=0.23\textwidth]{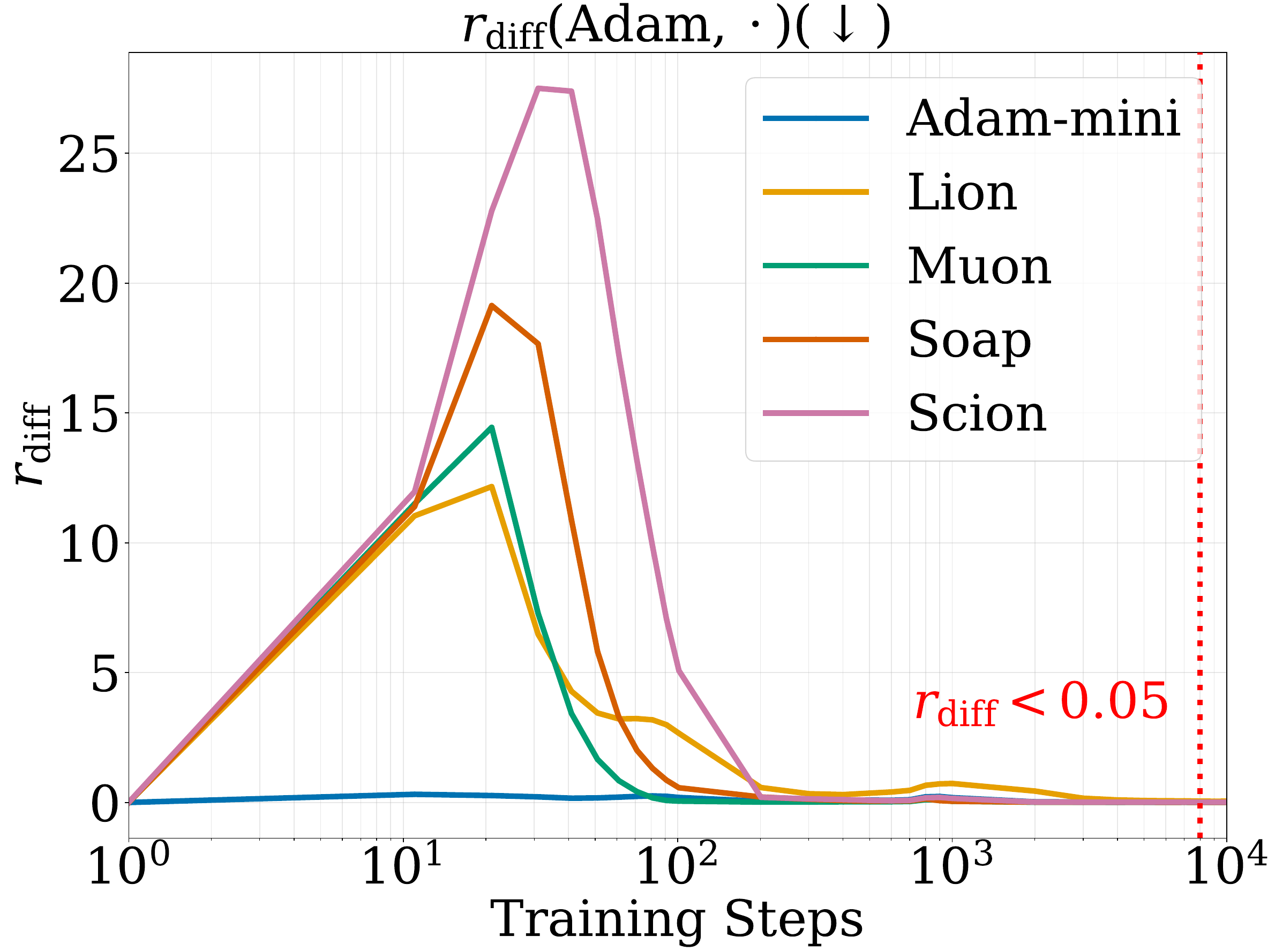}}
    \subfigure[$\differr$ for $0.5\eta^{\sfA}$.]{ \includegraphics[width=0.23\textwidth]{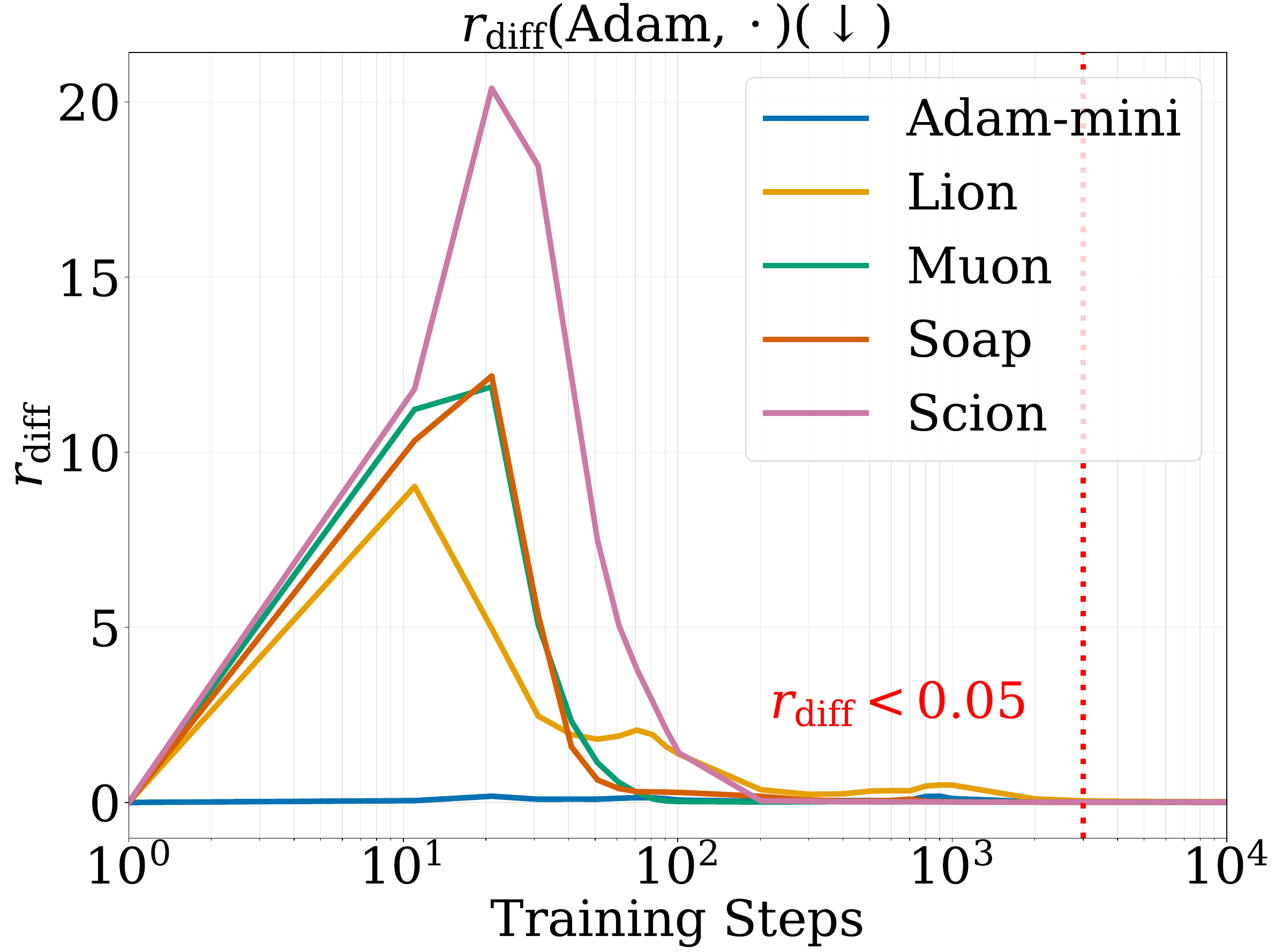}}
    \subfigure[$\differr$ for $0.75\eta^{\sfA}$.]{ \includegraphics[width=0.23\textwidth]{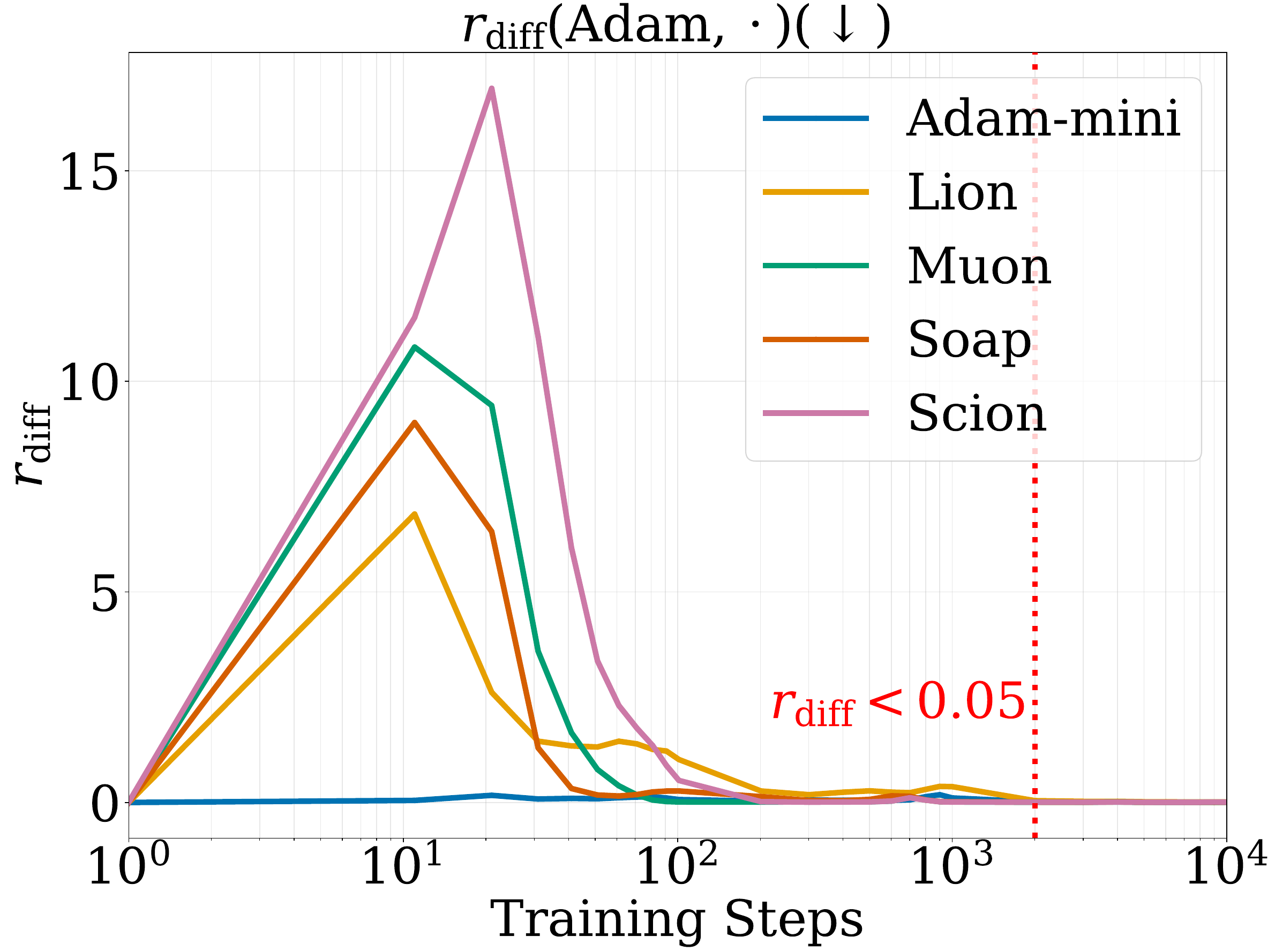}}
    \subfigure[$\differr$ for $1.25\eta^{\sfA}$.]{ \includegraphics[width=0.23\textwidth]{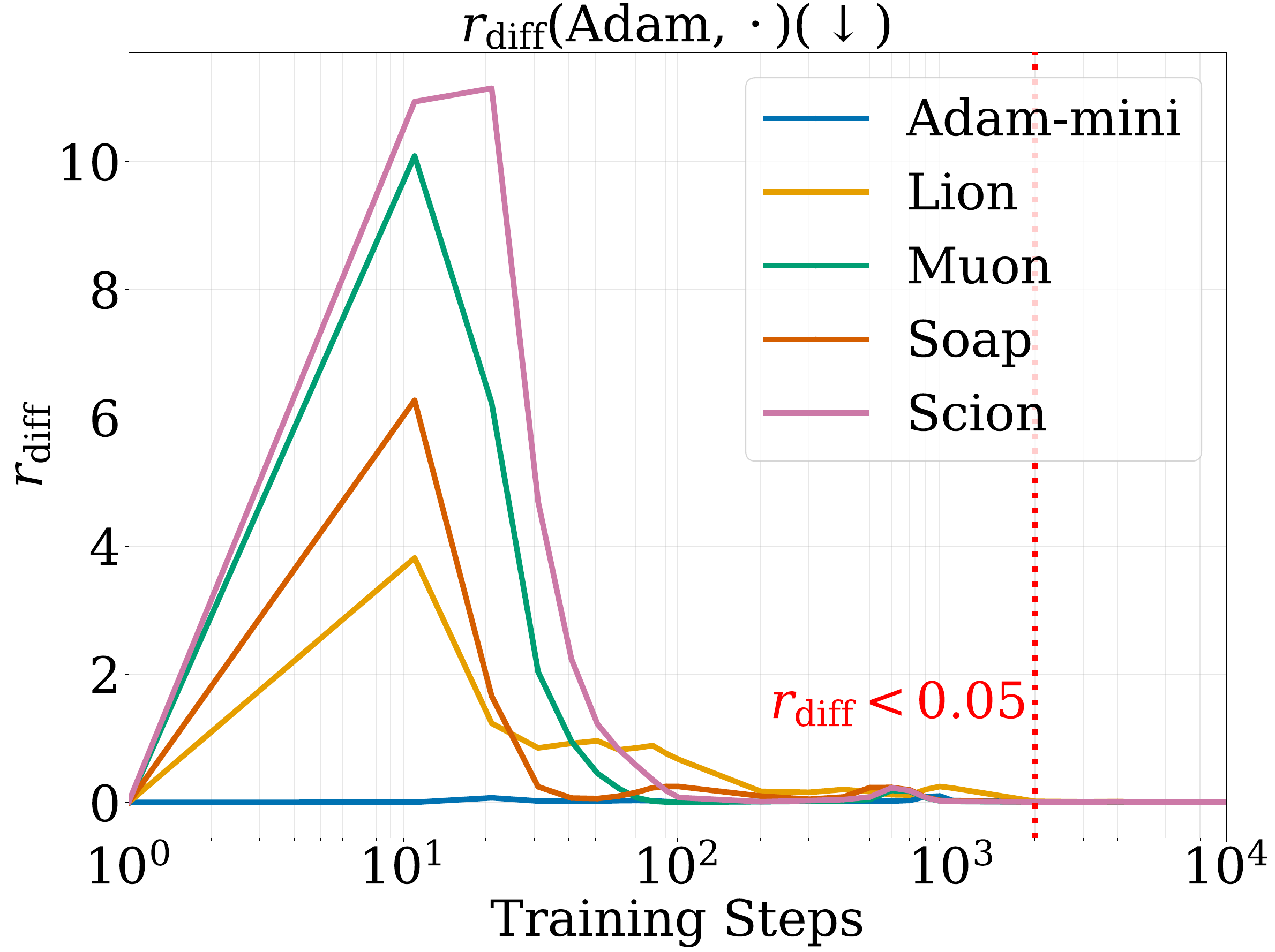}}

    \subfigure[$\cccerr$ for $0.25\eta^{\sfA}$.]{ \includegraphics[width=0.23\textwidth]{figures/CCC_centered_fineweb_Adam_lr_0.25.pdf}}
    \subfigure[$\cccerr$ for $0.5\eta^{\sfA}$.]{ \includegraphics[width=0.23\textwidth]{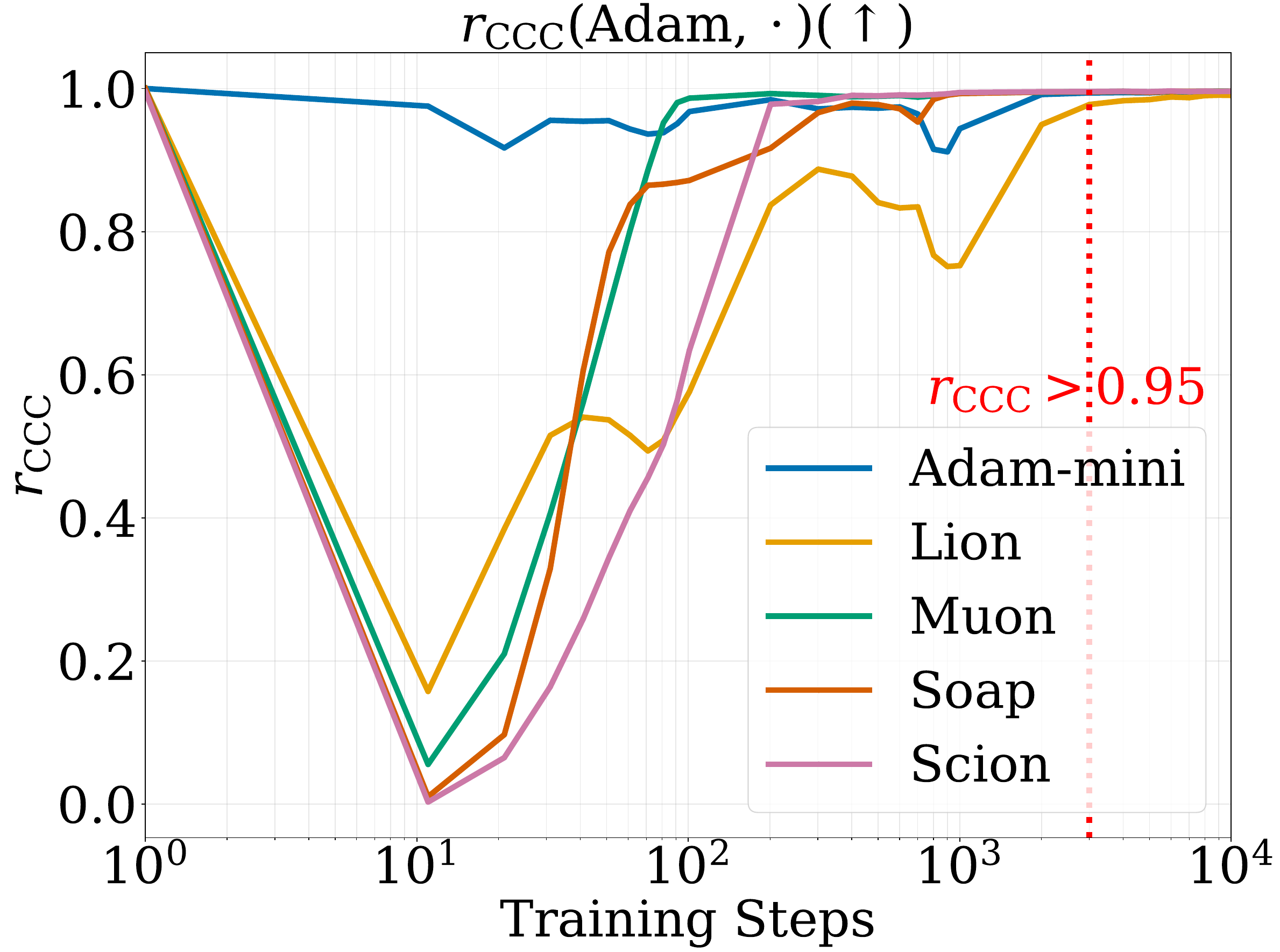}}
    \subfigure[$\cccerr$ for $0.75\eta^{\sfA}$.]{ \includegraphics[width=0.23\textwidth]{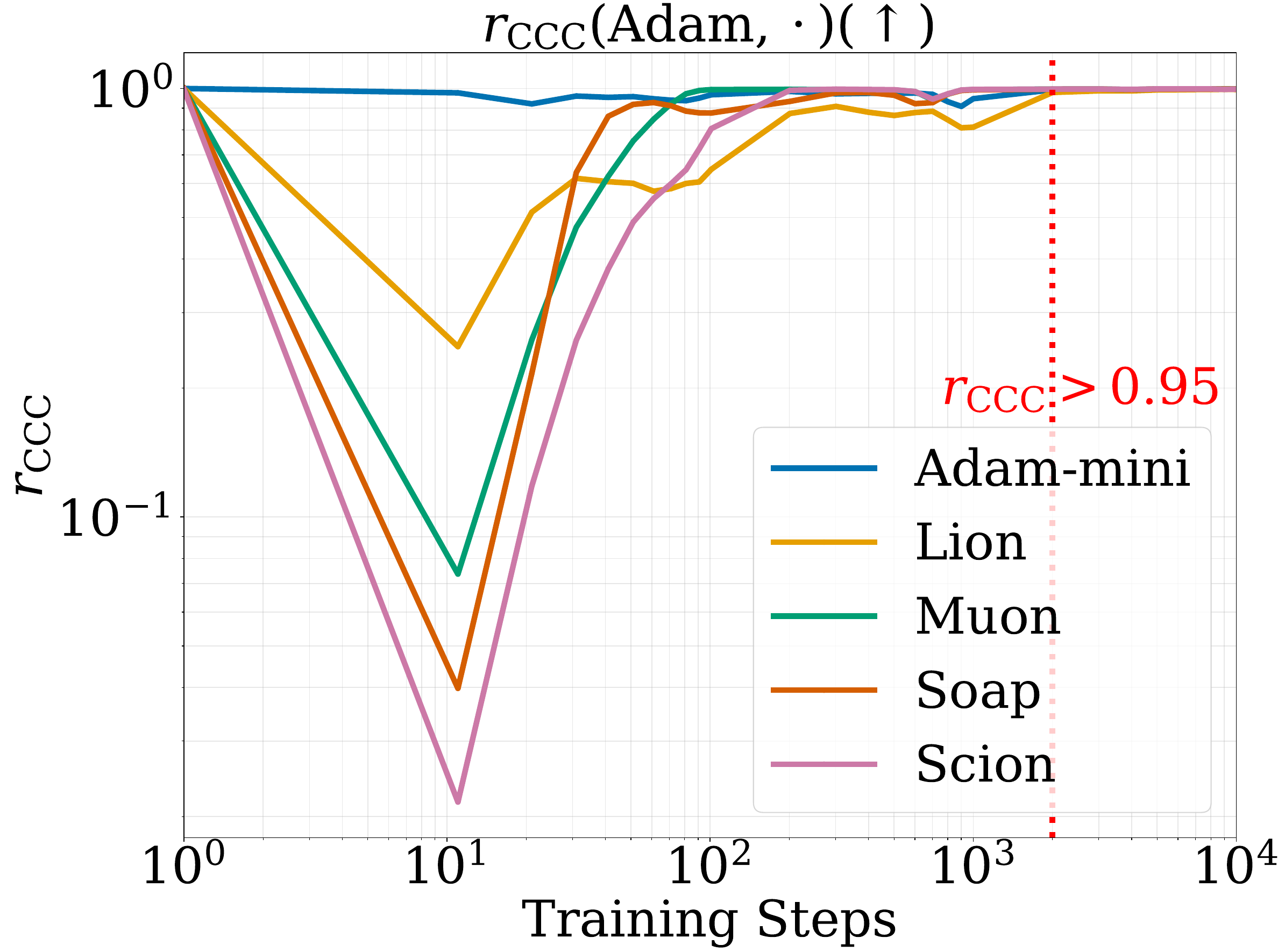}}
    \subfigure[$\cccerr$ for $1.25\eta^{\sfA}$.]{ \includegraphics[width=0.23\textwidth]{figures/CCC_centered_fineweb_Adam_lr_1.25.pdf}}
    \caption{Learning-rate ablations for $\differr$ and $\cccerr$. Panels (a)--(d) plot $\differr$ for learning-rate multipliers $0.25\eta^{\sfA}$, $0.5\eta^{\sfA}$, $0.75\eta^{\sfA}$, and $1.25\eta^{\sfA}$, respectively. Panels (e)--(h) plot $\cccerr$ for them. The results show that approximate \ac{rgi} holds robustly across the tested learning rates.} 
\end{figure}

\begin{figure}[H]
    \centering
    \subfigure[$\differr$ for $\gamma=0.85$.]{ \includegraphics[width=0.23\textwidth]{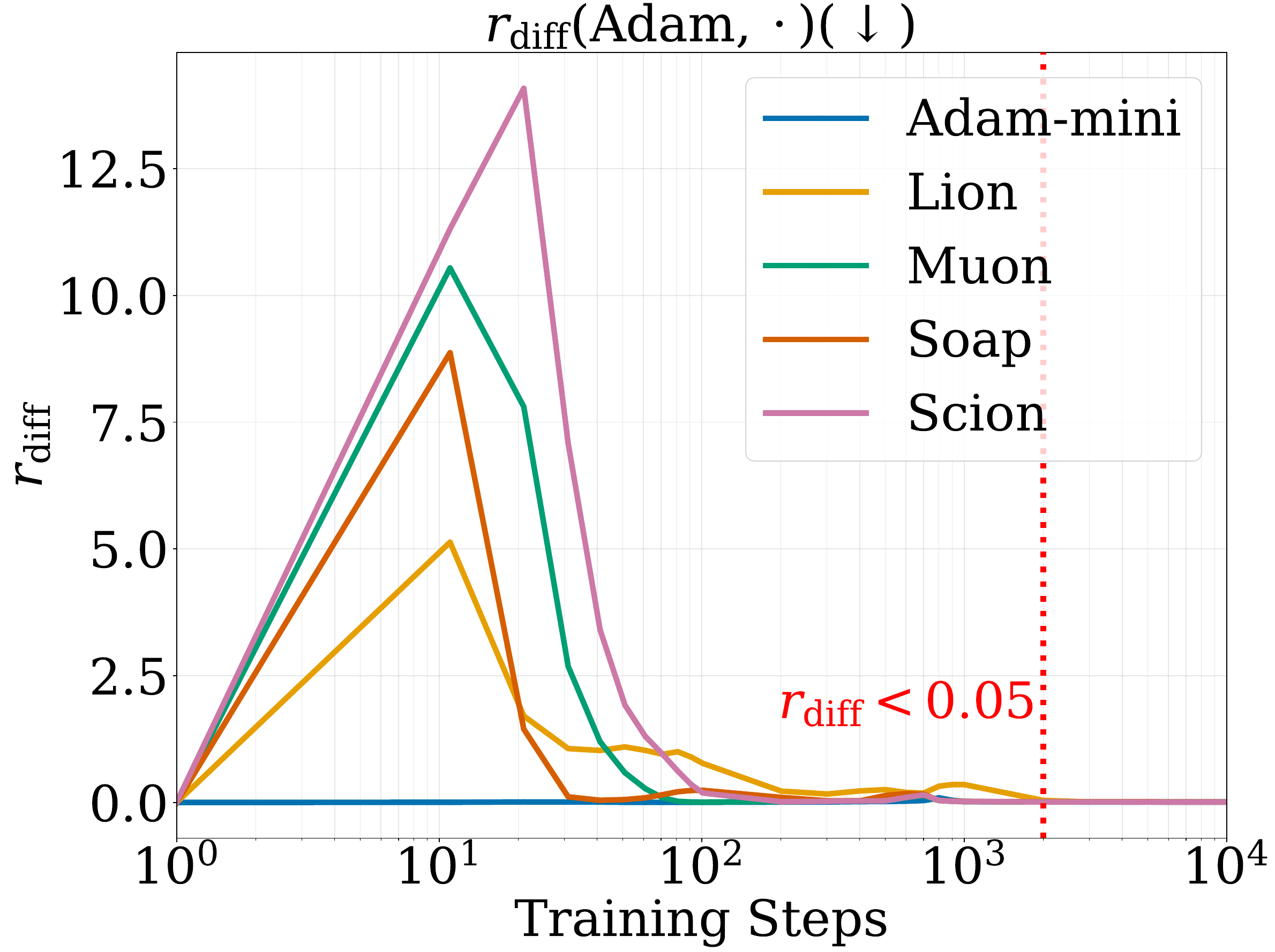}}
    \subfigure[$\differr$ for $\gamma=0.9$.]{ \includegraphics[width=0.23\textwidth]{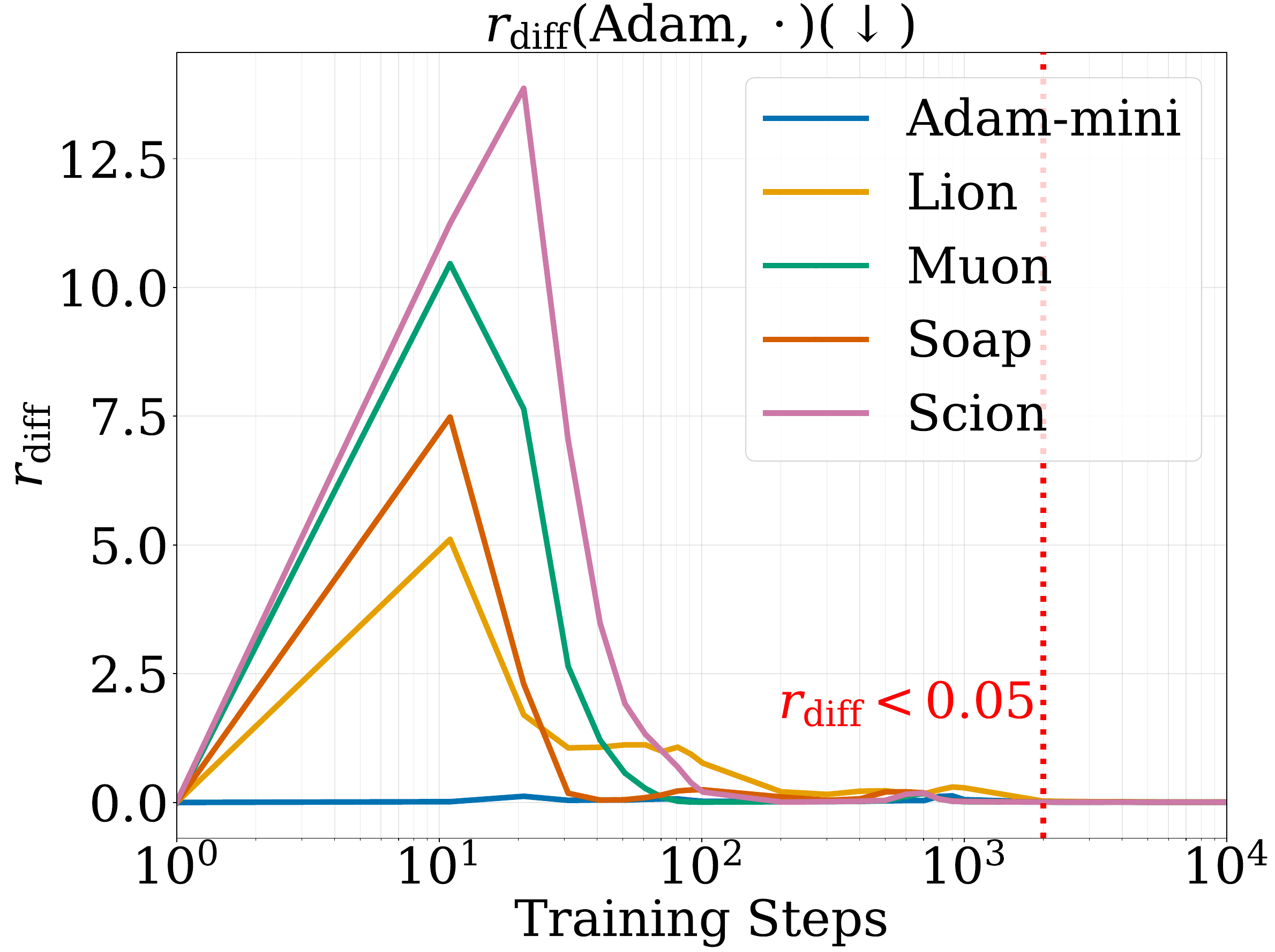}}
    \subfigure[$\differr$ for $\gamma=0.95$.]{ \includegraphics[width=0.23\textwidth]{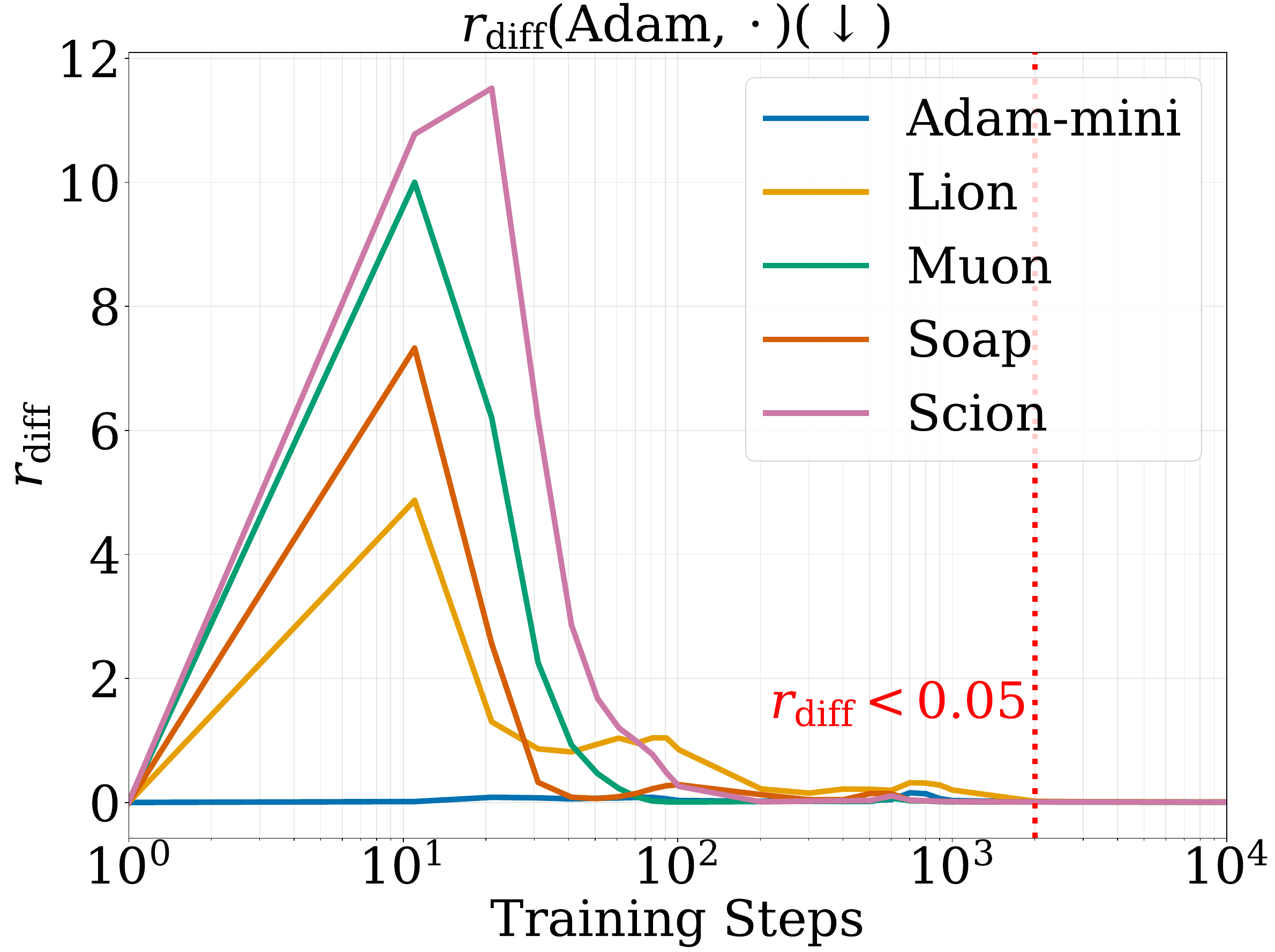}}

    \subfigure[$\cccerr$ for $\gamma=0.85$.]{ \includegraphics[width=0.23\textwidth]{figures/CCC_centered_fineweb_Adam_mom_0.85.pdf}}
    \subfigure[$\cccerr$ for $\gamma=0.9$.]{ \includegraphics[width=0.23\textwidth]{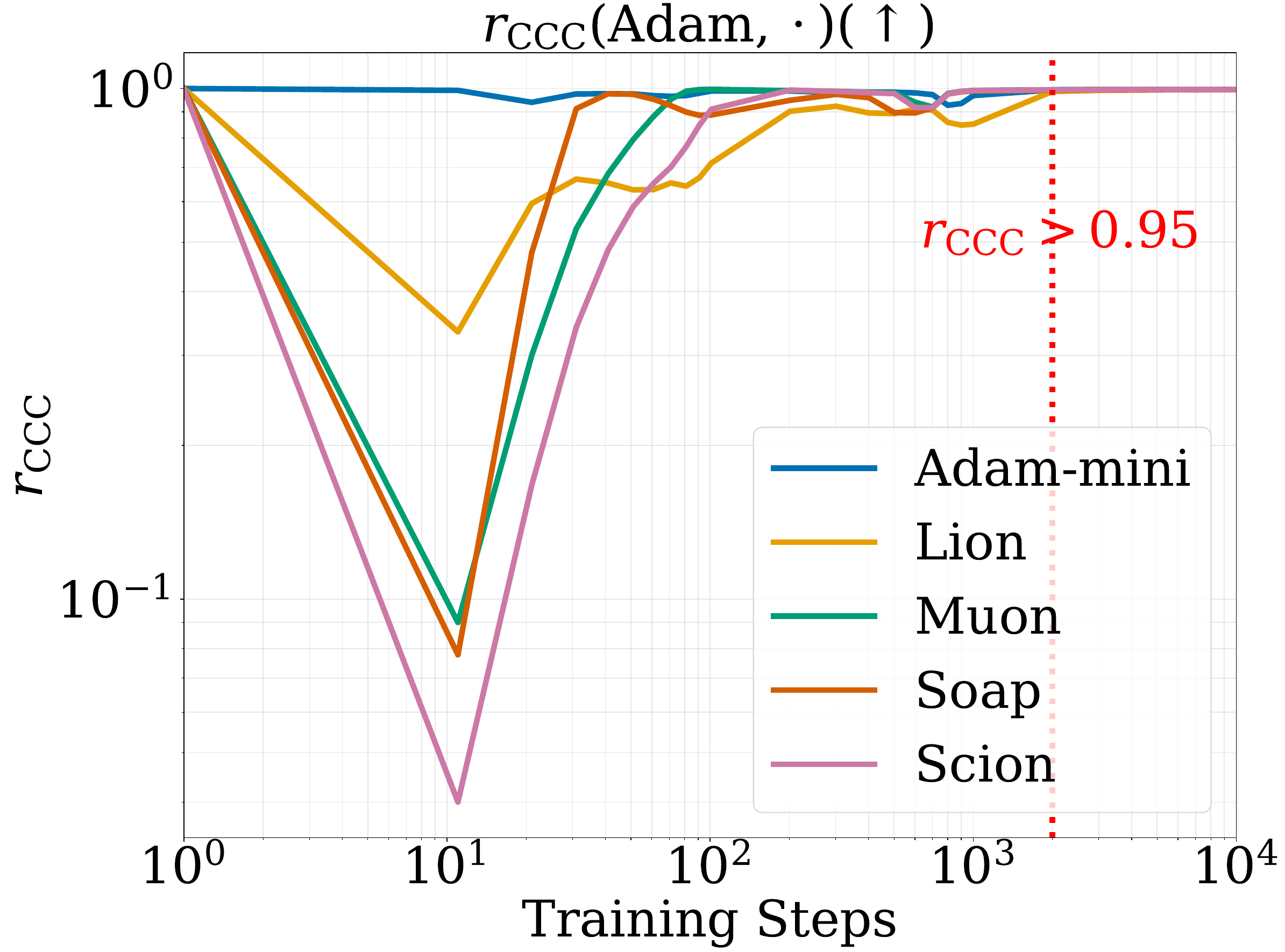}}
    \subfigure[$\cccerr$ for $\gamma=0.95$.]{ \includegraphics[width=0.23\textwidth]{figures/CCC_centered_fineweb_Adam_mom_0.95.pdf}}
    % % \vspace{-0.5em}
    \caption{Momentum ablations for $\differr$ and $\cccerr$. Panels (a)--(c) plot $\differr$ for momentumns $\gamma=0.85.0.9,0.95$, respectively. Panels (d)--(f) plot $\cccerr$ for them. The results show that approximate \ac{rgi} holds robustly across the tested momentum values.} 
\end{figure}

% \begin{figure}[H]
%     \centering
%     \subfigure[$\differr$ for $\gamma=0.85$.]{ \includegraphics[width=0.27\textwidth]{figures/CCC_centered_fineweb_Adam_mom_0.85.pdf}}
%     \subfigure[$\differr$ for $\gamma=0.9$.]{ \includegraphics[width=0.27\textwidth]{figures/CCC_centered_fineweb_Adam_mom_0.9.pdf}}
%     \subfigure[$\differr$ for $\gamma=0.95$.]{ \includegraphics[width=0.27\textwidth]{figures/CCC_centered_fineweb_Adam_mom_0.95.pdf}}
%     \caption{Momentum ablations for $\cccerr$. Panels (a)--(c) plot $\cccerr$ for optimizers configured with $\gamma=0.85$, $0.9$, and $0.95$, respectively. Approximate \ac{rgi} holds robustly across the tested momentum values.} 
% \end{figure}

To evaluate the effect of the learning-rate scheduler, we ablate the number of warmup steps in the Warmup-Stable-Decay scheduler. The experiments in Section~\ref{sec:opt} use $700$ warmup steps. We additionally evaluate all optimizers with $100$ and $1{,}400$ warmup steps. Figure~\ref{fig:scheduler} reports the corresponding values of $\differr$ and $\cccerr$. Across all tested warmup lengths, approximate \ac{rgi} remains consistently observed.
\begin{figure}[H]
    \centering
    \subfigure[$\differr$ with $100$ warmup steps.]{ \includegraphics[width=0.23\textwidth]{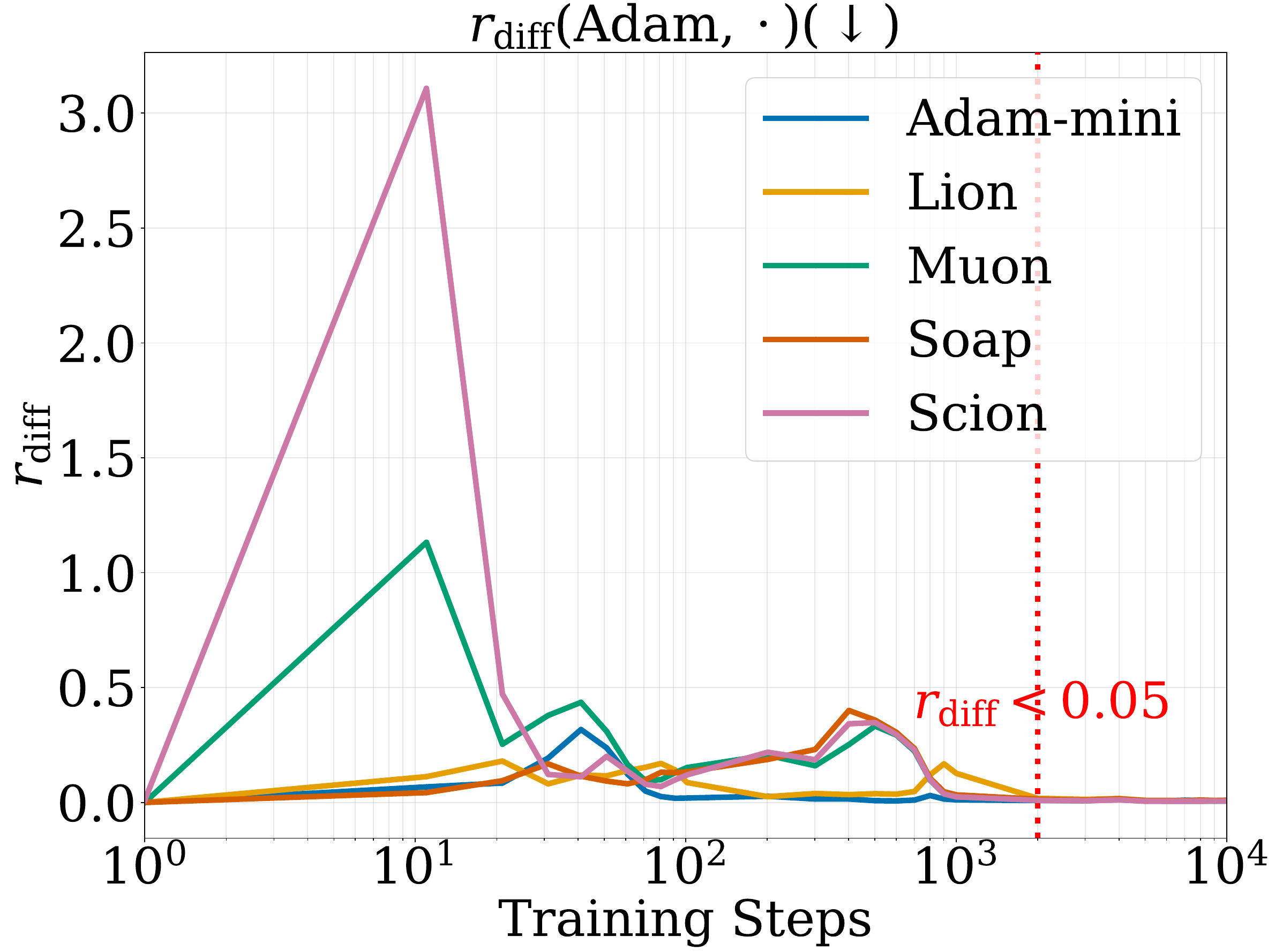}}
    \subfigure[$\differr$ with $1,400$ warmup steps.]{ \includegraphics[width=0.23\textwidth]{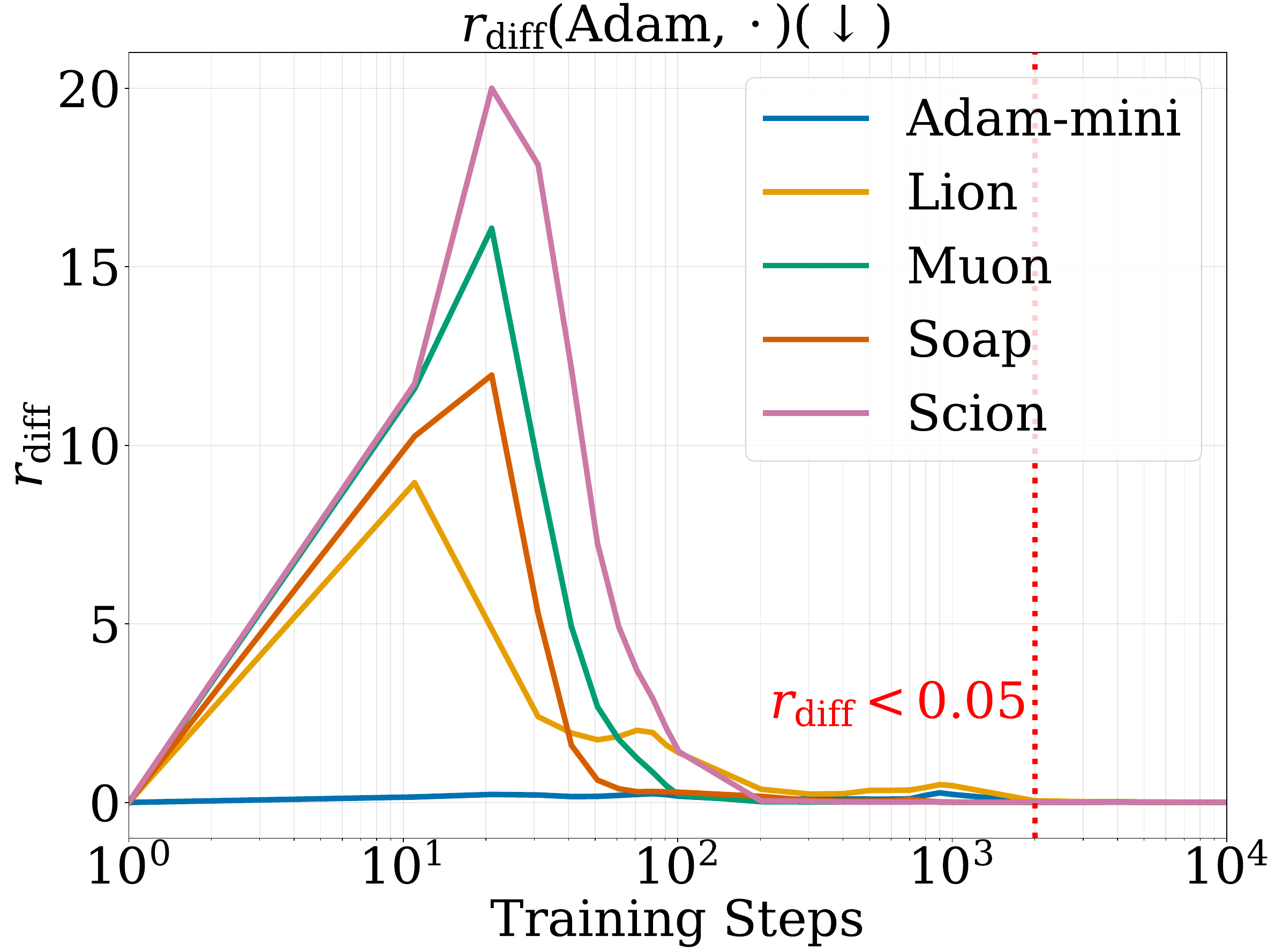}}
    \subfigure[$\cccerr$ with $100$ warmup steps.]{ \includegraphics[width=0.23\textwidth]{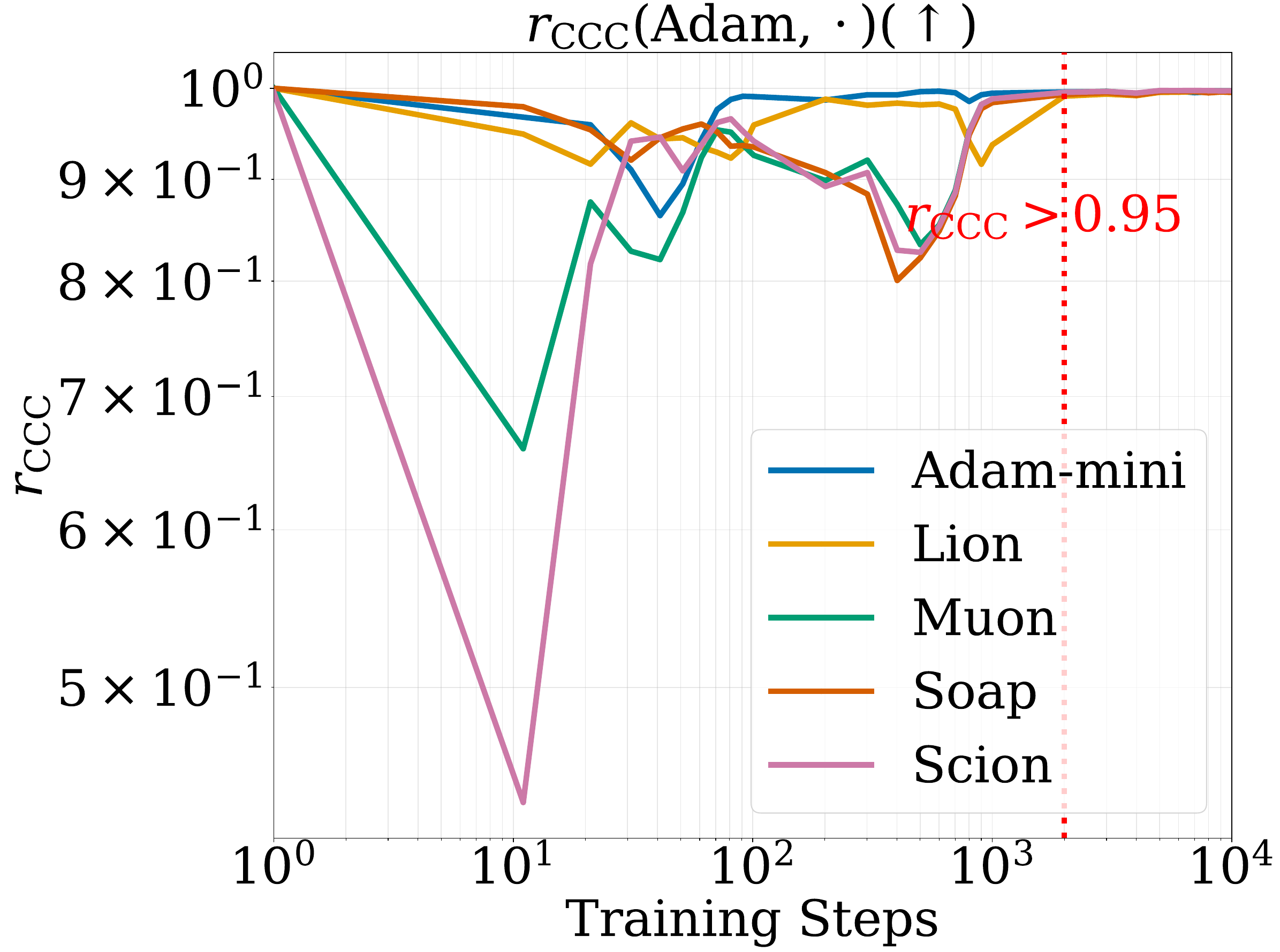}}
    \subfigure[$\cccerr$ with $1,400$ warmup steps.]{ \includegraphics[width=0.23\textwidth]{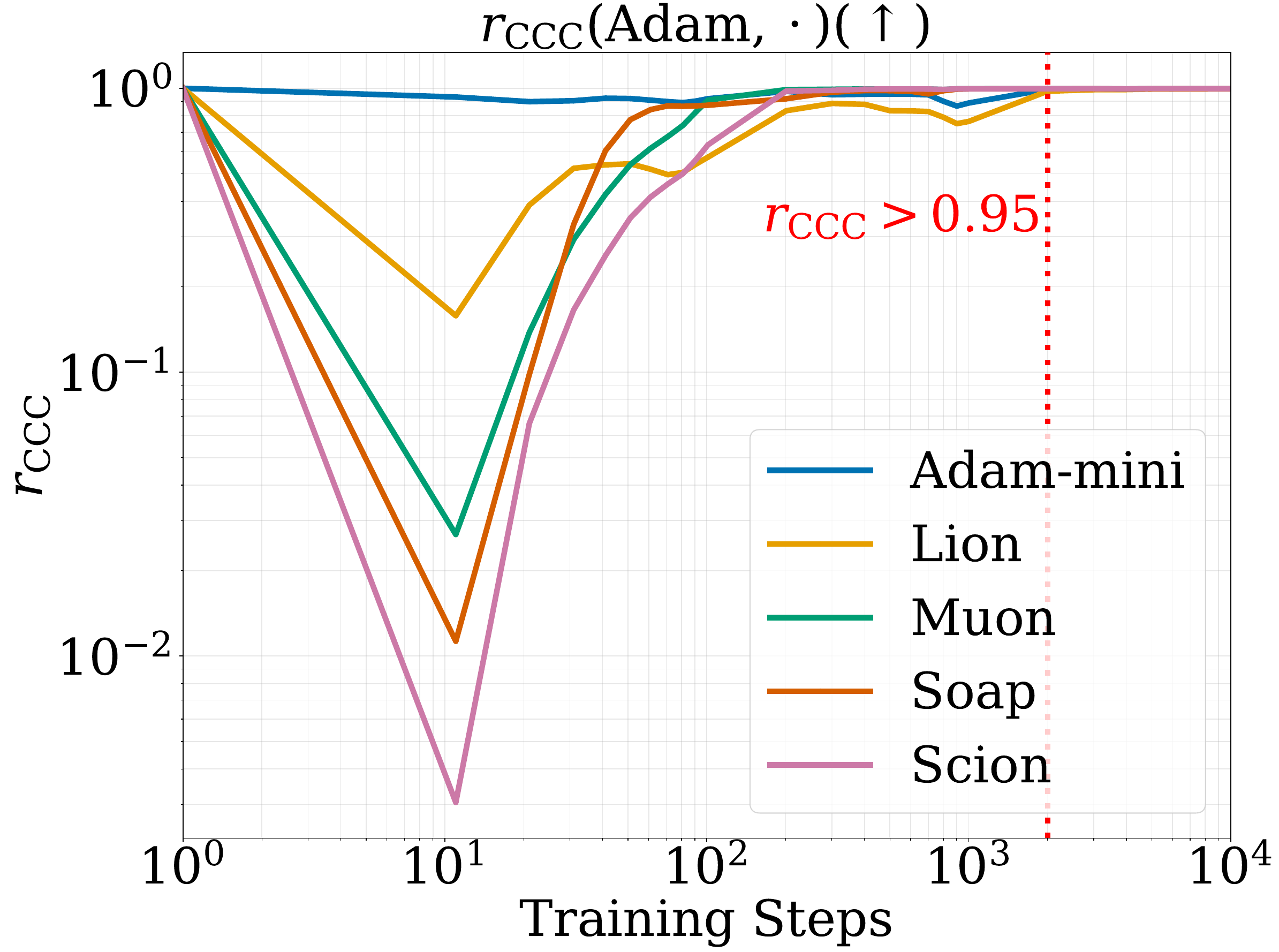}}
    \caption{Warmup-step ablation. Panels (a) and (b) plot $\differr$ for $100$ and $1{,}400$ warmup steps, while Panels (c) and (d) plot $\cccerr$ for the same warmup settings. Approximate \ac{rgi} holds robustly across the tested scheduler warmup lengths.} 
    \label{fig:scheduler}
\end{figure}

The experiments in Section~\ref{sec:opt} use weight decay as $0$. To evaluate the effect of the  weight decay, we additionally evaluate all optimizers with weight decay $0.05$ and $0.1$. Figure~\ref{fig:wd} reports the corresponding values of $\differr$ and $\cccerr$. Across all tested weight-decay values, approximate \ac{rgi} remains consistently observed.
\begin{figure}[H]
    \centering
    \subfigure[$\differr$ with weight decay value $0.05$.]{ \includegraphics[width=0.23\textwidth]{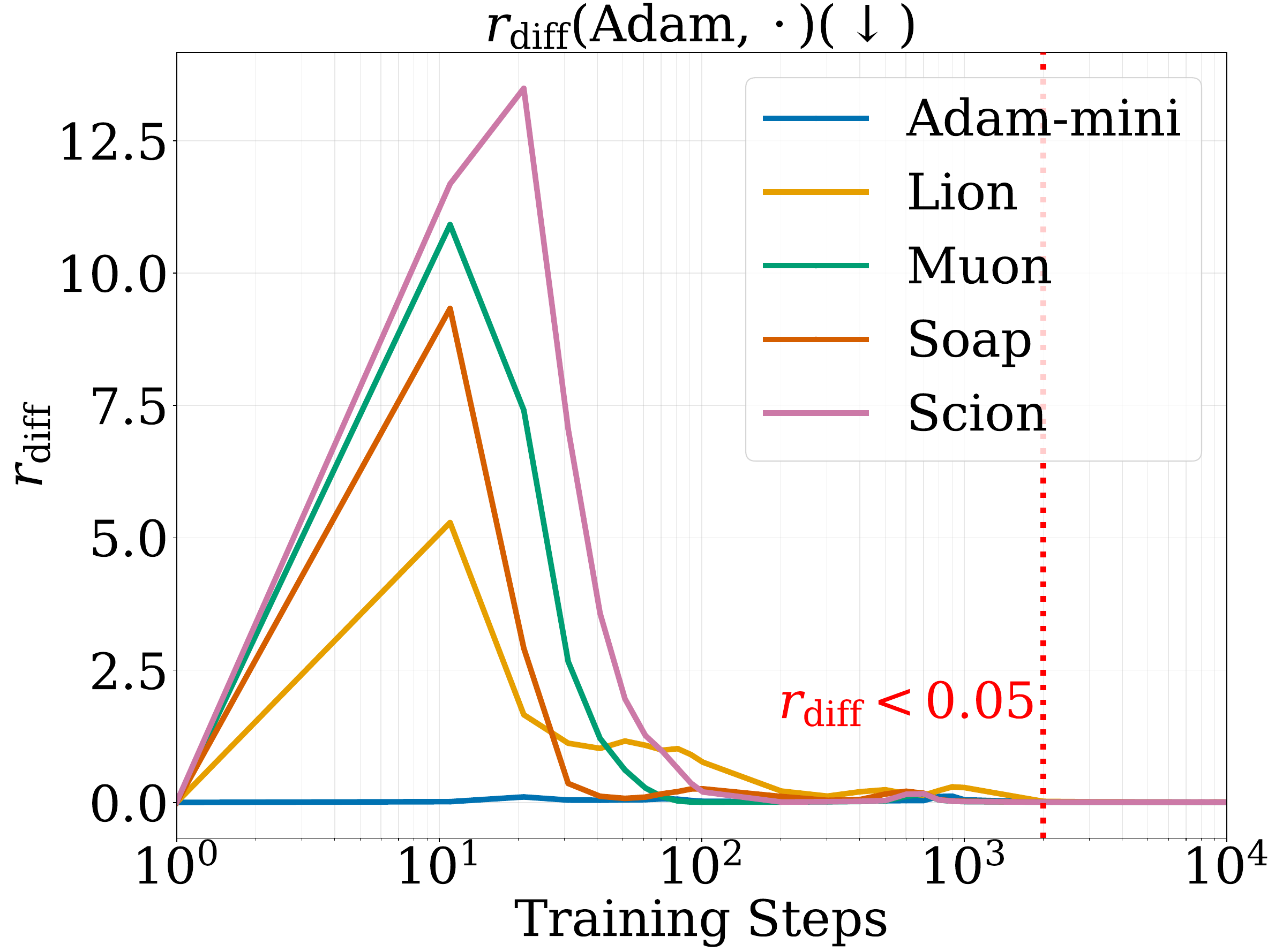}}
    \subfigure[$\differr$ with weight decay value $0.1$.]{ \includegraphics[width=0.23\textwidth]{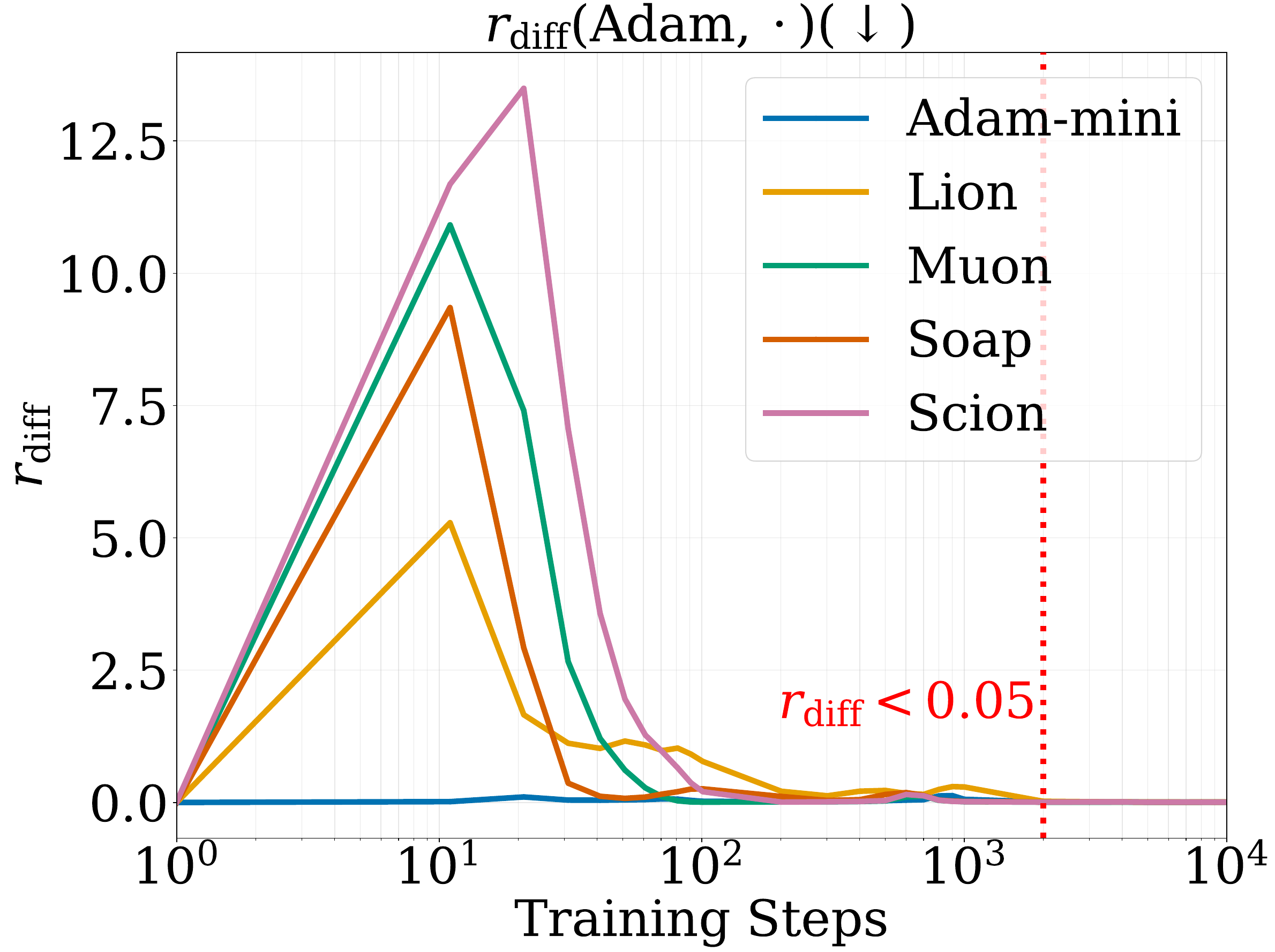}}
    \subfigure[$\cccerr$ with weight decay value $0.05$.]{ \includegraphics[width=0.23\textwidth]{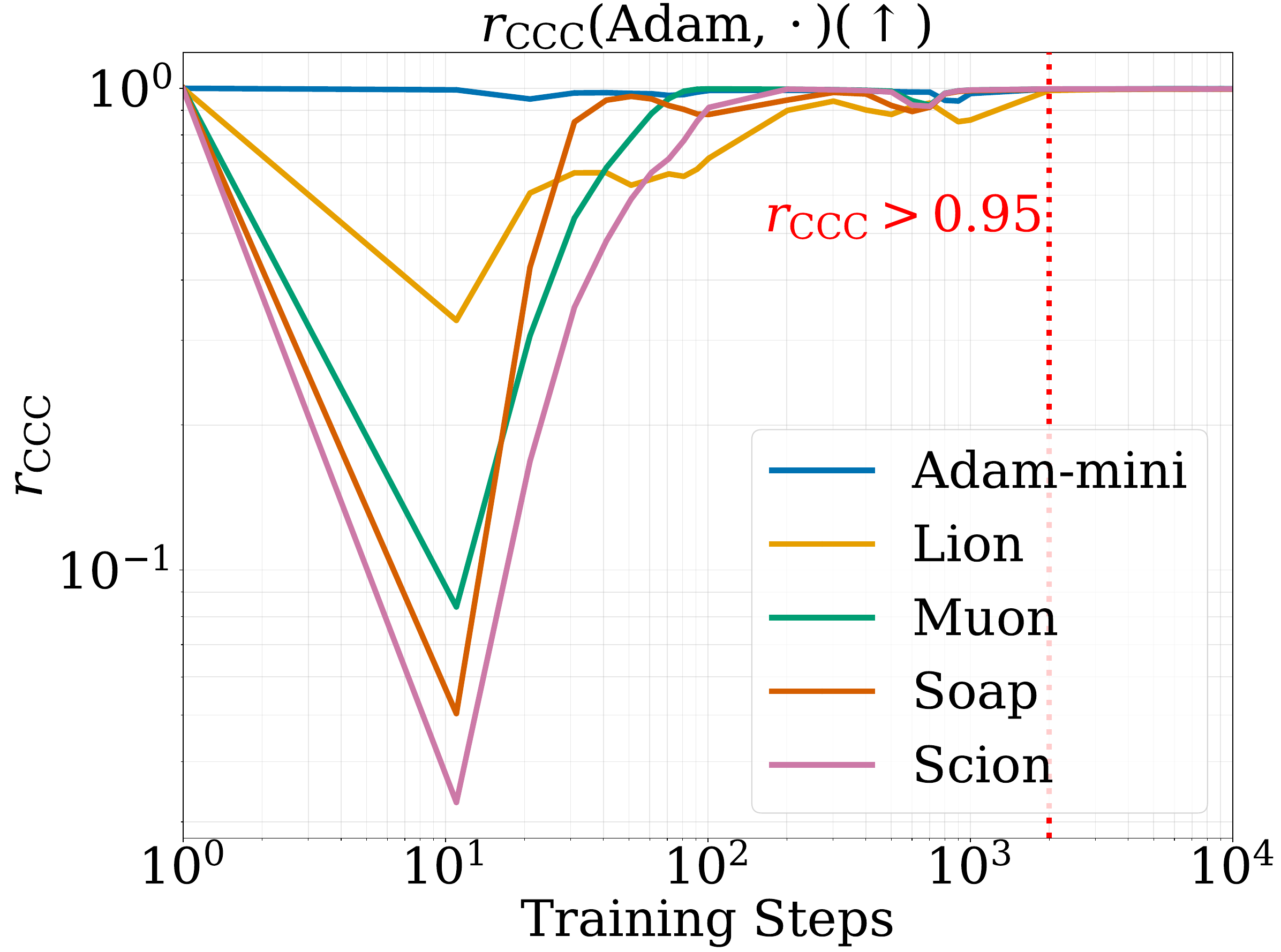}}
    \subfigure[$\cccerr$ with weight decay value $0.1$.]{ \includegraphics[width=0.23\textwidth]{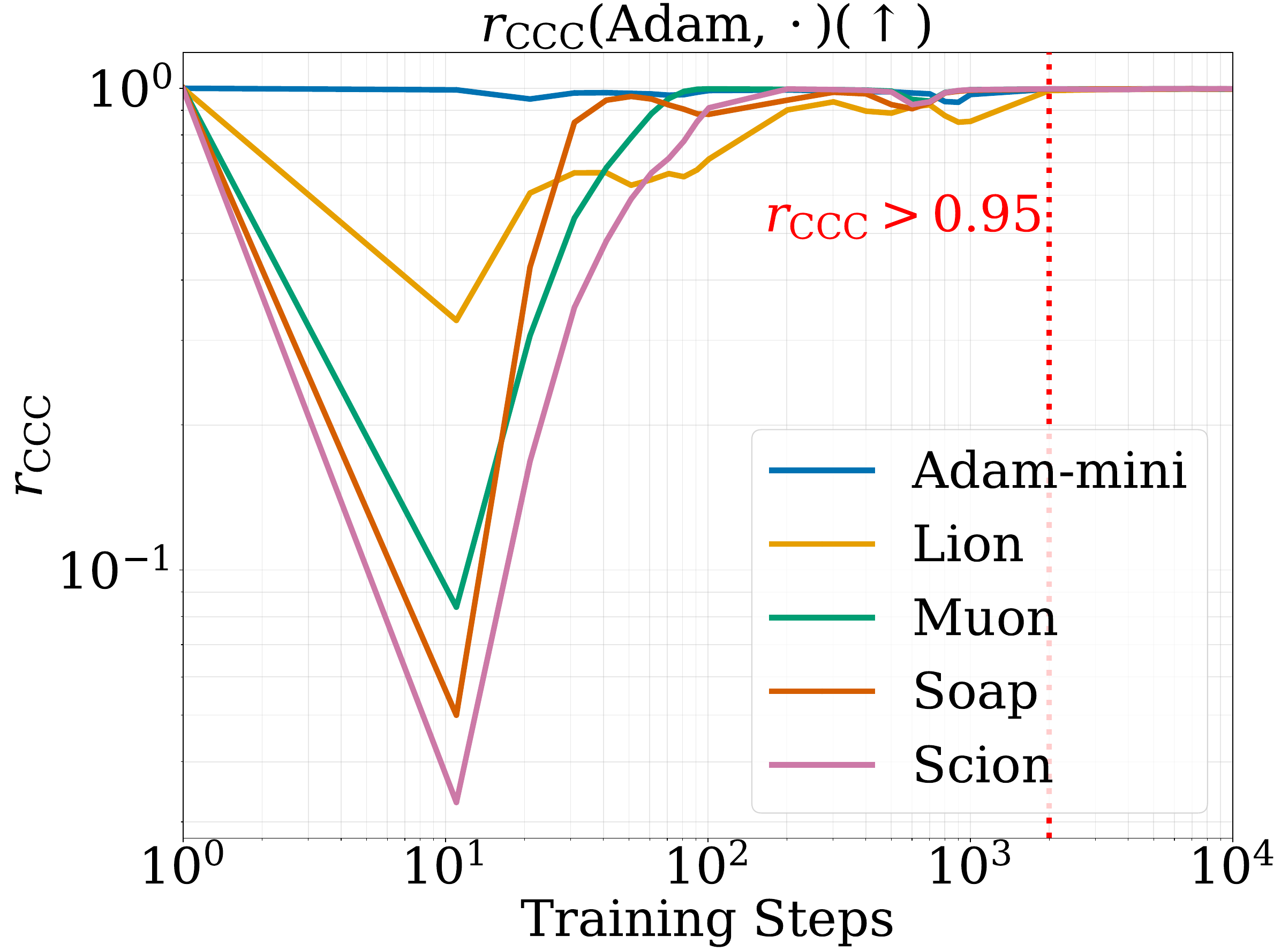}}
    \caption{Weight-decay ablation. Panels (a) and (b) plot $\differr$ for $0.05$ and $0.1$ weight decay, while Panels (c) and (d) plot $\cccerr$ for the same weight decay settings. Approximate \ac{rgi} holds robustly across the tested weight decay values.} 
    \label{fig:wd}
\end{figure}
\subsubsection{Additional Results for Section~\ref{sec:arch}}

\begin{figure}[H]
    \centering
    % % \vspace{-1em}
    \subfigure[$\differr$ under different initial parameters.]{
        \includegraphics[width=0.31\textwidth]{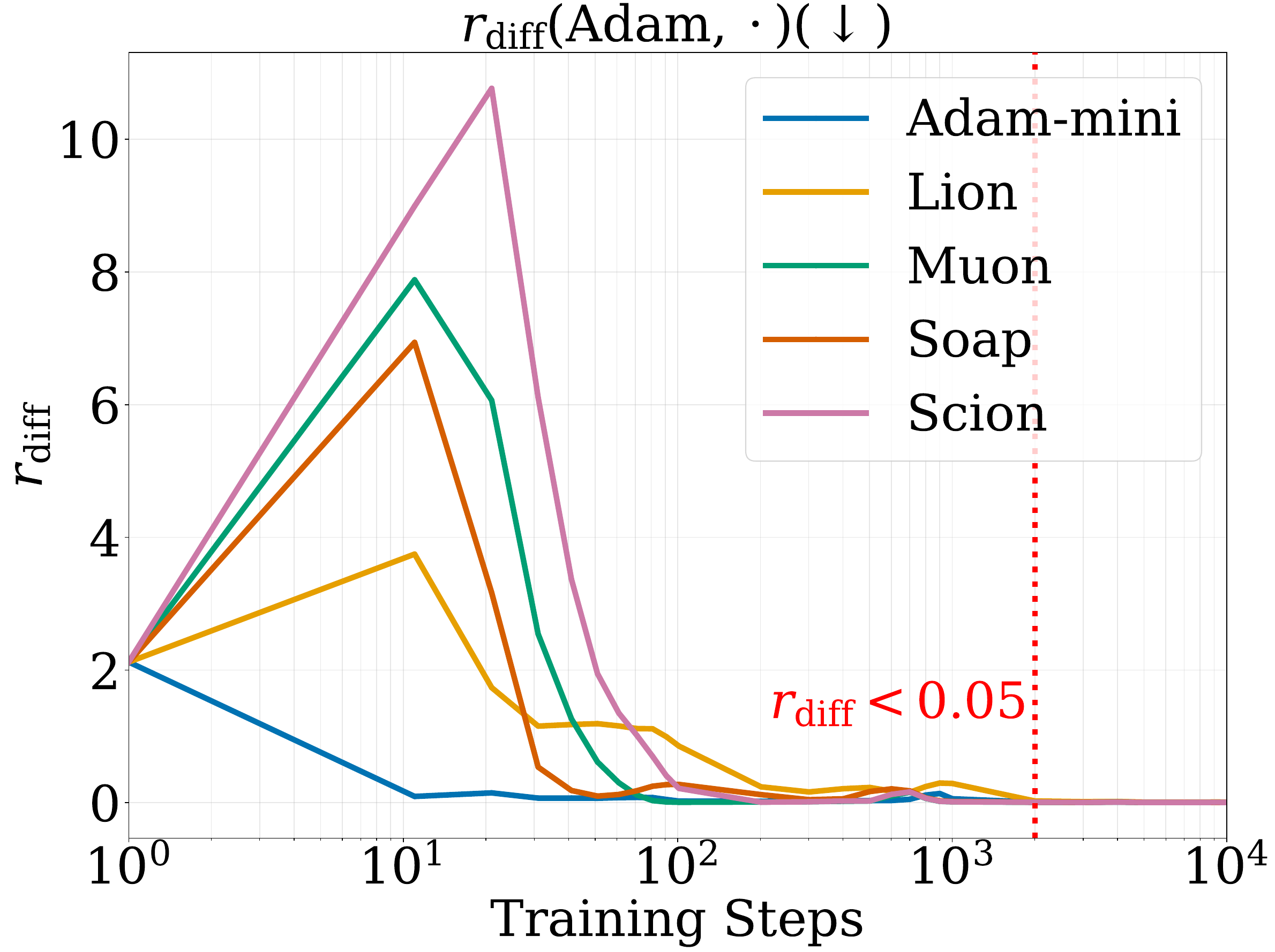}
        \label{fig:r_diff_init}
    }
    \subfigure[$\differr$ for hidden state dimensions $768$ and $384$.]{
        \includegraphics[width=0.31\textwidth]{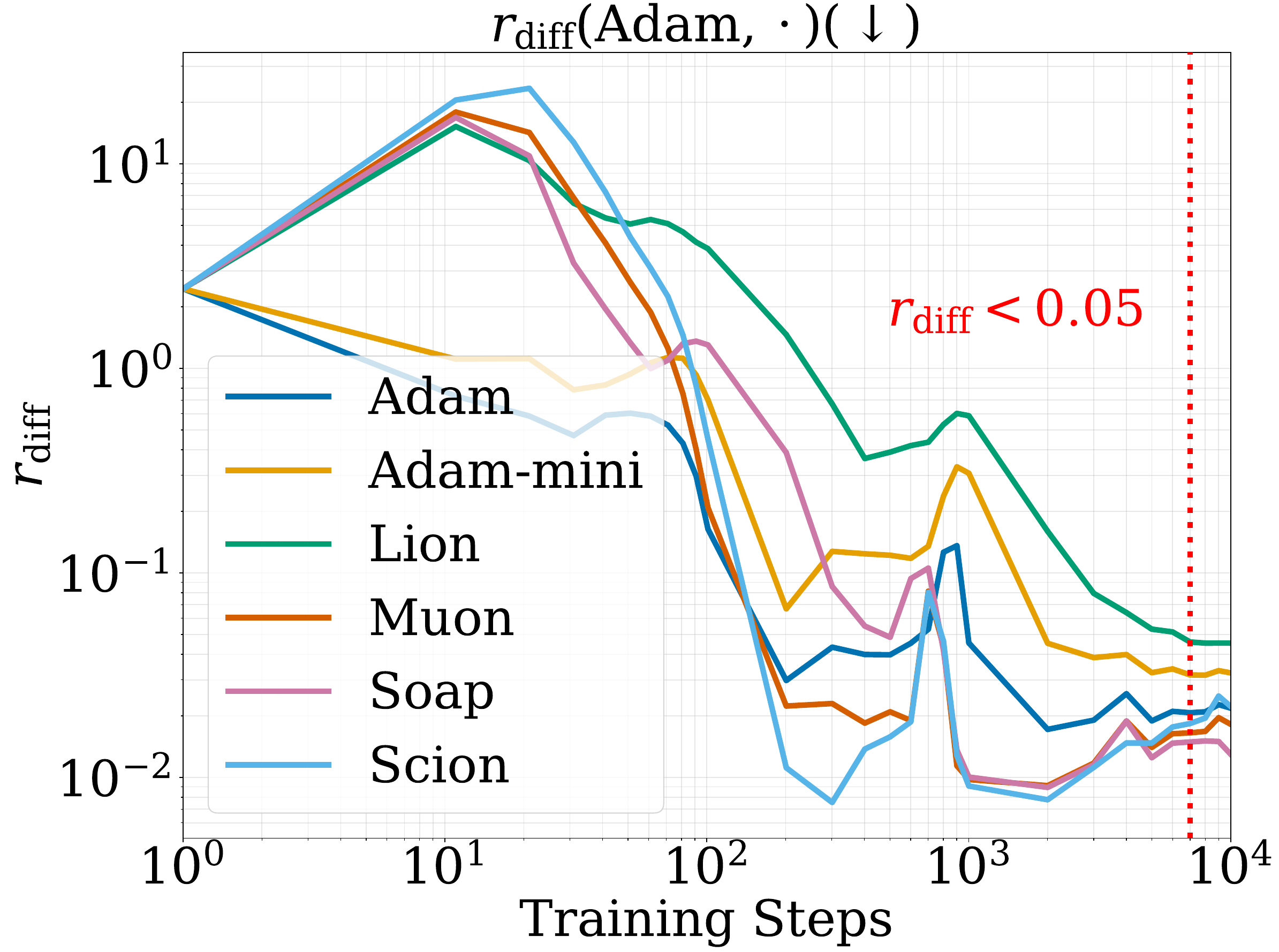}
        \label{fig:r_diff_cross_384}
    }
    \subfigure[$\differr$ for hidden state dimensions $768$ and $64$.]{
        \includegraphics[width=0.31\textwidth]{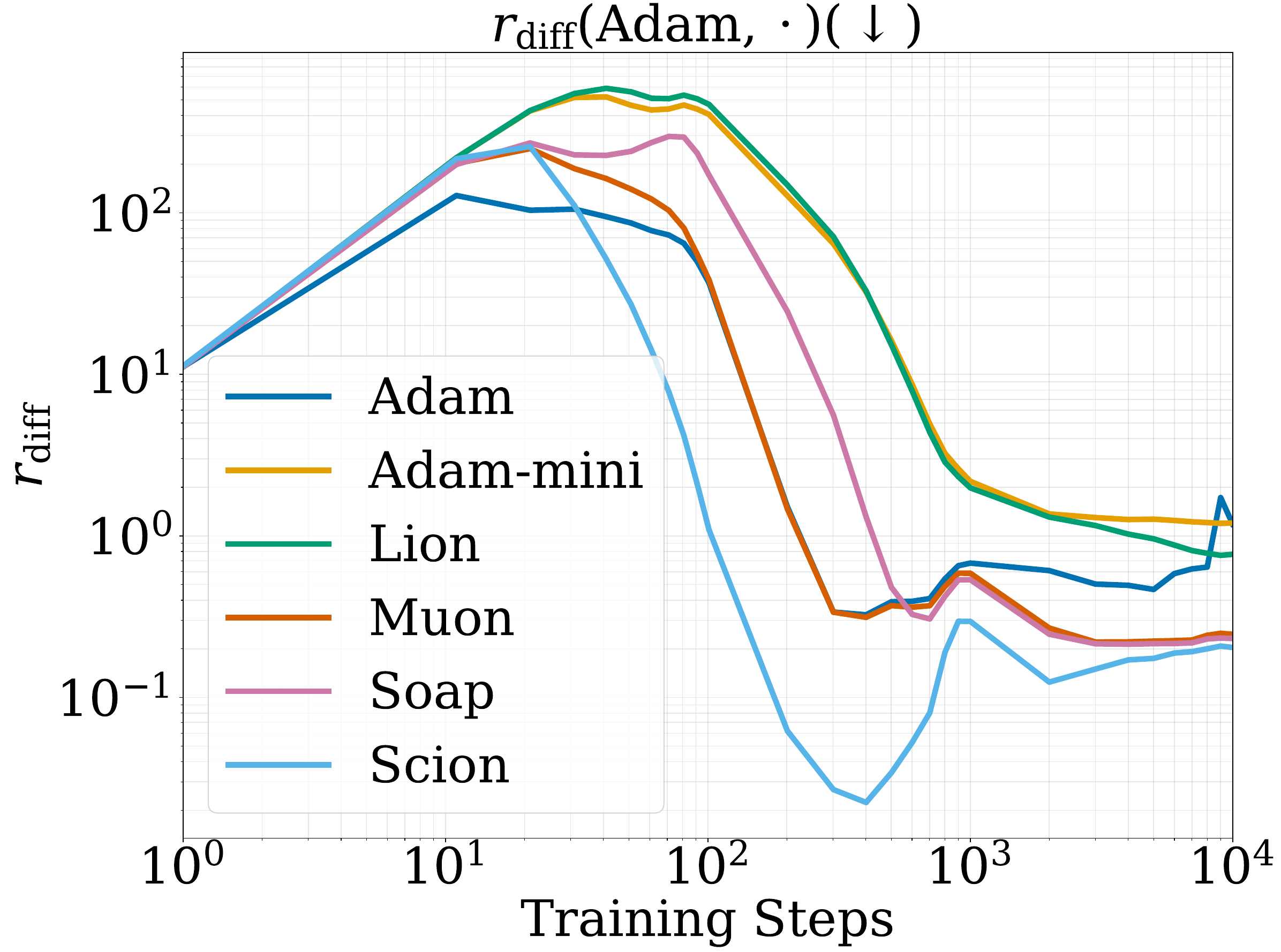}
        \label{fig:r_diff_cross_64}
    }
    
    \caption{\ac{rgi} across optimizers, initializations, and model sizes. Panel (a) plots $\differr$ across optimizers with different initializations. Panels (b) and (c) plot $\differr$ across models with hidden-state dimensions $768$, $384$, and $64$. \ac{rgi} continues to hold across different initializations and moderate changes in model size.}
    \label{fig:rdiff_same_arch}
    % % \vspace{-2.0em}
\end{figure}

\begin{figure}[H]
    \centering
    \subfigure[Values of $\differr$ between \ac{fa} models with RoPE and NoPE.]{ \includegraphics[width=0.23\textwidth]{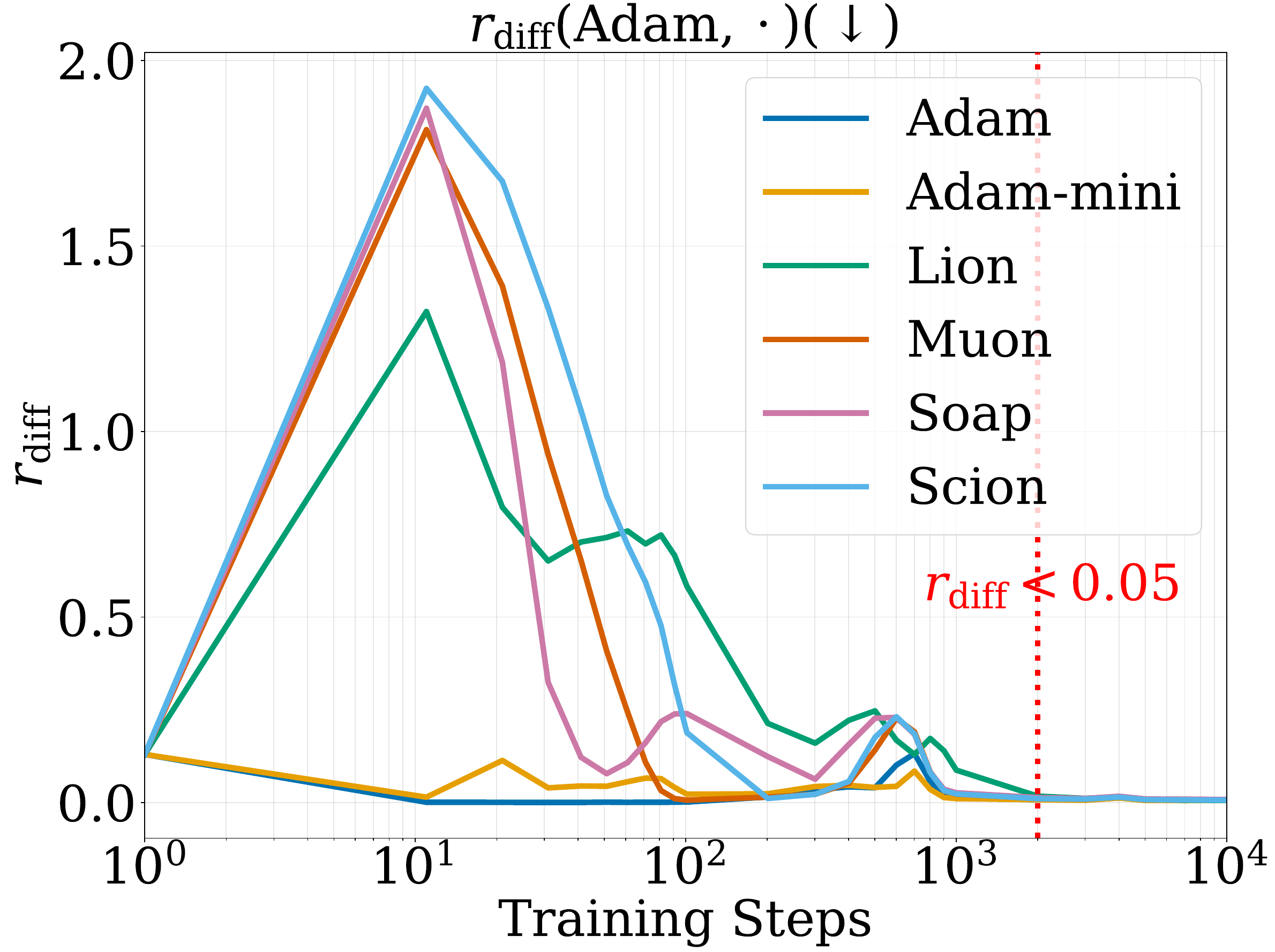}
    \label{fig:r_diff_nope}
    }
    \subfigure[Values of $\differr$ between \ac{fa} models with RoPE and ALiBi.]{ \includegraphics[width=0.23\textwidth]{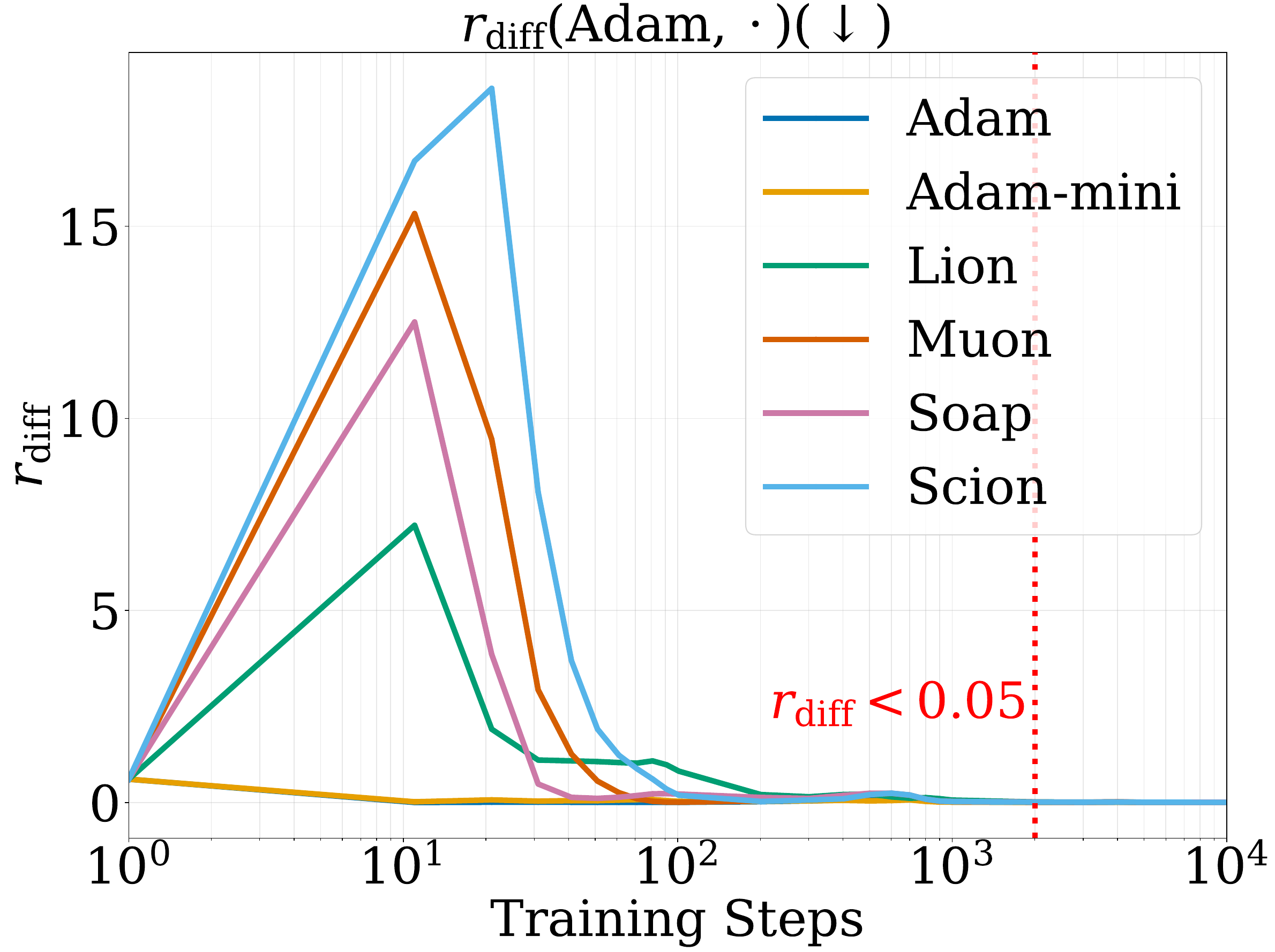}
    \label{fig:r_diff_alibi}
    }
    \subfigure[Values of $\differr$ between \ac{fa} and \ac{swa} models.]{ \includegraphics[width=0.23\textwidth]{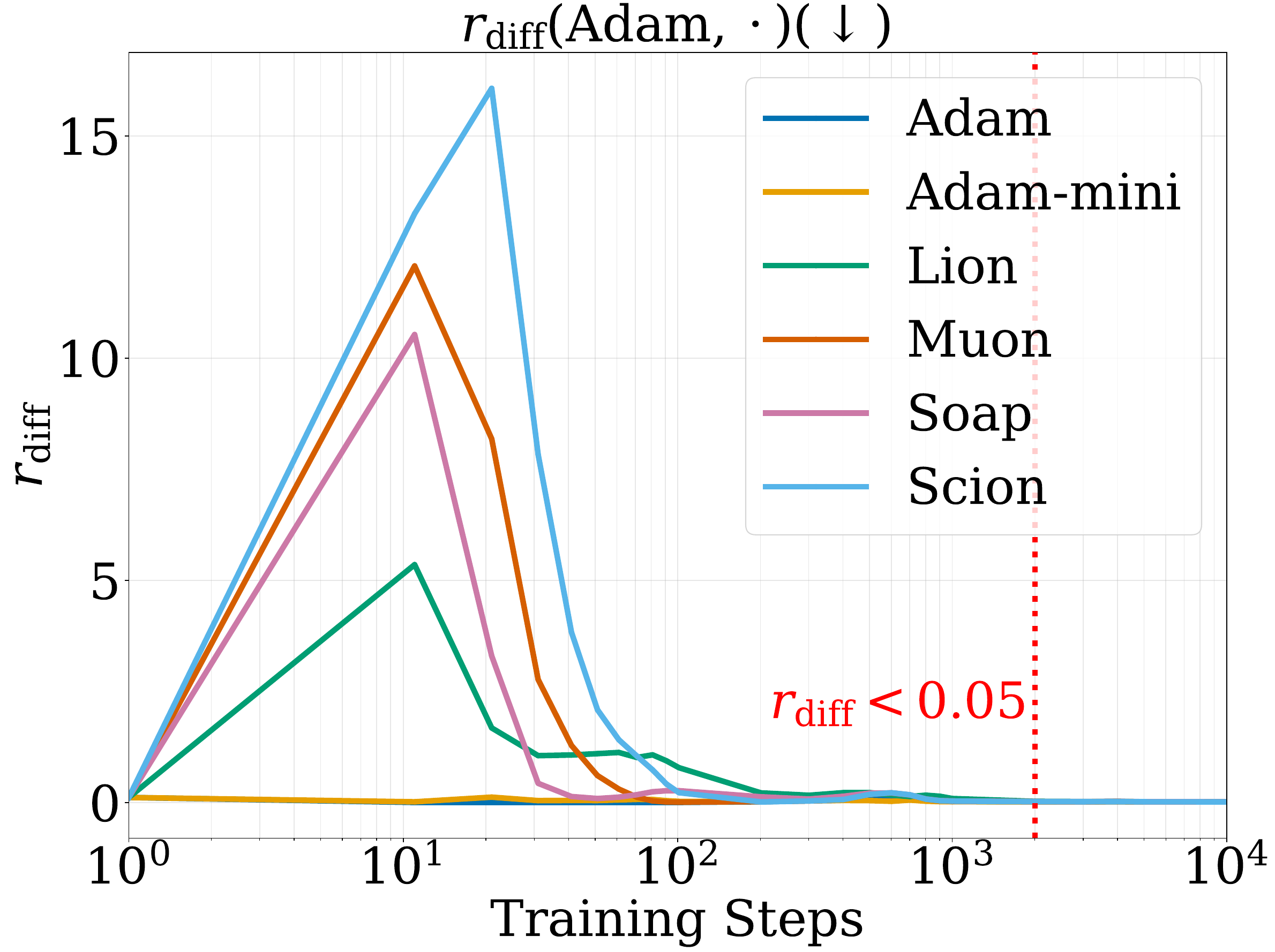}
    \label{fig:r_diff_swa}
    }
    \subfigure[Values of $\differr$ between models with GELU and SReLU.]{ \includegraphics[width=0.23\textwidth]{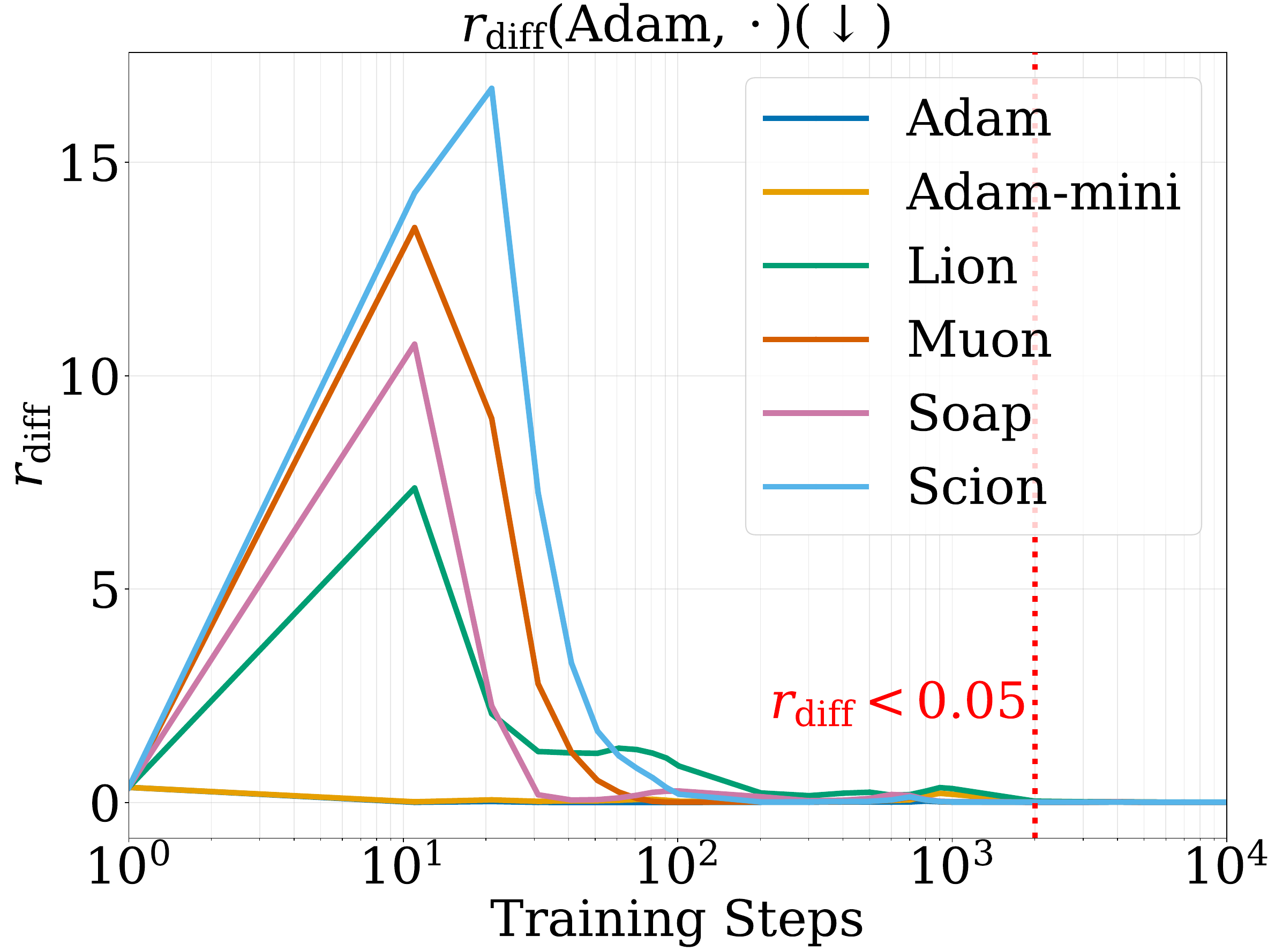}
    \label{fig:r_diff_srelu}
    }
    % % \vspace{-1.0em}
    \caption{\ac{rgi} across optimizers and models with different attention and \ac{ffn} designs. Panels (a)--(c) plot $\differr$ between the RoPE \ac{fa} baseline trained with $\adam$ and NoPE \ac{fa}, ALiBi \ac{fa}, and RoPE \ac{swa} models, respectively, each trained with all optimizers. Panel (d) plots $\differr$ between models using GELU and SReLU activations. \ac{rgi} persists across different attention and \ac{ffn} designs.} 
    % \vspace{-1.0em}
    \label{fig:rdiff_diff_arch}
\end{figure}

\begin{figure}[H]
    \centering
    \subfigure[Values of $\differr$ between models with GELU and SiLU.]{ \includegraphics[width=0.23\textwidth]{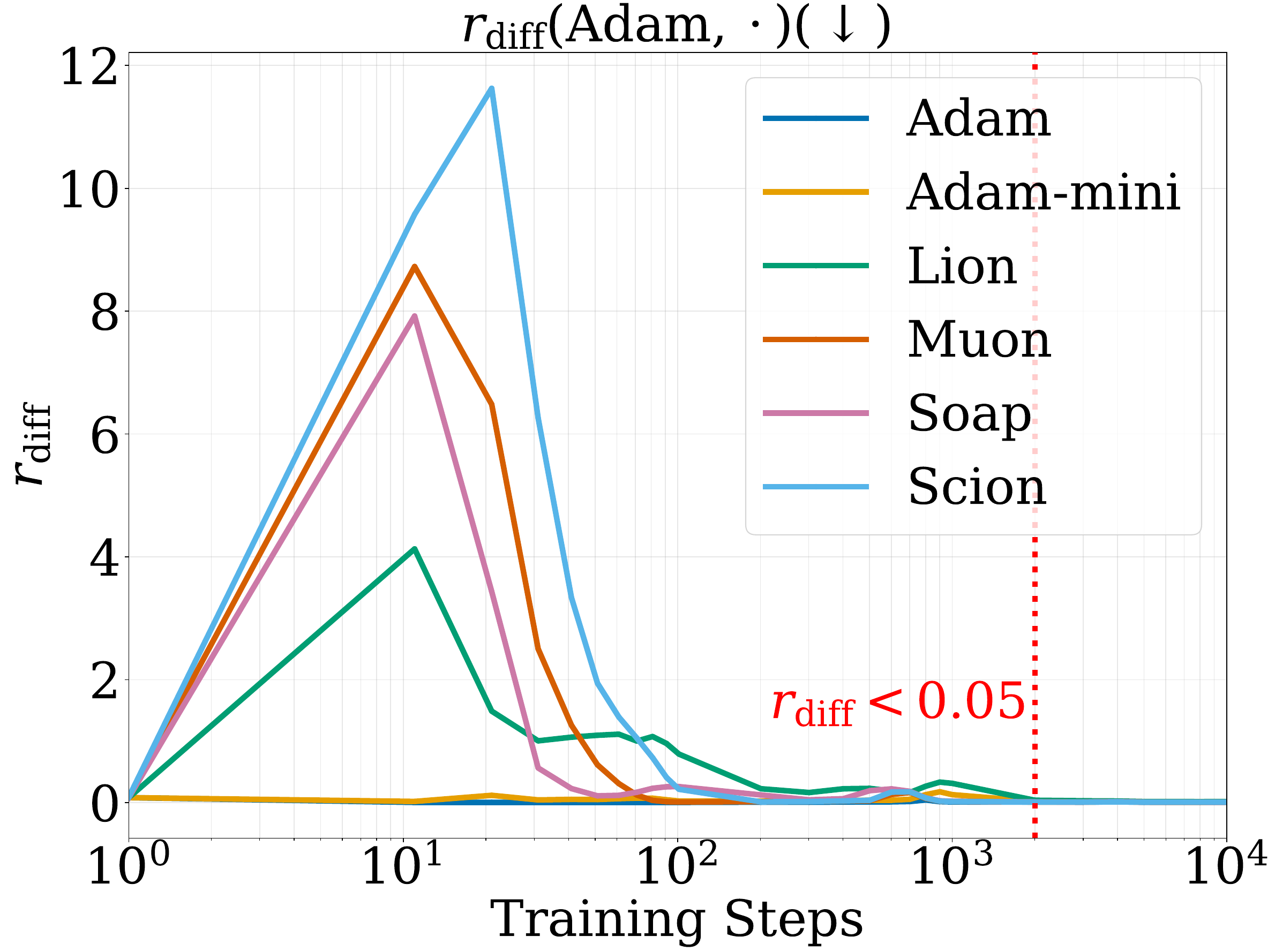}
    }
    \subfigure[Values of $\cccerr$ between models with GELU and SiLU.]{ \includegraphics[width=0.23\textwidth]{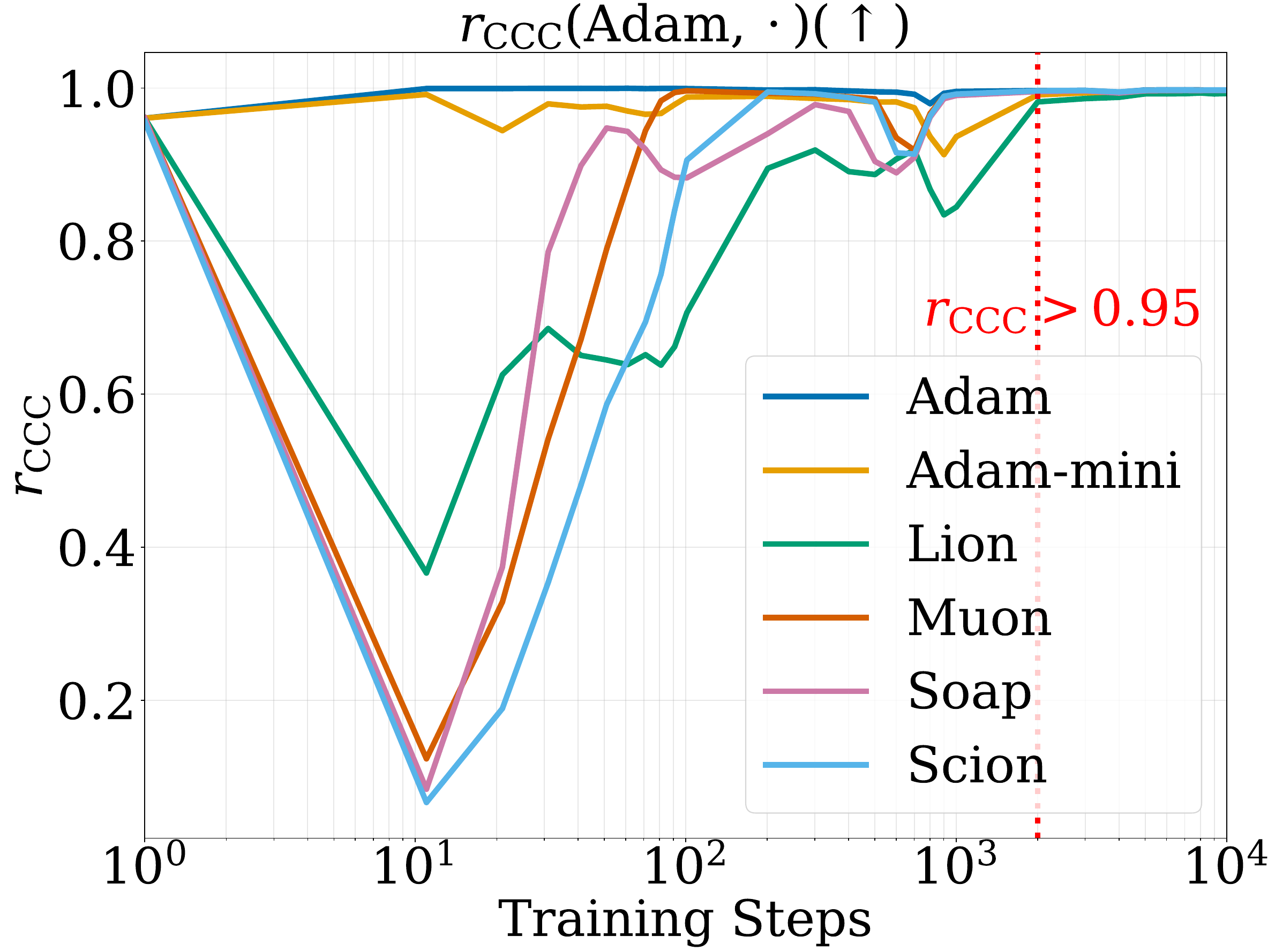}
    }
    \subfigure[Values of $\differr$ between models with gated and non-gated \ac{ffn}.]{ \includegraphics[width=0.23\textwidth]{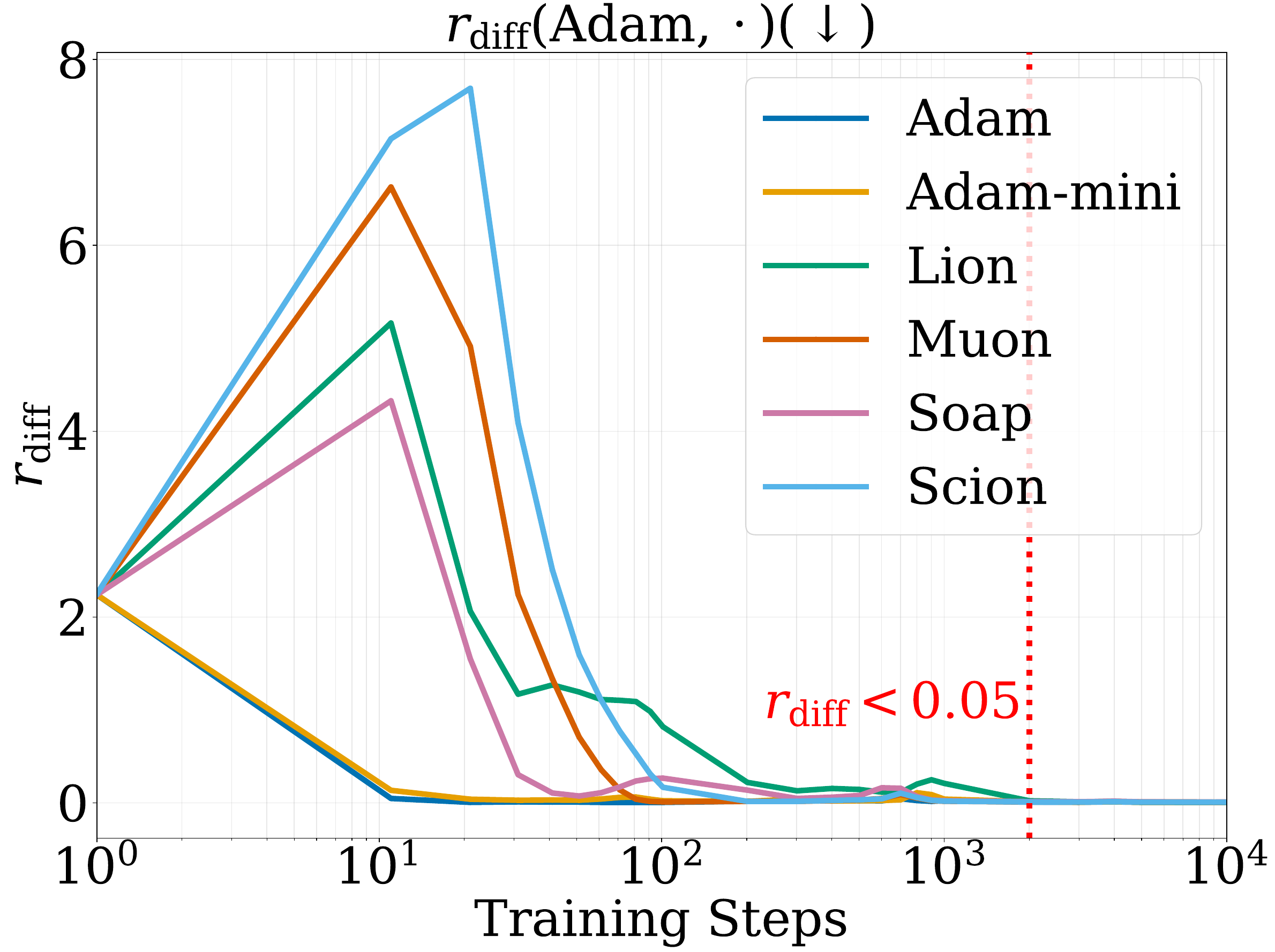}
    }
    \subfigure[Values of $\cccerr$ between models with gated and non-gated \ac{ffn}.]{ \includegraphics[width=0.23\textwidth]{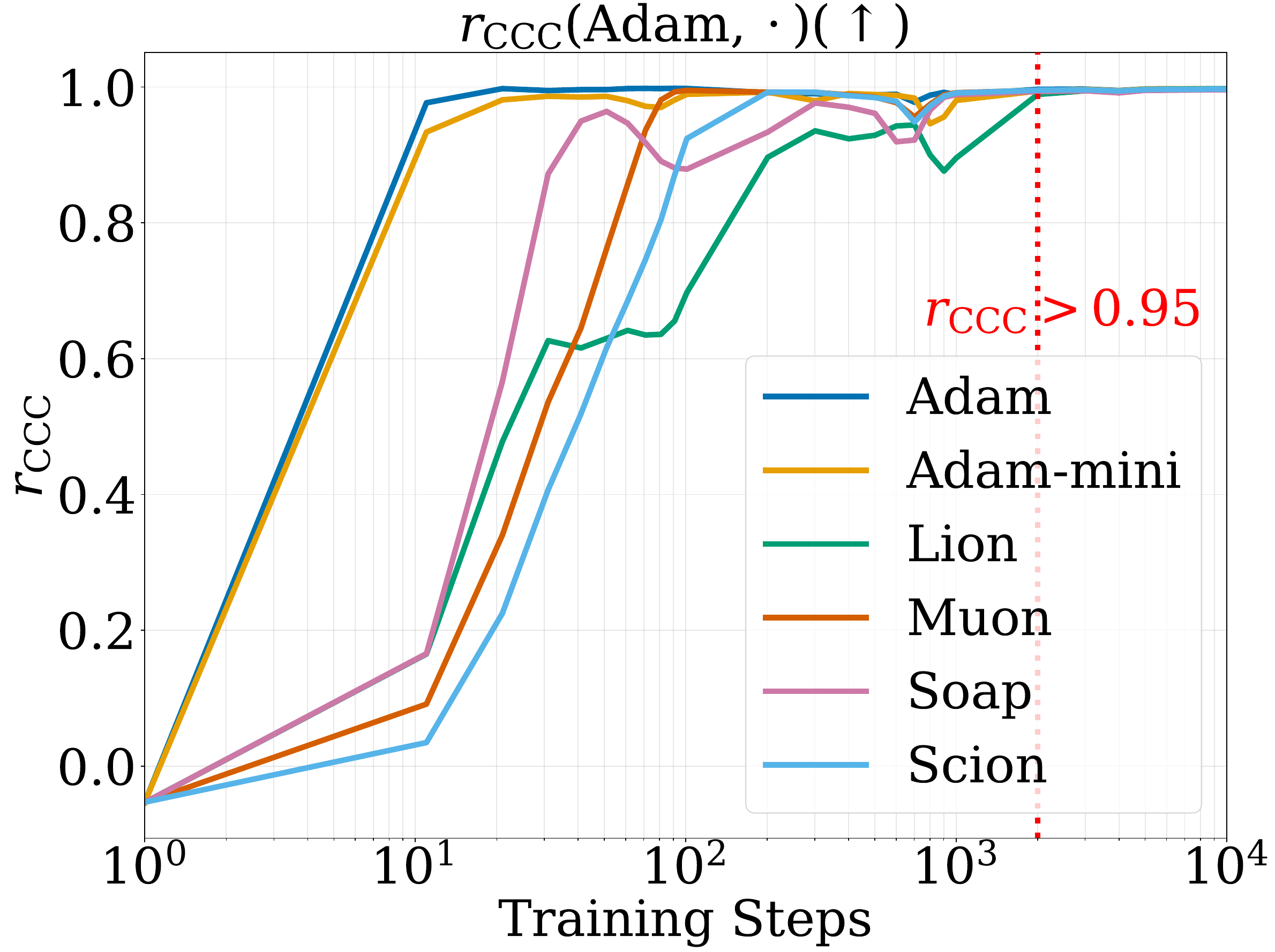}
    }
    \caption{\ac{rgi} across optimizers and models with different \ac{ffn} designs. Panels (a) and (b) plot $\differr$ and $\cccerr$, respectively, between the GELU baseline trained with Adam and SiLU models trained with other optimizers. Panels (c) and (d) plot $\differr$ and $\cccerr$, respectively, between the non-gated \ac{ffn} baseline trained with Adam and gated \ac{ffn} models trained with other optimizers. \ac{rgi} persists across different \ac{ffn} designs.} 
    % % \vspace{-1.0em}
    \label{fig:silu_gate}
\end{figure}

\subsubsection{Additional Results for Section~\ref{sec:train_data}}

\begin{figure}[H]
    \centering
    % % \vspace{-1.0em}
    \subfigure[Values of $\differr$ with $D_l^{\prime}$ from C4.]{ \includegraphics[width=0.31\textwidth]{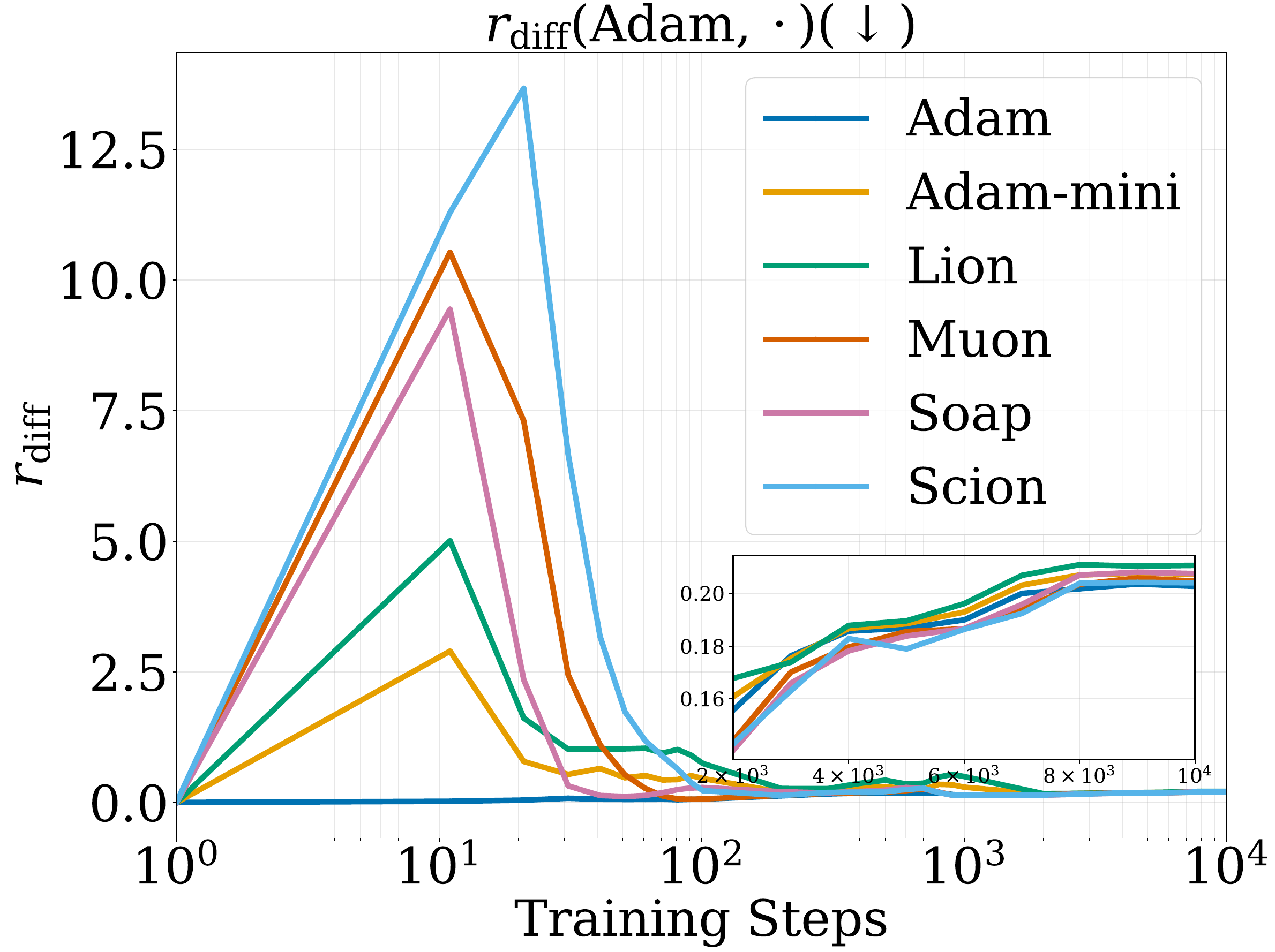}\label{fig:r_diff_c4}}
    % \hspace{2em}
    \subfigure[Values of $\differr$ with $D_l^{\prime}$ from ArXiv.]{ \includegraphics[width=0.31\textwidth]{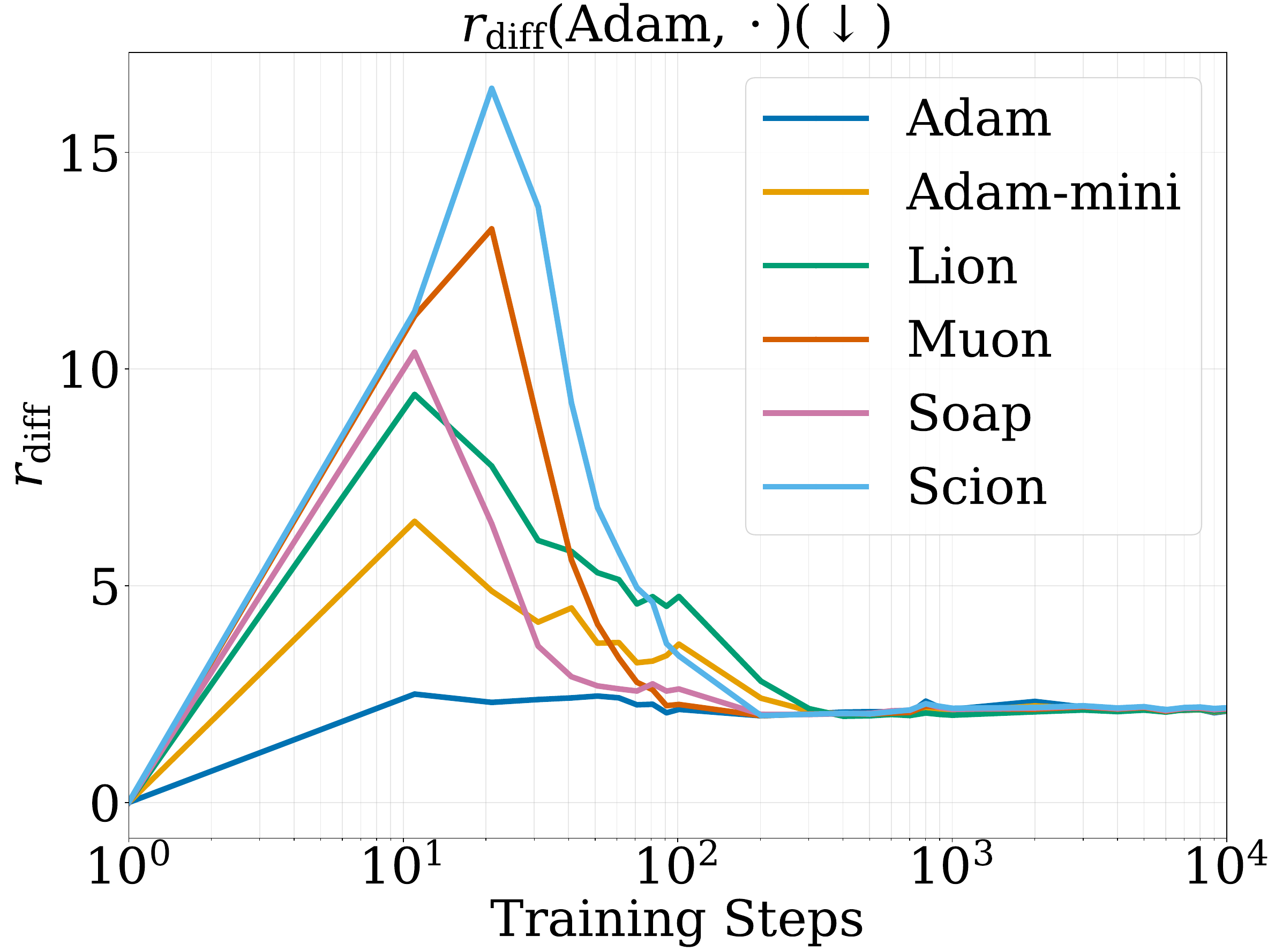}\label{fig:r_diff_arxiv}}
    % \hspace{2em}
    \subfigure[Values of $\differr$ with $D_l^{\prime}$ from CodeParrot.]{ \includegraphics[width=0.31\textwidth]{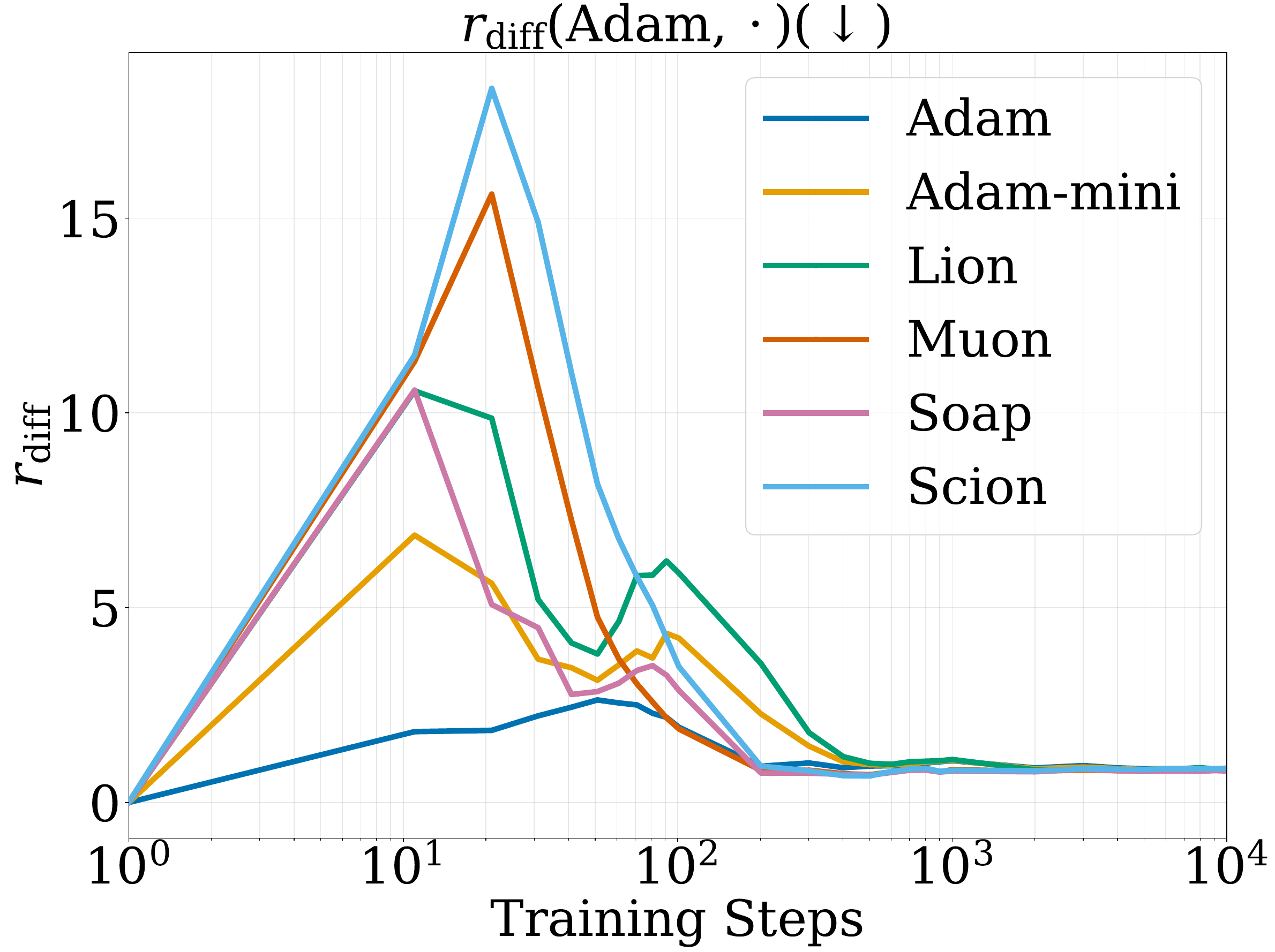}\label{fig:r_diff_code}}
    % % \vspace{-1.0em}
    \caption{\ac{rgi} across optimizers trained on different data streams. The panels plot $\differr$ when Adam is trained on FineWeb and the compared optimizer is trained on C4 (Panel (a)), ArXiv (Panel (b)), or CodeParrot (Panel (c)). The results show that \ac{rgi} is strongly affected by the training data stream: changing the data stream can substantially weaken, or even eliminate, \ac{rgi} across optimizers.} 
    % % \vspace{-1.0em}
    % \label{fig:dataset}
\end{figure}

\subsubsection{Additional Results for Section~\ref{sec:val}}

\begin{figure}[H]
    \centering
    \subfigure[Values of $\differr$ with $D_{\val}$ from C4.]{ \includegraphics[width=0.23\textwidth]{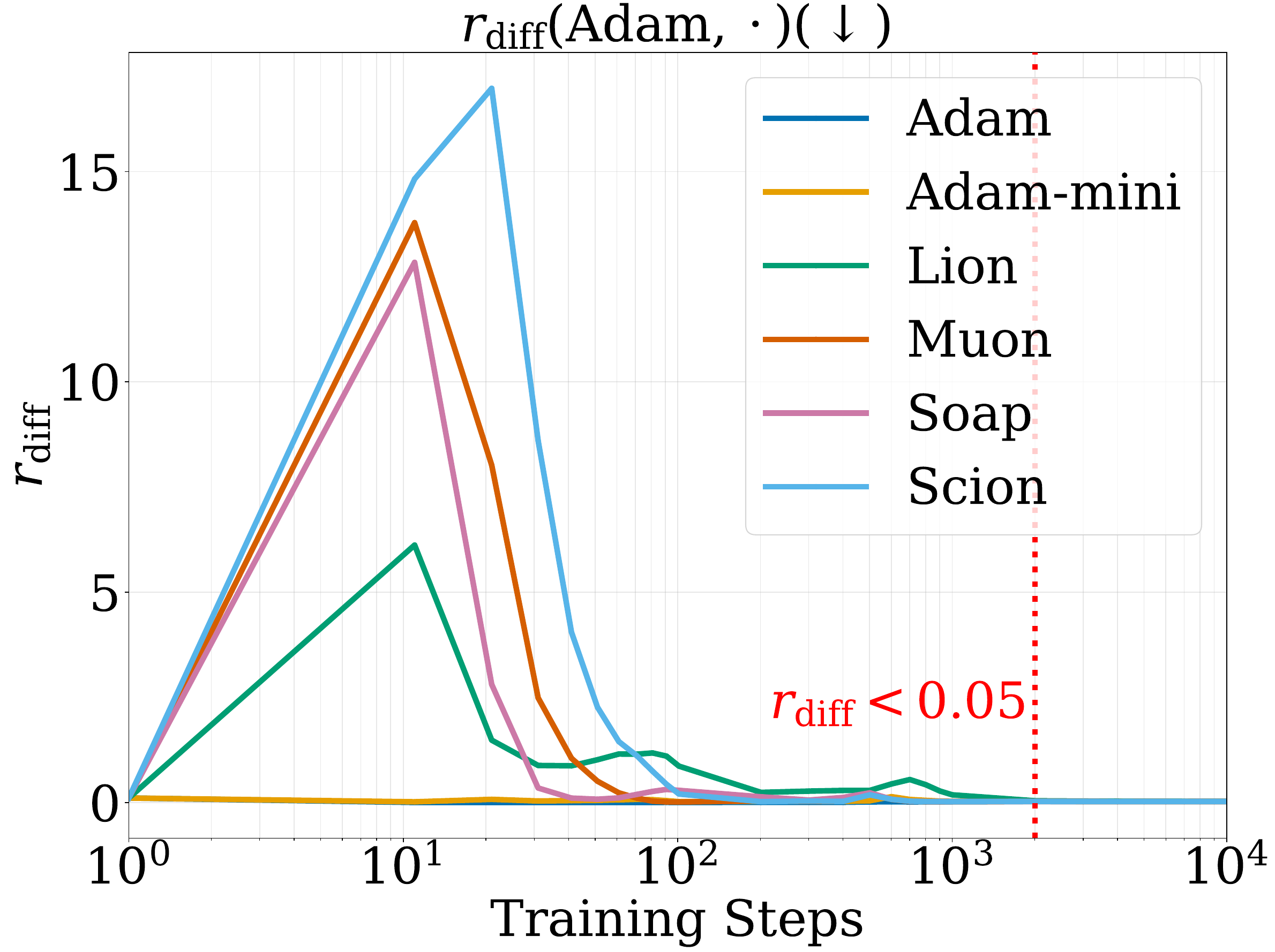}}
    \subfigure[Values of $\differr$ with $D_{\val}$ from ArXiv.]{ \includegraphics[width=0.23\textwidth]{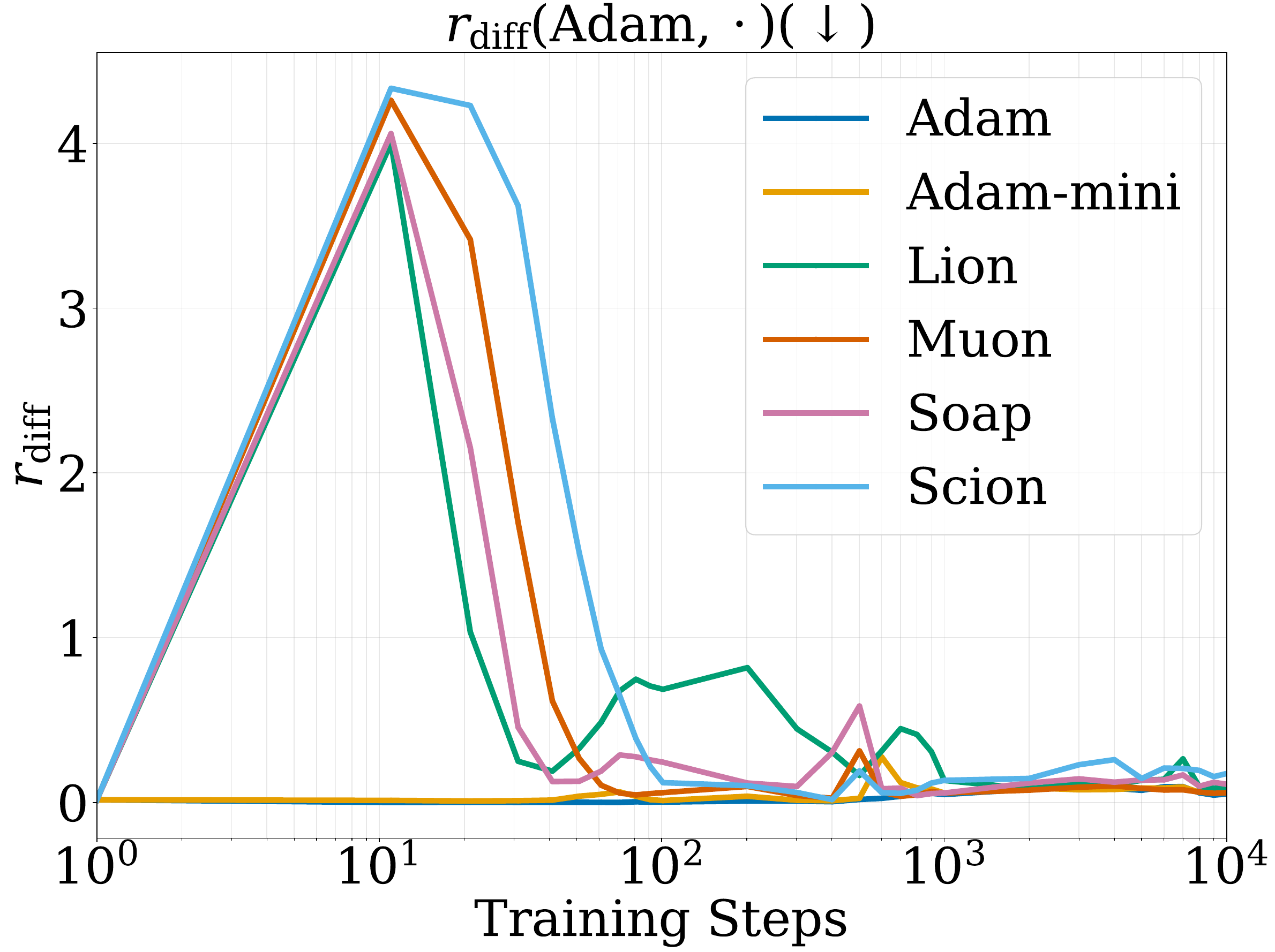}}
    \subfigure[Values of $\differr$ with $D_{\val}$ from CodeParrot.]{ \includegraphics[width=0.23\textwidth]{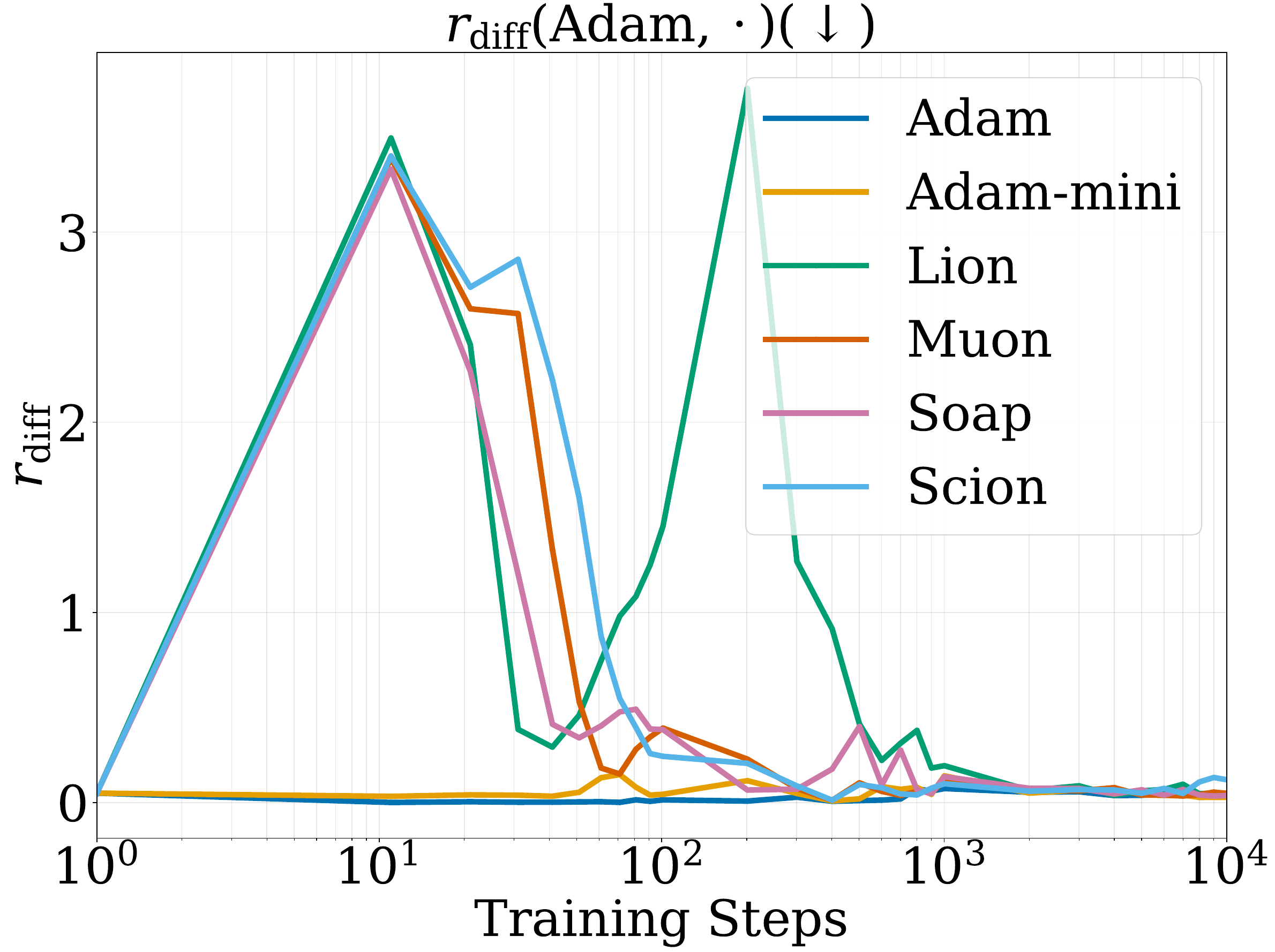}}
    \subfigure[Values of $\differr$ with $D_{\val}$ from Random.]{ \includegraphics[width=0.23\textwidth]{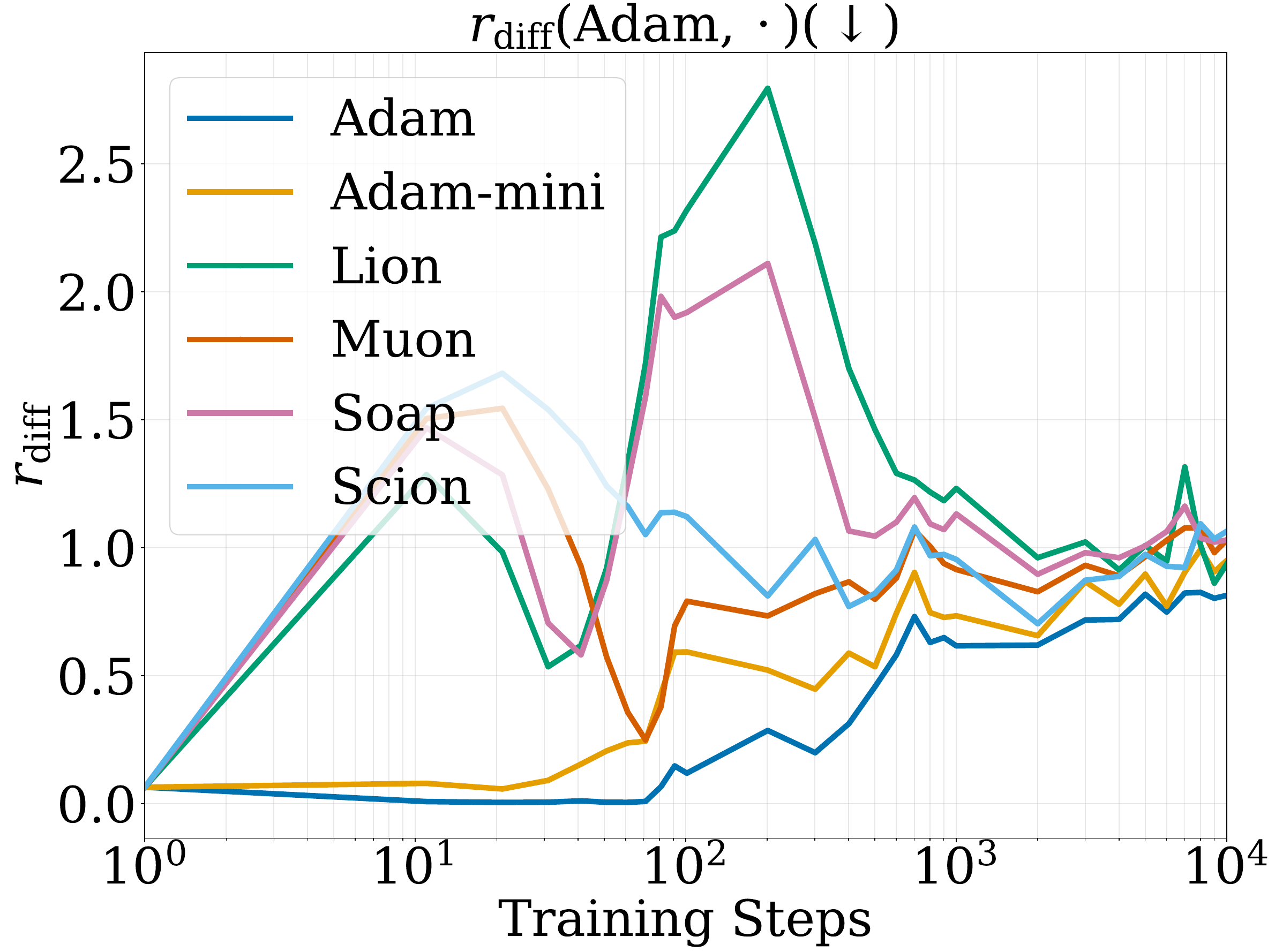}}
    \caption{ \ac{rgi} across optimizers and \ac{fa}/\ac{swa} models evaluated on different validation datasets. The panels plot $\differr$ evaluated on C4, ArXiv, CodeParrot, and Random. These results show that \ac{rgi} can transfer to related validation distributions, but becomes weaker on more distributionally distinct validation datasets.} 
    % % \vspace{-1.0em}
    % \label{fig:val}
\end{figure}

\begin{figure}[H]
    \centering
    \subfigure[Values of $\differr$ with $D_{\val}$ from C4.]{ \includegraphics[width=0.23\textwidth]{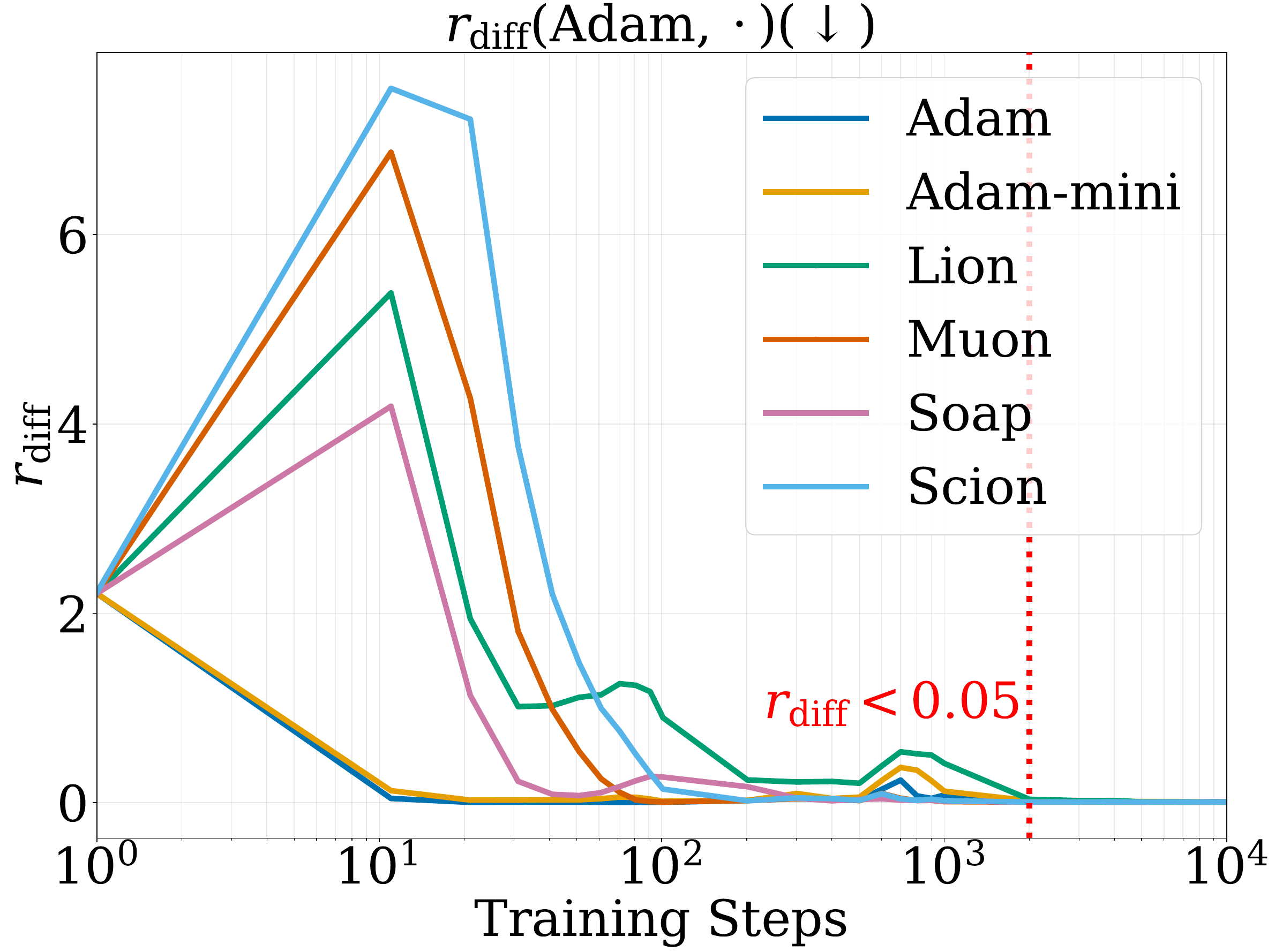}}
    \subfigure[Values of $\differr$ with $D_{\val}$ from ArXiv.]{ \includegraphics[width=0.23\textwidth]{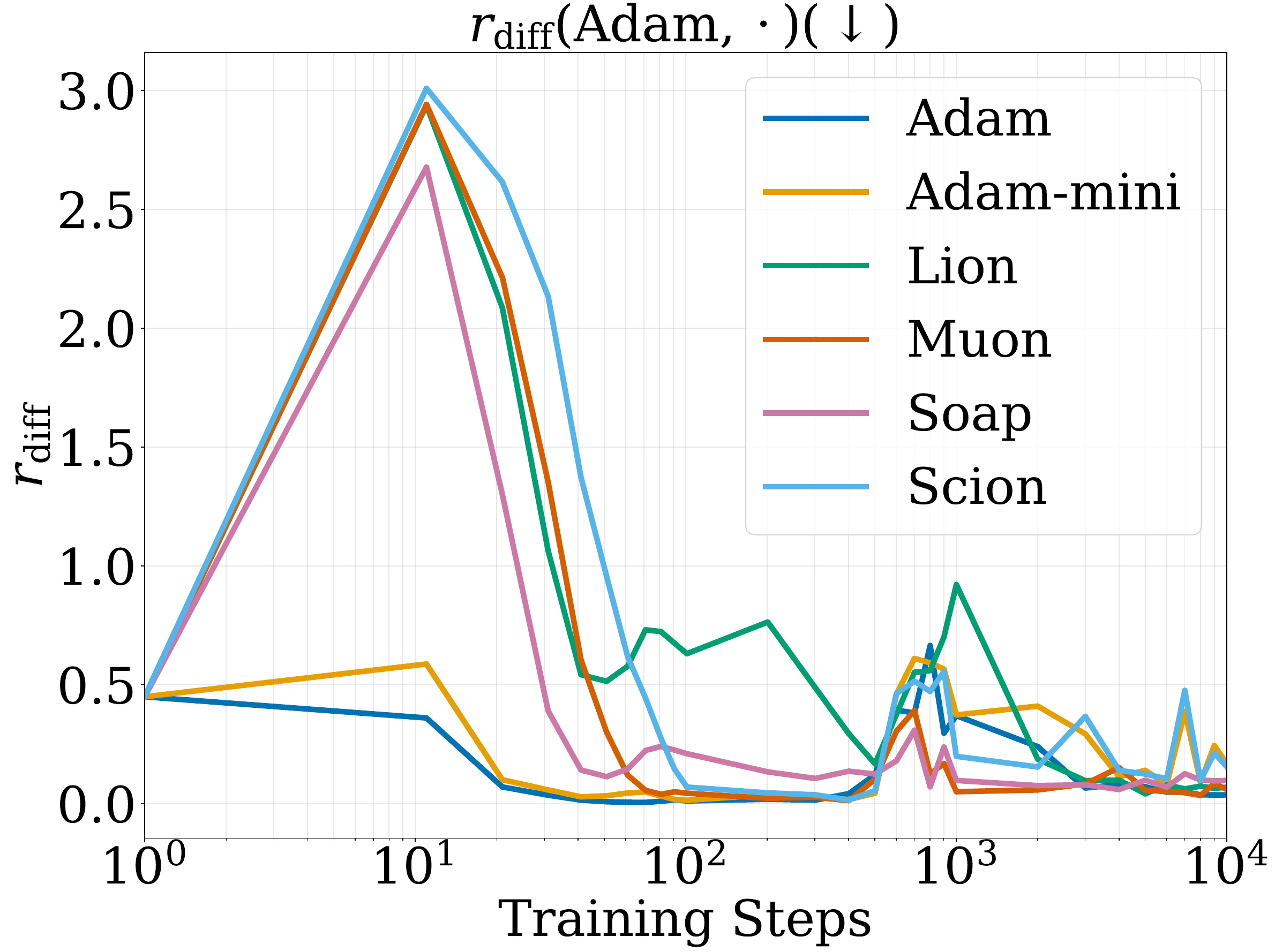}}
    \subfigure[Values of $\differr$ with $D_{\val}$ from CodeParrot.]{ \includegraphics[width=0.23\textwidth]{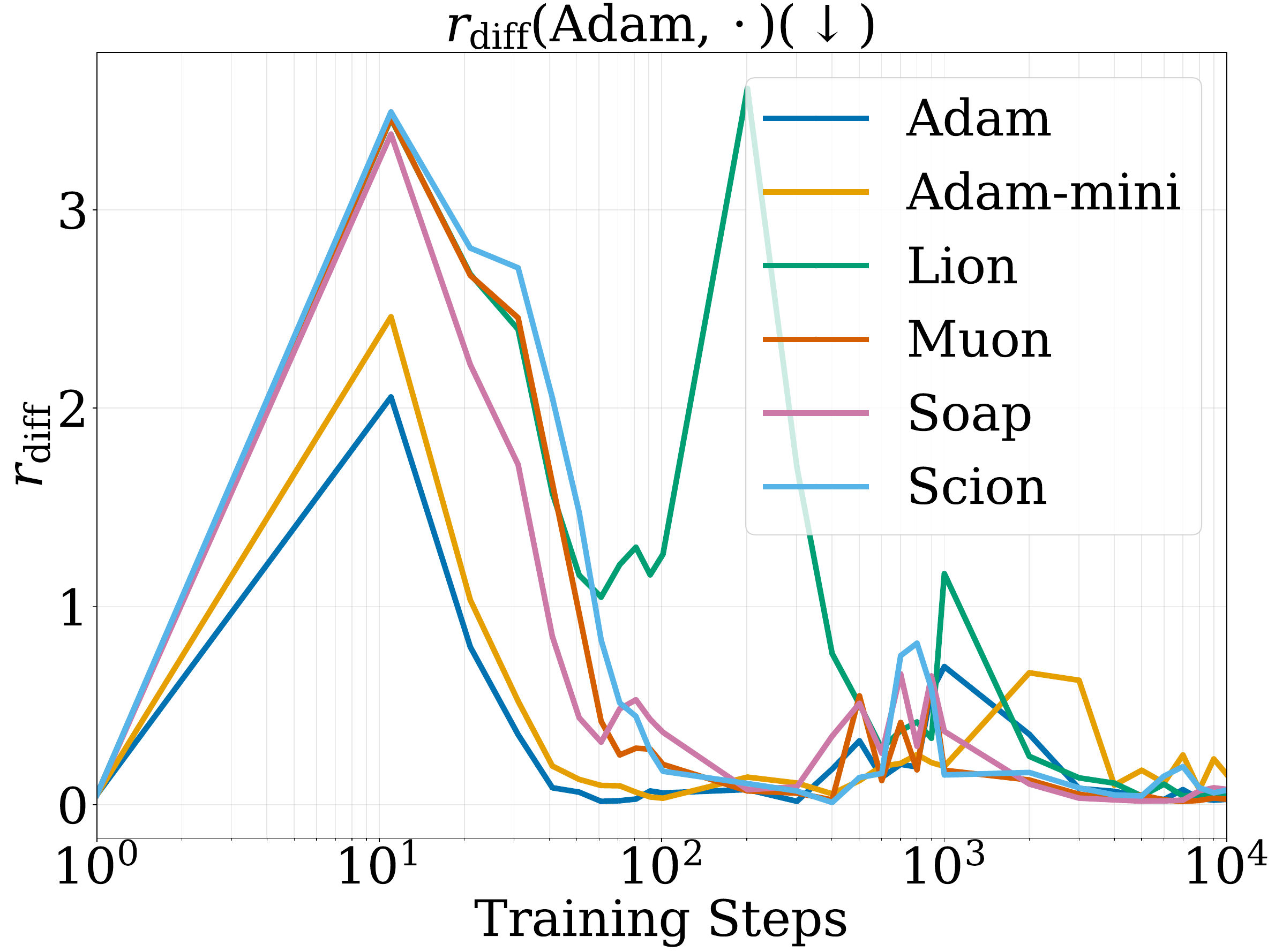}}
    \subfigure[Values of $\differr$ with $D_{\val}$ from Random.]{ \includegraphics[width=0.23\textwidth]{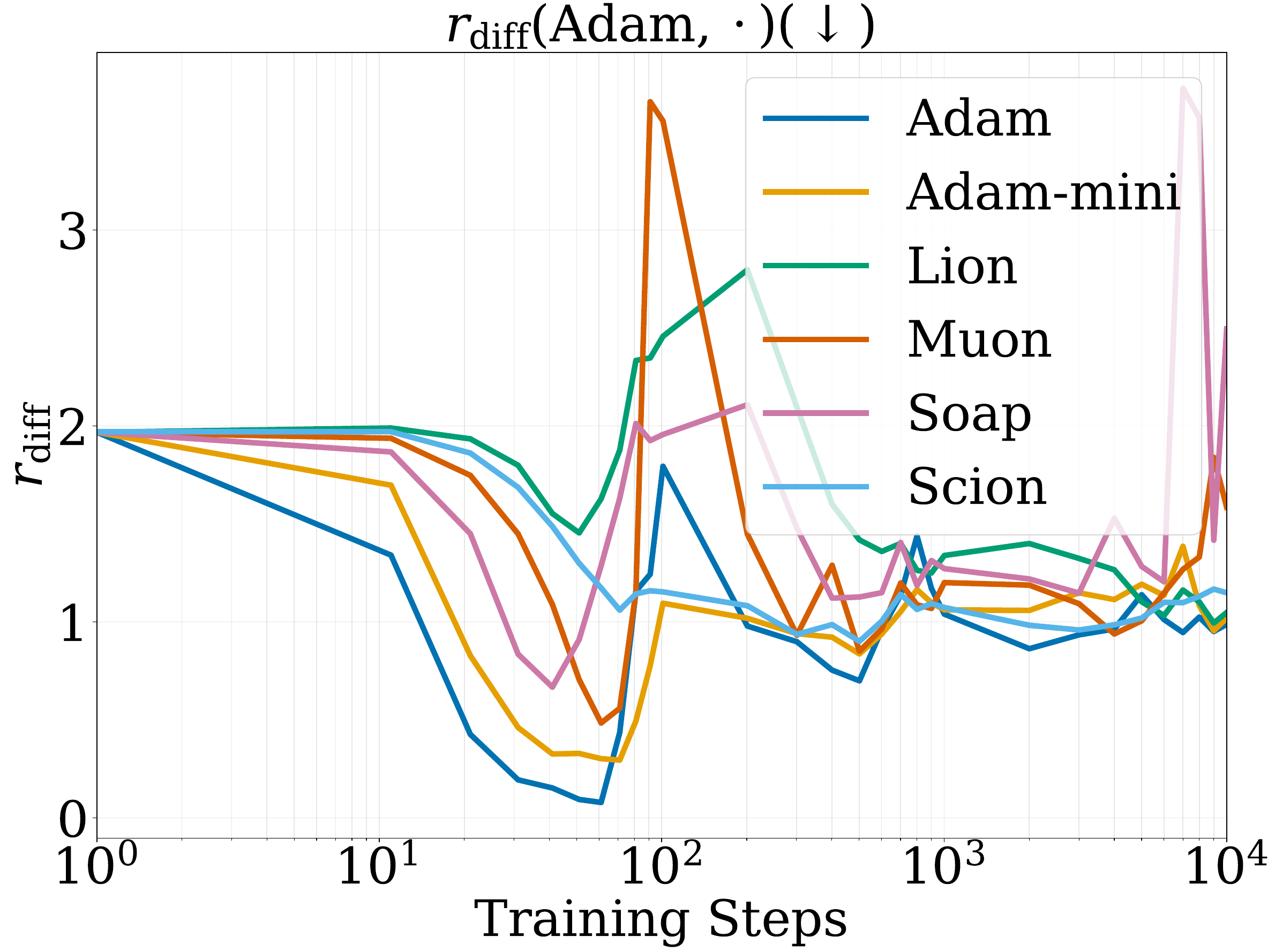}}

    \subfigure[Values of $\cccerr$ with $D_{\val}$ from C4.]{ \includegraphics[width=0.23\textwidth]{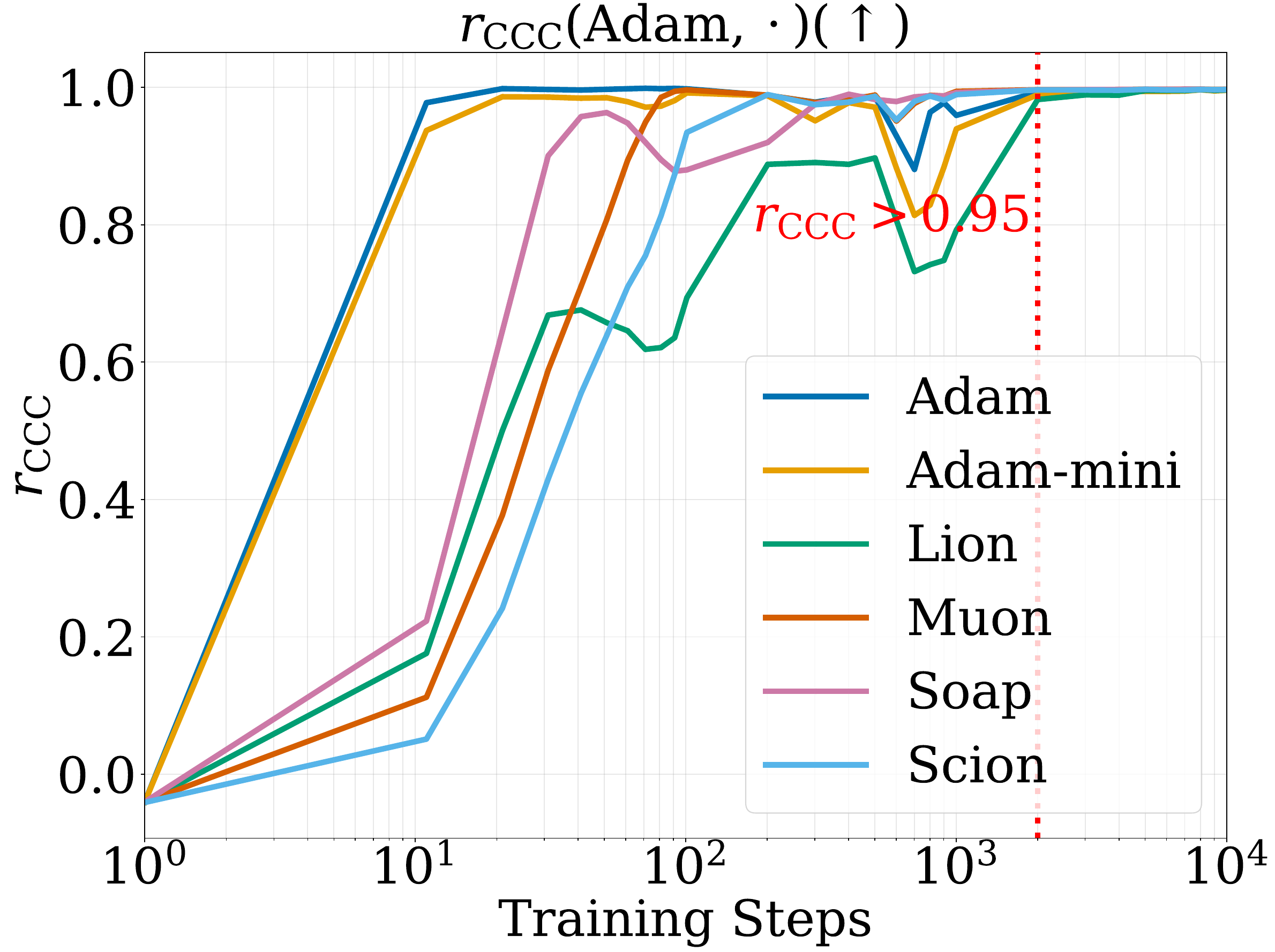}}
    \subfigure[Values of $\cccerr$ with $D_{\val}$ from ArXiv.]{ \includegraphics[width=0.23\textwidth]{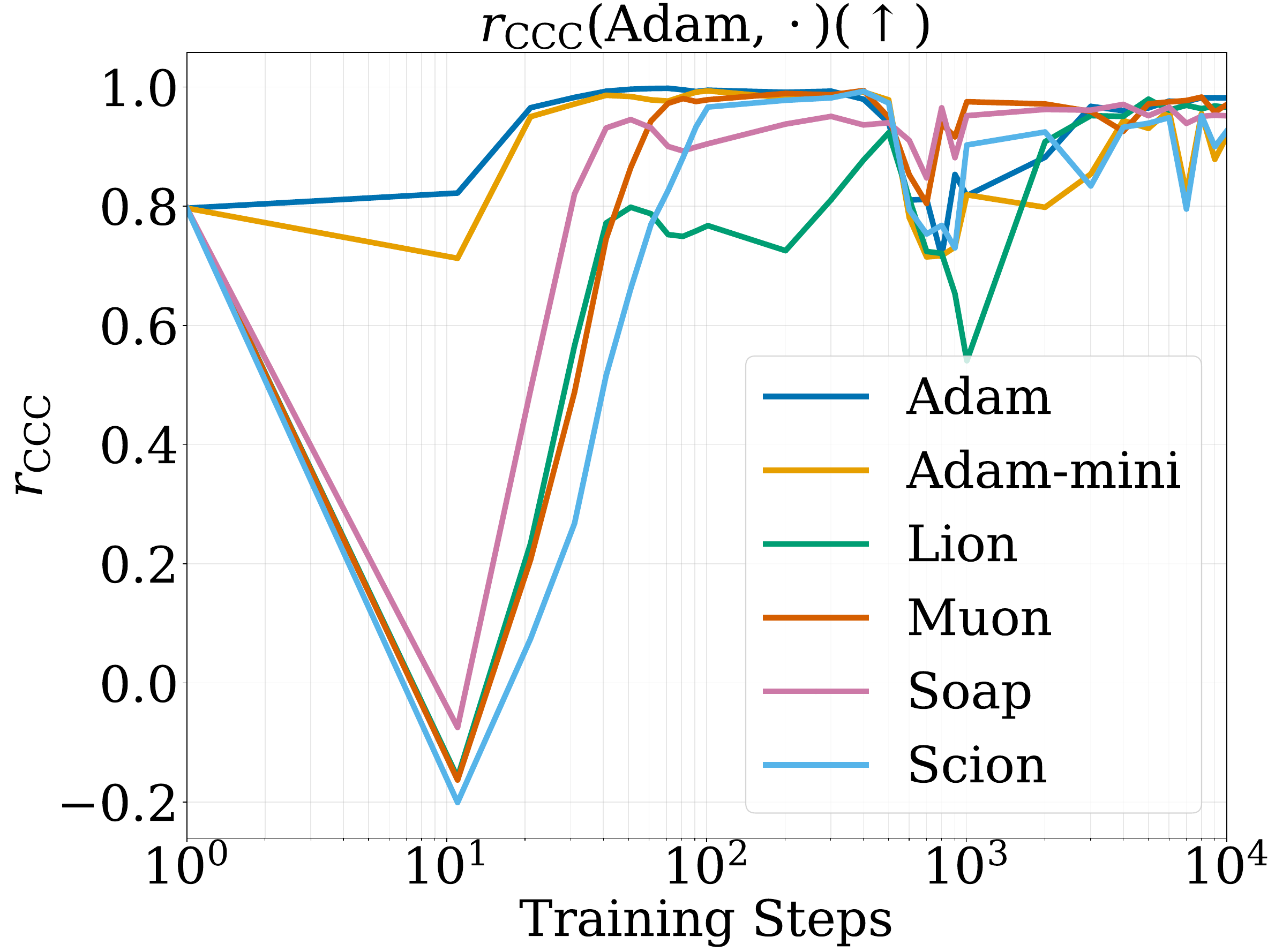}}
    \subfigure[Values of $\cccerr$ with $D_{\val}$ from CodeParrot.]{ \includegraphics[width=0.23\textwidth]{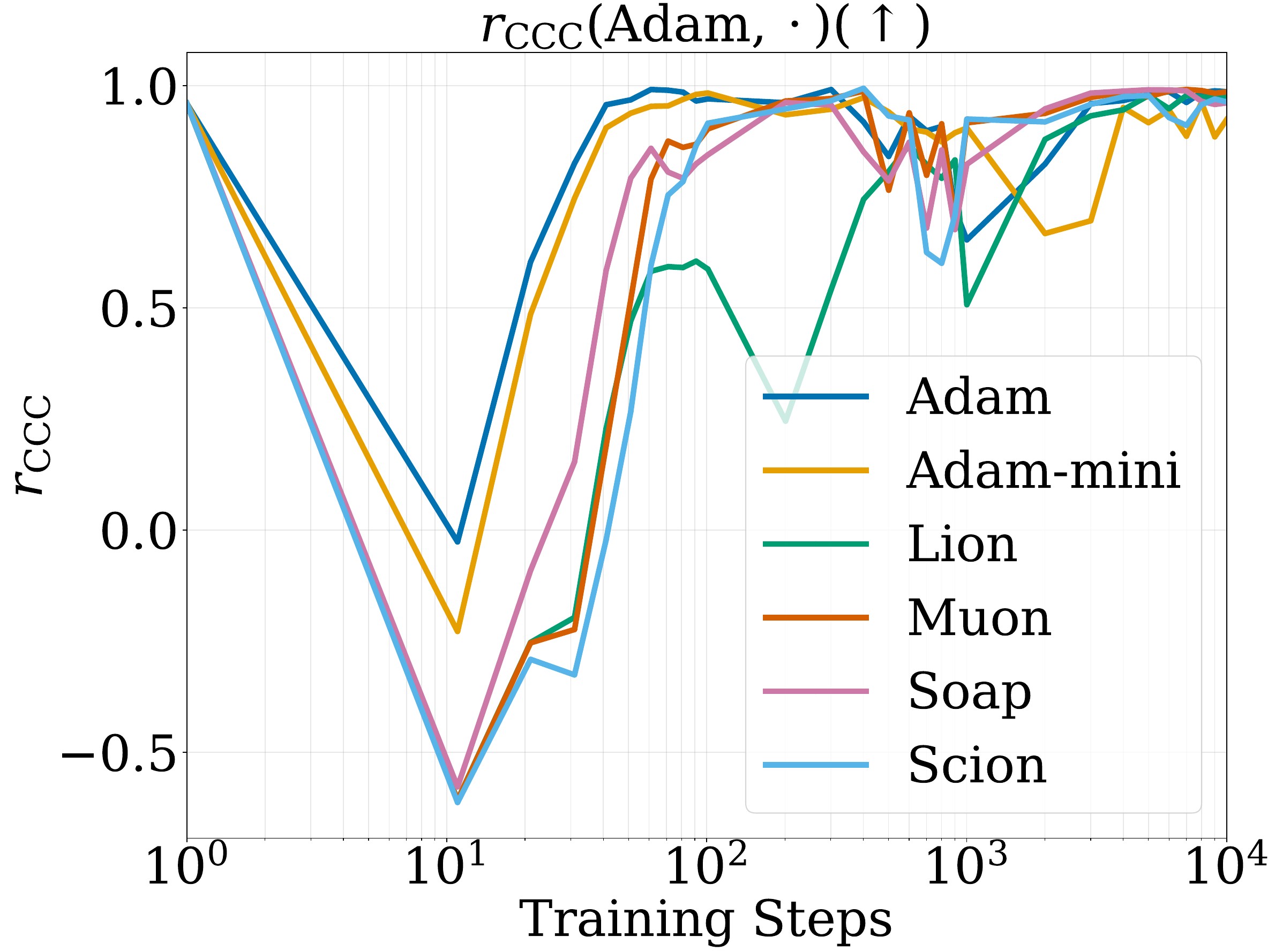}}
    \subfigure[Values of $\cccerr$ with $D_{\val}$ from Random.]{ \includegraphics[width=0.23\textwidth]{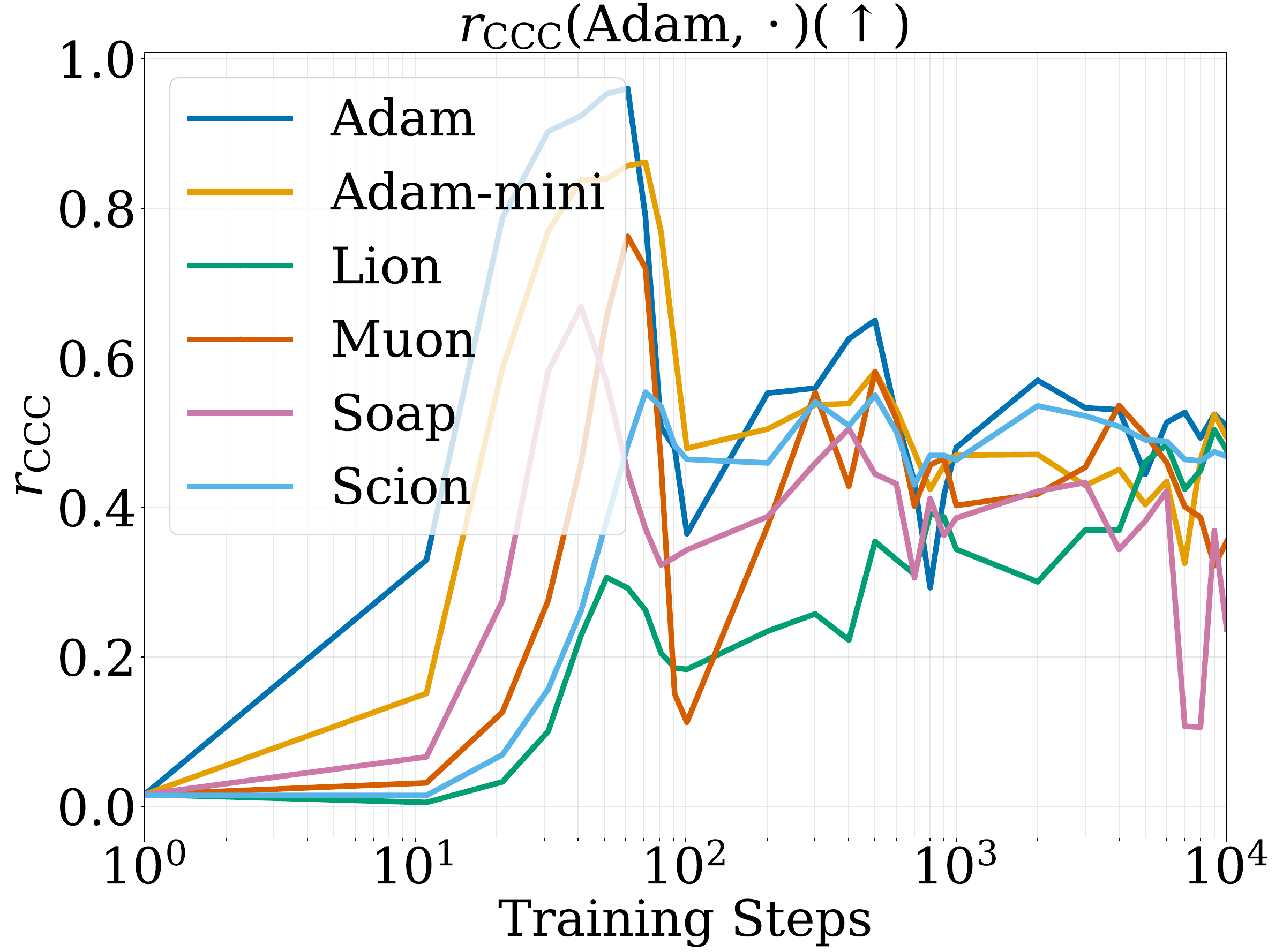}}
    \caption{ \ac{rgi} across optimizers and gated/non-gated models evaluated on different validation datasets. The panels plot $\differr$ and $\cccerr$ evaluated on C4, ArXiv, CodeParrot, and Random. These results show that \ac{rgi} can transfer to related validation distributions, but becomes weaker on more distributionally distinct validation datasets.} 
    % % \vspace{-1.0em}
    % \label{fig:val}
\end{figure}

% \begin{figure}[H]
%     \centering
%     \subfigure[Values of $\cccerr$ with $D_{\val}$ from C4.]{ \includegraphics[width=0.23\textwidth]{figures/CCC_centered_c4_Adam_ood.pdf}}
%     \subfigure[Values of $\cccerr$ with $D_{\val}$ from ArXiv.]{ \includegraphics[width=0.23\textwidth]{figures/CCC_centered_arxiv_Adam_ood.pdf}}
%     \subfigure[Values of $\cccerr$ with $D_{\val}$ from CodeParrot.]{ \includegraphics[width=0.23\textwidth]{figures/CCC_centered_codeparrot_Adam_ood.pdf}}
%     \subfigure[Values of $\cccerr$ with $D_{\val}$ from Random.]{ \includegraphics[width=0.23\textwidth]{figures/CCC_centered_random_Adam_ood.pdf}}
%     \caption{validation dataset ablation for $\cccerr$. Panels (a)--(d) plot $\cccerr$ evaluated on C4, ArXiv, CodeParrot, and Random, respectively. \ac{rgi} transfers best to the related C4 validation distribution and weakens on more distributionally distinct validation sets.} 
% \end{figure}

\subsection{Additional Results for Section~\ref{sec:relation}}

\subsubsection{Additional Results for Section~\ref{sec:ntk}}

\begin{figure}[H]
    \centering
    \subfigure[Values of $\relerr$ between parameters.]{ \includegraphics[width=0.35\textwidth]{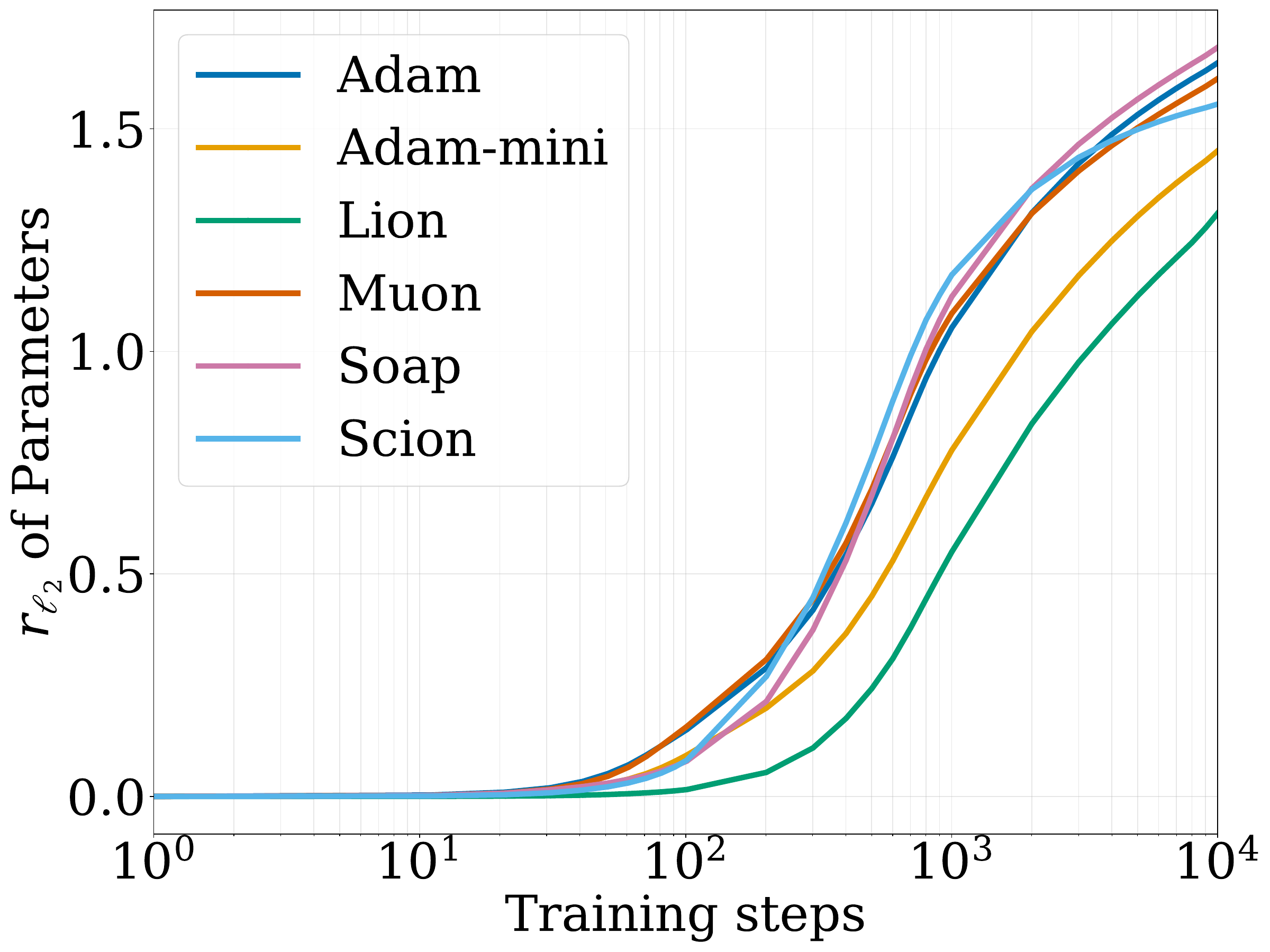}\label{fig:l2_self_p}}
    \subfigure[Values of $\relerr$ between hidden states.]{ \includegraphics[width=0.35\textwidth]{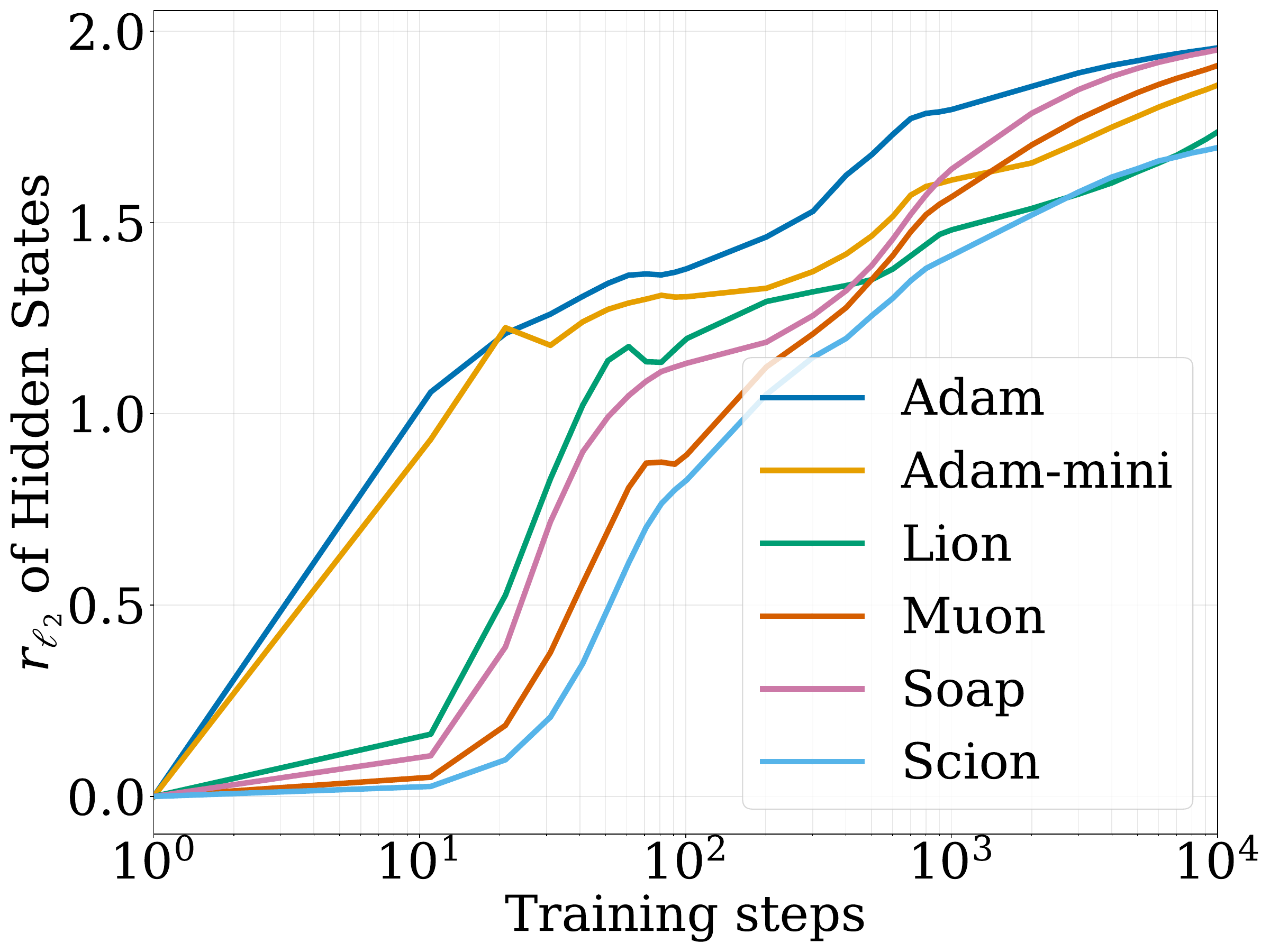}\label{fig:l2_self_h}}
    
    \caption{Movement from initialization in parameter and hidden-state space. Panels (a) and (b) plot $\relerr(\downarrow)$ between initialization and later checkpoints for parameters and hidden states, respectively. The large relative distances show substantial movement during training, providing evidence against an \ac{ntk}-style lazy-training explanation.} 
    \label{fig:self_ntk}
\end{figure}

\begin{figure}[H]
    \centering
    \subfigure[Values of $\relerr$ between parameters.]{ \includegraphics[width=0.35\textwidth]{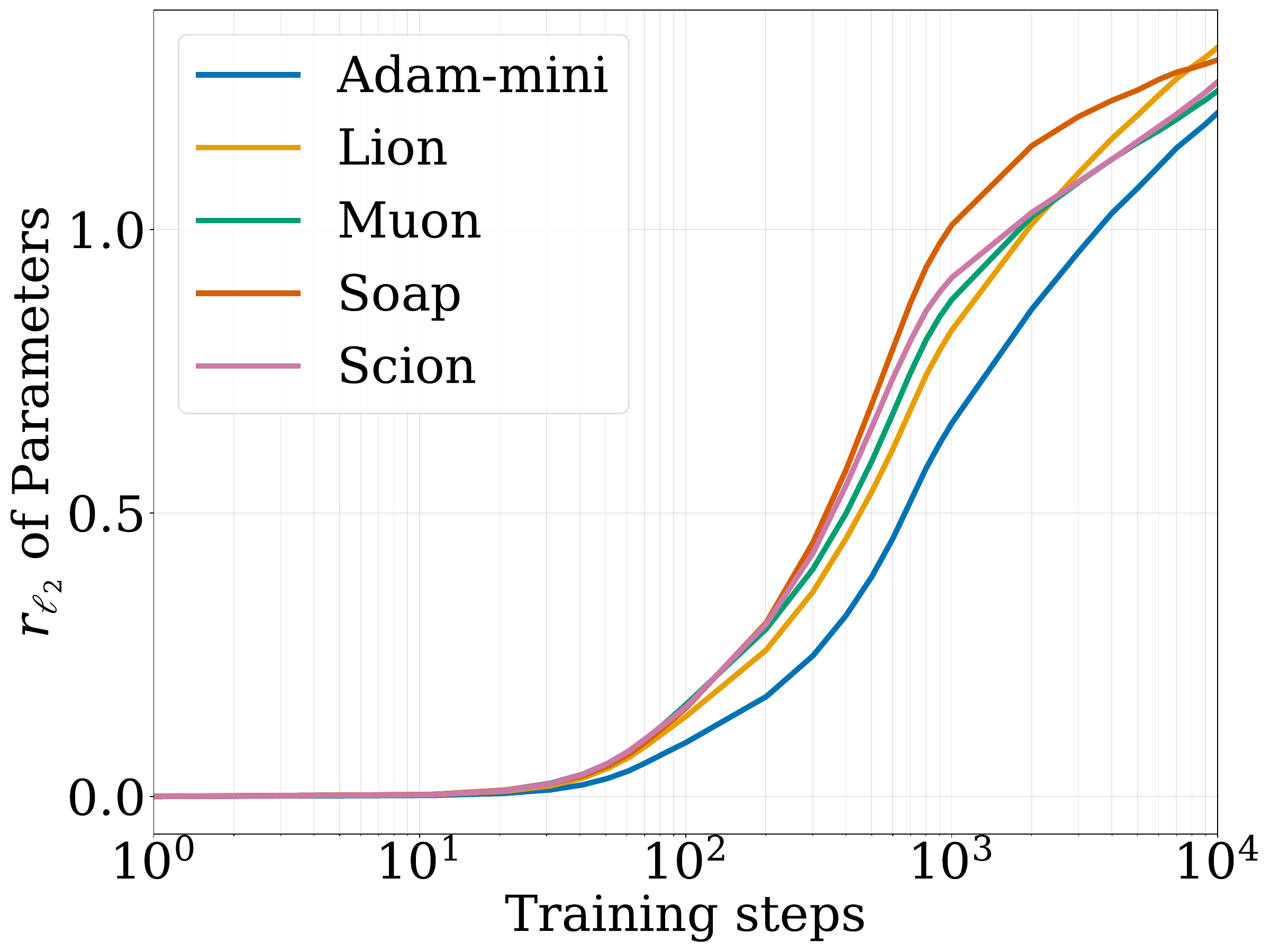}\label{fig:l2_cross_p}}
    \subfigure[Values of $\relerr$ between hidden states.]{ \includegraphics[width=0.35\textwidth]{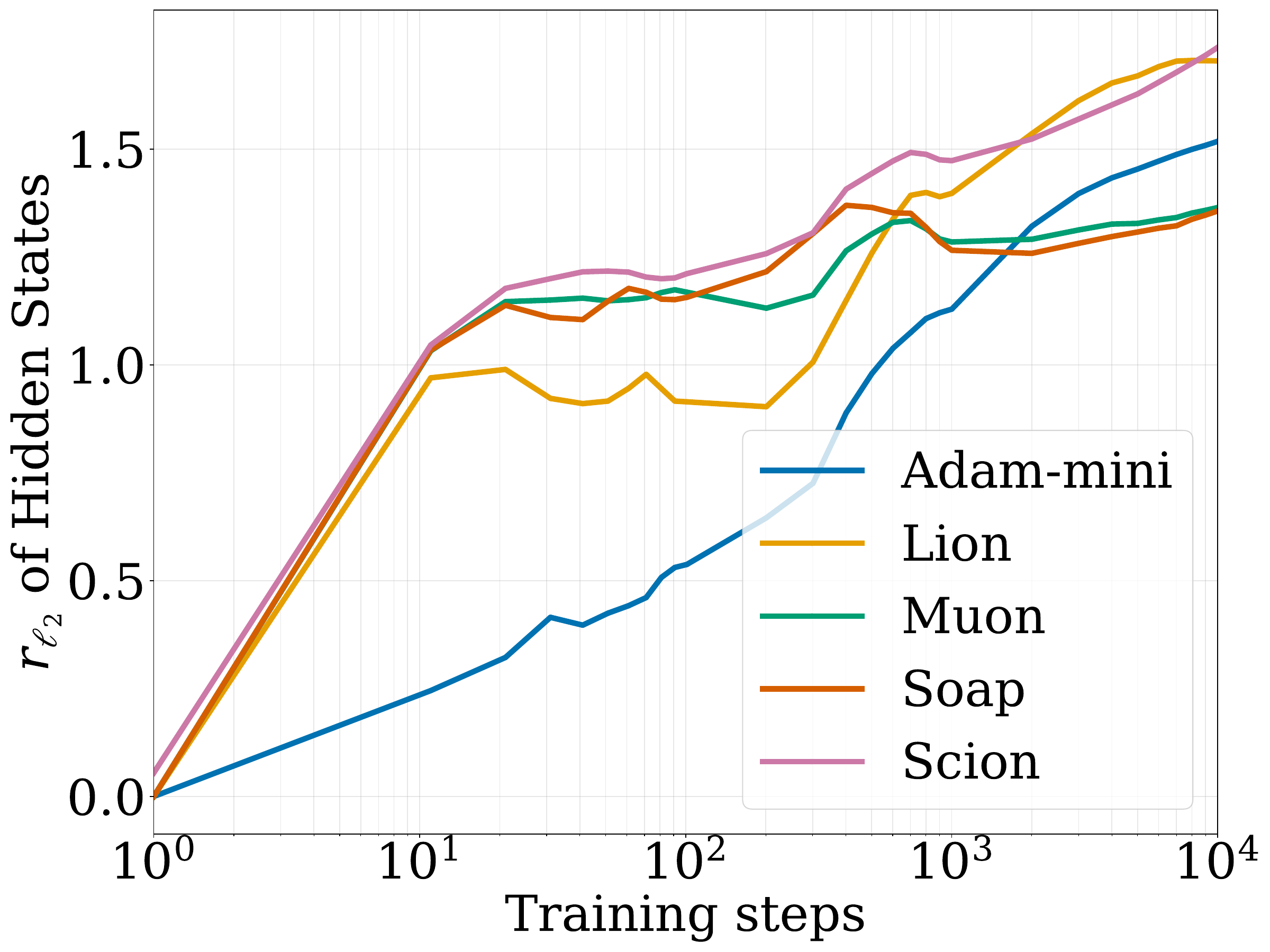}\label{fig:l2_cross_h}}
    \caption{Cross-optimizer distances in parameter and hidden-state space. Panels (a) and (b) plot $\relerr(\downarrow)$ between Adam and the other optimizers at matched training steps for parameters and hidden states, respectively. The large relative distances show that different optimizers need not share similar parameters or hidden states even when their validation-loss differences exhibit \ac{rgi}.} 
    \label{fig:cross_ntk}
\end{figure}

\subsubsection{Additional Results for Section~\ref{sec:mf}}
\begin{figure}[H]
    \centering
    \subfigure[Values of $\differr$ at width $576$.]{ \includegraphics[width=0.23\textwidth]{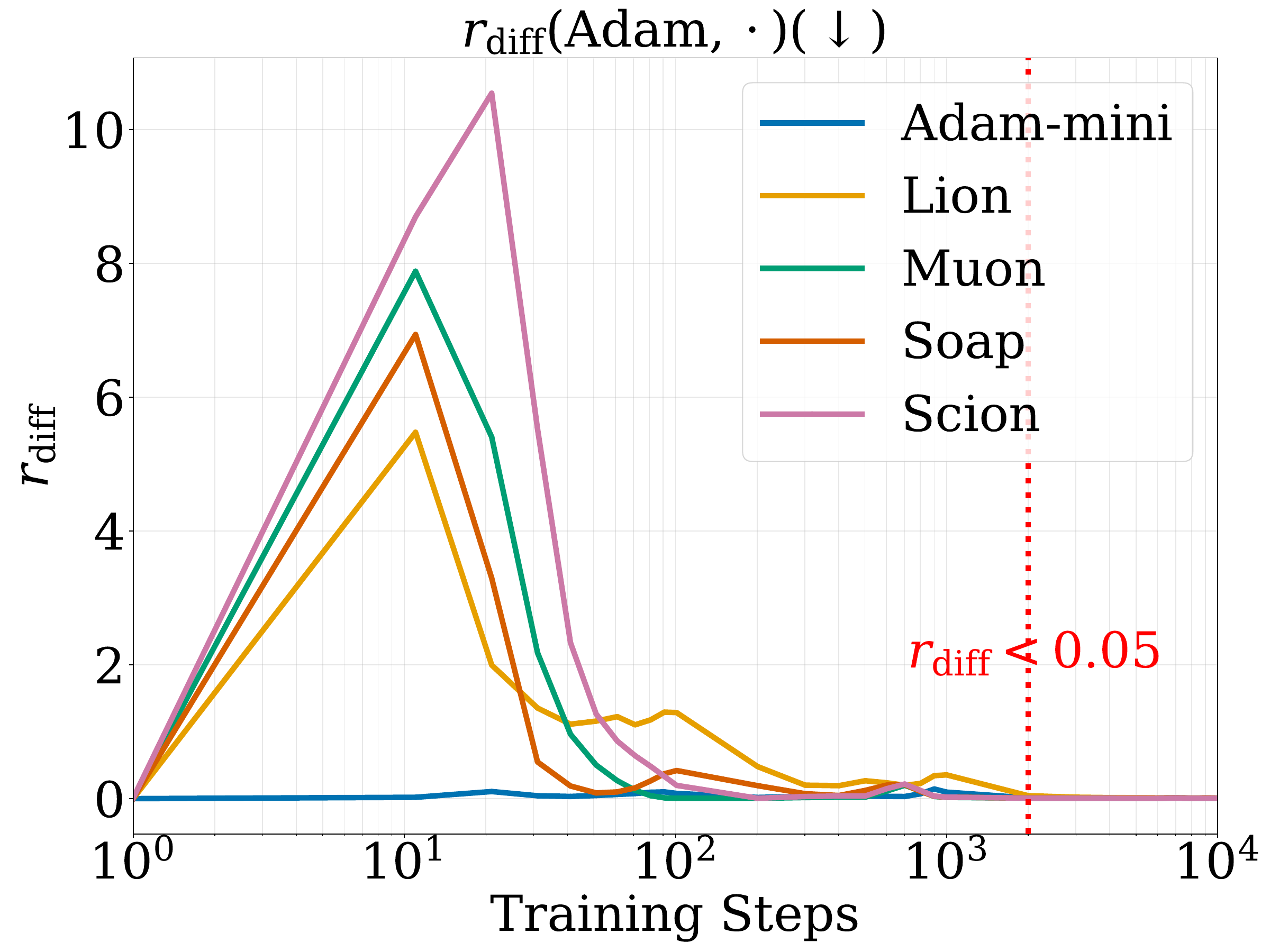}\label{fig:r_diff_576}}
    \subfigure[Values of $\differr$ at width $384$.]{ \includegraphics[width=0.23\textwidth]{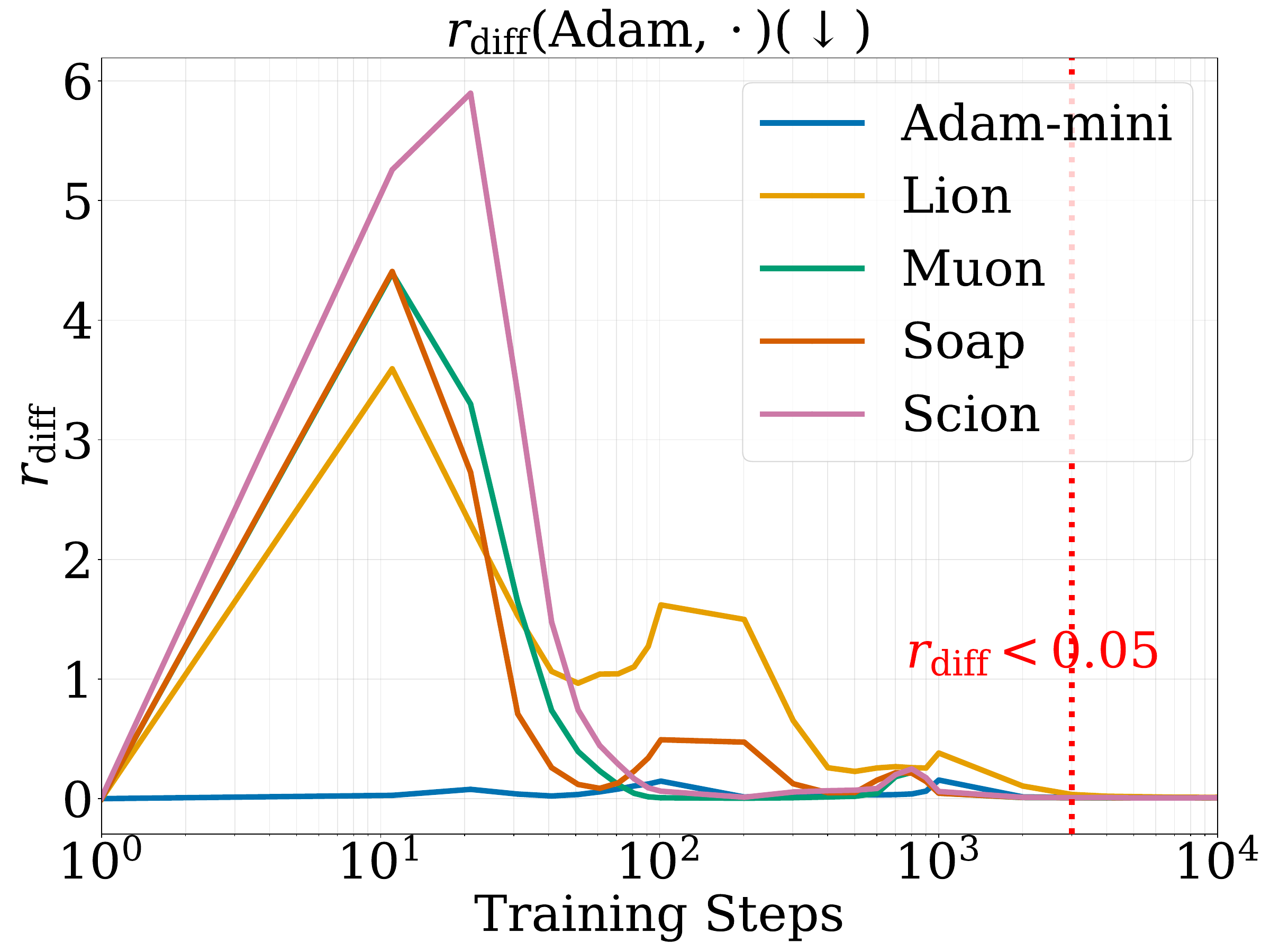}\label{fig:r_diff_384}}
    \subfigure[Values of $\differr$ at width $192$.]{ \includegraphics[width=0.23\textwidth]{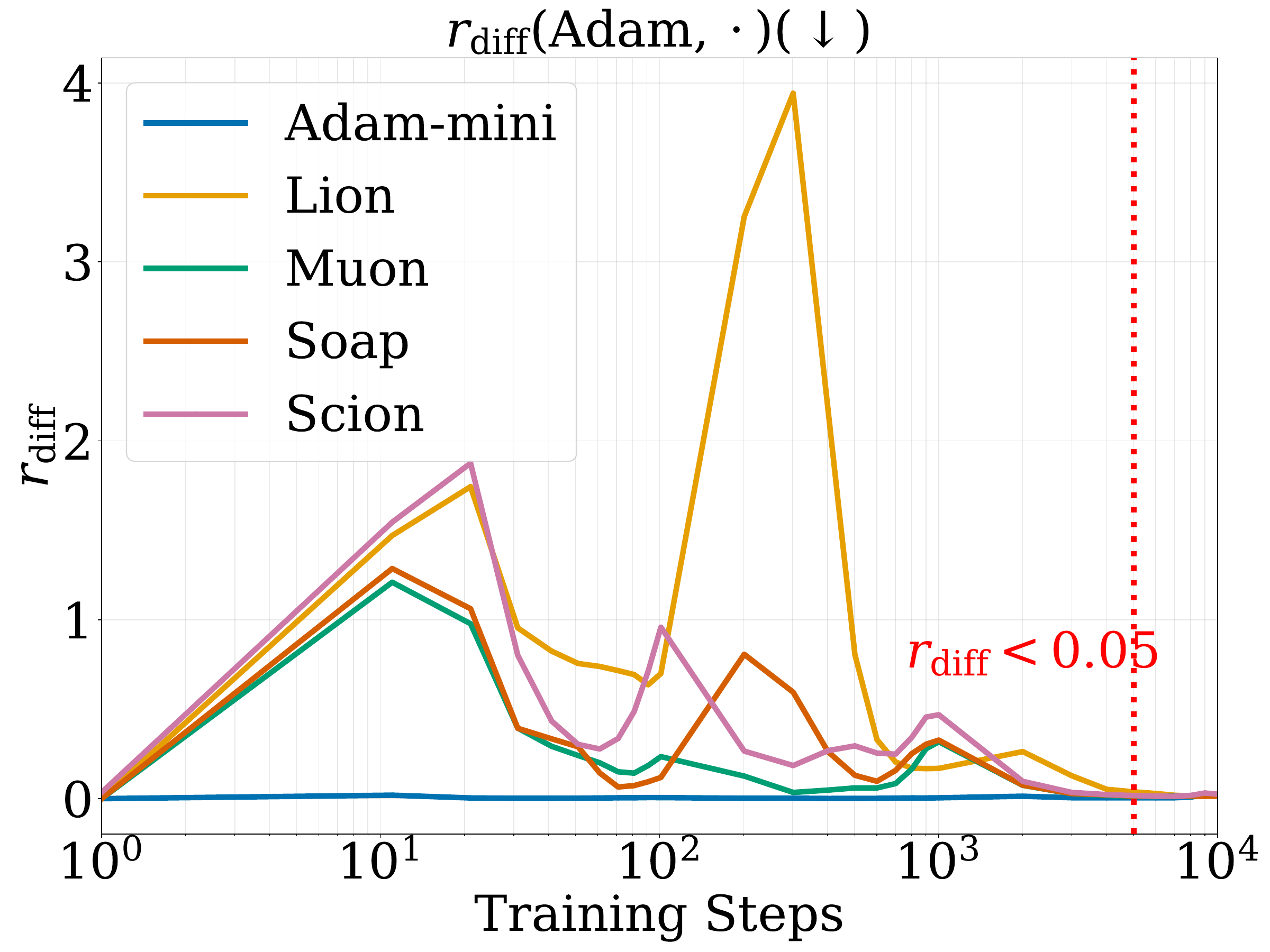}\label{fig:r_diff_192}}
    \caption{Model-width ablations for $\differr$. Panels (a)--(c) plot $\differr$ for model widths $576$, $384$, and $192$, respectively. \ac{rgi} holds across all tested widths, supporting the conclusion that the phenomenon is not restricted to the standard model width.} 
    \label{fig:mean_field_ccc}
\end{figure}

\section{Discussions of Alternative Explanations for \ac{rgi}}\label{app:alter}
In this section, we test and falsify several hypotheses that could potentially explain \ac{rgi}. Specifically, we consider two hypotheses:
\begin{itemize}[leftmargin=2em]
\item \textbf{Intrinsic data-difficulty hypothesis.} Suppose the loss decomposes as $L(\theta,d)=H(d)+\varepsilon(\theta,d)$, where $\varepsilon(\theta,d)$ is much smaller than $H(d)$. Then the relative loss structure is dominated by the intrinsic difficulty $H(d)$ of the data and depends only weakly on the learned parameters $\theta$.
\item \textbf{Different convergence-speed hypothesis.} \ac{rgi} emerges because different optimizers generate the same learning trajectory but with different speed.
\end{itemize}

We first consider the intrinsic data-difficulty hypothesis. For a validation example $d=(x_{\mathrm{prefix}},x_{\mathrm{pred}})$, our work focuses on the token-level loss as $L(\theta,d)=-\log P_{\theta}(x_{\mathrm{pred}}\mid x_{\mathrm{prefix}}).$ Importantly, RGI does not average over $x_{\mathrm{pred}}$; it holds pointwise for each $d\in\mathcal{D}_{\mathrm{val}}$. 

Consider any plausible loss decomposition $L(\theta,d)=H(d)+\varepsilon(\theta,d)$ such that $H(d),\varepsilon(\theta,d)\geq 0$ for every $d$ and $\theta$, where $H(d)$ is independent of the model. We have that
\begin{align*}
0\leq H(d)
=\inf_{\theta}\bigl[L(\theta,d)-\varepsilon(\theta,d)\bigr]
\leq \inf_{\theta}L(\theta,d).
\end{align*}
Therefore, the minimum achievable loss on $d$ provides an upper bound on any model-independent component $H(d)$. To estimate this bound, we directly optimize the LLM parameters $\theta$ on each data $d$ and define the achieved loss as $\hat{H}(d)=L(\hat{\theta}_d,d)$, where $\hat{\theta}_{d}$ denotes the optimized parameters. Since $\inf_{\theta}L(\theta,d)\leq \hat{H}(d)$, we obtain $H(d)\leq \hat{H}(d)$. We then report the average values of $\hat{H}(d)$ and $L(\theta_t,d)$, as well as their ratio, at $t=10000$, when $L(\theta_t,d)$ attains its minimum along the training trajectory. Table~\ref{tab:optimizer_metrics} shows that the model-independent component $H(d)$ is negligible, i.e., about $0.4\%$, relative to the smallest observed loss $L(\theta,d)$ along the trajectory. Thus, this falsifies the intrinsic data-difficulty hypothesis.

\begin{table}[H]
\centering
\caption{Comparison of $H(d)$ and $L(\theta_t,d)$ across optimizers. The results show that $H(d)$ is negligible relative to $L(\theta_t,d)$.}
\label{tab:optimizer_metrics}
\begin{tabular}{lcccccc}
\toprule
Metric & Adam-mini & Adam & Lion & Muon & Scion & Soap \\
\midrule
Average $\hat{H}(d)$ 
    & 0.001 & 0.001 & 0.001 & 0.001 & 0.001 & 0.001 \\
Average $L(\theta_{10000}, d)$ 
    & 3.500 & 3.457 & 3.554 & 3.409 & 3.459 & 3.403 \\
Average $\hat{H}(d)/L(\theta_{10000}, d)$ 
    & 0.003 & 0.004 & 0.003 & 0.004 & 0.004 & 0.004 \\
\bottomrule
\end{tabular}
\end{table}

We then consider the different convergence-speed hypothesis. We falsify this hypothesis from two perspectives. 

First, we view this from the representation space. If the convergence-speed explanation were correct, models trained with different optimizers should reach similar parameters when matched at the same validation-loss level. 
\begin{table}[H]
\centering
\caption{Parameter cosine similarity across optimizers with aligned validation loss.}
\label{tab:param_cosine_similarity}
\begin{tabular}{ccccccccc}
\toprule
Adam step & Validation loss & Adam-mini & Lion & Muon & Scion & Shampoo & SOAP \\
\midrule
0    & 11.023 & 1.000 & 1.000 & 1.000 & 1.000 & 1.000 & 1.000 \\
1000 & 4.261  & 0.773 & 0.753 & 0.641 & 0.612 & 0.396 & 0.559 \\
2000 & 3.815  & 0.658 & 0.604 & 0.475 & 0.469 & 0.186 & 0.400 \\
3000 & 3.654  & 0.580 & 0.520 & 0.425 & 0.411 & 0.150 & 0.303 \\
4000 & 3.588  & 0.526 & 0.465 & 0.377 & 0.383 & 0.130 & 0.257 \\
\bottomrule
\end{tabular}
\end{table}
Table~\ref{tab:param_cosine_similarity} compares models trained with different optimizers after aligning them to the validation loss achieved by Adam at each training step. We report results only for validation-loss levels greater than or equal to $3.55$, because some optimizers do not achieve a loss below $3.55$ during training. The results show that the similarity between validation-loss-aligned models decreases as training progresses. This finding provides evidence against the convergence-speed explanation.

Second, we view the loss difference between optimizers. Table~\ref{tab:loss_difference_adam} reports the average loss difference between each optimizer $A$ and Adam, defined as $C_{A,\mathrm{Adam}}(t)=L_A(t)-L_{\mathrm{Adam}}(t)$. We observe that $C_{A,\mathrm{Adam}}(t)$ at $t=2000$, when \ac{rgi} emerges, differs substantially from its value at convergence, i.e., at $t=10000$. This finding provides evidence against the convergence-speed explanation.
\begin{table}[H]
\centering
\caption{Loss difference relative to Adam, where
$C_{A,\mathrm{Adam}}(t)=L_A(t)-L_{\mathrm{Adam}}(t)$.}
\label{tab:loss_difference_adam}
\begin{tabular}{rrrrrr}
\toprule
Step & Adam-mini & Lion & Muon & SOAP & Scion \\
\midrule
0     & 0.000 & 0.000 & 0.000  & 0.000  & 0.000 \\
1000  & 0.321 & 0.738 & -0.275 & -0.341 & -0.226 \\
\textbf{2000}  & \textbf{0.129} & \textbf{0.276} & \textbf{-0.138} & \textbf{-0.164} & \textbf{-0.085} \\
3000  & 0.092 & 0.181 & -0.099 & -0.117 & -0.046 \\
4000  & 0.073 & 0.130 & -0.089 & -0.100 & -0.033 \\
5000  & 0.067 & 0.107 & -0.072 & -0.083 & -0.018 \\
6000  & 0.056 & 0.089 & -0.064 & -0.075 & -0.010 \\
7000  & 0.044 & 0.073 & -0.062 & -0.070 & -0.008 \\
8000  & 0.041 & 0.078 & -0.061 & -0.065 & -0.007 \\
9000  & 0.041 & 0.090 & -0.048 & -0.053 & -0.007 \\
\textbf{10000} & \textbf{0.041} & \textbf{0.081} & \textbf{-0.040} & \textbf{-0.041} & \textbf{-0.007} \\
\bottomrule
\end{tabular}
\end{table}

\section{Empirical Verification of Assumptions~\ref{ass:H} and \ref{ass:x}}\label{app:verify}
 % We also conduct experiments to empirically verify Assumptions~\ref{ass:H} and \ref{ass:x}. 

To empirically assess this Assumptions~\ref{ass:H}, we estimate the target transformation at each layer as the best linear mapping between the hidden states of two adjacent layers after training. We compute $\widehat{H}^{l}=\arg\min_A\sum_i\|f _i^{l+1}-Af _i^l\|^2$, where $f_i^l$ denotes the normalized hidden state of token $i$ at layer $l$. We then calculate the angle $\widehat{a}_{i,j}^l$ between columns $\widehat{H}_{:,i}^{,l}$ and $\widehat{H}_{:,j}^{,l}$. The average angle at layer $l$ is $\widehat{a}^{l}=\sum_{i\neq j}\widehat{a}_{i,j}^l/(r^2-r)$, and the layer-averaged angle is $\widehat{a}=\sum_{l\in[L]}\widehat{a}^{l}/L$. An average angle close to $90^\circ$ will provide empirical evidence that the estimated transformation approximately satisfies Assumption~\ref{ass:H}. Figure~\ref{fig:assH} verifies this by showing that $\hat{a}$ for various optimizers is close to $90^\circ$ at the end of training.

\begin{figure}[H]
    \centering
    \subfigure[Values of $\hat{a}$ for models trained with different optimizers after training.]{ \includegraphics[width=0.44\textwidth]{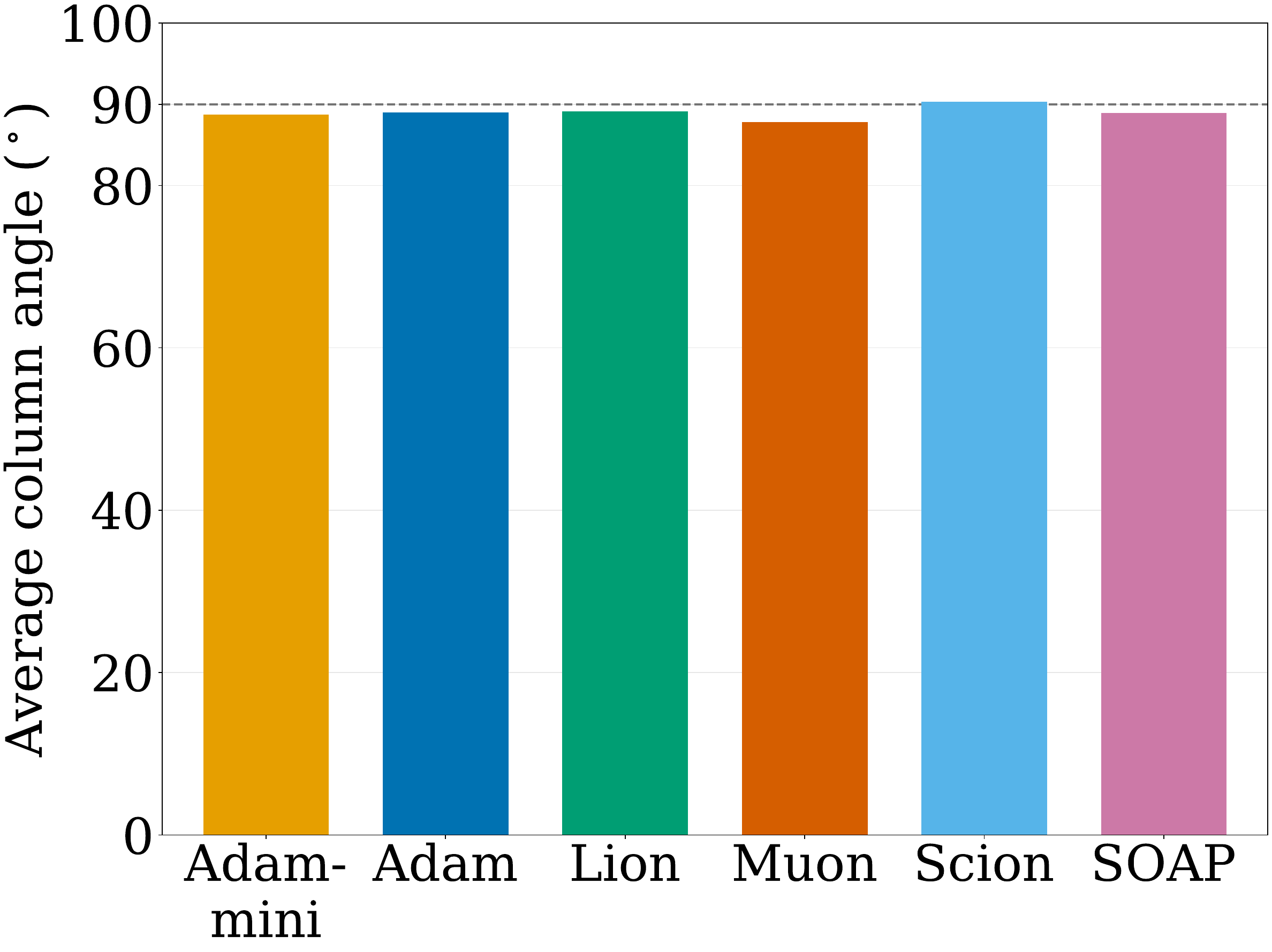}\label{fig:assH}}
    \subfigure[Values of $\hat{b}$ for different optimizers at different training steps.]{ \includegraphics[width=0.44\textwidth]{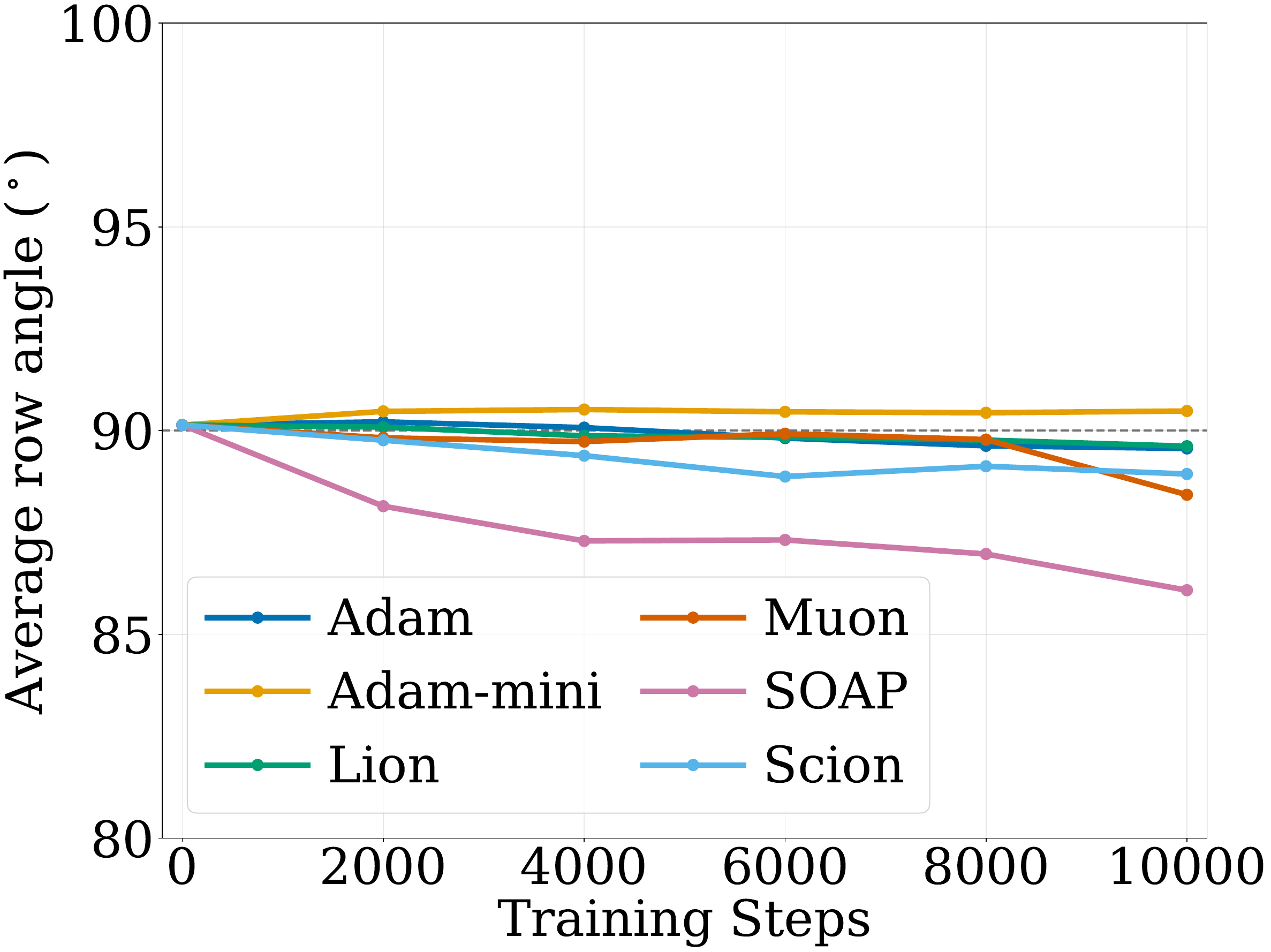}\label{fig:assx}}
    \caption{Empirical assessment of Assumptions~\ref{ass:H} and~\ref{ass:x}. Panel (a) shows that the columns of the targeted linear mappings are nearly orthogonal, while panel (b) shows that the hidden states of different tokens remain nearly orthogonal throughout training across all optimizers. These results provide empirical support for Assumptions~\ref{ass:H} and~\ref{ass:x} in practical \ac{llm} training.
    % \shuche{Panel (a) is columns. But in the figure, it is written as average row angle.}
} 
    \label{fig:assume_verify}
\end{figure}

Assumption~\ref{ass:x} essentially requires the inputs to each layer to be isotropic within the subspace they span. Our proof continues to hold when $\sum_{i=1}^{n} x_i x_i^{\top}/n=Q\operatorname{diag}(\sigma_x^2 I_r,0)Q^{\top}$,
where $r$ is the dimension of the subspace spanned by the inputs and $Q\in\bbR^{d\times d}$ is an orthogonal matrix. A sufficient condition is that, up to an orthogonal change of coordinates, the input vectors are mutually orthogonal and have equal norms.
\begin{proposition}
Suppose $x_1,\ldots,x_n\in\mathbb{R}^{d}$ are nonzero, mutually orthogonal vectors with the same norm. Then
$\sum_{i=1}^{n}x_ix_i^{\top}/n
=Q\operatorname{diag}(\sigma_x^2 I_r,0)Q^{\top}$
for some $\sigma_x>0$ and orthogonal matrix $Q\in\bbR^{d\times d}$, where $r=n$.
\end{proposition}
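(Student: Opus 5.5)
The approach is to compute the Gram matrix directly and then diagonalize it by extending the $x_i$ to an orthonormal basis. Let $c=\|x_i\|_2>0$ be the common norm and set $q_i=x_i/c$ for $i\in[n]$. By mutual orthogonality, $\{q_i\}_{i=1}^n$ is an orthonormal family in $\bbR^d$. In particular it is linearly independent, so $n\le d$.

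First, we extend $\{q_i\}$ to an orthonormal basis $\{q_1,\dots,q_d\}$ of $\bbR^d$ using Gram--Schmidt on any completion. Let $Q=[q_1,\dots,q_d]\in\bbR^{d\times d}$, which is orthogonal.

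Second, we rewrite the empirical second-moment matrix:
\begin{align*}
\frac1n\sum_{i=1}^n x_ix_i^\top=\frac{c^2}{n}\sum_{i=1}^n q_iq_i^\top = Q\operatorname{diag}\Big(\frac{c^2}{n}I_n,0_{d-n}\Big)Q^\top .
\end{align*}
The last equality holds because $Q\operatorname{diag}(\lambda_1,\dots,\lambda_d)Q^\top=\sum_j\lambda_j q_jq_j^\top$, with $\lambda_j=c^2/n$ for $j\le n$ and $\lambda_j=0$ otherwise.

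Third, setting $\sigma_x=c/\sqrt n>0$ and $r=n$ gives exactly the claimed form $Q\operatorname{diag}(\sigma_x^2I_r,0)Q^\top$.

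There is no genuine obstacle. The only points needing care are that nonzero orthogonal vectors force $n\le d$, so the block sizes are consistent, and the degenerate case $n=d$, where the zero block is empty and the statement reduces to $\sigma_x^2 I_d$.
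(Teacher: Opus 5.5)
Your proof is correct and complete. The paper states this proposition in Appendix~\ref{app:verify} without giving a proof, and your argument is the natural one it implicitly relies on: normalize the $x_i$, extend them to an orthonormal basis $Q$, and read off $\sum_i x_ix_i^\top/n=Q\operatorname{diag}(\sigma_x^2 I_n,0)Q^\top$ with $\sigma_x=c/\sqrt{n}$.
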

Since the input to each layer is normalized by RMSNorm, we focus on empirically assessing the directional orthogonality of the hidden states. Let $f_i^l$ denote the normalized hidden state of token $i$ at layer $l$. We measure the pairwise directional similarity by
$\widehat b_{i,j}^l$, which is the angle between $f_i^l$ and $f_j^l$, and average $\widehat b_{i,j}^l$ over distinct token pairs and layers to obtain the overall orthogonality measure $\widehat b$. An average angle close to $90^\circ$ indicates greater directional orthogonality. Figure~\ref{fig:assx} confirms this along the entire training courses of various optimizers.

\section{Proof of Proposition~\ref{prop:equivalence}}\label{app:prop_equivalence}
We first prove that \ac{rgi} is sufficient for constant generalization gap. From Definition~\ref{def:rgi}, we have that
\begin{align}
    \!\!L\big(\theta_t^{\frakT},d\big)\!-\!L\big(\theta_t^{\frakT^{\prime}},d\big) \!=\!L\big(\theta_t^{\frakT},d^*\big)\!-\!L\big(\theta_t^{\frakT^{\prime}},d^*\big),\label{eq:1}
\end{align}
where we fix $d^{\prime}$ as any $d^*$. Since the optimizers $\sfA,\sfA^\prime$ are deterministic, the right-hand side of Eqn.~\eqref{eq:1} is a deterministic function of $\theta_0,\theta_0^{\prime},D_{1:t},D_{1:t}^{\prime}$. We write it as $C(\frakT,\frakT^{\prime},t)$. Thus, we have that 
\begin{align*}
    L\big(\theta_t^{\frakT},d\big)-L\big(\theta_t^{\frakT^{\prime}},d\big) =C(\frakT,\frakT^{\prime},t).
\end{align*}

Then we prove that the constant generalization gap is sufficient for \ac{rgi}. This is direct, since the following holds
\begin{align*}
    L\big(\theta_t^{\frakT},d\big)-L\big(\theta_t^{\frakT^{\prime}},d\big) =C(\frakT,\frakT^{\prime},t)=L\big(\theta_t^{\frakT},d^{\prime}\big)-L\big(\theta_t^{\frakT^{\prime}},d^{\prime}\big).
\end{align*}
Thus, we conclude the proof of Proposition~\ref{prop:equivalence}.

% From Definition~\ref{def:rgi}, we have that
% \begin{align*}
%     L\big(\theta_t^{\frakT},d\big) &=L\big(\theta_t^{\frakT},d^{\prime}\big) + L\big(\theta_t^{\frakT^{\prime}},d\big)-L\big(\theta_t^{\frakT^{\prime}},d^{\prime}\big) \\
%     &= L\big(\theta_t^{\frakT},d^{\prime}\big) +J(\theta_0,D_{1:t}, d,d^{\prime}),
% \end{align*}
% where the function $J$ is defined as
% \begin{align*}
%     J(\theta_0,D_{1:t}, d,d^{\prime}) &= L\big(\theta_t^{\frakT^{\prime}},d\big)-L\big(\theta_t^{\frakT^{\prime}},d^{\prime}\big) \\
%     & =L\big(\theta_t^{\frakT},d\big)-L\big(\theta_t^{\frakT},d^{\prime}\big)
% \end{align*}
% The definition of $J$ does not depend on $\sfA$.

\section{Proof of Theorem~\ref{thm:quadratic_case_study}}\label{app:quadratic_case_study}

We prove Theorem~\ref{thm:quadratic_case_study} in three steps.
\begin{itemize}
    \item \textbf{Step 1}: Derive closed-form expressions for the optimizer trajectories.
    \item \textbf{Step 2}: Use these expressions to analyze the resulting losses.
    \item \textbf{Step 3}: Characterize the cosine similarity between the learned weights.
\end{itemize}

\textbf{Step 1: Derive closed-form expressions for the optimizer trajectories}

We start by calculating the gradients of $A$ and $B$. With Assumption~\ref{ass:x}, the loss can be simplified as follows.

\begin{align*}
    \frac1{2n}\sum_{i=1}^n\|(H-AB)x_i\|_2^2
    =\frac{1}{2}
    \operatorname{tr}\bigg[(H-AB)\bigg(\frac1n\sum_{i=1}^n x_ix_i^\top\bigg)(H-AB)^\top
    \bigg]=\frac{\sigma_x^2}{2}\|H-AB\|_F^2 .
\end{align*}
Therefore, the gradients of $A$ and $B$ are
\begin{align}
    \nabla_A L
    =\sigma_x^2(AB-H)B^\top,
    \qquad
    \nabla_B L
    =\sigma_x^2A^\top(AB-H).
    \label{eq:app_quadratic_gradients}
\end{align}
To analyze the trajectories of $\gd$, $\adam$, and $\muon$, we decompose the hidden dimension $s$ of parameters $A\in\bbR^{q\times s}, B\in\bbR^{s\times r}$ as $\mathbb R^s=\mathbb R^r\oplus\mathbb R^r\oplus
\mathbb R^{s-2r}$.
Accordingly, the parameters are decomposed as
\begin{align*}
    A=[A_1,A_2,A_3],
    \qquad
    B=
    \begin{bmatrix}
        B_1\\
        B_2\\
        B_3
    \end{bmatrix},
\end{align*}
where $A_1,A_2\in\mathbb R^{q\times r}$, $A_3\in\mathbb R^{q\times(s-2r)}$, $B_1,B_2\in\mathbb R^{r\times r}$, and $
B_3\in\mathbb R^{(s-2r)\times r}$. To define the trajectories of different optimizers, we introduce two auxiliary matrices 
$G,J\in\mathbb{R}^{q\times r}$ with orthonormal columns in the orthogonal complement of 
the target column space $\operatorname{col}(H)$. Specifically, we choose $G$ and $J$ such that
\begin{align*}
    G^\top G=J^\top J=I_r,
    \qquad
    H^\top G=H^\top J=0,
    \qquad
    G^\top J=0.
\end{align*}
Such matrices exist provided that $q-\operatorname{rank}(H)\ge 2r$. In addition, since 
$s\ge 3r$, we can choose a matrix $E\in\mathbb R^{(s-2r)\times r}$ with orthonormal columns:
\begin{align*}
    E^\top E=I_r .
\end{align*}

Then we present the following proposition about the trajectories of different optimizers.
\begin{proposition}\label{prop:param_structure}
For each optimizer $\sfA\in\{\gd,\adam,\muon\}$, initialize the parameters by
\begin{equation*}
    A_0^{\sfA}=0,
    \qquad
    B_0^{\sfA}
    =
    \begin{bmatrix}
        b_0^{\sfA}I_r\\
        0\\
        0
    \end{bmatrix},
    \qquad b_0^{\sfA}>0 .
\end{equation*}
For any positive step sizes $\eta_t,\gamma_t,\tau_t>0$ for $\gd$, $\adam$, and $\muon$,
respectively, under the specified tie-breaking rules, the iterates remain in the following
low-dimensional invariant subspaces. Namely, for every $t\ge 0$, there exist scalars
$a_t^{\gd},b_t^{\gd},a_t^{\adam},b_t^{\adam},\lambda_t^{\adam},
a_t^{\muon},b_t^{\muon},\lambda_t^{\muon}\in\mathbb R$ such that
\begin{align}
    A_t^{\gd}
    &=
    [a_t^{\gd}H,0,0],
    \qquad
    B_t^{\gd}
    =
    \begin{bmatrix}
        b_t^{\gd}I_r\\
        0\\
        0
    \end{bmatrix},\label{eq:gd}
    \\
    A_t^{\adam}
    &=
    [a_t^{\adam}H,-\lambda_t^{\adam}G,0],
    \qquad
    B_t^{\adam}
    =
    \begin{bmatrix}
        b_t^{\adam}I_r\\
        0\\
        0
    \end{bmatrix},\label{eq:adam}
    \\
    A_t^{\muon}
    &=
    [a_t^{\muon}H,0,-\lambda_t^{\muon}JE^\top],
    \qquad
    B_t^{\muon}
    =
    \begin{bmatrix}
        b_t^{\muon}I_r\\
        0\\
        0
    \end{bmatrix}.\label{eq:muon}
\end{align}

\end{proposition}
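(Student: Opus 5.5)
The plan is a joint induction on $t$ for all three optimizers. The inductive hypothesis is that $(A_t^{\sfA},B_t^{\sfA})$ has the block form in \eqref{eq:gd}, \eqref{eq:adam}, or \eqref{eq:muon}. The base case is immediate from the initialization, with $a_0=\lambda_0=0$ and $b_0=b_0^{\sfA}$. For the inductive step I would plug the ansatz into the gradient formulas \eqref{eq:app_quadratic_gradients}. I would then check that each optimizer's update, with a suitable choice on zero or degenerate gradients, stays in the same family, and read off scalar recursions for $(a_t,b_t,\lambda_t)$.

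\textbf{Common gradient computation.} Since $B_t$ is supported only on the first row block, $A_tB_t=a_tb_tH$ in all three cases. The extra blocks $-\lambda G$ and $-\lambda JE^\top$ are multiplied by zero blocks of $B$ and so never affect the predictor. Hence $A_tB_t-H=(a_tb_t-1)H$, and
\begin{align*}
\nabla_A L=\sigma_x^2(a_tb_t-1)b_t\,[H,0,0],\qquad
\nabla_B L=\sigma_x^2(a_tb_t-1)\,A_t^\top H .
\end{align*}
I would evaluate $A_t^\top H$ blockwise. The first block is $a_tH^\top H=a_t\mu^2 I_r$ by Assumption~\ref{ass:H}. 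The second block is $-\lambda_t G^\top H=0$ and the third is $-\lambda_t EJ^\top H=0$, using $H^\top G=H^\top J=0$. Therefore $\nabla_B L=\sigma_x^2\mu^2(a_tb_t-1)a_t\,[I_r;0;0]$. With these formulas the $\gd$ case closes at once, with $a_{t+1}=a_t-\eta\sigma_x^2(a_tb_t-1)b_t$ and $b_{t+1}=b_t-\eta\sigma_x^2\mu^2(a_tb_t-1)a_t$.

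\textbf{Adam and Muon, and the tie-breaking rules.} For $\adam$, the rule $\operatorname{sign}(H)=\kappa H$ gives $\sgn(\nabla_A L)=\sgn\big((a_tb_t-1)b_t\big)\kappa[H,0,0]$ on the nonzero entries. Likewise $\sgn(\nabla_B L)=\sgn\big((a_tb_t-1)a_t\big)[I_r;0;0]$, where the off-diagonal and lower blocks are exactly zero.

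The only freedom is on zero entries. I would specify the tie-breaking rule as follows: a zero gradient entry may receive any value in $[-1,1]$, the subdifferential of $|\cdot|$. The rule assigns $0$ to every zero entry except those in block $A_2$, where it assigns $c\,G$ with a fixed scalar $c>0$ such that $c\|G\|_{\max}\le 1$. This produces $\lambda_{t+1}^{\adam}=\lambda_t^{\adam}+\gamma c$.

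For $\muon$, Assumption~\ref{ass:H} makes $H/\mu$ have orthonormal columns. Hence $\spec(\nabla_A L)=\sgn\big((a_tb_t-1)b_t\big)[H/\mu,0,0]$ and $\spec(\nabla_B L)=\sgn\big((a_tb_t-1)a_t\big)[I_r;0;0]$. Here the tie-breaking is on the null singular directions. The rule allows $\spec(G')$ to be replaced by any element $UV^\top+W$ of the nuclear-norm subdifferential, with $U^\top W=0$, $WV=0$ and $\|W\|_2\le1$. I would choose $W=[0,0,JE^\top]$. It is orthogonal to the $[H,0,0]$ factor because $H^\top J=0$ and because it lives in the third column block. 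It has spectral norm $1$ because $J$ and $E$ both have orthonormal columns. This gives $\lambda_{t+1}^{\muon}=\lambda_t^{\muon}+\tau$. When a gradient vanishes entirely, for instance $\nabla_B L=0$ at $t=0$ since $a_0=0$, the same rules with sign $0$ keep the form.

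\textbf{Main obstacle.} The real difficulty is not algebraic but lies in formulating the tie-breaking rules so that they are legitimate, i.e.\ genuine sub-gradient or nuclear-norm-subdifferential selections. They must inject the invisible components $-\lambda G$ and $-\lambda JE^\top$ without ever contaminating the predictor or the $B$-gradient. The orthogonality relations $H^\top G=H^\top J=G^\top J=0$ and $E^\top E=I_r$ are exactly what is needed. I would also double-check that $\|G\|_{\max}$ is finite, so that a valid $c>0$ exists. The existence of $G,J$ requires $q-r\ge 2r$, which holds since $q\ge3r$.
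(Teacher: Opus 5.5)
Your proposal is correct and follows essentially the same route as the paper: the same gradient computation on the structured class, the same per-optimizer induction, and the same tie-breaking selections $[0,G,0]$ for $\adam$ and $[0,0,JE^\top]$ for $\muon$. Your nuclear-norm-subdifferential form $UV^\top+W$ is equivalent to the paper's set $\{Z: ZV=U,\ Z^\top U=V,\ \|Z\|_2\le 1\}$, and the only cosmetic difference is your scalar $c$, which the paper simply takes as $1$ because $G$ has orthonormal columns (so its entries already lie in $[-1,1]$), giving $\lambda_{t+1}^{\adam}=\lambda_t^{\adam}+\gamma_t$.
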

\begin{proof}
    To prove Proposition~\ref{prop:param_structure}, we introduce the structured family of parameters that will serve as an
invariant class for the optimizers' trajectories. 
\begin{align}
    A=[aH,\,-\lambda_GG,\,-\lambda_JJE^\top],
    \qquad
    B=
    \begin{bmatrix}
        bI_r\\
        0\\
        0
    \end{bmatrix},
    \label{eq:app_quadratic_structured_class}
\end{align}
where $a,b,\lambda_G,\lambda_J\in\bbR$ are real numbers. Then, with $c=ab$, we have that $AB-H=(c-1)H.$
Substituting this into Eqn.~\eqref{eq:app_quadratic_gradients} gives
\begin{align}
    \nabla_A L
    =
    \sigma_x^2(c-1)b\,[H,0,0].
    \label{eq:app_quadratic_grad_a}
\end{align}
For the gradient of $B$, the orthogonality conditions on $H,G,J$ derive that
\begin{align*}
    A^\top H
    =
    \begin{bmatrix}
        aH^\top H\\
        -\lambda_GG^\top H\\
        -\lambda_JEJ^\top H
    \end{bmatrix}
    =
    \begin{bmatrix}
        a\mu^2I_r\\
        0\\
        0
    \end{bmatrix},
\end{align*}
where the parameter $\mu$ results from Assumption~\ref{ass:H}. Thus, the gradient of $B$ is
\begin{align}
    \nabla_B L
    =
    \sigma_x^2(c-1)
    \begin{bmatrix}
        a\mu^2I_r\\
        0\\
        0
    \end{bmatrix}.
    \label{eq:app_quadratic_grad_b}
\end{align}
In the following, we will prove that $\gd$, $\adam$, and $\muon$ maintain structured weights defined in Proposition~\ref{prop:param_structure} with induction. We start with $\gd$. The initialization of $\gd$ satisfies Eqn.~\eqref{eq:gd} with $a_t^{\gd}=0$ and $b_t^{\gd}=b_0^{\gd}$ for $t=0$. We assume that Eqn.~\eqref{eq:gd} holds for $l=0,\cdots, t$. The parameters at step $t+1$ is updated as
\begin{align*}
    A_{t+1}^{\gd}
    =
    A_t^{\gd}
    -
    \eta_t\nabla_A\mathcal L(A_t^{\gd},B_t^{\gd}),
    \qquad
    B_{t+1}^{\gd}
    =
    B_t^{\gd}
    -
    \eta_t\nabla_B\mathcal L(A_t^{\gd},B_t^{\gd}).
\end{align*}

Denote $c_t^{\gd}=a_t^{\gd}b_t^{\gd}$, by Eqns.~\eqref{eq:app_quadratic_grad_a} and~\eqref{eq:app_quadratic_grad_b}, we have that $A_{t+1}^{\gd}$ and $B_{t+1}^{\gd}$ satisfy Eqn.~\eqref{eq:gd} with
\begin{align*}
    a_{t+1}^{\gd}
    &=
    a_t^{\gd}
    -
    \eta_t\sigma_x^2(c_t^{\gd}-1)b_t^{\gd},
    \\
    b_{t+1}^{\gd}
    &=
    b_t^{\gd}
    -
    \eta_t\sigma_x^2\mu^2(c_t^{\gd}-1)a_t^{\gd} .
\end{align*}
Thus, we prove the desired result for $\gd$. We then analyze $\adam$. Since the $\adam$ optimizer takes the element-wise sign of gradient, we consider the following tie-breaking rule
\begin{align*}
    \mathsf{Sgn}(M)
    =
    \left\{
    Z:\;
    Z_{ij}=\operatorname{sign}(M_{ij})\ \text{if }M_{ij}\neq0,
    \quad
    Z_{ij}\in[-1,1]\ \text{if }M_{ij}=0
    \right\},
\end{align*}
where $\operatorname{sign}(x)$ denotes the sign of $x$, with $\operatorname{sign}(0)=0$. Then the update of $\adam$ is 
\begin{align*}
    A_{t+1}^{\adam}
    =
    A_t^{\adam}-\gamma_t\Xi_t^A,
    \qquad
    B_{t+1}^{\adam}
    =
    B_t^{\adam}-\gamma_t\Xi_t^B,
\end{align*}
where $\Xi_t^A$ and $\Xi_t^B$ are plausible updates, i.e., 
\begin{align*}
    \Xi_t^A\in\mathsf{Sgn}(\nabla_A\mathcal L),
    \qquad
    \Xi_t^B\in\mathsf{Sgn}(\nabla_B\mathcal L).
\end{align*}
We then prove that this update rule satisfies Eqn.~\eqref{eq:adam}. We first note that the initialization satisfies Eqn.~\eqref{eq:adam} with $a_t^{\adam}=\lambda_t^{\adam}=0$, $b_t^{\adam}=b_0^{\adam}$ for $t=0$. Then we assume that Eqn.~\eqref{eq:adam} holds for $l=0,\cdots, t$. Eqs.~\eqref{eq:app_quadratic_grad_a} and~\eqref{eq:app_quadratic_grad_b}.
Using $\operatorname{sign}(H)=\kappa H$ imply that we can choose
\begin{align*}
    \Xi_t^A
    =
    \operatorname{sign}((c_t^{\adam}-1)b_t^{\adam})
    \kappa[H,0,0]
    +
    [0,G,0], \text{ and }\Xi_t^B
    =
    \operatorname{sign}((c_t^{\adam}-1)a_t^{\adam})
    \begin{bmatrix}
        I_r\\
        0\\
        0
    \end{bmatrix}.
\end{align*}
The update $[0,G,0]$ term in $\Xi_t^A$ is admissible because it is selected on a zero-gradient
block and the entries of $G$ lie in $[-1,1]$. Denote $c_t^{\adam}=a_t^{\adam}b_t^{\adam}$, we have that $A_{t+1}^{\adam}$ and $B_{t+1}^{\adam}$ satisfy Eqn.~\eqref{eq:adam} with
\begin{align}
    a_{t+1}^{\adam}
    &=
    a_t^{\adam}
    -
    \gamma_t\kappa
    \operatorname{sign}((c_t^{\adam}-1)b_t^{\adam}),\label{eq:adam_update_1}
    \\
    b_{t+1}^{\adam}
    &=
    b_t^{\adam}
    -
    \gamma_t
    \operatorname{sign}((c_t^{\adam}-1)a_t^{\adam}),\label{eq:adam_update_2}
    \\
    \lambda_{t+1}^{\adam}
    &=
    \lambda_t^{\adam}+\gamma_t .\label{eq:adam_update_3}
\end{align}
Thus, we prove the desired result for $\adam$. Finally, we consider $\muon$. Given a matrix $M\in\bbR^{m\times n}$ with rank $r$, let $M = U\Sigma V^\top$
be its compact SVD over the nonzero singular values, where
$\Sigma\in\bbR^{r\times r}$, $U\in\bbR^{m\times r}$, and
$V\in\bbR^{n\times r}$. We define the spectral normalization operator by
\begin{align*}
    \mathsf{Spec}(M)
    =
    \left\{
    Z\in\bbR^{m\times n}:\; ZV=U,\quad Z^\top U=V,\quad \|Z\|_2\le 1
    \right\},
\end{align*}
where $\|\cdot\|_2$ denotes the spectral norm of matrices. Specifically, when $M=0$, this definition gives $\mathsf{Spec}(0)=\{Z:\|Z\|_2\le 1\}$.
The $\muon$ optimizer then updates the parameters as 
\begin{align*}
    A_{t+1}^{\muon}
    =
    A_t^{\muon}-\tau_tT_t^A,
    \qquad
    B_{t+1}^{\muon}
    =
    B_t^{\muon}-\tau_tT_t^B,
\end{align*}
where $T_t^A$ and $T_t^B$ are plausible spectrally normalized updates, i.e.,
\begin{align*}
    T_t^A\in\mathsf{Spec}(\nabla_A\mathcal L),
    \qquad
    T_t^B\in\mathsf{Spec}(\nabla_B\mathcal L).
\end{align*}
In the following, we will prove that the parameters updated by $\muon$ satisfiy Eqn.~\eqref{eq:muon} via induction. The intialization of $\muon$ satifies Eqn.~\eqref{eq:muon} with $a_t^{\muon}=\lambda_t^{\muon}=0$, $b_t^{\muon}=b_0^{\muon}$ for $t=0$. We assume that Eqn.~\eqref{eq:muon} holds for $l=0,\cdots, t$. Since $H^\top H=\mu^2I_r$, the normalized active direction associated with
$H$ is $H/\mu$. Then we can choose $T_t^A$ and $T_t^B$ as follows.
\begin{align*}
    T_t^A
    =
    \operatorname{sign}((c_t^{\muon}-1)b_t^{\muon})\!\!
    \left[\frac{1}{\mu}H,0,0\right]
    \!+\!
    [0,0,JE^\top], \text{ and }T_t^B
    =
    \operatorname{sign}((c_t^{\muon}-1)a_t^{\muon})\!\!
    \begin{bmatrix}
        I_r\\
        0\\
        0
    \end{bmatrix}.
\end{align*}
The completion $[0,0,JE^\top]$ acts only on a zero singular subspace. Moreover,
$H^\top J=0$ and $E^\top E=I_r$ imply that
$[H/\mu,0,0]$ and $[0,0,JE^\top]$ have orthogonal left
and right singular subspaces. Therefore the selected $T_t^A$ has spectral norm one and is admissible. With $c_t^{\muon}=a_t^{\muon}b_t^{\muon}$, $A_{t+1}^{\muon}$ and $B_{t+1}^{\muon}$ satisfy Eqn.~\eqref{eq:muon} with
\begin{align*}
    a_{t+1}^{\muon}
    &=
    a_t^{\muon}
    -
    \frac{\tau_t}{\mu}
    \operatorname{sign}((c_t^{\muon}-1)b_t^{\muon}),
    \\
    b_{t+1}^{\muon}
    &=
    b_t^{\muon}
    -
    \tau_t
    \operatorname{sign}((c_t^{\muon}-1)a_t^{\muon}),
    \\
    \lambda_{t+1}^{\muon}
    &=
    \lambda_t^{\muon}+\tau_t .
\end{align*}

Thus, we conclude the proof of Proposition~\ref{prop:param_structure}
\end{proof}

\textbf{Step 2: Use these expressions to analyze the resulting losses.}

Proposition~\ref{prop:param_structure} shows that for any $\sfA\in\{\gd,\adam,\muon\}$, we have that
\begin{align*}
    A_t^{\sfA}B_t^{\sfA}
    =
    a_t^{\sfA} b_t^{\sfA} H.
\end{align*}
Thus, with $c_t^{\sfA}=a_t^{\sfA}b_t^{\sfA}$, we obtain
\begin{align*}
    \mathcal L_t^{\sfA}(x)
    =
    \frac12
    (1-c_t^{\sfA})^2
    \|Hx\|_2^2 .
\end{align*}
Since Assumption~\ref{ass:H} requires that $H^\top H=\mu^2 I_r$, every input point satisfying
$
    \|x\|_2=\rho.
$
Hence
\begin{align*}
    \mathcal L_t^{\sfA}(x)
    =
    \frac{\mu^2\rho^2}{2}
    (1-c_t^{\sfA})^2 .
\end{align*}
The right-hand side may depend on the optimizer $\sfA$ and on time $t$, but it is
constant over all points on the sphere $\|x\|_2=\rho$. Therefore, for any
two points $x,x'$ with $\|x\|_2=\|x'\|_2=\rho$,
and any two optimizers ${\sfA},{\sfA}'\in\{\gd,\adam,\muon\}$, we have
\begin{align*}
    L_t^{\sfA}(x)- L_t^{\sfA}(x')
    =
    L_t^{{\sfA}'}(x)- L_t^{{\sfA}'}(x').
\end{align*}
This proves exact \ac{rgi} on the validation sphere.

\textbf{Step 3: Characterize the cosine similarity between the learned weights.}

We denote the vectorized parameters as $\theta_t^{\frakT}=(\operatorname{vec}(A_t^{\sfA}),\operatorname{vec}(B_t^{\sfA}))$, where $\operatorname{vec}(\cdot)$ vectorizes the matrix parameter. Then the cosine similarity between parameters learned by different optimizers can be calculated as follows.
\begin{proposition}\label{prop:cossim}
    Under the same conditions of Proposition~\ref{prop:param_structure}, the cosine similarities between parameters learned by $\gd$, $\adam$, and $\muon$ are
\begin{align*}
    \cossim(\theta_t^{\gd},\theta_t^{\adam})
    &=
    \frac{
        \mu^2 a_t^{\gd}a_t^{\adam}
        +
        b_t^{\gd}b_t^{\adam}
    }{
        w_t^{\gd}\cdot w_t^{\adam}
    },\\
    \cossim(\theta_t^{\gd},\theta_t^{\muon})
    &=
    \frac{
        \mu^2 a_t^{\gd}a_t^{\muon}
        +
        b_t^{\gd}b_t^{\muon}
    }{
        w_t^{\gd}\cdot w_t^{\muon}
    },\\
    \cossim(\theta_t^{\adam},\theta_t^{\muon})
    &=
    \frac{
        \mu^2 a_t^{\adam}a_t^{\muon}
        +
        b_t^{\adam}b_t^{\muon}
    }{
        w_t^{\adam}\cdot w_t^{\muon}
    },
\end{align*}
where the denominators are defined as $w_t^{\gd}=\sqrt{
            \mu^2(a_t^{\gd})^2
            +
            (b_t^{\gd})^2
        }$, $w_{t}^{\adam}=\sqrt{
            \mu^2(a_t^{\adam})^2
            +
            (b_t^{\adam})^2
            +
            (\lambda_t^{\adam})^2
        }$, and $w_t^{\muon}=\sqrt{
            \mu^2(a_t^{\muon})^2
            +
            (b_t^{\muon})^2
            +
            (\lambda_t^{\muon})^2
        }$.
        Each displayed ratio applies when its denominator is nonzero;
otherwise, the corresponding similarity is zero by convention.
    
\end{proposition}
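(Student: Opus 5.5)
We compute the vectorized parameters directly from the invariant forms in Proposition~\ref{prop:param_structure}. Writing $\theta_t^{\sfA}=(\operatorname{vec}(A_t^{\sfA}),\operatorname{vec}(B_t^{\sfA}))$, the Euclidean inner product of two such vectors equals $\langle A_t^{\sfA},A_t^{\sfA'}\rangle_F+\langle B_t^{\sfA},B_t^{\sfA'}\rangle_F$. This splits blockwise along the decomposition $\mathbb R^s=\mathbb R^r\oplus\mathbb R^r\oplus\mathbb R^{s-2r}$. The plan is therefore to evaluate each block's Frobenius inner product and squared norm using the orthogonality relations $H^\top H=\mu^2 I_r$, $G^\top G=J^\top J=E^\top E=I_r$, and $H^\top G=H^\top J=G^\top J=0$.

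\textbf{Norms.} For the first column block of $A$, $\|a H\|_F^2=a^2\operatorname{tr}(H^\top H)=a^2\mu^2 r$. For the $B$ block, $\|bI_r\|_F^2=b^2 r$. For $\adam$, the second block gives $\|\lambda G\|_F^2=\lambda^2\operatorname{tr}(G^\top G)=\lambda^2 r$. For $\muon$, the third block gives $\|\lambda JE^\top\|_F^2=\lambda^2\operatorname{tr}(EJ^\top JE^\top)=\lambda^2\operatorname{tr}(E^\top E)=\lambda^2 r$. Summing, $\|\theta_t^{\sfA}\|_2=\sqrt r\,w_t^{\sfA}$, with $w_t^{\sfA}$ as stated and the $\lambda$ term absent for $\gd$.

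\textbf{Inner products.} The first $A$-blocks contribute $a_t^{\sfA}a_t^{\sfA'}\operatorname{tr}(H^\top H)=\mu^2 r\,a_t^{\sfA}a_t^{\sfA'}$, and the $B$-blocks contribute $r\,b_t^{\sfA}b_t^{\sfA'}$. The auxiliary components contribute nothing, for two reasons:
\begin{itemize}
\item For $\gd$ versus either other optimizer, $\gd$ has zero second and third blocks.
\item For $\adam$ versus $\muon$, the $G$ term lives in the second column block and the $JE^\top$ term in the third, so they occupy disjoint coordinates. Moreover, within the first block there is no cross term with $G$ or $J$.
\end{itemize}
Dividing by the product of norms, the common factor $r$ cancels, giving exactly the three displayed ratios. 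When either $w_t$ vanishes, the convention assigns zero.

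The main (mild) obstacle is bookkeeping: confirming that $\lambda$ enters only the norms and never the inner products. For $\adam$ versus $\muon$ this holds precisely because $G$ and $JE^\top$ sit in different column blocks of $A$. Disjointness of blocks already suffices; the relation $G^\top J=0$ is not even needed here. Otherwise the proof is a direct trace computation.
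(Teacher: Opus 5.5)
Your proposal is correct and follows essentially the same route as the paper: you decompose blockwise, use the mutual Frobenius orthogonality of $[H,0,0]$, $[0,G,0]$ and $[0,0,JE^\top]$ with squared norms $\mu^2 r$, $r$ and $r$, and then cancel the common factor $r$. Your remark that the $\adam$--$\muon$ cross term vanishes because $G$ and $JE^\top$ sit in different column blocks, rather than because $G^\top J=0$, is accurate and matches the paper's computation.
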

\begin{proof}
To compute the cosine similarity between two parameter vectors, we normalize their inner product by the product of their norms. We first record several basic identities for the corresponding inner products and norms. Specifically, we have
\begin{align*}
    \left\langle
        [H,0,0],
        [0,G,0]
    \right\rangle_F
    =
    0,\quad 
    \left\langle
        [H,0,0],
        \big[0,0,JE^\top\big]
    \right\rangle_F
    =
    0,\quad
    \left\langle
        [0,G,0],
        \big[0,0,JE^\top\big]
    \right\rangle_F
    =
    0.
\end{align*}
Moreover, since
$ H^\top H=\mu^2I_r,
    G^\top G=I_r,
    J^\top J=I_r,
    E^\top E=I_r$,
we have that
\begin{align*}
    \|[H,0,0]\|_F^2
    =
    \mu^2r,\quad
    \|[0,G,0]\|_F^2
    =
    \|G\|_F^2
    =
    r,\quad
    \|[0,0,JE^\top]\|_F^2
    =
    r.
\end{align*}

Thus, for $\gd$ and $\adam$, the inner product between parameters are
\begin{align*}
    \left\langle
        \theta_t^{\gd},
        \theta_t^{\adam}
    \right\rangle
    =
    r
    \left(
        \mu^2 a_t^{\gd}a_t^{\adam}
        +
        b_t^{\gd}b_t^{\adam}
    \right).
\end{align*}
Moreover, the norms of the parameters are
\begin{align*}
    \|\theta_t^{\gd}\|_2^2
    =
    r
    \left(
        \mu^2(a_t^{\gd})^2
        +
        (b_t^{\gd})^2
    \right),\quad
    \|\theta_t^{\adam}\|_2^2
    =
    r
    \left(
        \mu^2(a_t^{\adam})^2
        +
        (b_t^{\adam})^2
        +
        (\lambda_t^{\adam})^2
    \right).
\end{align*}
Combining these, we have that
\begin{align*}
    \cossim(\theta_t^{\gd},\theta_t^{\adam})
    =
    \frac{
        \mu^2 a_t^{\gd}a_t^{\adam}
        +
        b_t^{\gd}b_t^{\adam}
    }{
        w_t^{\gd}\cdot w_t^{\adam}
    }.
\end{align*}

Similarly, the cosine similarity between the parameters of $\gd$ and $\muon$ is
\begin{align*}
    \cossim(\theta_t^{\gd},\theta_t^{\muon})
    =
    \frac{
        \mu^2 a_t^{\gd}a_t^{\muon}
        +
        b_t^{\gd}b_t^{\muon}
    }{w_t^{\gd}\cdot w_t^{\muon}}.
\end{align*}

For the cosine similarity between the parameters learned by $\adam$ and $\muon$, we have that
\begin{align*}
    \left\langle
        [0,G,0],
        [0,0,JE^\top]
    \right\rangle_F
    =
    0.
\end{align*}
Then we calculate the cosine similarity as
\begin{align*}
    \cossim(\theta_t^{\adam},\theta_t^{\muon})
    =
    \frac{
        \mu^2 a_t^{\adam}a_t^{\muon}
        +
        b_t^{\adam}b_t^{\muon}
    }{w_t^{\adam}\cdot w_t^{\muon}
    }.
\end{align*}
Thus, we conclude the proof of Proposition~\ref{prop:cossim}.
\end{proof}

In the following, we will prove that $a_t$ and $b_t$ of $\adam$ and $\muon$ are uniformly bounded,
whereas $\lambda_t$ of them grows linearly. We use constant step sizes for them, i.e., $\gamma_t=\gamma,\tau_t=\tau.$

First, we consider $\adam$. 
% Recall that
% \begin{align*}
%     c_t^{\adam}
%     =
%     a_t^{\adam}b_t^{\adam},
% \end{align*}
% and
% \begin{align*}
%     a_{t+1}^{\adam}
%     =
%     a_t^{\adam}
%     -
%     \gamma\kappa
%     \operatorname{sign}
%     \left(
%         (c_t^{\adam}-1)b_t^{\adam}
%     \right),
% \end{align*}
% \begin{align*}
%     b_{t+1}^{\adam}
%     =
%     b_t^{\adam}
%     -
%     \gamma
%     \operatorname{sign}
%     \left(
%         (c_t^{\adam}-1)a_t^{\adam}
%     \right),
% \end{align*}
% \begin{align*}
%     \lambda_{t+1}^{\adam}
%     =\lambda_t^{\adam}+\gamma.
% \end{align*}
Recall that we initialize the parameter with $a_0^{\adam}=0,\lambda_0^{\adam}=0,
b_0^{\adam}>0$. Since $c_0^{\adam}
=a_0^{\adam}b_0^{\adam}=0$, the first update gives
\begin{align*}
    a_1^{\adam}
    &=
    a_0^{\adam}
    -
    \gamma\kappa
    \operatorname{sign}
    \left(
        (c_0^{\adam}-1)b_0^{\adam}
    \right)
    =
    \kappa\gamma,\\
    b_1^{\adam}&=b_0^{\adam}-
\gamma\operatorname{sign}
\left((c_0^{\adam}-1)a_0^{\adam}\right)=b_0^{\adam},
\end{align*}
where the equalities result from Eqn.~\eqref{eq:adam_update_1} and \eqref{eq:adam_update_2}.
Thus, the first update leads to $a_1^{\adam}>0, b_1^{\adam}>0$. We then define
\begin{align*}
    \Delta_{\adam}
    =
    a_1^{\adam}
    -
    \kappa b_1^{\adam}
    =
    \kappa(\gamma-b_0^{\adam}),
    \quad
    D_{\adam}
    =
    |\Delta_{\adam}|,\quad r_{\adam}
    =\frac{1}{D_{\adam}+\sqrt{\kappa}
    }.
\end{align*}
Then we have the following results about $\adam$.

\begin{proposition}\label{prop:adam}
    With the step-size condition $\gamma_t=\gamma<\min\left\{b_0^{\adam},(\kappa b_0^{\adam}+\sqrt{\kappa})^{-1}\right\}$, we have that  $a_t^{\adam}>0, b_t^{\adam}>0$ and
$a_t^{\adam}-\kappa b_t^{\adam}=\Delta_{\adam}
$ hold for all $t\geq 1$.
\end{proposition}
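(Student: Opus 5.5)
The plan is to prove both claims together by induction on $t\ge 1$, using the \adam{} recursions \eqref{eq:adam_update_1}--\eqref{eq:adam_update_3} with constant step $\gamma$. The base case $t=1$ was computed just above the statement: $a_1^{\adam}=\kappa\gamma>0$, $b_1^{\adam}=b_0^{\adam}>0$, and $a_1^{\adam}-\kappa b_1^{\adam}=\Delta_{\adam}$ by definition.

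\textbf{Invariance of the line.} Suppose $a_t^{\adam}>0$, $b_t^{\adam}>0$ and $a_t^{\adam}-\kappa b_t^{\adam}=\Delta_{\adam}$. Since both $a_t^{\adam}$ and $b_t^{\adam}$ are positive,
\[
\operatorname{sign}\big((c_t^{\adam}-1)b_t^{\adam}\big)=\operatorname{sign}\big((c_t^{\adam}-1)a_t^{\adam}\big)=\operatorname{sign}(c_t^{\adam}-1)=:s_t\in\{-1,0,1\}.
\]
The updates are then $a_{t+1}^{\adam}=a_t^{\adam}-\gamma\kappa s_t$ and $b_{t+1}^{\adam}=b_t^{\adam}-\gamma s_t$. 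Hence $a_{t+1}^{\adam}-\kappa b_{t+1}^{\adam}=a_t^{\adam}-\kappa b_t^{\adam}=\Delta_{\adam}$, so the identity propagates as long as positivity does.

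\textbf{Positivity.} I split on $s_t$. If $s_t\in\{-1,0\}$, both coordinates weakly increase and stay positive. The only dangerous case is $s_t=1$, i.e. $c_t^{\adam}>1$, where both coordinates drop by $\gamma\kappa$ and $\gamma$ respectively.

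First I rewrite the invariant line. Since $\gamma<b_0^{\adam}$, we have $\Delta_{\adam}=-D_{\adam}$ with $D_{\adam}=\kappa(b_0^{\adam}-\gamma)\ge 0$, so $b_t^{\adam}=(a_t^{\adam}+D_{\adam})/\kappa$. This also means positivity of $b$ follows from positivity of $a$ on the line, so I only need to show $a_{t+1}^{\adam}=a_t^{\adam}-\kappa\gamma>0$.

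I argue by contradiction. Suppose $0<a_t^{\adam}\le\kappa\gamma$. Since $x\mapsto x(x+D_{\adam})$ is increasing on $x>0$,
\[
\kappa< \kappa c_t^{\adam}=a_t^{\adam}\big(a_t^{\adam}+D_{\adam}\big)\le \kappa\gamma(\kappa\gamma+D_{\adam})=\kappa\gamma\cdot\kappa b_0^{\adam},
\]
using $\kappa\gamma+D_{\adam}=\kappa b_0^{\adam}$. This gives $\gamma\kappa b_0^{\adam}>1$. But the step-size condition $\gamma<(\kappa b_0^{\adam}+\sqrt{\kappa})^{-1}$ gives $\gamma\kappa b_0^{\adam}<1$, a contradiction. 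Hence $a_{t+1}^{\adam}>0$, and then $b_{t+1}^{\adam}=(a_{t+1}^{\adam}+D_{\adam})/\kappa>0$, which closes the induction.

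\textbf{Main obstacle.} The hard step is exactly the decreasing case $c_t^{\adam}>1$: one must show a single sign step cannot jump $a$ across zero. Parametrizing by the invariant line reduces it to a scalar monotonicity inequality in $a_t^{\adam}$. I expect the stated upper bound on $\gamma$ to be more than enough for this; the extra $\sqrt{\kappa}$ and the quantity $r_{\adam}$ are presumably reserved for the later boundedness argument for $a_t^{\adam},b_t^{\adam}$, and I would not need them here.
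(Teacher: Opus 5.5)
Your proof is correct, but it handles the decreasing case differently from the paper.

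\textbf{The paper's route.} The paper never uses the sign of $\Delta_{\adam}$. It introduces the threshold $r_{\adam}=(D_{\adam}+\sqrt{\kappa})^{-1}$ and checks that $\kappa r_{\adam}^2+D_{\adam}r_{\adam}\le 1$. Bounding $c_t^{\adam}\le \kappa (b_t^{\adam})^2+D_{\adam}b_t^{\adam}$, and symmetrically in terms of $a_t^{\adam}$, it deduces that $c_t^{\adam}>1$ forces both $b_t^{\adam}>r_{\adam}$ and $a_t^{\adam}>\kappa r_{\adam}$. Finally, it notes that the step-size hypothesis gives $\gamma<r_{\adam}$, since $D_{\adam}\le \kappa b_0^{\adam}$.

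\textbf{Your route.} You instead use $\gamma<b_0^{\adam}$ to fix $\Delta_{\adam}=-D_{\adam}\le 0$. This makes positivity of $b$ a consequence of positivity of $a$ on the invariant line. You then show directly, via the identity $\kappa\gamma+D_{\adam}=\kappa b_0^{\adam}$ and monotonicity of $x\mapsto x(x+D_{\adam})$, that $c_t^{\adam}>1$ forces $a_t^{\adam}>\kappa\gamma$.

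\textbf{What each buys.} Your argument is shorter and only needs $\gamma<\min\{b_0^{\adam},1/(\kappa b_0^{\adam})\}$. So it shows the $\sqrt{\kappa}$ term is superfluous for this proposition, and it transfers to Muon equally easily. The paper's argument treats $a$ and $b$ symmetrically and does not depend on the sign of $\Delta_{\adam}$. The same $r$-threshold template is what the paper invokes for Muon via $r_{\muon}$.

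\textbf{A correction to your closing remark.} $r_{\adam}$ is not reserved for the later boundedness argument, which uses $R_{\adam}$ and $M_{\adam}$. It is exactly the device on which the paper's proof of this proposition is built, and it is the source of the $\sqrt{\kappa}$ in the hypothesis.
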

\begin{proof}
    
We prove the desired results by induction. The results trivially hold for $t=1$. Suppose the claims hold at time $t\geq 1$.
Since $a_t^{\adam}>0$ and $b_t^{\adam}>0$, we have
\begin{align*}
    \operatorname{sign}
    \left(
        (c_t^{\adam}-1)b_t^{\adam}
    \right)
    =
    \operatorname{sign}(c_t^{\adam}-1),\quad
    \operatorname{sign}
    \left(
        (c_t^{\adam}-1)a_t^{\adam}
    \right)
    =
    \operatorname{sign}(c_t^{\adam}-1).
\end{align*}
Therefore, Eqn.~\eqref{eq:adam_update_1} and \eqref{eq:adam_update_2} show that $a_{t+1}^{\adam}=a_t^{\adam}-\gamma\kappa \operatorname{sign}(c_t^{\adam}-1)$, and $b_{t+1}^{\adam}=b_t^{\adam}-\gamma\operatorname{sign}(c_t^{\adam}-1)$. It follows that
\begin{align*}
    a_{t+1}^{\adam}
    -
    \kappa b_{t+1}^{\adam}
    =
    a_t^{\adam}
    -
    \kappa b_t^{\adam}
    =
    \Delta_{\adam}.
\end{align*}
It remains to show that the next iterate stays in the positive quadrant. Since $a_t^{\adam}=\kappa b_t^{\adam}+\Delta_{\adam}$, if \(0<b_t^{\adam}\le r_{\adam}\), then
\begin{align*}
    c_t^{\adam}
    =
    \left(
        \kappa b_t^{\adam}
        +
        \Delta_{\adam}
    \right)b_t^{\adam}
    \le
    \kappa(b_t^{\adam})^2
    +
    D_{\adam}b_t^{\adam}
    \le
    \kappa r_{\adam}^2
    +
    D_{\adam}r_{\adam}.
\end{align*}
By the definition of $r_{\adam}$, we have
\begin{align*}
    \kappa r_{\adam}^2
    +D_{\adam}r_{\adam}=
    \frac{\kappa}{(D_{\adam}+\sqrt{\kappa})^2
    }
    +
    \frac{D_{\adam}}{D_{\adam}+\sqrt{\kappa}
    }\le 1.
\end{align*}
Hence, we can show that if 
$c_t^{\adam}>1$, we have $b_t^{\adam}>r_{\adam}$. Similarly, since $b_t^{\adam}=(
a_t^{\adam}-\Delta_{\adam})/\kappa$, if \(0<a_t^{\adam}\le \kappa r_{\adam}\), then
\begin{align*}
    c_t^{\adam}
    =
    \frac{
        a_t^{\adam}
        \left(
            a_t^{\adam}
            -
            \Delta_{\adam}
        \right)
    }{\kappa}
    \le
    \frac{
        a_t^{\adam}
        \left(
            a_t^{\adam}
            +
            D_{\adam}
        \right)
    }{\kappa}
    \le
    \kappa r_{\adam}^2
    +
    D_{\adam}r_{\adam}
    \le
    1.
\end{align*}
Hence, we also show that if $c_t^{\adam}>1$, we have 
$a_t^{\adam}>\kappa r_{\adam}$.

Now consider two cases. If $c_t^{\adam}\le1$,
then $\operatorname{sign}(c_t^{\adam}-1)\le0$, by the update rule in Eqn.~\eqref{eq:adam_update_1} and \eqref{eq:adam_update_2}, we have 
\begin{align*}
    a_{t+1}^{\adam}
    \ge
    a_t^{\adam}
    >
    0,
    \qquad
    b_{t+1}^{\adam}
    \ge
    b_t^{\adam}
    >
    0.
\end{align*}
If $c_t^{\adam}>1$, then we have that $b_t^{\adam}>r_{\adam}, a_t^{\adam}>\kappa r_{\adam}$. Since step-size condition $\gamma_t=\gamma<\min\left\{b_0^{\adam},(\kappa b_0^{\adam}+\sqrt{\kappa})^{-1}\right\}$ implies
$0<\gamma<r_{\adam}$, we have
\begin{align*}
    b_{t+1}^{\adam}
    =b_t^{\adam}-\gamma
    >r_{\adam}-\gamma
    >0,\quad
    a_{t+1}^{\adam}
    =
    a_t^{\adam}-\kappa\gamma
    >
    \kappa r_{\adam}-\kappa\gamma
    >
    0.
\end{align*}
Thus, the positive quadrant is invariant for $\adam$. We conclude the proof of Proposition~\ref{prop:adam}.
\end{proof}
In the following, we will show that $a_t^{\adam}$ and $b_{t}^{\adam}$ are bounded. Define
\begin{align*}
    R_{\adam}
    =
    \max\left\{
        \frac{2D_{\adam}}{\kappa},
        \sqrt{\frac{2}{\kappa}}
    \right\},\quad M_{\adam}=\max\left\{
b_0^{\adam}, R_{\adam}+\gamma\right\}.
\end{align*}
We prove by induction that $b_t^{\adam}\le M_{\adam}$ for all $t$. The claim holds at $t=0$. Suppose $b_t^{\adam}\le M_{\adam}$. If $b_t^{\adam}\ge R_{\adam}$, then we have that
\begin{align*}
 a_t^{\adam}
    =\kappa b_t^{\adam}
    +\Delta_{\adam}
    \ge \kappa b_t^{\adam}
    -D_{\adam}
    \ge\frac{\kappa}{2}b_t^{\adam},
\end{align*}
where the first equality results from Proposition~\ref{prop:adam}. Therefore, we have $c_t^{\adam}=a_t^{\adam}b_t^{\adam}\ge \kappa/2\cdot(b_t^{\adam})^2\ge 1$.
Hence, we have if $b_t^{\adam}\ge R_{\adam}$, then 
$b_{t+1}^{\adam}\le b_t^{\adam}\le M_{\adam}$. If $b_t^{\adam}<R_{\adam}$, then we have that 
\begin{align*}
    b_{t+1}^{\adam}=
b_t^{\adam}-\gamma\operatorname{sign}(c_t^{\adam}-1)
    \le
    b_t^{\adam}+\gamma
    <
    R_{\adam}+\gamma
    \le
    M_{\adam}.
\end{align*}
Therefore, we show that
$b_t^{\adam}\le M_{\adam}$
for all $t$. Furthermore, since $a_t^{\adam}=\kappa b_t^{\adam}+\Delta_{\adam}$, we also have
\begin{align*}
    0<a_t^{\adam}
    \le
    \kappa M_{\adam}
    +
    D_{\adam}.
\end{align*}
Thus, we show that $a_t^{\adam},b_t^{\adam}=O(1)$ with $t$ increasing.

Second, we consider $\muon$. 
% Recall that
% \begin{align*}
%     c_t^{\muon}
%     =a_t^{\muon}b_t^{\muon},
% \end{align*}
% and
% \begin{align*}
%     a_{t+1}^{\muon}
%     =
%     a_t^{\muon}
%     -
%     \frac{\tau}{\mu}
%     \operatorname{sign}
%     \left(
%         (c_t^{\muon}-1)b_t^{\muon}
%     \right),
% \end{align*}
% \begin{align*}
%     b_{t+1}^{\muon}
%     =
%     b_t^{\muon}
%     -
%     \tau
%     \operatorname{sign}
%     \left(
%         (c_t^{\muon}-1)a_t^{\muon}
%     \right),
% \end{align*}
% \begin{align*}
%     \lambda_{t+1}^{\muon}
%     =
%     \lambda_t^{\muon}+\tau.
% \end{align*}
The parameters are initialized with $a_0^{\muon}=0, \lambda_t^{\muon}=0, b_0^{\muon}>0$. We then define 
\begin{align*}
\Delta_{\muon}=a_1^{\muon}-\frac1\mu b_1^{\muon}, \quad D_{\muon}=|\Delta_{\muon}|,\quad
    r_{\muon}
    =\frac{1}{D_{\muon}+\mu^{-1/2}}.
\end{align*}
The step-size condition of $\tau_t=\tau<\min\left\{b_0^{\muon},( b_0^{\muon}/\mu+\mu^{-1/2})^{-1}\right\}$ implies $0<\tau<r_{\muon}$. Similar to Proposition~\ref{prop:adam} for $\adam$, we can also show by induction that  $a_t^{\muon}>0, b_t^{\muon}>0$ and $a_t^{\muon}-\mu^{-1} b_t^{\muon}=\Delta_{\muon}$ for all $t\geq 1$. For the boundness, we define
\begin{align*}
    R_{\muon}
    =
    \max\left\{
        2\mu D_{\muon},
        \sqrt{2\mu}
    \right\},\quad M_{\muon}
    =
    \max\left\{
        b_0^{\muon},
        R_{\muon}+\tau
    \right\}.
\end{align*}
Similar to our proof for $\adam$, we can show by induction that $b_t^{\muon}\le M_{\muon}$ for all $t\geq 1$. Since $a_t^{\muon}=\mu^{-1} b_t^{\muon}+\Delta_{\muon}$, we also have
$0<a_t^{\muon} \le\mu^{-1} M_{\muon}+D_{\muon}$. Thus, we have  $a_t^{\muon},b_t^{\muon}=O(1)$ when $t$ increases.

In the following, we then derive the bounds for the cosine similarities between parameters. Since $\lambda_t^{\adam}
=\lambda_0^{\adam}+\gamma t$ and $\lambda_t^{\muon}
=\lambda_0^{\muon}+\tau t$, we have that
\begin{align*}
|\lambda_t^{\adam}|=\Theta(t),
    \quad
|\lambda_t^{\muon}|=\Theta(t).
\end{align*}
From the boundedness proved above,
there exist finite constants $C_{\adam},C_{\muon}$ such that for all $t$,
\begin{align*}
    \mu^2(a_t^{\adam})^2
    +(b_t^{\adam})^2
    \le
    C_{\adam}^2,\quad
    \mu^2(a_t^{\muon})^2
    +
    (b_t^{\muon})^2
    \le
    C_{\muon}^2.
\end{align*}
Thus, for $\gd$ and $\adam$, we have
\begin{align*}
    \left|\cossim(\theta_t^{\gd},\theta_t^{\adam})
    \right|
    \le
    \frac{\sqrt{\mu^2(a_t^{\adam})^2+(b_t^{\adam})^2}
    }{\sqrt{\mu^2(a_t^{\adam})^2+(b_t^{\adam})^2
    +(\lambda_t^{\adam})^2
    }}
    \le\frac{C_{\adam}
    }{|\lambda_t^{\adam}|
    }=O(t^{-1}),
\end{align*}
where the first inequality results from the Cauchy-Schwarz inequality, and the second inequality results from the boundedness of $a_t^{\adam}$ and $b_t^{\adam}$. Similarly,  we have that
\begin{align*}
    \left|
        \cossim(\theta_t^{\gd},\theta_t^{\muon})
    \right|
    \!\le\!
    \frac{
        C_{\muon}
    }{
        |\lambda_t^{\muon}|
    }
    =
    O(t^{-1}),\,
    \left|
        \cossim(\theta_t^{\adam},\theta_t^{\muon})
    \right|
    \!\le\!
    \frac{
        C_{\adam}C_{\muon}
    }{
        |\lambda_t^{\adam}|
        |\lambda_t^{\muon}|
    }
    =
    O(t^{-2}).
\end{align*}
Thus, we conclude the proof of Theorem~\ref{thm:quadratic_case_study}.

\end{document}